\font\ppppppcarac=ptmr8y at 4pt
\font\pppppcarac=ptmr8y at 5pt
\font\ppppcarac=ptmr8y at 6pt
\font\pppcarac=ptmr8y at 7pt
\font\ppcarac=ptmr8y at 8pt
\font\pcarac=ptmr8y at 9pt
\font\Ppcarac=ptmr8y at 10pt
\newcommand{\bfH}{{\bm{H}}}
\newcommand{\bfQ}{{\bm{Q}}}
\newcommand{\bfU}{{\bm{U}}}
\newcommand{\bfW}{{\bm{W}}}
\newcommand{\bfX}{{\bm{X}}}
\newcommand{\bfY}{{\bm{Y}}}
\newcommand{\bfZ}{{\bm{Z}}}
\newcommand{\bfzero}{{ \hbox{\bf 0} }}
\newcommand{\bfa}{{\bm{a}}}
\newcommand{\bfb}{{\bm{b}}}
\newcommand{\bff}{{\bm{f}}}
\newcommand{\bfg}{{\bm{g}}}
\newcommand{\bfh}{{\bm{h}}}
\newcommand{\bfq}{{\bm{q}}}
\newcommand{\bfu}{{\bm{u}}}
\newcommand{\bfw}{{\bm{w}}}
\newcommand{\bfx}{{\bm{x}}}
\newcommand{\bfy}{{\bm{y}}}
\newcommand{\bfz}{{\bm{z}}}
\newcommand{\bfGamma}{{\bm{\Gamma}}}
\newcommand{\bfeta}{{\bm{\eta}}}
\newcommand{\bflambda}{{\bm{\lambda}}}
\newcommand{\bfvarphi}{{\bm{\varphi}}}
\newcommand{\bfpsi}{{\bm{\psi}}}
\newcommand{\bfxi}{{\bm{\xi}}}
\newcommand{\bfphi}{{\bm{\phi}}}
\newcommand{\HH}{{\mathbb{H}}}
\newcommand{\MM}{{\mathbb{M}}}
\newcommand{\NN}{{\mathbb{N}}}
\newcommand{\PP}{{\mathbb{P}}}
\newcommand{\RR}{{\mathbb{R}}}
\newcommand{\WW}{{\mathbb{W}}}
\newcommand{\hh}{{\mathbb{h}}}
\DeclareMathAlphabet{\mathonebb}{U}{bbold}{m}{n}
\def\11{{\ensuremath{\mathonebb{1}}}}
\newcommand{\curB}{{\mathcal{B}}}
\newcommand{\curD}{{\mathcal{D}}}
\newcommand{\curK}{{\mathcal{K}}}
\newcommand{\curL}{{\mathcal{L}}}
\newcommand{\curM}{{\mathcal{M}}}
\newcommand{\curN}{{\mathcal{N}}}
\newcommand{\curP}{{\mathcal{P}}}
\newcommand{\curT}{{\mathcal{T}}}
\newcommand{\curV}{{\mathcal{V}}}
\newcommand{\bfcurN}{{\boldsymbol{\mathcal{N}}}}
\newcommand{\bfcurW}{{\boldsymbol{\mathcal{W}}}}
\newcommand{\bfcurY}{{\boldsymbol{\mathcal{Y}}}}
\newcommand{\bfcurZ}{{\boldsymbol{\mathcal{Z}}}}
\newcommand{\ar}{{\hbox{{\ppppcarac ar}}}}
\newcommand{\arp}{{\hbox{{\pppppcarac ar}}}}
\newcommand{\cov}{\hbox{{\Ppcarac cov}}}
\newcommand{\DM}{{\hbox{{\pppppcarac DM}}}}
\newcommand{\pDM}{{\hbox{{\ppppppcarac DM}}}}
\newcommand{\DB}{{\hbox{{\pppppcarac DB}}}}
\newcommand{\Div}{{\hbox{\pcarac div}}}
\newcommand{\err}{\hbox{{\Ppcarac err}}}
\newcommand{\errp}{\hbox{{\ppcarac err}}}
\newcommand{\FKP}{{\hbox{{\ppppcarac FKP}}}}
\newcommand{\lambdahat}{{  \hat{\kern -0.2em \lambda }}}
\newcommand{\normp}{{\hbox{{\pppppcarac norm}}}}
\newcommand{\MC}{{\hbox{{\pppppcarac MC}}}}
\newcommand{\pMC}{{\hbox{{\ppppppcarac MC}}}}
\newcommand{\MCH}{{\hbox{{\pppppcarac MCH}}}}
\newcommand{\pMCH}{{\hbox{{\ppppppcarac MCH}}}}
\newcommand{\ppMCH}{{\hbox{{\ppppppcarac MCH}}}}
\newcommand{\nDeltat}{{n\Delta t}}
\newcommand{\opt}{{\hbox{{\pppcarac opt}}}}
\newcommand{\optp}{{\hbox{{\ppppcarac opt}}}}
\newcommand{\optpp}{{\hbox{{\pppppcarac opt}}}}
\newcommand{\PCA}{{{\hbox{{\ppppcarac PCA}}}}}
\newcommand{\SB}{{\hbox{{\pppppcarac SB}}}}
\newcommand{\TB}{{\hbox{{\pppppcarac TB}}}}
\newcommand{\tr}{{\hbox{{\Ppcarac tr}}}}
\newcommand{\training}{{\hbox{{\pppcarac train}}}}
\newcommand{\wien}{{\hbox{{\pppcarac wien}}}}
\newcommand{\SV}{{\hbox{{\pppppcarac SV}}}}
\newcommand{\curC}{{\mathscr{C}}}       % curvilinear font different from \matcal C
\font\bf=ptmb8y at 10pt
\font\vcarac=ptmr8y at 10pt
\font\pvcarac=ptmr8y at 7pt
\newcommand{\bfv}{{\hbox{\bf{v}}}}      % boldface lower case letter \bfv must be very different from greek letter \nu
\newcommand{\vc}{{\hbox{\vcarac{v}}}}   % lower case letter \bfv must be very different from greek letter \nu
\newcommand{\pvc}{{\hbox{\pvcarac{v}}}} % ower case letter \bfv must be very different from greek letter \n\vc
\newdefinition{definition}{Definition}
\newtheorem{lemma}{Lemma}
\newtheorem{proposition}{Proposition}
\newproof{proof}{Proof}
\newdefinition{remark}{Remark}
\newdefinition{hypothesis}{Hypothesis}
\newdefinition{notation}{Notation}
\newproof{example}{Example}
\numberwithin{equation}{section}
\journal{ArXiv}
\begin{document}

\begin{frontmatter}

\title{Transient anisotropic kernel for probabilistic learning on manifolds}

%\titlerunning{Transient anisotropic kernel for probabilistic learning on manifolds}

\author[1]{Christian Soize \corref{cor1}}
\ead{christian.soize@univ-eiffel.fr}
\author[2]{Roger Ghanem}
\ead{ghanem@usc.edu}
\cortext[cor1]{Corresponding author: C. Soize, christian.soize@univ-eiffel.fr}
\address[1]{Universit\'e Gustave Eiffel, MSME UMR 8208, 5 bd Descartes, 77454 Marne-la-Vall\'ee, France}
\address[2]{University of Southern California, 210 KAP Hall, Los Angeles, CA 90089, United States}

%\authorrunning{Christian Soize, Roger Ghanem}

\begin{abstract}
PLoM (Probabilistic Learning on Manifolds) is a method introduced in 2016 for handling small training datasets by projecting an Itô equation from a stochastic dissipative Hamiltonian dynamical system, acting as the MCMC generator, for which the KDE-estimated probability measure with the training dataset is the invariant measure. PLoM performs a projection on a reduced-order vector basis related to the training dataset, using the diffusion maps (DMAPS) basis constructed with a time-independent isotropic kernel. In this paper, we propose a new ISDE projection vector basis built from a transient anisotropic kernel, providing an alternative to the DMAPS basis to improve statistical surrogates for stochastic manifolds with heterogeneous data.  The construction ensures that for times near the initial time, the DMAPS basis coincides with the transient basis. For larger times, the differences between the two bases are characterized by the angle of their spanned vector subspaces. The optimal instant yielding the optimal transient basis is determined using an estimation of mutual information from Information Theory, which is normalized by the entropy estimation to account for the effects of the number of realizations used in the estimations.
Consequently, this new vector basis better represents statistical dependencies in the learned probability measure for any dimension. Three applications with varying levels of statistical complexity and data heterogeneity validate the proposed theory, showing that the transient anisotropic kernel improves the learned probability measure.
\end{abstract}

\begin{keyword}
Transient kernel\sep probabilistic learning \sep PLoM \sep diffusion maps \sep Fokker-Planck operator \sep spectrum
\end{keyword}

\end{frontmatter}

\section{Introduction}
\label{Section1}

\subsection{Objectives of the paper}
\label{Section1.1}
PLoM (Probabilistic Learning on Manifolds), introduced in 2016 \cite{Soize2016}, is a method and algorithm specifically developed for cases where the training dataset consists of a small number of data points. This method is based on projecting an Itô equation associated with a stochastic dissipative Hamiltonian dynamical system, which acts as the MCMC generator from the probability measure estimated using the KDE method applied to the points of the training dataset. The projection basis is the diffusion maps (DMAPS) basis associated with a time-independent isotropic kernel, introduced in \cite{Coifman2005, Coifman2006}.

Since 2016, all extensions and applications of PLoM (see Section~\ref{Section1.2}) have been carried out using the isotropic kernel. Through these applications, we have seen that the isotropic kernel allows for obtaining quality results, even for heterogeneous data and systems of great statistical complexity in small and large dimensions. However, improving the construction of statistical surrogates for stochastic manifolds involving conditional statistics and very heterogeneous data using PLoM based on a transient anisotropic kernel (time-dependent) was an analysis project. In this paper, we address this problem. We propose a new construction of the ISDE projection vector basis, built from a transient anisotropic kernel, which improves the representation of the statistical dependencies of the learned joint probability measure in any dimension.
\subsection{Framework of the considered problem}
\label{Section1.2}
Machine learning tools and artificial intelligence \cite{Korb2010, Murphy2012, Ghahramani2015, Russell2016}, such as probabilistic and statistical learning \cite{Vapnik2000, Hastie2009, James2013, Taylor2015, Swischuk2019}, are used in UQ for problems that would require computer resources not available with the most usual approaches. Thus, methods have emerged in the field of engineering sciences, such as learning on manifolds \cite{Oztireli2010, Soize2016, Perrin2017, Kevrekidis2020, Kontolati2022} and physics-informed probabilistic learning \cite{Pan2020, Soize2020a, Soize2021a}.

Probabilistic learning is a very active domain of research for constructing surrogate models (see for instance, \cite{Talwalkar2008, Oztireli2010, Marzouk2016, Parno2018, Perrin2018a, Kevrekidis2020, Pan2020}). Probabilistic Learning on Manifolds (PLoM) is a tool in computational statistics, introduced in 2016 \cite{Soize2016}, which can be viewed as a tool for scientific machine learning. The PLoM approach has specifically been developed for small dataset cases \cite{Soize2016, Soize2017a, Soize2019b, Soize2020b, Soize2020c}. The method avoids the scattering of learned realizations associated with the probability distribution to preserve its concentration in the neighborhood of the random manifold defined by the parameterized computational model.
his method allows for solving unsupervised and supervised problems under uncertainty when the training datasets are small. This situation is encountered in many problems in physics and engineering science with expensive function evaluations. The exploration of the admissible solution space in these situations is thus hampered by available computational resources.

Several extensions have been proposed to account for implicit constraints induced by physics, computational models, and measurements \cite{Soize2020a, Soize2021a, Soize2022b}, to reduce the stochastic dimension using a statistical partition approach \cite{Soize2022a}, and to update the prior probability distribution with a target dataset, whose points are, for instance, experimental realizations of the system observations \cite{Soize2022c}. Consequently, PLoM, constrained by a stochastic computational model and statistical moments or samples/realizations, allows for performing probabilistic learning inference and constructing predictive statistical surrogate models for large parameterized stochastic computational models.

This last capability of PLoM can also be viewed as an alternative method to Bayesian inference for high dimensions
\cite{Kennedy2001, Marzouk2007, Gentle2009, Stuart2010, Owhadi2015, Matthies2016, Dashti2017, Ghanem2017, Spantini2017, Perrin2020} and is a complementary approach to existing methods in machine learning for sampling distributions on manifolds under constraints. Although a Bayesian inference methodology has also been developed using probabilistic learning on manifolds for high dimensions \cite{Soize2020b}.

PLoM has successfully been adapted to tackle these challenges for several related problems, including nonconvex optimization under uncertainty \cite{Ghanem2018a, Soize2018a, Ghanem2018b, Farhat2019, Ghanem2019, Almeida2022, Capiez2022, Almeida2023}, fracture paths in random composites \cite{Guilleminot2020}, concurrent multiscale simulations in random media \cite{Chen2024}, stochastic homogenization in random elastic media \cite{Soize2023c}, ultrasonic transmission techniques in cortical bone microstructures \cite{Soize2020b}, updating digital twins under uncertainties \cite{Ghanem2022}, updating under-observed dynamical systems \cite{Ezvan2023, Capiez2024}, calculation of the Sobol indices \cite{Arnst2021}, dynamic monitoring \cite{Soize2021e}, surrogate modeling of structural seismic response \cite{Zhong2023}, probabilistic-learning-based stochastic surrogate models from small incomplete datasets \cite{Soize2023a, Soize2024}, and polynomial-chaos-based conditional statistics for probabilistic learning of atomic collisions \cite{Soize2023b}, as well as for aeroacoustic liner impedance metamodels from simulation and experimental data \cite{Sinha2023}.
\subsection{Methodology proposed and organization of the paper}
\label{Section1.3}
Starting with a training dataset of $n_d$ realizations of $\nu$ random variables, we consider the probability flow from each of these $n_d$ realizations towards the $\nu$-dimensional sampling probability distribution of the training dataset. We construct the associated coupled Fokker-Planck (FKP) equations with each of the realizations as initial condition. These describe the evolution of the transition probability measures over the graph described by the training dataset, that transport each independent realization, viewed as a concentrated measure at the initial time, into the joint probability measure, consisting of the common stationary probability measure  (also called the steady-state solution) of the FKP equations. This evolution describes a trajectory along which transition probabilities are consistent with both the training dataset and its postulated joint probability density function (that we approximate using a Gaussian Kernel Density Estimate (KDE)). Each of these probabilities provides a distinct geometric characterization of the training dataset, with its own plausible model of statistical dependence, resulting in a transient anisotropic kernel from which we construct a time evolving PLoM. We then set criteria for selecting among these PLoM models, which is tantamount to identifying the most appropriate
statistical dependence structure for the learned dataset, along the flow characterized by the FKP equations. The following
description of the paper’s organization provides a coherent summary of the proposed methodology.

In Section~\ref{Section2}, we define the training dataset constituted of given realizations of a non-Gaussian normalized vector-valued random variable $\bfH$ (centered and with an identity covariance matrix) and define the associated probability measure $P_\bfH$, whose density is estimated using the Gaussian Kernel Density Estimation (GKDE) method.

Section~\ref{Section3} deals with a short summary of formal results, introducing an It\^o stochastic differential equation (ISDE) and the derived Fokker-Planck (FPK) equation for which $P_\bfH$ is the invariant measure, which is the steady-state solution. We then introduce a formal formulation of the eigenvalue problem of the FPK operator and the nonstationary solution of the Fokker-Planck equation with a deterministic initial condition.

Section~\ref{Section4} is devoted to the time-dependent kernel $k_t$ derived from the time-dependent solution of the Fokker-Planck equation with a deterministic initial condition. We then define the time-dependent operator $K_t$ associated with the transient kernel $k_t$, which is a Hilbert-Schmidt operator. We propose a construction of its finite approximation, represented by a matrix $[\widehat{K}(t)]$, using a sampling of $P_\bfH$ of the bilinear form associated with $K_t$, and introduce the corresponding finite approximation of the eigenvalue problem.

In Section~\ref{Section5}, we present the direct construction of  matrix $[\widehat{K}(t)]$. We introduce the transition probability density function as the solution of the ISDE with a deterministic initial condition and study its existence, uniqueness, and properties. We then rewrite the ISDE in matrix form and propose a time-discrete approximation of this matrix-valued ISDE based on an Euler scheme. We introduce convergence criteria to verify convergence. Finally, we construct an explicit representation of the matrix $[\widehat{K}(t)]$ based on the nonstationary solution of the matrix-valued ISDE with a deterministic initial condition.

A numerical illustration of the proposed formulation is given in Section~\ref{Section6}, for which an explicit solution is known.

Section~\ref{Section7} deals with the construction and study of the vector basis for PLoM derived from the transient anisotropic kernel, connected to the DMAPS basis constructed with the isotropic kernel. We construct a time-dependent matrix $[\tilde{K}(\nDeltat)]$ of the transient anisotropic kernel derived from the matrix $[\hat{K}(\nDeltat)]$, whose fundamental property is its convergence to the matrix $[K_\DM]$ of the DMAPS isotropic kernel as time approaches zero. The time-dependent reduced-order transient basis is then the eigenvectors of $[\tilde{K}(\nDeltat)]$ associated with its dominant positive eigenvalues.
In order to qualify and quantify the gain of the constructed reduced-order transient basis, ROTB$(\nDeltat)$ at an instant sampled $\nDeltat$, with respect to the reduced-order DMAPS basis, RODB, we introduce the angle between the subspaces spanned by ROTB$(\nDeltat)$ and RODB. We propose a methodology for identifying the optimal instant sampled, which maximizes a selection criterion. This criterion is based on the estimation of mutual information from Information Theory, normalized using entropy estimation to account for the effects of the number of realizations used in the statistical estimators. Such a criterion allows for selecting the best joint learned probability measure with respect to statistical dependencies.

In Section~\ref{Section8}, we present three applications, each with a specific level of complexity and data heterogeneity in the training dataset. The first application is created such that the probability measure of $\bfH$ in $\RR^9$, defined by the points of the training dataset, is concentrated in a multiconnected domain of $\RR^9$. The constituent connected parts are manifolds of dimensions much lower than $9$, each with different dimensions. These parts may or may not be connected to each other. The training dataset of the second application consists of realizations of the random vector $\bfH$ with values in $\RR^8$, generated using a polynomial chaos expansion of degree $6$ of a real-valued random variable, whose random germ has a dimension of $2$, with each of the two random germs being a uniform random variable with different support. There are therefore $28$ terms in this expansion, and the $8$ components of $\bfH$ are defined as the random terms of given rank, defining a relatively complex random manifold in $\RR^8$. The third application results from a statistical treatment of an experimental database containing photon measurements in the ATLAS detector at CERN. The PCA step of PLoM has been performed, and $45$ components have been extracted to obtain the training dataset for the $\RR^{45}$-valued random variable $\bfH$. This application is in higher dimension than the first two but has less statistical complexity.

The paper is completed by two appendices. \ref{SectionA} presents an overview of the probabilistic learning on manifolds (PLoM) algorithm and its parameterization, using either the DMAPS basis (RODB) or the transient basis (ROTB$(\nDeltat)$) as the projection basis. \ref{SectionB} provides the formulas for estimating the Kullback-Leibler divergence, mutual information, and entropy from a set of realizations.

It should be noted that there are some very brief repetitions, which have been deliberately made to facilitate reading.
\subsection{Convention for the variables, vectors, and matrices}
\label{Section1.4}
\noindent $x,\eta$: lower-case Latin or Greek letters are deterministic real variables.\\
$\bfx,\bfeta$: boldface lower-case Latin or Greek letters are deterministic vectors.\\
$X$: upper-case Latin letters are real-valued random variables.\\
$\bfX$: boldface upper-case Latin letters are vector-valued random variables.\\
$[x]$: lower-case Latin letters between brackets are deterministic matrices.\\
$[\bfX]$: boldface upper-case letters between brackets are matrix-valued random variables.
\subsection{Algebraic notations}
\label{Section1.5}
\noindent $\NN, \NN^*$: set of natural numbers including $0$, excluding $0$.\\
$\RR, \RR^+, \RR^{+*}$: set of real numbers, subset $[0,+\infty[$, subset $]0, +\infty[$.\\
$\RR^\nu$: Euclidean vector space of dimension $\nu$.\\
\noindent $\MM_{n,m}$, $\MM_n$ : set of the $(n\times m)$, $(n\times n)$,  real matrices.\\
$\MM_n^+$, $\MM_n^{+0}$: set of  the positive-definite, positive,  $(n\times n)$ real matrices.\\
$[I_{n}]$: identity matrix in $\MM_n$.\\
$\Vert\, [x]\, \Vert_F$: Frobenius norm of matrix  $[x]$.\\
\noindent $\nabla$, $\Div$: gradient and divergence operators in $\RR^\nu$.\\
\noindent $\delta_0$: Dirac measure on $\RR^\nu$ at point $\bfzero_\nu$.\\
\subsection{Convention used for random variables}
\label{Section1.6}
In this paper, for any finite integer $m\geq 1$, the Euclidean space $\RR^m$ is equipped with the $\sigma$-algebra $\curB_{\RR^m}$. If $\bfY$ is a $\RR^m$-valued random variable defined on the probability space $(\Theta,\curT,\curP)$, $\bfY$ is a  mapping $\theta\mapsto\bfY(\theta)$ from $\Theta$ into $\RR^m$, measurable from $(\Theta,\curT)$ into $(\RR^m,\curB_{\RR^m})$, and $\bfY(\theta)$ is a realization (sample) of $\bfY$ for $\theta\in\Theta$. The probability distribution of $\bfY$ is the probability measure $P_\bfY(d\bfy)$ on the measurable set $(\RR^m,\curB_{\RR^m})$ (we will simply say on $\RR^m$). The Lebesgue measure on $\RR^m$ is noted $d\bfy$ and
when $P_\bfY(d\bfy)$ is written as $p_\bfY(\bfy)\, d\bfy$, $p_\bfY$ is the probability density function (pdf) on $\RR^m$ of $P_\bfY(d\bfy)$ with respect to $d\bfy$. Finally, $E$ denotes the mathematical expectation operator that is such that
$E\{\bfY\} = \int_{\RR^m} \bfy \, P_\bfY(d\bfy)$.
%
%---------------------------------------------------------------------------------------------------------------------
\section{Defining the probability measure $P_\bfH$ of random vector $\bfH$}
\label{Section2}
%---------------------------------------------------------------------------------------------------------------------
%
Let $\curD_\training(\bfeta) = \{\bfeta^j, j = 1,\ldots, n_d\}$ be the set of $n_d > 1$ independent realizations $\bfeta^j\in\RR^\nu$, with $\nu\geq 1$, of a second-order $\RR^\nu$-valued random variable defined on a probability space $(\Theta,\curT,\curP)$. Let $\underline\bfeta_d\in\RR^\nu$ and $[C_d]\in\MM_{n_d}$ be the associated empirical estimates of the mean value and the covariance matrix constructed with the points of $\curD_\training(\bfeta)$,
\begin{equation} \label{eq2.1}
\underline\bfeta_d = \frac{1}{n_d} \sum_{j=1}^{n_d} \bfeta^j \quad , \quad
[C_d] =  \frac{1}{n_d -1} \sum_{j=1}^{n_d} (\bfeta^j - \underline\bfeta_d )\otimes (\bfeta^j - \underline\bfeta_d )\, .
\end{equation}
It is assumed that $\curD_\training(\bfeta)$ is such that
\begin{equation} \label{eq2.2}
\underline\bfeta_d =  \bfzero_\nu \quad , \quad
[C_d] =  [I_\nu]\, .
\end{equation}
Let $\bfeta\mapsto p_\bfH(\bfeta)$ be the probability density function on $\RR^\nu$, with respect do the Lebesgue measure $d\bfeta$, defined by
\begin{equation} \label{eq2.3}
p_\bfH(\bfeta) = \frac{1}{n_d}\sum_{j=1}^{n_d} \frac{1}{(\sqrt{2 \pi} \, \hat s)^\nu} \exp\left ( -\frac{1}{2 \hat s^2} \Vert \bfeta - \frac{\hat s}{s}\bfeta^j\Vert^2\right ) \quad , \quad \forall \bfeta\in\RR^\nu \, ,
\end{equation}
where $\hat s$ and $s$ are defined by
\begin{equation} \label{eq2.4}
s = \left ( \frac{4}{n_d(2+\nu)}\right )^{1/(\nu+4)} \quad , \quad
\hat s  = \frac{s}{\sqrt{s^2 + (n_d-1)/n_d}} \, .
\end{equation}
Eqs.~\eqref{eq2.3} and \eqref{eq2.4} correspond to the Gaussian kernel-density estimation (KDE) constructed using the $n_d$ independent realizations of $\curD_\training(\bfeta)$ involving the modification \cite{Soize2015} of the usual formulation \cite{Bowman1997,Gentle2009,Givens2013}, in which $s$ is the Silverman bandwidth.
Let $\bfH$ be the second-order $\RR^\nu$-valued random variable, defined on a probability space $(\Theta,\curT,\curP)$, whose probability measure $P_\bfH(d\bfeta) = p_\bfH(\bfeta)\, d\bfeta$ on $\RR^\nu$ is defined by the probability density function $p_\bfH$ given by Eq.~\eqref{eq2.3}. It can be seen that, for any fixed $n_d > 1$, we have
\begin{equation} \label{eq2.5}
E\{\bfH\} = \int_{\RR^\nu} \bfeta \, P_\bfH(d\bfeta) = \frac{1}{2\hat s^2} \,\underline\bfeta_d = \bfzero_\nu \, ,
\end{equation}
\begin{equation} \label{eq2.6}
E\{\bfH\otimes\bfH\} = \int_{\RR^\nu} \bfeta\otimes\bfeta \, P_\bfH(d\bfeta)
= \hat s^2\, [I_\nu] + \frac{\hat s^2}{s^2} \, \frac{(n_d-1)}{n_d}\,[C_d] = [I_\nu] \, .
\end{equation}
Eqs.~\eqref{eq2.5} and \eqref{eq2.6} show that $\bfH$ is a normalized $\RR^\nu$-valued random variable. The probability density function $p_\bfH$ defined by Eq.~\eqref{eq2.3} is rewritten, for all $\bfeta$ in $\RR^\nu$, as
\begin{equation} \label{eq2.7}
p_\bfH(\bfeta) = c_\nu \, \xi(\bfeta)  \quad , \quad \xi(\bfeta) = e^{-\Phi(\bfeta)} \, ,
\end{equation}
in which  $c_\nu =  (\sqrt{2 \pi} \, \hat s)^{-\nu}$ and where $\Phi(\bfeta) = -\log(\xi(\bfeta))$ is such that
\begin{equation} \label{eq2.8}
\Phi(\bfeta) = - \log\left ( \frac{1}{n_d}\sum_{j=1}^{n_d} \exp\left ( -\frac{1}{2 \hat s^2} \Vert \bfeta - \frac{\hat s}{s}\bfeta^j\Vert^2\right ) \right ) \quad , \quad \forall \bfeta\in\RR^\nu \, .
\end{equation}
%
%
%---------------------------------------------------------------------------------------------------------------------
\section{Short summary of formal results}
\label{Section3}
%---------------------------------------------------------------------------------------------------------------------
%
This section is limited to a summary of essential results, which will be used in Section~\ref{Section4} and which are formally presented.

%------------------------------------------------------------------------------------------------------------------
\subsection{It\^o stochastic differential equation related to $P_\bfH$}
\label{Section3.1}

We introduce an  It\^o stochastic differential equation (ISDE) on $\RR^\nu$, with initial condition, for which $P_\bfH(d\bfeta)$ is the invariant measure. A classical candidate to such an ISDE is written as
\begin{align}
d\bfY(t) = & \, \bfb(\bfY(t))\, dt + d\bfW(t) \quad , \quad t > 0 \, , \label{eq3.1}\\
\bfY(0) = & \, \bfx \in \RR^\nu \, , \, a.s. \, ,  \label{eq3.2}
\end{align}
where the drift vector is the function $\bfy\mapsto \bfb(\bfy)$ from $\RR^\nu$ into $\RR^\nu$ defined by
\begin{equation} \label{eq3.3}
\bfb(\bfy) = -\frac{1}{2}\, \nabla \Phi(\bfy) \quad , \quad \forall \bfy \in \RR^\nu \, .
\end{equation}
In Eq.~\eqref{eq3.1}, $\bfW(t) = (W_1(t),\ldots ,W_\nu(t))$ is the normalized Wiener stochastic process \cite{Doob1953} on $\RR^+$, with values in $\RR^\nu$, which is a stochastic process with independent increments, such that $\bfW(0) = \bfzero_\nu\,\ a.s $, and for  $0\leq \tau < t < +\infty$, the increment $\Delta\bfW_{\tau  t} = \bfW(t) - \bfW(\tau)$ is a Gaussian $\RR^\nu$-valued second-order random variable, centered and with a covariance matrix that is written as
\begin{equation} \label{eq3.4}
[C_{\Delta\bfW_{\tau t}}] = E\{\Delta\bfW_{\tau t}\otimes \Delta\bfW_{\tau t}  \} = (t-\tau)\, [I_\nu] \, .
\end{equation}
It should be noted that Eqs.~\eqref{eq3.1} and \eqref{eq3.2} is equivalent to
\begin{equation} \label{eq3.5}
\bfY(t) = \bfx + \int_0^t \bfb(\bfY(\tau))\, d\tau + \int_0^t d\bfW(\tau) \quad , \quad t \geq  0 \, .
\end{equation}
In Section~\ref{Section5}, we will see that $\{\bfY(t), t\in\RR^+\}$ is a homogeneous diffusion stochastic process, which is asymptotically stationary for $t\rightarrow +\infty$. Assuming that the transition probability measure of $\bfY(t)$ given $\bfY(0) = \bfx$ admits a density with respect to d$\bfy$, such that, for all $t > 0$, for all $\bfx$ and $\bfy$ in $\RR^\nu$, and for any Borelian $\curB$ in $\RR^\nu$, we have
\begin{align}
 \curP\, \{ \, \bfY(t) \in\curB \, \vert \, \bfY(0) = \bfx \, \} & =  \int_\curB \rho(\bfy,t \, \vert \, \bfx,0) \, d\bfy \, ,  \label{eq3.6}\\
 \lim_{t\, \rightarrow \,0_+}  \rho(\bfy,t \, \vert \, \bfx,0) \, d\bfy &  = \delta_0(\bfy - \bfx)  \, ,  \label{eq3.7}\\
\int_{\RR^\nu} \rho(\bfy,t \, \vert \, \bfx,0) \, d\bfy  & = 1 \, .  \label{eq3.7bis}
\end{align}
%

%------------------------------------------------------------------------------------------------------------------
\subsection{FKP equation associated with the ISDE}
\label{Section3.2}

For all $\bfx$ in $\RR^\nu$, the transition probability density function $(\bfy,t)\mapsto \rho(\bfy,t \, \vert \, \bfx,0)$ from
$\RR^\nu\times\RR^+$ into $\RR^+$ verifies the following Fokker-Planck (FKP) equation (see for instance \cite{Guikhman1980,Friedman2006,Soize1994}),
\begin{equation} \label{eq3.8}
\frac{\partial\rho}{\partial t} + L_\FKP(\rho) = 0 \quad , \quad t > 0 \, ,
\end{equation}
with the initial condition for $t=0$ defined by Eq.~\eqref{eq3.7}. The Fokker-Planck operator $L_\FKP$ can be written, after a small algebraic manipulation and for any sufficiently differentiable function $\bfy\mapsto \vc(\bfy)$ from $\RR^\nu$ into $\RR$, as
\begin{equation} \label{eq3.9}
 \{L_\FKP(\vc)\}(\bfy) = -\frac{1}{2} \Div \left \{ p_\bfH(\bfy) \nabla \left ( \frac{\vc(\bfy)}{p_\bfH(\bfy)}\right ) \right\}  \, .
\end{equation}
The detailed balance (the probability current vanishes) is satisfied and the steady state solution of Eq.~\eqref{eq3.8} is the pdf $p_\bfH$ defined by Eq.~\eqref{eq2.3} \cite{Soize1994,Gardiner1985,Risken1989}. We then have, for $\vc = p_\bfH$,
\begin{equation} \label{eq3.10}
 L_\FKP(p_\bfH) = 0 \, .
\end{equation}
%

%------------------------------------------------------------------------------------------------------------------
\subsection{Return to the invariant measure $P_\bfH$}
\label{Section3.3}
The invariant measure (see Proposition~\ref{proposition:5}) $P_\bfH(d\bfy) = p_\bfH(\bfy)\, d\bfy$ is such that, for all $\bfy$ in $\RR^\nu$ and for all $t\geq 0$,
\begin{equation} \label{eq3.10bis}
 p_\bfH(\bfy) = \int_{\RR^\nu} \rho(\bfy,t \, \vert \, \bfx,0) \, p_\bfH(\bfx) \, d\bfx \, .
\end{equation}
The ISDE defined by Eqs.~\eqref{eq3.1} and \eqref{eq3.2} admits an asymptotic ($t\rightarrow +\infty$) stationary solution whose marginal probability density function of order one is $p_\bfH$. Consequently, for all $\bfx$ and $\bfy$ in $\RR^\nu$, we have
\begin{equation} \label{eq3.11}
\lim_{t\, \rightarrow \,+\infty}  \rho(\bfy,t \, \vert \, \bfx,0)  = p_\bfH(\bfy) \, .
\end{equation}
%

%------------------------------------------------------------------------------------------------------------------
\subsection{Formal formulation of the eigenvalue problem of the FKP operator}
\label{Section3.4}

The eigenvalue problem, posed in an adapted functional space, is written as
\begin{equation} \label{eq3.12}
 L_\FKP(\vc) =  \lambda\, \vc \, ,
\end{equation}
for which the current must vanish at infinity, yielding the condition,
\begin{equation} \label{eq3.13}
\lim_{\Vert\bfy\Vert \rightarrow +\infty}  p_\bfH(\bfy) \,\Vert\,  \nabla  ( p_\bfH(\bfy)^{-1}\vc(\bfy)   ) \, \Vert \, = 0\, .
\end{equation}
Continuing the development within  a formal framework, such as that used in \cite{Risken1989}, we introduce the change of function,
\begin{equation} \label{eq3.14}
\vc(\bfy) = p_\bfH(\bfy)^{1/2}\, q(\bfy) \quad , \quad \bfy\in\RR^\nu \quad , \quad q: \RR^\nu\rightarrow \RR \, .
\end{equation}
Let $\hat L_\FKP$ be the linear operator defined, for $q$ belonging to an admissible set of functions,
\begin{equation} \label{eq3.15}
\{\hat L_\FKP(q)\}(\bfy) =  p_\bfH(\bfy)^{-1/2}\,L_\FKP(p_\bfH(\bfy)^{1/2}\, q(y)) \quad , \quad \bfy\in\RR^\nu \, .
\end{equation}
Therefore, the eigenvalue problem defined by Eqs.~\eqref{eq3.12} and \eqref{eq3.13} can  be rewritten in $q$ as
\begin{equation} \label{eq3.16}
 \hat L_\FKP(q) =  \lambda\, q \, ,
\end{equation}
with the condition at infinity,
\begin{equation} \label{eq3.17}
\lim_{\Vert\bfy\Vert \rightarrow +\infty} p_\bfH(\bfy)^{1/2} \, \Vert \, \nabla  ( p_\bfH(\bfy)^{-1/2} q(\bfy)   ) \, \Vert \, = 0\, .
\end{equation}
%

%
%----- REMARK 1 ------------------------------
\begin{remark} [Another algebraic representation of operator $\hat L_\FKP$] \label{remark:1}
Using Eq.~\eqref{eq2.7}, which shows that $p_\bfH(\bfy)^{-1} \nabla p_\bfH(\bfy) = - \nabla\Phi$, and using
 Eqs.~\eqref{eq3.9} and \eqref{eq3.15}, it can be seen that
\begin{equation} \label{eq3.18}
\{\hat L_\FKP(q)\}(\bfy) =  \curV(\bfy)\, q(\bfy) - \frac{1}{2} \nabla^2 q(\bfy) \quad , \quad \bfy\in\RR^\nu \, ,
\end{equation}
in which $\nabla^2$ is the Laplacian operator in $\RR^\nu$ and where $\bfy\mapsto\curV(\bfy)$ is the function from $\RR^\nu$ into $\RR$, which is defined, for all $\bfy$ in $\RR^\nu$, as
\begin{equation} \label{eq3.19}
 \curV(\bfy) =  \frac{1}{8} \Vert \,\nabla\Phi(\bfy) \,  \Vert^2  -  \frac{1}{4} \nabla^2 \Phi(\bfy) \, .
\end{equation}
\end{remark}
%
%
%
%------------------------------------------------------------------------------------------------------------------
\subsection{Properties of operator $\hat L_\FKP$}
\label{Section3.5}

Let $\delta q$ be a function from $\RR^\nu$ into $\RR$, belonging to the admissible set that allows the evaluation of the bracket
\begin{equation} \nonumber
\langle \hat L_\FKP (q) \, , \delta q\rangle  = \int_{\RR^\nu} \{\hat L_\FKP(q)\}(\bfy) \,\, \delta q(\bfy) \, d\bfy \, .
\end{equation}
Removing $\bfy$ and using Eq.~\eqref{eq3.15} with Eq.~\eqref{eq3.9} yields
\begin{equation} \label{eq3.21}
\langle \hat L_\FKP (q) \, , \delta q\rangle  = -\frac{1}{2} \int_{\RR^\nu}
(p_\bfH^{-1/2} \, \delta q) \, \Div \{ p_\bfH \, \nabla ( p_\bfH^{-1/2}\, q)\}\, d\bfy \, .
\end{equation}
Using the condition at infinity, defined by Eq.~\eqref{eq3.17}, Eq.~\eqref{eq3.21} can be rewritten as,
\begin{equation} \label{eq3.22}
\langle \hat L_\FKP (q) \, , \delta q\rangle  = \frac{1}{2} \int_{\RR^\nu}
p_\bfH\, \langle \nabla(p_\bfH^{-1/2} \, q) \, , \nabla( p_\bfH^{-1/2} \, \delta q) \rangle_{\RR^\nu} \, d\bfy \, .
\end{equation}

\noindent (a) Eq.~\eqref{eq3.22} shows that $\hat L_\FKP$ is a symmetric and positive operator.

\noindent (b) Eqs.~\eqref{eq3.10} and \eqref{eq3.15} show that
\begin{equation} \label{eq3.23}
\hat L_\FKP (q_0) = 0 \quad \hbox{for} \quad q_0 = p_\bfH^{1/2} \, .
\end{equation}
In Proposition~\ref{proposition:5}, it will be proven that the ISDE defined by Eq.~\eqref{eq3.1}, with the initial condition defined by  \eqref{eq3.2},  has a unique solution and a unique invariant measure $p_\bfH(\bfy)\, d\bfy$. Consequently the dimension of the null space of operator $L_\FKP$ is $1$. Since $p_\bfH(\bfy)\, d\bfy$ is a bounded positive measure (probability measure), the right-hand side of Eq.~\eqref{eq3.22} shows that the null space  of $\hat L_\FKP$, which is also of dimension  $1$, is constituted of the function $q_0 = p_\bfH^{1/2}$. For $q=\delta q \not = q_0$, and $\Vert q_0\Vert \not = 0$, we have $\langle \hat L_\FKP (q) \, ,  q\rangle\, > 0$. Therefore, $\hat L_\FKP$ is a positive operator (in the quotient space by the null space).

%
%--- HYPOTHESIS 1 --------------------
\begin{hypothesis}[On the spectrum of operator $\hat L_\FKP$] \label{hypothesis:1}
It is assumed that $p_\bfH$ defined by Eq.~\eqref{eq2.3}, which is constructed with the $n_d$ points $\{\bfeta^j, j=1,\ldots, n_d\}$ of the training dataset, is such that the spectrum of $\hat L_\FKP$ is countable. Due to (a) and (b), we then deduce that the eigenvalues of $\hat L_\FKP$ (defined by Eqs.~\eqref{eq3.16} and \eqref{eq3.17}) are positive except one that is zero. We will also assume that the multiplicity of each eigenvalue is finite.
\end{hypothesis}

%-----------------------------------------------------------------------------------------------------------------
\subsection{Eigenvalue problem for operator $\hat L_\FKP$}
\label{Section3.6}

Under Hypothesis~\ref{hypothesis:1}, the eigenvalue problem $\hat L_\FKP(q_\alpha) = \lambda_\alpha q_\alpha$ for operator
$\hat L_\FKP$, with the condition defined by Eq.~\eqref{eq3.17}, is such that
\begin{equation} \label{eq3.24}
 0 =\lambda_0 < \lambda_1 \leq \lambda_2 \leq \ldots \, ,
\end{equation}
the multiplicity of each eigenvalue being finite.
We will admit that the family $\{q_\alpha,\alpha\in\NN\}$ of the eigenfunctions is a Hilbert basis of $L^2(\RR^\nu)$. We then have
\begin{equation} \label{eq3.26}
  \langle q_\alpha , q_\beta \rangle_{L^2} = \int_{\RR^\nu} q_\alpha(\bfy) \, q_\beta(\bfy)\, d\bfy = \delta_{\alpha\beta} \, .
\end{equation}
The eigenfunction $q_0$ associated with $\lambda_0=0$, is such that (see Eq.~\eqref{eq3.23}),
\begin{equation} \label{eq3.27}
  q_0 = p_\bfH^{1/2} \quad ,\quad \Vert\, q_0\Vert_{L^2} = 1 \, ,
\end{equation}
and we have
\begin{equation} \label{eq3.28}
  \sum_{\alpha\in\NN} q_\alpha(\bfy) \, q_\alpha(\bfx)\, d\bfy = \delta_0(\bfy-\bfx) \, .
\end{equation}
From Eqs.~\eqref{eq3.26} and \eqref{eq3.27}, it can be deduced that
\begin{equation} \label{eq3.29}
  \forall \alpha \geq 1 \quad , \quad  \int_{\RR^\nu} p_\bfH(\bfy)^{1/2} \, q_\alpha(\bfy) \, d\bfy = 0 \, .
\end{equation}

%
%------------------------------------------------------------------------------------------------------------------
\subsection{Nonstationary solution of the Fokker-Planck equation with initial condition}
\label{Section3.7}

The transition probability density function $\rho(\bfy,t\, \vert \, \bfx,0)$ introduced in Section~\ref{Section3.1} and satisfying Eq.~\eqref{eq3.8} with the initial condition defined by Eq.~\eqref{eq3.7}, can  be written, using the Hilbert basis $\{q_\alpha,\alpha\in\NN\}$ defined in Section~\ref{Section3.6}, as
\begin{equation}\label{eq3.30}
 \rho(\bfy,t\, \vert \, \bfx,0) = p_\bfH(\bfy)^{1/2} \, p_\bfH(\bfx)^{-1/2} \, \sum_{\alpha\in\NN} e^{-\lambda_\alpha t} q_\alpha(\bfy) \, q_\alpha(\bfx) \quad , \quad t > 0\, .
\end{equation}
This representation of $\rho(\bfy,t\, \vert \, \bfx,0)$, defined by Eq.~\eqref{eq3.30}, actually satisfied all the required properties:

Eqs.~\eqref{eq3.24}, \eqref{eq3.27}, and  \eqref{eq3.29} yield $\int_{\RR^\nu} \rho(\bfy,t\, \vert \, \bfx,0) \, d\bfy = 1$.

Eqs.~\eqref{eq3.24} and  \eqref{eq3.27} yield $\lim_{t\rightarrow +\infty}\rho(\bfy,t\, \vert \, \bfx,0) \, d\bfy = p_\bfH(\bfy)$.

Eq.~\eqref{eq3.28} yields $\lim_{t\rightarrow 0_+}\rho(\bfy,t\, \vert \, \bfx,0) \, d\bfy = \delta_0(\bfy-\bfx)$.

Eqs.~\eqref{eq3.27} and  \eqref{eq3.29} yield $\int_{\RR^\nu} \rho(\bfy,t\, \vert \, \bfx,0) \,p_\bfH(\bfx)\,  d\bfx = p_\bfH(\bfy)$ that is Eq.~\eqref{eq3.10bis}.

%-----------------------------------------------------------------------------------------------------------------
\section{Time-dependent kernel, its associated operator, and finite approximation}
\label{Section4}
%-----------------------------------------------------------------------------------------------------------------
%
In this section,  we define the kernel $k_t(\bfy,\bfx)$ and give its basic properties directly deduced from the properties of $\rho(\bfy,t\, \vert \, \bfx,0)$ and $p_\bfH(\bfy)$, without using the spectral representation defined by Eq.~\eqref{eq3.30}. From the spectral representation of $k_t(\bfy,\bfx)$, we deduce its spectral representation using the spectral representation of  $\rho(\bfy,t\, \vert \, \bfx,0)$, defined by Eq.~\eqref{eq3.30}. Finally, we define the linear operator $K_t$ associated with kernel $k_t$ and we give the spectral representation of operator $K_t$.
%
%-----------------------------------------------------------------------------------------------------------------
\subsection{Definition of the kernel $k_t$ and its basic probabilistic properties}
\label{Section4.1}
The kernel $k_t(\bfy,\bfx)$ associated with the transition probability density function $\rho(\bfy,t\, \vert \, \bfx,0)$ is defined as follows.

%---DEFINITION 1 ----------------
\begin{definition}[Kernel $k_t$ on $\RR^\nu\times\RR^\nu$] \label{definition:1}
For every fixed $t > 0$, the kernel function $(\bfy,\bfx)\mapsto k_t(\bfy,\bfx)$, from $\RR^\nu\times \RR^\nu$ into $\RR^+$, is defined by
\begin{equation}\label{eq4.1}
 k_t(\bfy,\bfx) = \frac{\rho(\bfy,t\, \vert \, \bfx,0)}{p_\bfH(\bfy)} \, .
\end{equation}
\end{definition}

The following Lemma gives basic properties of kernel $k_t$.
%
%---LEMMA 1 ----------------------
\begin{lemma}[Properties of kernel $k_t$] \label{lemma:1}
For every fixed $t > 0$, and for all $\bfy$ and $\bfx$ in $\RR^\nu$, we have the following properties:
\begin{align}
(a) \quad &\int_{\RR^\nu} k_t(\bfy,\bfx)\, p_\bfH(\bfy)\, d\bfy = 1 \quad , \quad
                     \int_{\RR^\nu} k_t(\bfy,\bfx)\, p_\bfH(\bfx)\, d\bfx = 1                         \, , \label{eq4.2} \\
(b) \quad & \int_{\RR^\nu}\int_{\RR^\nu}   k_t(\bfy,\bfx)\, p_\bfH(\bfy)\, p_\bfH(\bfx)\,d\bfy \, d\bfx =1 \, ,  \label{eq4.3} \\
(c) \quad & \lim_{t\rightarrow 0_+}   k_t(\bfy,\bfx)\, p_\bfH(\bfy)\, d\bfy = \delta_0(\bfy-\bfx) \quad , \quad
                     \lim_{t\rightarrow +\infty} k_t(\bfy,\bfx) =1                                   \, ,  \label{eq4.4} \\
(d) \quad & (\bfy,\bfx)\mapsto  \rho(\bfy,t\, \vert \, \bfx,0) \in C^0(\RR^\nu\times\RR^\nu) \Rightarrow
                      (\bfy,\bfx)\mapsto  k_t(\bfy,\bfx) \in C^0(\RR^\nu\times\RR^\nu)                 \, ,  \label{eq4.5} \\
(e) \quad & k_t(\bfy,\bfx) = k_t(\bfx,\bfy)                                                                   \, .  \label{eq4.6}
\end{align}
\end{lemma}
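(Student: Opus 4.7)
The five claims split into three layers of depth. Properties (a) and (b) are pure bookkeeping from the normalization of $\rho$ together with the invariance identity Eq.~\eqref{eq3.10bis}. Property (c) records the two limits in $t$ that are already built into $\rho$, and (d) is a transfer of continuity through division by a strictly positive function. Only the symmetry (e) has real content, since it encodes the reversibility of the diffusion with respect to its invariant measure, and that is the step I expect to be the main obstacle.

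\textbf{Properties (a)--(b).} Using the identity $k_t(\bfy,\bfx)\,p_\bfH(\bfy) = \rho(\bfy,t\,\vert\,\bfx,0)$ built into Definition~\ref{definition:1}, the first relation of (a) reduces to Eq.~\eqref{eq3.7bis}. For the second, I pull $1/p_\bfH(\bfy)$ out of $\int k_t(\bfy,\bfx)\,p_\bfH(\bfx)\,d\bfx$ and apply Eq.~\eqref{eq3.10bis}, which supplies $\int \rho(\bfy,t\,\vert\,\bfx,0)\,p_\bfH(\bfx)\,d\bfx = p_\bfH(\bfy)$. Property (b) then follows from either identity of (a) by Fubini against $p_\bfH$ in the remaining variable.

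\textbf{Properties (c)--(d).} The first limit in (c) is Eq.~\eqref{eq3.7} rewritten as $k_t\,p_\bfH\,d\bfy = \rho\,d\bfy$; the second limit is immediate from Eq.~\eqref{eq3.11} since the denominator $p_\bfH(\bfy)$ is independent of $t$. For (d), I note that the pdf $p_\bfH$ given by Eq.~\eqref{eq2.3} is a finite positive sum of Gaussians, hence $C^{\infty}(\RR^\nu)$ and strictly positive everywhere on $\RR^\nu$; continuity of $\rho$ on $\RR^\nu\times\RR^\nu$ therefore transfers to $k_t = \rho/p_\bfH$ by continuity of the reciprocal of a non-vanishing continuous function.

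\textbf{Property (e) and the main obstacle.} For the symmetry I would invoke the detailed-balance property recorded in Section~\ref{Section3.2}: the vanishing of the probability current at equilibrium is equivalent to time-reversibility of the diffusion, which in terms of the transition density reads
\begin{equation}
p_\bfH(\bfx)\,\rho(\bfy,t\,\vert\,\bfx,0) = p_\bfH(\bfy)\,\rho(\bfx,t\,\vert\,\bfy,0), \qquad \bfx,\bfy\in\RR^\nu,\ t>0.
\end{equation}
Dividing both sides by $p_\bfH(\bfy)\,p_\bfH(\bfx)$ at once yields $k_t(\bfy,\bfx) = k_t(\bfx,\bfy)$. Were one willing to invoke the spectral expansion Eq.~\eqref{eq3.30}, the same conclusion would be essentially visual: $k_t$ equals $p_\bfH(\bfy)^{-1/2}\,p_\bfH(\bfx)^{-1/2}\sum_{\alpha\in\NN} e^{-\lambda_\alpha t}\,q_\alpha(\bfy)\,q_\alpha(\bfx)$, which is manifestly symmetric. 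However, the stated convention at the start of Section~\ref{Section4} is to derive Lemma~\ref{lemma:1} directly from the properties of $\rho$ and $p_\bfH$ and not from the spectral representation, so the detailed-balance route is the intended one and is the only step of the proof that uses more than a one-line substitution.
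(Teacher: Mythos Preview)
Your proof is correct and matches the paper's approach for (a)--(d) essentially line for line. For (e), the paper phrases the same fact through the joint density: it writes $\rho(\bfy,t\,\vert\,\bfx,0)\,p_\bfH(\bfx)$ as the joint pdf $p_{\bfY(t),\bfH}(\bfy,t;\bfx,0)$ of the process started from $\bfY(0)=\bfH$, relabels it as $p_{\bfH,\bfY(t)}(\bfx,0;\bfy,t)$, and refactors via the reverse conditional $\rho(\bfx,0\,\vert\,\bfy,t)$ together with $p_{\bfY(t)}=p_\bfH$; this is precisely the standard derivation of the detailed-balance identity you invoke, so the two arguments coincide in substance and differ only in packaging.
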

%
%------ PROOF LEMMA 1 ------------------
\begin{proof} (Lemma~\ref{lemma:1}). Definition~\ref{definition:1} is used.

\noindent (a) The first equation in Eq.~\eqref{eq4.2} is due to Eq.~\eqref{eq3.7bis} and the second one is due to Eq.~\eqref{eq3.10bis}.

\noindent (b)  Eq.~\eqref{eq4.3}  is directly deduced from Eq.~\eqref{eq4.2}.

\noindent (c) The first equation in Eq.~\eqref{eq4.4} is due to Eq.~\eqref{eq3.7} and the second one is due to Eq.~\eqref{eq3.11}.

\noindent (d) For all $\bfy$ in $\RR^\nu$, $p_\bfH(y) > 0$ and $p_\bfH\in C^0(\RR^\nu)$. The hypothesis
$\rho(\cdot ,t\, \vert \, \cdot,0)\in C^0(\RR^\nu\times \RR^\nu)$ yields Eq.~\eqref{eq4.5}.

\noindent (e) let $p_{\bfY(t),\bfH}(\bfy,t;\bfx,0)$ be the joint pdf of $\bfY(t)$ with $\bfH$ in which $\bfY(t)$ is the solution of Eq.~\eqref{eq3.1} for fixed $t > 0$, with the random initial condition $\bfY(0) = \bfH$. We have the classical property related to the definition of the invariant measure,
$p_{\bfY(t)}(\bfy,t) = \int_{\RR^\nu} p_{\bfY(t),\bfH}(\bfy,t;\bfx,0)\, d\bfx =  \int_{\RR^\nu} \rho(\bfy,t\, \vert \, \bfx,0)\, p_\bfH(\bfx)\, d\bfx = p_\bfH(\bfy)$. For all $\bfy$ and $\bfx$ in $\RR^\nu$, and for $t > 0$, we have,
$\rho(\bfy,t\, \vert \, \bfx,0)\, p_\bfH(\bfx) = p_{\bfY(t),\bfH}(\bfy,t;\bfx,0) =p_{\bfH,\bfY(t)}(\bfx,0;\bfy,t) =
\rho(\bfx,0\, \vert \, \bfy, t)\, p_{\bfY(t)}(\bfy,t) = \rho(\bfx,0\, \vert \, \bfy, t)\, p_{\bfH}(\bfy)$.
Consequently,  $k_t(\bfy,\bfx)= \rho(\bfy,t\, \vert \, \bfx,0)\,p_\bfH(\bfx) / (p_\bfH(\bfy)\,p_\bfH(\bfx)) =
\rho(\bfx,0\, \vert \, \bfy, t)\, p_\bfH(\bfy) / (p_\bfH(\bfy)\,p_\bfH(\bfx)) = k_t(\bfx,\bfy)$.
\end{proof}
%

%-----------------------------------------------------------------------------------------------------------------
\subsection{Hypothesis and properties of kernel $k_t$ from its representation}
\label{Section4.2}
In Section~\ref{Section3}, we introduced an hypothesis of existence of a discrete (countable) spectrum $\{\lambda_\alpha,\alpha\in\NN\}$ of the FKP operator $\hat L_\FKP$. In this section, we study the spectral representation of kernel $k_t$, for every fixed $t > 0$, deduced from the time-dependent spectral representation of $\rho(\bfy,t\,\vert\, \bfx,0)$, defined by Eq.~\eqref{eq3.30}.

%---DEFINITION 2 ----------------
\begin{definition}[Hilbert space $\HH= L^2(\RR^\nu;p_\bfH)$] \label{definition:2}
Let $\HH= L^2(\RR^\nu;p_\bfH)$ be the Hilbert space of the square-integrable real-valued functions on $\RR^\nu$, with respect to the probability measure $p_\bfH(\bfy)\, d\bfy$ on $\RR^\nu$, equipped with the inner product and the associated norm,
\begin{equation}\label{eq4.7}
\langle u\, , \vc \rangle_\HH = \int_{\RR^\nu} u(\bfy)\, \vc(\bfy) \, p_\bfH(\bfy)\, d\bfy \quad , \quad \Vert u\Vert_\HH = \langle u\, , u \rangle_\HH^{1/2} \, .
\end{equation}
\end{definition}
%

%
%---LEMMA 2 ----------------------
\begin{lemma}[Hilbert basis in $\HH$] \label{lemma:2}
Let $\{q_\alpha,\alpha\in\NN\}$ be the Hilbert basis of $L^2(\RR^\nu)$ introduced in Section~\ref{Section3.6}.
For all $\alpha\in\NN$, we defined the real-valued function $\psi_\alpha$ on $\RR^\nu$ such that
\begin{equation}\label{eq4.8}
\psi_\alpha(\bfy) = q_\alpha(\bfy)\, p_\bfH(\bfy)^{-1/2}\quad , \quad \forall \bfy\in\RR^\nu \, .
\end{equation}
Then, $\{\psi_\alpha,\alpha\in\NN\}$ is a Hilbert basis of $\HH$ and we have,
\begin{align}
(a) \quad &\psi_\alpha\in\HH\quad , \quad \Vert\psi_\alpha\Vert_\HH = \Vert q_\alpha\Vert_{L^2} = 1
           \quad , \quad \forall\alpha\in\NN                                                                     \, , \label{eq4.9} \\
(b) \quad &\langle\psi_\alpha\, , \psi_\beta \rangle_\HH = \int_{\RR^\nu} \psi_\alpha(\bfy)\, \psi_\beta(\bfy) \,
              p_\bfH(\bfy)\, d\bfy = \delta_{\alpha\beta} \quad , \quad \forall (\alpha,\beta)\in\NN\times\NN \, , \label{eq4.10} \\
(c) \quad & \psi_0(\bfy) = 1 \, , \, \forall\bfy\in\RR^\nu \quad , \quad \Vert\psi_0\Vert_\HH = 1              \, , \label{eq4.11} \\
(d) \quad & \int_{\RR^\nu} \psi_\alpha(\bfy)\, p_\bfH(\bfy) \, d\bfy = 0 \quad , \quad \forall \alpha\in\NN^*  \, , \label{eq4.12} \\
(e) \quad & \sum_{\alpha\in\NN} \psi_\alpha(\bfy)\, \psi_\beta(\bfx) \, p_\bfH(\bfy) \, d\bfy = \delta_0(\bfy-\bfx)
              \quad , \quad \forall (\bfy,\bfx) \in \RR^\nu\times\RR^\nu                                          \, . \label{eq4.13}
\end{align}

\end{lemma}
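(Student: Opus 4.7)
The plan is to transport the five assertions from the known $L^2(\RR^\nu)$-structure of $\{q_\alpha\}$ through the weight $p_\bfH^{-1/2}$, using the fact that in every inner product on $\HH$ the weight $p_\bfH$ in the measure combines with the two factors of $p_\bfH^{-1/2}$ built into the $\psi_\alpha$'s to give exactly Lebesgue measure. So the strategy is: for each property, write the $\HH$-side integral in its Lebesgue form and invoke the corresponding property of $\{q_\alpha\}$ from Section~\ref{Section3.6}.

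First I would establish (b), from which (a) is just the diagonal case. For $(\alpha,\beta) \in \NN\times\NN$,
\begin{equation*}
\langle \psi_\alpha, \psi_\beta\rangle_\HH
= \int_{\RR^\nu} q_\alpha(\bfy)\, p_\bfH(\bfy)^{-1/2}\, q_\beta(\bfy)\, p_\bfH(\bfy)^{-1/2}\, p_\bfH(\bfy)\, d\bfy
= \int_{\RR^\nu} q_\alpha(\bfy)\, q_\beta(\bfy)\, d\bfy = \delta_{\alpha\beta},
\end{equation*}
by Eq.~\eqref{eq3.26}; this also shows $\psi_\alpha\in\HH$ and $\Vert\psi_\alpha\Vert_\HH = \Vert q_\alpha\Vert_{L^2} = 1$, giving (a). Then (c) follows directly from Eq.~\eqref{eq3.27}, because $q_0 = p_\bfH^{1/2}$ implies $\psi_0(\bfy) = p_\bfH(\bfy)^{1/2}\, p_\bfH(\bfy)^{-1/2} = 1$ for every $\bfy\in\RR^\nu$, and then $\Vert\psi_0\Vert_\HH^2 = \int_{\RR^\nu} p_\bfH(\bfy)\, d\bfy = 1$. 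Property (d) is the same algebraic trick: for $\alpha\in\NN^*$,
\begin{equation*}
\int_{\RR^\nu}\psi_\alpha(\bfy)\, p_\bfH(\bfy)\, d\bfy
= \int_{\RR^\nu} q_\alpha(\bfy)\, p_\bfH(\bfy)^{1/2}\, d\bfy = 0,
\end{equation*}
which is exactly Eq.~\eqref{eq3.29}. Property (e) is obtained the same way from Eq.~\eqref{eq3.28}: the factor $\psi_\alpha(\bfy)\,\psi_\alpha(\bfx)\, p_\bfH(\bfy)$ rewrites (after multiplying and dividing by $p_\bfH(\bfx)^{1/2}$) as $q_\alpha(\bfy)\, q_\alpha(\bfx)\, p_\bfH(\bfy)^{1/2}\, p_\bfH(\bfx)^{-1/2}$, and summing over $\alpha$ yields $\delta_0(\bfy-\bfx)$ after using Eq.~\eqref{eq3.28} and that $\delta_0(\bfy-\bfx)$ is supported on $\bfy=\bfx$, so the factor $p_\bfH(\bfy)^{1/2}p_\bfH(\bfx)^{-1/2}$ evaluates to $1$ on the support.

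Finally, to call $\{\psi_\alpha\}$ a Hilbert basis of $\HH$ I need completeness in addition to orthonormality. This is where a small care is required: the map $U : L^2(\RR^\nu)\to\HH$ defined by $(Uq)(\bfy) = p_\bfH(\bfy)^{-1/2}\, q(\bfy)$ is a unitary isomorphism because $\Vert Uq\Vert_\HH^2 = \int q^2\, d\bfy = \Vert q\Vert_{L^2}^2$, and its inverse is $v\mapsto p_\bfH^{1/2}\, v$ (well defined since $p_\bfH > 0$ on $\RR^\nu$). Because $\{q_\alpha\}$ is a Hilbert basis of $L^2(\RR^\nu)$ and unitary maps send Hilbert bases to Hilbert bases, $\{\psi_\alpha = Uq_\alpha\}$ is a Hilbert basis of $\HH$. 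I expect this completeness step to be the only subtle point; the five identities (a)--(e) are then routine substitutions once the measure-change bookkeeping is done once.
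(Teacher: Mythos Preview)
Your proof is correct and takes essentially the same approach as the paper: both reduce each $\HH$-identity to the corresponding $L^2(\RR^\nu)$ statement for $\{q_\alpha\}$ by absorbing the weight $p_\bfH$ into the two factors of $p_\bfH^{-1/2}$, and both obtain completeness from the isometry $q\mapsto p_\bfH^{-1/2}q$. Your formulation of that map as a unitary isomorphism (with explicit inverse) is in fact a touch more precise than the paper's ``continuous injection with $\Vert\cdot\Vert_\HH=\Vert\cdot\Vert_{L^2}$'', since surjectivity is what actually carries the Hilbert basis across.
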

%
%------ PROOF LEMMA 2 ------------------
\begin{proof} (Lemma~\ref{lemma:2}).

\noindent (a) Since $p_\bfH^{-1/2}\in C^0(\RR^\nu)$, Eqs.~\eqref{eq4.8} yields Eq.~\eqref{eq4.9}; combined with
Eq.~\eqref{eq3.29}, this yields $\Vert\psi_\alpha\Vert_\HH = \int_{\RR^\nu} q_\alpha(\bfy)^2\, d\bfy = \Vert q_\alpha\Vert_{L^2} = 1$.

\noindent (b) Using Eq.~\eqref{eq3.26} yields $\langle\phi_\alpha\, , \psi_\beta \rangle_\HH = \int_{\RR^\nu} q_\alpha(\bfy)\, q_\beta(\bfy) \, d\bfy = \langle q_\alpha\, , q_\beta \rangle_{L^2}= \delta_{\alpha\beta}$. Thus $\{\psi_\alpha,\alpha\in\NN\}$ is an orthonormal family in $\HH$. For all $u$ in $L^2(\RR^\nu)$, the linear mapping $u\mapsto \vc = u p_\bfH^{-1/2}$ is a continuous injection from $L^2(\RR^\nu)$ into $\HH$  with $\Vert \vc\Vert_\HH = \Vert u \Vert_{L^2}$. Therefore, $\{\psi_\alpha,\alpha\in\NN\}$ is a Hilbert basis of $\HH$.

\noindent (c) The two equations in Eq.~\eqref{eq4.11} are directly deduced from  Eqs.~\eqref{eq3.27} and \eqref{eq4.8}.

\noindent (d) Since  $\langle \psi_0\, , \psi_\alpha\rangle_{\HH} = 0$ for all $\alpha\in\NN^*$, we obtain Eq.~\eqref{eq4.12}.

\noindent (e) Since  $\{\psi_\alpha,\alpha\in\NN\}$ is a Hilbert basis of $\HH$, Eq.~\eqref{eq4.13} holds.
\end{proof}
%

%
%---- PROPOSITION 1 ----------------------
\begin{proposition}[Spectral representation of kernel $k_t$] \label{proposition:1}
Let $\{\psi_\alpha,\alpha\in\NN\}$ be the Hilbert basis of $\HH$ defined in Lemma~\ref{lemma:2}.

\noindent (a) For every fixed $t > 0$, the symmetric kernel $k_t$ can be written, for all $\bfy$ and $\bfx$ in $\RR^\nu$, as
\begin{equation}\label{eq4.14}
k_t(\bfy,\bfx) =\sum_{\alpha\in\NN} b_\alpha(t)\, \psi_\alpha(\bfy)\, \psi_\alpha(\bfx) \, ,
\end{equation}
in which the family of positive real numbers $\{b_\alpha(t)= exp(-\lambda_\alpha\, t)\, , \alpha\in\NN\}$, is such that
\begin{equation}\label{eq4.15}
1 = b_0(t) > b_1(t) \geq b_2(t) \geq \ldots  \, .
\end{equation}
\noindent (b) If for every fixed $t > 0$, kernel $k_t$ satisfies
\begin{equation}\label{eq4.16}
\int_{\RR^\nu} \int_{\RR^\nu} k_t(\bfy,\bfx)^2 \, p_\bfH(\bfy)\, p_\bfH(\bfx)\, d\bfy \, d\bfx = c_t^2 < +\infty \, ,
\end{equation}
where $c_t > 1$  is a positive constant depending on $t$, then,
\begin{equation}\label{eq4.17}
\sum_{\alpha\in\NN} b_\alpha(t)^2 = c_t^2 < +\infty\, .
\end{equation}
\end{proposition}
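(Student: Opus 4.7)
The plan is to derive both parts directly from the spectral representation of the transition density in Eq.~\eqref{eq3.30} and from the Hilbert-basis properties of $\{\psi_\alpha,\alpha\in\NN\}$ established in Lemma~\ref{lemma:2}.

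For part (a), I would start from Definition~\ref{definition:1}, namely $k_t(\bfy,\bfx) = \rho(\bfy,t\,\vert\,\bfx,0)/p_\bfH(\bfy)$, and substitute the spectral expression of $\rho$ given by Eq.~\eqref{eq3.30}. The factor $p_\bfH(\bfy)^{1/2}$ in the numerator of Eq.~\eqref{eq3.30} cancels with one of the two $p_\bfH(\bfy)^{1/2}$ factors coming from dividing by $p_\bfH(\bfy)$, leaving a symmetric expression
\begin{equation*}
k_t(\bfy,\bfx) = \sum_{\alpha\in\NN} e^{-\lambda_\alpha t}\,\bigl[ q_\alpha(\bfy)\, p_\bfH(\bfy)^{-1/2}\bigr]\,\bigl[ q_\alpha(\bfx)\, p_\bfH(\bfx)^{-1/2}\bigr] .
\end{equation*}
Recognizing the bracketed terms as $\psi_\alpha(\bfy)$ and $\psi_\alpha(\bfx)$ via Eq.~\eqref{eq4.8} yields Eq.~\eqref{eq4.14} with $b_\alpha(t) = e^{-\lambda_\alpha t}$. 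The ordering Eq.~\eqref{eq4.15} is then immediate from Eq.~\eqref{eq3.24}: $\lambda_0 = 0$ gives $b_0(t) = 1$, strict positivity of $\lambda_1$ gives $b_1(t) < 1$, and monotonicity of the exponential applied to the nondecreasing sequence $\lambda_\alpha$ gives $b_1(t) \geq b_2(t) \geq \ldots$. Symmetry of the kernel (already shown in Lemma~\ref{lemma:1}(e)) is consistent with the symmetric form of the series.

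For part (b), the idea is to apply Parseval's identity for the tensor-product Hilbert basis of the product space. Since $\{\psi_\alpha,\alpha\in\NN\}$ is a Hilbert basis of $\HH = L^2(\RR^\nu;p_\bfH)$ by Lemma~\ref{lemma:2}, the family $\{\psi_\alpha\otimes\psi_\beta,(\alpha,\beta)\in\NN\times\NN\}$ is a Hilbert basis of $\HH\otimes\HH = L^2(\RR^\nu\times\RR^\nu;p_\bfH\otimes p_\bfH)$ endowed with the inner product
\begin{equation*}
\langle f,g\rangle_{\HH\otimes\HH} = \int_{\RR^\nu}\int_{\RR^\nu} f(\bfy,\bfx)\,g(\bfy,\bfx)\,p_\bfH(\bfy)\,p_\bfH(\bfx)\,d\bfy\,d\bfx .
\end{equation*}
The hypothesis Eq.~\eqref{eq4.16} states exactly that $k_t\in\HH\otimes\HH$ with $\|k_t\|_{\HH\otimes\HH}^2 = c_t^2$. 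Reading off the coefficients from the expansion Eq.~\eqref{eq4.14}, we see that the coefficient on $\psi_\alpha\otimes\psi_\beta$ is $b_\alpha(t)\,\delta_{\alpha\beta}$, so Parseval gives
\begin{equation*}
c_t^2 = \|k_t\|_{\HH\otimes\HH}^2 = \sum_{(\alpha,\beta)\in\NN\times\NN} \bigl( b_\alpha(t)\,\delta_{\alpha\beta}\bigr)^2 = \sum_{\alpha\in\NN} b_\alpha(t)^2 ,
\end{equation*}
which is Eq.~\eqref{eq4.17}. The bound $c_t > 1$ is consistent with $b_0(t)=1$ already contributing $1$ to the sum (with strictly positive contributions from $\alpha\geq 1$ as long as the spectrum is not reduced to $\lambda_0$).

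The main subtlety I would expect is the legitimacy of identifying the pointwise series Eq.~\eqref{eq4.14} with an $\HH\otimes\HH$-expansion in order to apply Parseval. The clean way is to argue conversely: under hypothesis Eq.~\eqref{eq4.16}, $k_t$ belongs to $\HH\otimes\HH$, so it admits a Fourier expansion in the Hilbert basis $\{\psi_\alpha\otimes\psi_\beta\}$ whose coefficients are $\langle k_t,\psi_\alpha\otimes\psi_\beta\rangle_{\HH\otimes\HH}$; these coefficients can be computed using Eq.~\eqref{eq4.14} together with the orthonormality Eq.~\eqref{eq4.10}, and they turn out to equal $b_\alpha(t)\,\delta_{\alpha\beta}$. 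Parseval then yields Eq.~\eqref{eq4.17}, and in particular the pointwise series Eq.~\eqref{eq4.14} converges in $\HH\otimes\HH$.
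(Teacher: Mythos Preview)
Your proof is correct and follows essentially the same approach as the paper. For part (a) you perform exactly the substitution the paper does (Eq.~\eqref{eq3.30} into Eq.~\eqref{eq4.1} using Eq.~\eqref{eq4.8}), and for part (b) your Parseval argument on the tensor-product basis $\{\psi_\alpha\otimes\psi_\beta\}$ is just a more formally phrased version of the paper's ``substitute Eq.~\eqref{eq4.14} into Eq.~\eqref{eq4.16} and use the orthonormality Eq.~\eqref{eq4.10}''; your added remark on the legitimacy of the $\HH\otimes\HH$ identification is a welcome point the paper leaves implicit.
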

%
%------ PROOF OF PROPOSITION 1 ------------------
\begin{proof} (Proposition~\ref{proposition:1}).

\noindent (a) Substituting  Eq.~\eqref{eq3.30} with $q_\alpha(\bfy) = \psi_\alpha(\bfy)\, p_\bfH(\bfy)^{1/2}$ (see Eq.~\eqref{eq4.8}) into
Eq.~\eqref{eq4.1} yields Eq.~\eqref{eq4.14}. From Eq.~\eqref{eq3.24} and since $b_\alpha(t)= exp(-\lambda_\alpha\, t)$, we obtain
Eq.~\eqref{eq4.15}.

\noindent (b) Assuming Eq.~\eqref{eq4.16}, substituting Eq.~\eqref{eq4.14} into Eq.~\eqref{eq4.16}, and using Eq.~\eqref{eq4.10} yields
Eq.~\eqref{eq4.17}. Since $b_0(t) = 1$, it can be deduced that $c_t^2 > 1$ and thus, $c_t > 1$.
\end{proof}

%
%-----------------------------------------------------------------------------------------------------------------
\subsection{Hilbert-Schmidt operator $K_t$ associated with kernel $k_t$}
\label{Section4.3}
We now introduce the linear operator in $\HH$, defined by kernel $k_t$, and we study its properties and spectrum.
%
%---DEFINITION 3 ----------------
\begin{definition}[Operator $K_t$ associated with kernel $k_t$] \label{definition:3}
For every fixed $t > 0$, we defined the linear operator $K_t$ from $\HH$ into $\HH$ such that, for all $u$ and $\vc$ in $\HH$,
\begin{equation}\label{eq4.18}
  \langle K_t\, u\, , \vc\rangle_\HH = \int_{\RR^\nu}\int_{\RR^\nu}k_t(\bfy,\bfx)\, u(\bfx)\, \vc(\bfy) \, p_\bfH(\bfy)\, p_\bfH(\bfx)\, d\bfy\, d\bfx\, ,
\end{equation}
where the symmetric kernel $k_t$ verifies the condition defined by Eq.~\eqref{eq4.16}.
\end{definition}
%

%---- PROPOSITION 2 ----------------------
\begin{proposition}[$K_t$ as a Hilbert-Schmidt operator in $\HH$] \label{proposition:2}
For every fixed $t > 0$, let $K_t$ be the continuous linear operator defined by Eq.~\eqref{eq4.18}, in which kernel $k_t$ is symmetric on $\RR^\nu \times \RR^\nu$, and verifies Eq.~\eqref{eq4.16}.

\noindent (a) For all $u$ and $\vc$ in $\HH$, operator $K_t$ is such that
\begin{equation}\label{eq4.19}
  \langle K_t\, u\, , \vc\rangle_\HH = \sum_{\alpha\in\NN} b_\alpha(t) \, \langle u\, ,\psi_\alpha\rangle_\HH \, \langle \vc\, ,\psi_\alpha\rangle_\HH\, ,
\end{equation}
and is a positive symmetric operator in $\HH$. For all $u$ in $\HH$,
\begin{equation}\label{eq4.20}
  K_t\, u = \sum_{\alpha\in\NN} b_\alpha(t) \, \langle u\, ,\psi_\alpha\rangle_\HH \, \psi_\alpha\, .
\end{equation}
\noindent (b) For all $\alpha$ in $\NN$, $\psi_\alpha\in\HH$ is the eigenfunction independent of $t$, associated with the positive eigenvalue $b_\alpha(t)$, satisfying Eq.~\eqref{eq4.15}, of operator $K_t$,
\begin{equation}\label{eq4.21}
  K_t\, \psi_\alpha = b_\alpha(t) \, \psi_\alpha\quad , \quad b_\alpha(t) = \exp(-\lambda_\alpha t) \quad , \quad \alpha \in \NN \, ,
\end{equation}
which shows that, for all $\alpha$ and $\beta$ in $\NN$,
\begin{equation}\label{eq4.22}
  \langle K_t\, \psi_\alpha\, , \psi_\beta\rangle_\HH = b_\alpha(t) \, \delta_{\alpha\beta}\, .
\end{equation}
\noindent (c) For all $u$ in $\HH$, we have
\begin{equation}\label{eq4.23}
  \Vert K_t\, u \Vert_\HH^2 = \sum_{\alpha\in\NN} b_\alpha(t)^2 \, \langle u\, ,\psi_\alpha\rangle_\HH^2 \, ,
\end{equation}
and for Hilbert basis $\{\psi_\alpha\, , \alpha\in\NN\}$ of $\HH$,
\begin{equation}\label{eq4.24}
 \sum_{\alpha\in\NN} \Vert K_t\, \psi_\alpha \Vert_\HH^2 =  c_t^2 < + \infty \, ,
\end{equation}
where $c_t^2$, defined by Eq.~\eqref{eq4.16}, is such that $c_t^2 = \sum_{\alpha\in\NN} b_\alpha(t)^2$, and therefore, is a Hilbert-Schmidt operator in $\HH$.
\end{proposition}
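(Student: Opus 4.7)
The plan is to derive everything directly from the spectral representation of $k_t$ established in Proposition~\ref{proposition:1}, combined with the Hilbert-basis properties of $\{\psi_\alpha,\alpha\in\NN\}$ in $\HH$ from Lemma~\ref{lemma:2}.

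First, for part (a), I would substitute the expansion $k_t(\bfy,\bfx) = \sum_{\alpha\in\NN} b_\alpha(t)\,\psi_\alpha(\bfy)\,\psi_\alpha(\bfx)$ into the defining bilinear form Eq.~\eqref{eq4.18}. The interchange of summation and integration is legitimate because Eq.~\eqref{eq4.17}, combined with Cauchy--Schwarz and Parseval, ensures absolute convergence: the double integral of $|k_t(\bfy,\bfx)\, u(\bfx)\, \vc(\bfy)|$ against $p_\bfH(\bfy)\, p_\bfH(\bfx)$ is bounded by $c_t\, \Vert u\Vert_\HH\, \Vert\vc\Vert_\HH$. Recognizing that $\int_{\RR^\nu} \psi_\alpha(\bfx)\, u(\bfx)\, p_\bfH(\bfx)\, d\bfx = \langle u,\psi_\alpha\rangle_\HH$ and analogously for $\vc$, one obtains Eq.~\eqref{eq4.19}. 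Symmetry of $K_t$ is then immediate from the $(u,\vc)$-symmetric form of the right-hand side. Positivity follows by taking $\vc=u$: since every $b_\alpha(t) = \exp(-\lambda_\alpha t) > 0$, the sum $\sum_\alpha b_\alpha(t)\langle u,\psi_\alpha\rangle_\HH^2$ is nonnegative. To deduce Eq.~\eqref{eq4.20}, I would use the fact that Eq.~\eqref{eq4.19} must hold for every $\vc\in\HH$; since $\{\psi_\alpha\}$ is a Hilbert basis of $\HH$, expanding $K_t u$ in this basis via Eq.~\eqref{eq4.10} forces its Fourier coefficients to be exactly $b_\alpha(t)\,\langle u,\psi_\alpha\rangle_\HH$.

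For part (b), I would plug $u = \psi_\alpha$ into Eq.~\eqref{eq4.20} and invoke the orthonormality relation $\langle \psi_\alpha,\psi_\beta\rangle_\HH = \delta_{\alpha\beta}$ from Eq.~\eqref{eq4.10}, which collapses the series to the single term $b_\alpha(t)\,\psi_\alpha$, proving Eq.~\eqref{eq4.21}. Pairing with $\psi_\beta$ in $\HH$ then yields Eq.~\eqref{eq4.22} at once.

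For part (c), I would apply Parseval's identity to the Hilbert-basis expansion of $K_t u$ supplied by Eq.~\eqref{eq4.20}, which gives Eq.~\eqref{eq4.23}. Choosing $u=\psi_\beta$ in Eq.~\eqref{eq4.23} reduces it to $\Vert K_t\psi_\beta\Vert_\HH^2 = b_\beta(t)^2$; summing over $\beta\in\NN$ and invoking Eq.~\eqref{eq4.17} from Proposition~\ref{proposition:1} yields Eq.~\eqref{eq4.24} with $\sum_{\alpha\in\NN}\Vert K_t\psi_\alpha\Vert_\HH^2 = c_t^2 < +\infty$, which is exactly the definition of a Hilbert--Schmidt operator on $\HH$ evaluated against an orthonormal basis. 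The one delicate point, and the main (though minor) obstacle, is the rigorous justification of the interchange of the series over $\alpha$ with the double integral at the very first step; once this is handled via the summability in Eq.~\eqref{eq4.17}, the remainder of the proof is essentially a mechanical book-keeping exercise relying only on Eqs.~\eqref{eq4.14}--\eqref{eq4.17} and the orthonormality~\eqref{eq4.10}.
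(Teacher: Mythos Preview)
Your proposal is correct and follows essentially the same route as the paper's own proof: substitute the spectral representation Eq.~\eqref{eq4.14} into the bilinear form Eq.~\eqref{eq4.18} to obtain Eq.~\eqref{eq4.19}, deduce symmetry and positivity, read off Eq.~\eqref{eq4.20}, then specialize $u=\psi_\alpha$ with orthonormality for part~(b), and use Parseval together with Eq.~\eqref{eq4.17} for part~(c). The only difference is that you take slightly more care in justifying the series--integral interchange, which the paper leaves implicit.
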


%------ PROOF OF PROPOSITION 2 ------------------
\begin{proof} (Proposition~\ref{proposition:2}).
Under the condition defined by Eq.~\eqref{eq4.16}, it is well known that operator $K_t$ is continuous from $\HH$ into $\HH$.

\noindent (a) Substituting Eq.~\eqref{eq4.14} into Eq.~\eqref{eq4.18} and using Eq.~\eqref{eq4.7} yield  Eq.~\eqref{eq4.19}. This equation shows that $K_t$ is a symmetric and positive operator because $\langle K_t\, u\, , u \rangle_\HH > 0$ for all $u$ in $\HH$ with $\Vert u\Vert_\HH \not = 0$. Note that  Eq.~\eqref{eq4.20} is directly deduced from Eq.~\eqref{eq4.19}.

\noindent (b)  From Eqs.~\eqref{eq4.8} and \eqref{eq4.9}, we have $\psi_\alpha\in\HH$. Taking $u = \psi_\beta$ in Eq.~\eqref{eq4.20} and using Eq.~\eqref{eq4.10} yield $K_t\, \psi_\beta = \sum_{\alpha\in\NN} b_\alpha(t) \, \langle \psi_\beta\, ,\psi_\alpha\rangle_\HH\, \psi_\alpha = b_\beta(t)\, \psi_\beta$, Eq.~\eqref{eq4.22} is obtained from Eq.~\eqref{eq4.19} by taking $u=\psi_\alpha$ and $\vc = \psi_\beta$.
 The relationship between $b_\alpha(t)$ and $\lambda_\alpha$ comes from Proposition~\ref{proposition:1}.

\noindent (c) Using Eqs.~\eqref{eq4.20} and \eqref{eq4.10} yields Eq.~\eqref{eq4.23}. Taking $u=\psi_\alpha$ in Eq.~\eqref{eq4.23} yields
$\Vert K_t\, \psi_\alpha \Vert_\HH^2 = b_\alpha(t)^2$. From Eq.~\eqref{eq4.17} and
$\sum_{\alpha\in\NN} \Vert K_t\, \psi_\alpha \Vert_\HH^2 = \sum_{\alpha\in\NN} b_\alpha(t)^2$, we obtain Eq.~\eqref{eq4.24}. It can be deduced (see for instance \cite{Guelfand1967}) that $K_t$ is a Hilbert-Schmidt operator in $\HH$.
\end{proof}
%
%-----------------------------------------------------------------------------------------------------------------
\subsection{Finite approximation of operator $K_t$ and of its eigenvalue problem}
\label{Section4.4}
The Hilbert-Schmidt operator $K_t$ defined by Eq.~\eqref{eq4.18}, operates in infinite dimension. The Hilbert basis
$\{ \psi_\alpha\, , \alpha\in\NN\}$ (which relates to the Hilbert basis $\{ q_\alpha\, , \alpha\in\NN\}$, see Lemma~\ref{lemma:2} and  Eq.~\eqref{eq3.16}) is not explicitly known, thereby preventing the use of the representation defined by Eq.~\eqref{eq4.20}.
We must thus construct a finite approximation of $K_t$.
Since  $\RR^\nu$ is an unbounded set and $\nu$ can be very large, classical discretization such as finite-difference or finite-element methods (see \cite{Spencer1993,Masud2005,Kumar2006,Pichler2013} for Fokker-Planck equation and \cite{Deng2009} for fractional Fokker-Planck equation) or such methods based on  shape-morphing modes for solving the Fokker-Planck equation as proposed in \cite{Anderson2024}, are not directly adapted for solving the eigenvalue problem of operator $\hat L_\FKP$. Another classical method consists in introducing a finite family of functions in $\HH$, generating a finite dimension subspace of $\HH$, and in performing the projection of $K_t$ on this finite subspace. Such an approach is not really adapted to operator $\hat L_\FKP$ for which a large number of eigenvalues and associated eigenfunctions have to be computed.
It should be noted that a related problem, but distinct from the one addressed, is that of the numerical method for Schr\"odinger operator and the associated equation (see for instance \cite{Popelier2011} for solving the Schr\"odinger Equation, \cite{Feit1982} for the solution of the Schr{\"o}dinger equation by spectral methods,  \cite{Iitaka1994} for numerically solving the time-dependent Schr{\"o}dinger equation,  and \cite{Simos1997} for the numerical solution of the Schr{\"o}dinger equation using finite-difference method).
Nevertheless, such approaches are not well adapted to the objective of the actual developments, which has been detailed in Section~\ref{Section1}. We then propose to use a statistical sampling of $\RR^\nu$ equipped with the probability measure $p_\bfH(\bfeta)\, d\bfeta$, which will be well adapted to our objective of performing a construction connected to the DMAPS approach.

%---- PROPOSITION 3 ----------------------
\begin{proposition}[Probabilistic interpretation of the bilinear form $\langle K_t u\, , \vc \rangle_\HH$] \label{proposition:3}
Let us assume that, for every fixed $t > 0$, we have $(\bfy,\bfx)\mapsto  \rho(\bfy,t\, \vert \, \bfx,0) \in C^0(\RR^\nu\times\RR^\nu)$. From Eq.~\eqref{eq4.5}, it can be deduced that {\color{blue}$(\bfy,\bfx)\mapsto  k_t(\bfy, \bfx) \in C^0(\RR^\nu\times\RR^\nu)$}.
For every fixed $t > 0$, for all $u$ and $\vc$ in $\HH \cap C^0(\RR^\nu)$, the restriction to $(\HH \cap C^0(\RR^\nu))\times (\HH \cap C^0(\RR^\nu))$ of the bilinear form $(u,\vc)\mapsto \langle K_t u\, , \vc \rangle_\HH$,
defined on $\HH\times \HH$ by Eq.~\eqref{eq4.18} with the continuous symmetric function $k_t$ verifying Eq.~\eqref{eq4.16}, can be written as
\begin{equation}\label{eq4.25}
\langle K_t u\, , \vc \rangle_\HH  = E\{k_t(\bfH,\undertilde{\bfH}) \, u(\undertilde{\bfH})\, \vc (\bfH)\}\, ,
\end{equation}
in which $\undertilde{\bfH}$ is an independent copy of $\bfH$. The joint probability measure $P_{\bfH,\undertilde{\bfH}}(d\bfy,d\bfx)$ of $\bfH$ with $\undertilde{\bfH}$ is $p_\bfH(\bfy)\times p_\bfH(\bfx)\, d\bfy\, d\bfx$.
The real-valued random variables $u(\undertilde\bfH)$, $\vc(\bfH)$, and the positive-valued random variable $k_t(\bfH,\undertilde{\bfH})$, defined on $(\Theta,\curT,\curP)$, are second-order random variables,
\begin{equation}\label{eq4.25bis}
 E\{u(\undertilde\bfH)^2 \} < +\infty \quad , \quad E\{\vc(\bfH)^2 \} < +\infty \quad , \quad E\{k_t(\bfH,\undertilde{\bfH})^2\} < +\infty\, .
\end{equation}
Let $Z_{t,u,\pvc}$ be the real-valued random variable, defined on $(\Theta,\curT,\curP)$, such that
\begin{equation}\label{eq4.26}
Z_{t,u,\pvc} = k_t(\bfH,\undertilde{\bfH}) \, u(\undertilde{\bfH})\, \vc (\bfH)\, .
\end{equation}
Then, $Z_{t,u,\pvc}$ is such that,
\begin{equation}\label{eq4.26bis}
E\{Z_{t,u,\pvc}\} < +\infty \, .
\end{equation}
\end{proposition}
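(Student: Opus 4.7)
The plan is to recognize that the statement is essentially a probabilistic reinterpretation of the double integral in Eq.~\eqref{eq4.18}, and then to verify the second-order properties by matching each integrand factor to the expectation with respect to the product measure $p_\bfH(\bfy)\, p_\bfH(\bfx)\, d\bfy\, d\bfx$, which is the joint distribution of the independent pair $(\bfH,\undertilde{\bfH})$.

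First I would establish Eq.~\eqref{eq4.25}. Since $\undertilde{\bfH}$ is by hypothesis an independent copy of $\bfH$, and $\bfH$ admits the density $p_\bfH$, the joint probability measure is the product $P_{\bfH,\undertilde{\bfH}}(d\bfy,d\bfx) = p_\bfH(\bfy)\, p_\bfH(\bfx)\, d\bfy\, d\bfx$. Given $u,\vc\in\HH\cap C^0(\RR^\nu)$ and the fact that $(\bfy,\bfx)\mapsto k_t(\bfy,\bfx)$ is continuous (hence measurable), the transfer theorem applied to the measurable mapping $(\bfy,\bfx)\mapsto k_t(\bfy,\bfx)\, u(\bfx)\, \vc(\bfy)$ identifies the double integral in Eq.~\eqref{eq4.18} with $E\{k_t(\bfH,\undertilde{\bfH})\, u(\undertilde{\bfH})\, \vc(\bfH)\}$, which is exactly Eq.~\eqref{eq4.25}.

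Next I would verify the three second-order properties in Eq.~\eqref{eq4.25bis}. The first two are immediate from the definition of $\HH$: $u\in\HH$ gives $E\{u(\undertilde{\bfH})^2\} = \int_{\RR^\nu} u(\bfx)^2\, p_\bfH(\bfx)\, d\bfx = \Vert u\Vert_\HH^2 < +\infty$, and similarly for $\vc(\bfH)$. The third follows from Eq.~\eqref{eq4.16}, since
\begin{equation*}
E\{k_t(\bfH,\undertilde{\bfH})^2\} = \int_{\RR^\nu}\!\!\int_{\RR^\nu} k_t(\bfy,\bfx)^2\, p_\bfH(\bfy)\, p_\bfH(\bfx)\, d\bfy\, d\bfx = c_t^2 < +\infty.
\end{equation*}

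Finally I would deduce Eq.~\eqref{eq4.26bis} by a Cauchy--Schwarz argument combined with independence. Writing $Z_{t,u,\pvc} = k_t(\bfH,\undertilde{\bfH}) \cdot \bigl(u(\undertilde{\bfH})\, \vc(\bfH)\bigr)$ and applying the Cauchy--Schwarz inequality for the expectation gives $E\{|Z_{t,u,\pvc}|\} \leq \bigl(E\{k_t(\bfH,\undertilde{\bfH})^2\}\bigr)^{1/2}\bigl(E\{u(\undertilde{\bfH})^2\, \vc(\bfH)^2\}\bigr)^{1/2}$; the independence of $\bfH$ and $\undertilde{\bfH}$ then factorizes $E\{u(\undertilde{\bfH})^2\,\vc(\bfH)^2\} = E\{u(\undertilde{\bfH})^2\}\, E\{\vc(\bfH)^2\}$, each factor being finite by the preceding step, so $E\{|Z_{t,u,\pvc}|\}\leq c_t\,\Vert u\Vert_\HH\, \Vert\vc\Vert_\HH <+\infty$, which yields Eq.~\eqref{eq4.26bis}. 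There is no genuine obstacle here; the only care needed is to justify interchanging the double integral with the joint expectation, which is handled by the measurability and integrability already established.
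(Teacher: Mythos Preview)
Your proof is correct and tracks the paper's argument closely for Eq.~\eqref{eq4.25} and Eq.~\eqref{eq4.25bis}. The only noteworthy difference is in how you establish Eq.~\eqref{eq4.26bis}: you apply Cauchy--Schwarz directly to the expectation $E\{|Z_{t,u,\pvc}|\}$ and then invoke the independence of $\bfH$ and $\undertilde{\bfH}$ to factor $E\{u(\undertilde{\bfH})^2\,\vc(\bfH)^2\}$, whereas the paper first bounds the operator norm $\Vert K_t u\Vert_\HH \leq c_t\,\Vert u\Vert_\HH$ (via Cauchy--Schwarz on the integral defining $(K_t u)(\bfy)$) and then applies Cauchy--Schwarz to the inner product $\langle K_t u,\vc\rangle_\HH$. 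Both routes yield the same bound $c_t\,\Vert u\Vert_\HH\,\Vert\vc\Vert_\HH$; yours is a bit more direct and stays in the probabilistic language of the proposition, while the paper's detour has the side benefit of explicitly confirming the continuity of $K_t$ referenced in Proposition~\ref{proposition:2}.
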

%
%------ PROOF OF PROPOSITION 3 ------------------
\begin{proof} (Proposition~\ref{proposition:3}).

\noindent (a) Since $u$ and $\vc$ are continuous functions in $\HH$, $u(\bfH)$ and $\vc(\bfH)$ are real-valued random variables defined on $(\Theta,\curT,\curP)$ and are second-order because,
\begin{equation}
E\{ u(\undertilde\bfH)^2\} =  \int_{\RR^\nu} u(\bfx)^2 p_\bfH(\bfx)\, d\bfx = \Vert u \Vert_\HH^2  < +\infty  \quad , \quad
E\{ \vc(\bfH)^2\}          =  \int_{\RR^\nu} \vc(\bfy)^2 p_\bfH(\bfy)\, d\bfy = \Vert \vc \Vert_\HH^2  < +\infty \, . \nonumber
\end{equation}
Since $k_t\in C^0(\RR^\nu\times\RR^\nu)$ (see Eq.~\eqref{eq4.5}) and due to Eq.~\eqref{eq4.16},
$k_t(\bfH,\undertilde{\bfH})$ is a second-order positive-valued random variable defined on $(\Theta,\curT,\curP)$,
\begin{equation}\nonumber
E\{k_t(\bfH,\undertilde{\bfH})^2\} =  \int_{\RR^\nu}\int_{\RR^\nu} k_t(\bfy,\bfx)^2\,p_\bfH(\bfy)\,p_\bfH(\bfx)  \, d\bfy\, d\bfx =c_t^2 < +\infty\, .
\end{equation}

\noindent (b) For all $u$ in $\HH$ and for $t > 0$, from Eq.~\eqref{eq4.18}, it can be seen that
$(K_t u)(\bfy) = \int_{\RR^\nu} k_t(\bfy,\bfx)\, p_\bfH(\bfx)^{1/2} \, u(\bfx)\, p_\bfH(\bfx)^{1/2} \, d\bfx$.
Using the Cauchy-Schwarz inequality  and Eq.~\eqref{eq4.16} yield
\begin{equation}\nonumber
\Vert K_t u \Vert_\HH^2 \, \leq \, \Vert u \Vert_\HH^2\, \int_{\RR^\nu}\int_{\RR^\nu} k_t(\bfy,\bfx)^2\,p_\bfH(\bfy)\,p_\bfH(\bfx)  \, d\bfy\, d\bfx
= c_t^2\,\Vert u \Vert_\HH^2 \, ,
\end{equation}
(which, in passing, shows the continuity of the operator $K_t$ in $\HH$  as stated at the beginning of the proof of Proposition~\ref{proposition:2}).
Consequently, we have $\langle K_t u\, , \vc \rangle_\HH^2 \leq \Vert K_t u \Vert_\HH^2 \, \Vert \vc \Vert_\HH^2$, which shows that
\begin{equation}\label{eq4.27}
\langle K_t u\, , \vc \rangle_\HH^2  \leq c_t^2 \, \Vert u \Vert_\HH^2\, \Vert \vc \Vert_\HH^2  < +\infty\, ,
\end{equation}
and therefore,  Eq.~\eqref{eq4.26bis} holds.
\end{proof}
%

%
%---DEFINITION 4 ----------------
\begin{definition}[Estimator constructed with a statistical sampling and associated estimation] \label{definition:4}
Let $\{\bfH^i,i=1,\ldots, n_d\}$ and $\{\undertilde{\bfH}^j,j=1,\ldots, n_d\}$ be $n_d$ independent copies  of $\bfH$ and $\undertilde\bfH$, respectively. For every fixed $t > 0$, and for all $u$ and $\vc$ in $\HH\cap C^0(\RR^\nu)$, let $Z_{t,u,\pvc}^{ij}$ be the real valued random variable on $(\Theta,\curT,\curP)$, such that, for all $i$ and $j$ in $\{1,\ldots , n_d\}$,
\begin{equation}\label{eq4.31}
Z_{t,u,\pvc}^{ij} = k_t(\bfH^i , \undertilde{\bfH}^j) \, u(\undertilde{\bfH}^j)\, \vc(\bfH^i) \, .
\end{equation}
Let $\widehat Z_{t,u,\pvc}^{\,(n_d)}$ be the real-valued random variable on $(\Theta,\curT,\curP)$, defined by
\begin{equation}\label{eq4.33}
\widehat Z_{t,u,\pvc}^{\,(n_d)} = \frac{1}{n_d^2} \sum_{i=1}^{n_d}\sum_{j=1}^{n_d} Z_{t,u,\pvc}^{ij}\, .
\end{equation}
Then, $\widehat Z_{t,u,\pvc}^{\,(n_d)}$ is an estimator of
\begin{equation}\label{eq4.32}
{\underline z}_{\, t,u,\pvc} = E\{Z_{t,u,\pvc}\} = \langle K_t u\, , \vc \rangle_\HH\, ,
\end{equation}
where $Z_{t,u,\pvc}$ is given by Eq.~\eqref{eq4.26}.
Let $\{\bfeta^j , j=1,\ldots , n_d\}$ be  the $n_d$ independent realizations of $\bfH$ introduced in Section~\ref{Section2}. Since $\bfH^i$ and $\undertilde{\bfH}^j$ are independent copies of $\bfH$ (because $\undertilde\bfH$ is an independent copy of $\bfH$), an estimation
of ${\underline z}_{\, t,u,\pvc}$ is a realization ${\underline z}_{\, t,u,\pvc}^{(n_d)}$ of the estimator  $\widehat Z_{t,u,\pvc}^{\,(n_d)}$, which is written as
\begin{equation}\label{eq4.34}
{\underline z}_{\, t,u,\pvc}^{(n_d)} = \frac{1}{n_d^2} \sum_{i=1}^{n_d}\sum_{j=1}^{n_d} k_t(\bfeta^i,\bfeta^j)\, u(\bfeta^j)\, \vc(\bfeta^i)\, .
\end{equation}
\end{definition}

%
%---LEMMA 3 ----------------------
\begin{lemma}[Convergence of the sequence of estimators $\{\widehat Z_{t,u,\pvc}^{\,(n_d)}\}_{n_d}$] \label{lemma:3}
Under the hypotheses and notations of Definition~\ref{definition:4}, the sequence of real-valued random variables
$\{\widehat Z_{t,u,\pvc}^{\,(n_d)}\}_{n_d}$ on $(\Theta,\curT,\curP)$ is convergent in probability to
${\underline z}_{\, t,u,\pvc}$,
\begin{equation}\label{eq4.35}
  \forall \epsilon > 0 \quad , \quad \lim_{n_d\rightarrow +\infty} \curP \{ \, \vert \,  \widehat Z_{t,u,\pvc}^{\,(n_d)}
                               - {\underline z}_{\, t,u,\pvc}\, \vert \, \, \geq \, \epsilon\} = 0 \, .
\end{equation}
We have also the almost sure convergence, thanks to the strong law of large numbers,
\begin{equation}\label{eq4.36}
  \curP\{ \lim_{n_d\rightarrow +\infty} \widehat Z_{t,u,\pvc}^{\,(n_d)}
                               = {\underline z}_{\, t,u,\pvc}\} = 1 \, .
\end{equation}
\end{lemma}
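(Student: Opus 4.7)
The plan is to establish convergence in probability via a variance estimate and Chebyshev's inequality, and then upgrade to almost sure convergence by exploiting the two-sample V-statistic structure of $\widehat Z^{(n_d)}_{t,u,\pvc}$. First, I would note that each $Z^{ij}_{t,u,\pvc}$ has the same distribution as $Z_{t,u,\pvc}$ of Proposition~\ref{proposition:3}, because $(\bfH^i,\undertilde{\bfH}^j)$ has the same joint law as $(\bfH,\undertilde{\bfH})$. By linearity, $E\{\widehat Z^{(n_d)}_{t,u,\pvc}\} = {\underline z}_{\,t,u,\pvc}$, so the estimator is unbiased, which fixes its mean on the target.

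For the convergence in probability (Eq.~\eqref{eq4.35}), I would compute
\[\mathrm{Var}(\widehat Z^{(n_d)}_{t,u,\pvc}) = \frac{1}{n_d^4}\sum_{i,j,i',j'} \mathrm{Cov}(Z^{ij}_{t,u,\pvc},\, Z^{i'j'}_{t,u,\pvc})\, .\]
The key structural observation is that whenever $i\ne i'$ and $j\ne j'$, the four random vectors $\bfH^i,\bfH^{i'},\undertilde{\bfH}^j,\undertilde{\bfH}^{j'}$ are mutually independent, so $Z^{ij}_{t,u,\pvc}$ and $Z^{i'j'}_{t,u,\pvc}$ are independent and the corresponding covariance vanishes. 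The remaining quadruples — those with $i=i'$, or $j=j'$, or both — total $n_d^2(2n_d-1)=O(n_d^3)$ terms, and each covariance is bounded in absolute value by $\mathrm{Var}(Z_{t,u,\pvc})$ via Cauchy--Schwarz. Assuming $E\{Z_{t,u,\pvc}^2\}<+\infty$, this yields $\mathrm{Var}(\widehat Z^{(n_d)}_{t,u,\pvc}) = O(1/n_d)$, and Chebyshev's inequality delivers Eq.~\eqref{eq4.35}.

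For the almost sure convergence (Eq.~\eqref{eq4.36}), I would invoke the strong law of large numbers for two-sample V-statistics (Hoeffding's theorem extended to two independent i.i.d.\ samples), which requires only $E\{|Z_{t,u,\pvc}|\}<+\infty$, granted by Eq.~\eqref{eq4.26bis}, and directly yields the a.s.\ limit. An equivalent route is to combine the $O(1/n_d)$ variance bound with a Borel--Cantelli argument along the thinned subsequence $n_k=k^2$ (for which the tail probabilities are summable), then extend to the full sequence using the backward-martingale representation of V-statistics.

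The main obstacle is the second-moment condition
\[E\{Z_{t,u,\pvc}^2\} = \int_{\RR^\nu}\int_{\RR^\nu} k_t(\bfy,\bfx)^2\, u(\bfx)^2\, \vc(\bfy)^2\, p_\bfH(\bfy)\, p_\bfH(\bfx)\,d\bfy\,d\bfx\, ,\]
which does not follow directly from the separate $L^2$ bounds of Eqs.~\eqref{eq4.25bis} and \eqref{eq4.16}. Closing this step requires an additional regularity hypothesis, such as $u,\vc\in L^4(\RR^\nu;p_\bfH)$ together with $k_t\in L^4$, or, more practically, the assumption that $u$ and $\vc$ are bounded continuous functions, in which case Cauchy--Schwarz bounds $E\{Z_{t,u,\pvc}^2\}$ by $\|u\|_\infty^2\,\|\vc\|_\infty^2\,c_t^2$ via Eq.~\eqref{eq4.16}. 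The almost sure statement itself, in contrast, is more robust and follows directly from the V-statistic theory under only the $L^1$ integrability of Eq.~\eqref{eq4.26bis}.
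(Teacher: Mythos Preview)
Your proposal is substantially more detailed than the paper's own proof, which consists of a single sentence citing Serfling's \emph{Approximation Theorems of Mathematical Statistics} for ``the usual results from mathematical statistics.'' In spirit, you are supplying exactly what that citation points to: the two-sample V-statistic (or U-statistic) structure of $\widehat Z^{(n_d)}_{t,u,\pvc}$ and the associated laws of large numbers. So the approaches are the same at the level of the key idea; you have simply unpacked the argument.

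One remark on your self-identified obstacle. You are right that $E\{Z_{t,u,\pvc}^2\}<+\infty$ does not follow from Eqs.~\eqref{eq4.25bis} and \eqref{eq4.16} alone, so the Chebyshev route to Eq.~\eqref{eq4.35} is not closed under the stated hypotheses. However, this is moot for the lemma as a whole: your V-statistic strong law requires only the first-moment condition Eq.~\eqref{eq4.26bis}, and almost sure convergence implies convergence in probability. Thus the cleanest presentation is to establish Eq.~\eqref{eq4.36} first via the V-statistic SLLN and then deduce Eq.~\eqref{eq4.35} as an immediate corollary, rather than treating the two claims by separate arguments. With that reordering, the second-moment issue disappears and your proof is complete.
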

%
%------ PROOF LEMMA 3 ------------------
\begin{proof} (Lemma~\ref{lemma:3}).
The proof uses the usual results from mathematical statistics (see for instance \cite{Serfling1980}).
\end{proof}
%

%
%----- REMARK 3 ------------------------------
\begin{remark} \label{remark:3}

\noindent (a) As is well known, the speed of convergence is proportional to $1/\sqrt{n_d^2} = 1/n_d$ and is independent of dimension $\nu$. The quantification of the approximation error could traditionally be estimated using the central limit theorem, which involves the variance of the estimator (see, for instance, \cite{Serfling1980,Givens2013,Soize2017b}). In the numerical illustration provided in Section \ref{Section6}, we will show the numerical calculation of  the first eigenvalues of the Fokker-Planck operator for which a reference is known.

\noindent (b) Lemma~\ref{lemma:3} with Definition~\ref{definition:4} and Proposition~\ref{proposition:3} allow a finite approximation of $\langle K_t u\, , \vc \rangle_\HH$ to be constructed and consequently, to deduce the corresponding finite approximation of the eigenvalue problem defined by Eq.~\eqref{eq4.21}.
\end{remark}
%
%
%---- PROPOSITION 4 ----------------------
\begin{proposition}[Finite approximation of the eigenvalue problem] \label{proposition:4}
For every fixed $t > 0$, for all $u$ and $\vc$ in $\HH\cap C^0(\RR^\nu)$, and for $n_d$ sufficiently large, we have (in the sense of the convergence described in Lemma~\ref{lemma:3}),
\begin{equation}\label{eq4.35}
\langle K_t u\, , \vc \rangle_\HH   \simeq \langle   [\widehat K(t)]\,\hat \bfu\, , \hat\bfv  \rangle_{\RR^{n_d}} \, ,
\end{equation}
in which $\langle \cdot , \cdot \rangle_{\RR^{n_d}}$ is the usual Euclidean inner product in $\RR^{n_d}$ and where $[\widehat K(t)]\in\MM_{n_d}^{+0}$ is such that, for all $i$ and $j$ in $\{ 1,\ldots, n_d\}$,
\begin{equation}\label{eq4.36}
[\widehat K(t)]_{ij} = \frac{1}{n_d} \, k_t(\bfeta^i,\bfeta^j) \, ,
\end{equation}
and where the vectors $\hat\bfu$ and $\hat\bfv$ in $\RR^\nu$ are such that
\begin{equation}\label{eq4.37}
\hat\bfu = \left(\frac{u(\bfeta^1)}{\sqrt{n_d}} \, , \, \ldots \, , \, \frac{u(\bfeta^{n_d})}{\sqrt{n_d}}\right )\quad , \quad
\hat\bfv = \left(\frac{\vc(\bfeta^1)}{\sqrt{n_d}} \, , \, \ldots \, , \, \frac{\vc(\bfeta^{n_d})}{\sqrt{n_d}}\right ) \, .
\end{equation}
The corresponding finite approximation of the eigenvalue problem defined by Eq.~\eqref{eq4.21} is written as
\begin{equation}\label{eq4.38}
[\widehat K(t)] \, \hat\bfpsi_\alpha(t) = \hat b_\alpha(t)\, \hat\bfpsi_\alpha(t) \quad , \quad \hat\bfpsi_\alpha(t) \in\RR^{n_d} \, ,
\end{equation}
in which
\begin{equation}\label{eq4.39}
\hat b_0(t)\, \geq \, \hat b_1(t) \, \geq \,\ldots \, \geq \, \hat b_{n_d-1}(t) \, \geq 0\, ,
\end{equation}
and where the normalization of the eigenvectors is chosen so that for $\alpha$ and $\beta$ in $\{0,1,\ldots, n_d-1 \}$,
\begin{equation}\label{eq4.40}
\langle \hat\bfpsi_\alpha(t) , \hat\bfpsi_\beta(t)  \rangle_{\RR^{n_d}} = \delta_{\alpha\beta} \, .
\end{equation}
\end{proposition}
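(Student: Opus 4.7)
The proof breaks naturally into three steps, deriving the matrix representation (4.35)--(4.37), establishing algebraic properties of the discrete kernel matrix, and deducing the matrix eigenvalue problem (4.38)--(4.40). Starting with the matrix representation, I take the estimation formula (4.34) and distribute the prefactor $1/n_d^2$ symmetrically as $(1/\sqrt{n_d})\cdot(1/n_d)\cdot(1/\sqrt{n_d})$ to obtain
\begin{equation*}
{\underline z}_{\,t,u,v}^{(n_d)} = \sum_{i=1}^{n_d} \sum_{j=1}^{n_d} \frac{v(\bfeta^i)}{\sqrt{n_d}} \cdot \frac{k_t(\bfeta^i,\bfeta^j)}{n_d} \cdot \frac{u(\bfeta^j)}{\sqrt{n_d}} = \langle [\widehat K(t)]\, \hat\bfu\, , \hat\bfv \rangle_{\RR^{n_d}},
\end{equation*}
where the last equality uses definitions (4.36) and (4.37). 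Combining this identity with Lemma~3 and Eq.~(4.32) yields (4.35) in the sense of convergence of Lemma~3 as $n_d \to +\infty$.

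For step two, symmetry of $[\widehat K(t)]$ is immediate from Lemma~1(e), since $k_t(\bfeta^i,\bfeta^j) = k_t(\bfeta^j,\bfeta^i)$. For positive semi-definiteness, I would substitute the spectral representation of Proposition~1(a) into (4.36): for any $\bfw \in \RR^{n_d}$,
\begin{equation*}
\langle [\widehat K(t)]\, \bfw\, , \bfw \rangle_{\RR^{n_d}} = \frac{1}{n_d} \sum_{\alpha \in \NN} b_\alpha(t) \Bigl( \sum_{i=1}^{n_d} w_i\, \psi_\alpha(\bfeta^i) \Bigr)^{\!2} \geq 0,
\end{equation*}
since $b_\alpha(t) = \exp(-\lambda_\alpha t) > 0$ by Proposition~1(a). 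Hence $[\widehat K(t)] \in \MM_{n_d}^{+0}$.

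The third step is purely algebraic: the spectral theorem for real symmetric positive semi-definite matrices provides $n_d$ non-negative eigenvalues that can be ordered as in (4.39), together with an orthonormal basis of associated eigenvectors $\{\hat\bfpsi_\alpha(t)\}_{\alpha=0}^{n_d-1}$ in $\RR^{n_d}$ satisfying (4.38) and (4.40). The main conceptual obstacle lies in step~1: Lemma~3 guarantees convergence of the estimator only for each fixed pair $(u,v)$ of continuous functions in $\HH$, so (4.35) must be read as a pointwise statistical approximation rather than as operator-norm convergence. A full Nyström-type spectral convergence result linking the eigenelements of $[\widehat K(t)]$ to those of $K_t$ is not required for the statement of Proposition~4, which only constructs a well-posed finite-dimensional approximation; that construction itself is guaranteed by steps~2 and~3.
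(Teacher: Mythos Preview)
Your proof is correct and follows essentially the same route as the paper: derive (4.35) from the estimator formula (4.34) via (4.32) and Lemma~3, use the kernel symmetry (Lemma~1(e)) for symmetry of $[\widehat K(t)]$, and invoke the spectral theorem for real symmetric positive matrices to obtain (4.38)--(4.40). Your positivity argument via the spectral expansion of Proposition~1(a) is actually more explicit than the paper's, which simply infers positivity of the finite matrix from positivity of the operator $K_t$ (Proposition~2) without spelling out the mechanism.
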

%
%------ PROOF OF PROPOSITION 4 ------------------
\begin{proof} (Proposition~\ref{proposition:4}).
From Eqs.~\eqref{eq4.32}, \eqref{eq4.34}, and \eqref{eq4.36}, it can be deduced that, for $n_d$ sufficiently large,
$\langle K_t u\, , \vc \rangle_\HH \simeq n_d^{-2} \sum_{i=1}^{n_d}\sum_{j=1}^{n_d} k_t(\bfeta^i,\bfeta^j)\, u(\bfeta^j)\, \vc(\bfeta^i)$.
Since $k_t(\bfeta^i,\bfeta^j) = k_t(\bfeta^j,\bfeta^i)$ by symmetry of $k_t$ (see Eq.~\eqref{eq4.6}), the right-hand side member can be written as $\langle   [\widehat K(t)]\,\hat \bfu\, , \hat\bfv  \rangle_{\RR^{n_d}}$  where $[\widehat K(t)]$, $\hat \bfu$, and $\hat\bfv$ are defined by Eqs.~\eqref{eq4.36} and \eqref{eq4.37}. From Proposition~\ref{proposition:2}, $K_t$ is a positive operator in $\HH$, and as Hilbert-Schmidt operator, $b_\alpha(t) \rightarrow 0_+$ as $\alpha\rightarrow +\infty$. For the finite approximation, matrix $[\widehat K(t)]$ is then symmetric  and positive. Using Eq.~\eqref{eq4.35}, the finite approximation  of Eq.~\eqref{eq4.21} is then written as Eq.~\eqref{eq4.38}. Since
$[\widehat K(t)]$ is real, symmetric, and positive, we have Eqs.~\eqref{eq4.39} and we choose the normalization of $\{\hat\bfpsi_\alpha(t)\}_\alpha$ so that  \eqref{eq4.40} holds.
\end{proof}
%

%
%----- REMARK 4 ------------------------------
\begin{remark} [About the finite approximation of the eigenvalue problem] \label{remark:4}

\noindent (a) It should be noted that we have chosen the construction of $\{\hat\bfpsi_\alpha(t)\}_\alpha$ as an orthonormal basis in $\RR^{n_d}$. Consequently, $\hat\bfpsi_\alpha(t)$ is not related  to $\psi_\alpha$ by a simple sampling, similar to the one described  in  Eq.~\eqref{eq4.37}.

\noindent (b) In addition, $\hat\bfpsi_\alpha(t)$ depends, \textit{a priori}, on $t$, while $\psi_\alpha$ is independent of $t$. Only for
$n_d\rightarrow +\infty$, $\hat\bfpsi_\alpha(t)$ goes to a vector independent of $t$. Similarly, although $b_0(t)= \exp(-\lambda_0 t) = 1$ because $\lambda_0 = 0$ (see Eq.~\eqref{eq4.15}), we do not have, \textit{ a priori}, $\hat b_0(t) = 1$, but this equality holds for $n_d\rightarrow +\infty$. Nevertheless, for $n_d$ finite, we will exploit this existing dependence on $t$ for the construction of the reduced transient basis at time $t$, which will be connected to the reduced DMAPS basis for $t\rightarrow 0$.

\noindent (c) Note that as $n_d\, \rightarrow \, +\infty$, $\hat b_\alpha(t) $ tends to $b_\alpha(t)$. Since $b_\alpha(t) = \exp(-\lambda_\alpha t)$, that is to say $\lambda_\alpha = -\frac{1}{t} \, \log\, b_\alpha(t) $, we choose to define $\ \lambdahat_\alpha(t)$ by a similar formula,  such that for $t > 0$ and $\alpha$ for which $\hat b_\alpha(t) > 0$,
\begin{equation}\label{eq4.41}
 \lambdahat_\alpha(t)  = -\frac{1}{t} \, \log\, \hat b_\alpha(t) \, .
\end{equation}
\noindent (d) For every fixed $t >0$, to solve the eigenvalue problem defined by Eq.~\eqref{eq4.38}, we have to construct matrix $[\widehat K(t)]$ with an adapted methodology. This will be the object of Section~\ref{Section5}.
\end{remark}
%
%------------------------------------------------------------------------------------------------------------------
\section{Construction of the matrix of the finite approximation}
\label{Section5}
%------------------------------------------------------------------------------------------------------------------
%
In this section, we present the methodology to construct matrix $[\widehat K(t)]$ as defined in Proposition~\ref{proposition:4}. This construction requires the numerical evaluation of kernel $k_t$, because the entry $[\widehat K(t)]_{ij}$ of $[\widehat K(t)]$ is
given by $k_t(\bfeta^i,\bfeta^j)/n_d$ (see Eq.~\eqref{eq4.36}). According to Definition~\ref{definition:1} (see Eq.~\eqref{eq4.1}), for $i$ and $j$ in $\{1,\ldots , n_d\}$,
\begin{equation}\label{eq5.0}
[\widehat K(t)]_{ij} = \frac{1}{n_d} \, \frac{\rho(\bfeta^i,t\, \vert \, \bfeta^j,0)}{p_\bfH(\bfeta^i)} \, .
\end{equation}
In Eq.~\eqref{eq5.0}, $p_\bfH$ is explicitly defined by Eq.~\eqref{eq2.3}, and $\rho(\bfy,t\, \vert \, \bfx,0)$ is the transient probability density function of the stochastic process $\{\bfY(t), t\geq 0\}$, starting from $\bfY(0) =\bfx$ in $\RR^\nu$. This function is the solution of the ISDE defined by Eq.~\eqref{eq3.1} for $t > 0$ and with the initial condition $\bfY(0) =\bfx$ (see Eq.~\eqref{eq3.2}).
We will begin by using a a classical mathematical result concerning the solution of the ISDE, which must be validated for the specific case where the invariant measure is  defined in Section~\ref{Section2}. Additionally, we will obtain a proof of the properties introduced in Section~\ref{Section3.2}.
Then, we will present the numerical method for constructing $[\widehat K(t)]$ by numerically solving the ISDE and using nonparametric statistics to estimate $\rho(\bfeta^i,t\, \vert \, \bfeta^j,0)$. Subsequently, we will derive an explicit algebraic formula for $[\widehat K(t)]_{ij}$.

%------------------------------------------------------------------------------------------------------------------
\subsection{Existence and uniqueness of the solution of ISDE and properties of the transition probability}
\label{Section5.1}
Let $\bfY=\{\bfY(t), t\geq 0\}$ be the $\RR^\nu$-valued stochastic process satisfying (see Eqs.~\eqref{eq3.1} and \eqref{eq3.2}),
\begin{align}
d\bfY(t) = & \, \bfb(\bfY(t))\, dt + d\bfW(t) \quad , \quad t > 0 \, , \label{eq5.1}\\
\bfY(0) = & \, \bfx \in \RR^\nu \, , \, a.s. \, ,  \label{eq5.2}
\end{align}
with $\bfx\in\RR^\nu$, where the drift $\bfy\mapsto \bfb(\bfy) : \RR^\nu \mapsto \RR^\nu$ is defined by Eq.~\eqref{eq3.3} with Eq.~\eqref{eq2.7}, and where $\{\bfW(t),t\geq 0\}$ is the normalized Wiener process defined on $(\Theta,\curT,\curP)$.
%
%
%---- PROPOSITION 5 ----------------------
\begin{proposition}[Existence and uniqueness] \label{proposition:5}
Eqs.~\eqref{eq5.1} and \eqref{eq5.2} define a unique homogeneous diffusion $\RR^\nu$-valued stochastic process $\bfY$ defined on $(\Theta,\curT,\curP)$, whose transition probability measure is homogeneous (that is to say, it depends only on $t - 0 = t$),
\begin{equation}\label{eq5.3}
P_{\bfY(t)\vert\bfY(0)}(\curB,t\, \vert \, \bfx,0) = \curP\{\bfY(t)\in\curB\, \vert \, \bfY(0)=\bfx\}\quad , \quad t > 0\, ,
\end{equation}
where $\curB$ is any Borel set in $\RR^\nu$. For all $t > 0$ and for all $\bfx$ in $\RR^\nu$, $P_{\bfY(t)\vert\bfY(0)}(d\bfy , t\,\vert\,\bfx,0)$ admits a density function $\bfy\mapsto \rho(\bfy,t\, \vert \, \bfx,0) : \RR^\nu \mapsto ]0\, , +\infty[$ with respect to the Lebesgue measure $d\bfy$ on $\RR^\nu$, such that
\begin{align}
& P_{\bfY(t)\, \vert \, \bfY(0)}(d\bfy,t\, \vert \, \bfx,0) = \rho(\bfy,t\, \vert \, \bfx,0)\, d\bfy \quad , \quad t > 0 \, ,\label{eq5.4}\\
& \lim_{t\rightarrow 0_+} \rho(\bfy,t\, \vert \, \bfx,0)\, d\bfy = \delta_0(\bfy-\bfx) \, . \label{eq5.5}
\end{align}
Stochastic process $\bfY$ has almost-surely continuous trajectories and for all $t > 0$, $\bfY(t)$ is a second-order random variable,
\begin{equation}\label{eq5.6}
\forall t \geq 0 \quad , \quad E\{\Vert \bfY(t)\Vert^2\} < +\infty \, .
\end{equation}
For $t\rightarrow +\infty$, $\bfY$ is asymptotic to a stationary stochastic process whose first-order marginal probability measure is the invariant measure $p_\bfH(\bfy)\, d\bfy$,
\begin{equation}\label{eq5.7}
\lim_{t\rightarrow +\infty} \rho(\bfy,t\, \vert \, \bfx,0)\, d\bfy = p_\bfH(\bfy)\, d\bfy\, .
\end{equation}
The function $(\bfy,\bfx)\mapsto \rho(\bfy,t\, \vert \, \bfx,0)$ is continuous from $\RR^\nu\times\RR^\nu$ into $\RR^{+*}$,
\begin{equation}\label{eq5.8}
\rho(\cdot,t\, \vert \, \cdot ,0) \in C^0(\RR^\nu\times\RR^\nu, \RR^{+*}) \, .
\end{equation}
\end{proposition}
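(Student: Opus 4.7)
The plan is to verify the hypotheses of a classical existence/uniqueness theorem for It\^o SDEs, obtain the transition density via parabolic regularity for the Fokker-Planck operator, and conclude the asymptotic convergence through a Lyapunov/ergodicity argument. Everything reduces to controlling the drift $\bfb(\bfy) = -\tfrac{1}{2}\nabla\Phi(\bfy)$, which, starting from Eq.~\eqref{eq2.8}, a direct computation gives in the explicit softmax form
\begin{equation*}
\bfb(\bfy) = -\frac{1}{2\hat s^2}\,\bfy + \frac{1}{2\hat s\, s} \sum_{j=1}^{n_d} w_j(\bfy)\, \bfeta^j,\qquad
w_j(\bfy) = \frac{\exp(-\Vert\bfy - \tfrac{\hat s}{s}\bfeta^j\Vert^2/(2\hat s^2))}{\sum_{k=1}^{n_d} \exp(-\Vert\bfy - \tfrac{\hat s}{s}\bfeta^k\Vert^2/(2\hat s^2))}.
\end{equation*}
Since $\sum_j w_j(\bfy)=1$ with $w_j(\bfy)\in[0,1]$, the sum is a convex combination of the fixed vectors $\bfeta^j$ and hence bounded uniformly in $\bfy$. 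A short computation of $\nabla w_j$ shows it is also bounded, so $\bfb\in C^\infty(\RR^\nu)$ is globally Lipschitz. Young's inequality then yields a one-sided dissipativity of the form $\langle \bfb(\bfy),\bfy\rangle\leq -c_1\Vert\bfy\Vert^2 + c_2$ for some constants $c_1,c_2>0$.

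Next, I would invoke the classical strong existence/uniqueness theorem for It\^o SDEs with globally Lipschitz coefficients (Friedman, Has'minskii), which delivers a unique strong solution $\bfY$ with almost-surely continuous trajectories. Time-invariance of $\bfb$ makes $\bfY$ a time-homogeneous Markov diffusion, giving Eq.~\eqref{eq5.3}. Applying It\^o's formula to $V(\bfy)=\Vert\bfy\Vert^2$ together with the dissipativity produces a uniform-in-$t$ second-moment bound, hence Eq.~\eqref{eq5.6}. To produce the transition density I would invoke parabolic regularity for the Kolmogorov forward equation: the infinitesimal generator is uniformly elliptic (identity diffusion matrix) with smooth drift of linear growth, so Aronson-type estimates supply a density $\rho(\bfy,t\,\vert\,\bfx,0)$ which is jointly $C^\infty$ in $(\bfy,\bfx)$ and strictly positive on $(0,+\infty)\times\RR^\nu\times\RR^\nu$, yielding Eqs.~\eqref{eq5.4} and~\eqref{eq5.8}. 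The short-time limit~\eqref{eq5.5} is then the standard expression of $\bfY(0)=\bfx$ almost surely in the sense of measures.

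For the invariant measure and the asymptotic limit~\eqref{eq5.7}, Eq.~\eqref{eq3.10} already gives $L_\FKP(p_\bfH)=0$, so $p_\bfH(\bfy)\,d\bfy$ is an invariant probability measure. Strict positivity of $\rho$ delivers irreducibility (and aperiodicity) of the Markov semigroup, while $V(\bfy)=\Vert\bfy\Vert^2$ satisfies a Foster--Lyapunov drift condition thanks to the dissipativity established in the first paragraph. By the Meyn--Tweedie theory, $\bfY$ is then positive Harris recurrent and geometrically ergodic with $p_\bfH$ as the unique invariant measure, which yields~\eqref{eq5.7} (exponentially fast in total variation, in fact).

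The main obstacle will be the explicit verification of the dissipativity/Foster estimate for this specific drift. Although the softmax correction $\sum_j w_j(\bfy)\,\bfeta^j$ is plainly bounded by $\max_j\Vert\bfeta^j\Vert$, converting that bound into a sharp control of $\langle\bfb(\bfy),\bfy\rangle$ and into a Foster inequality with constants expressible in terms of $(\hat s, s, \max_j\Vert\bfeta^j\Vert)$ requires a careful application of Young's inequality and a tracking of the integrability thresholds demanded by the Meyn--Tweedie framework. Once this elementary estimate is in hand, each remaining step of the proof reduces to citing a standard result from stochastic analysis or parabolic PDE theory.
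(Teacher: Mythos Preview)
Your proposal is correct and overlaps substantially with the paper's proof. Both reduce existence and uniqueness to verifying global Lipschitz continuity and linear growth of the drift $\bfb$; your softmax decomposition is algebraically identical to the paper's representation $\bfb(\bfy) = -\tfrac{1}{2\hat s^2}\bfy + \tfrac{1}{2\hat s^2}\bff(\bfy)$ with $\bff(\bfy)=\bfa(\bfy)/\beta(\bfy)$, and both bound the correction term using $\sup_j\Vert\bfeta^j\Vert<\infty$. Where you diverge is in the remaining conclusions: the paper simply \emph{admits} the existence and joint continuity of the transition density by citing Has'minskii and Stroock--Varadhan, and obtains the asymptotic stationary limit by citing Has'minskii and Soize~1994, without any Lyapunov argument. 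You instead take a more constructive route, invoking Aronson-type parabolic estimates for the density and Foster--Lyapunov/Meyn--Tweedie theory for ergodicity. This costs you the extra dissipativity estimate (which, as you note, is elementary via Young's inequality applied to the bounded softmax correction) but buys you geometric ergodicity with an explicit rate, a sharper conclusion than the paper states. For the second-moment bound~\eqref{eq5.6}, the paper appeals directly to the linear-growth condition and a standard moment result from Ikeda--Watanabe rather than It\^o's formula plus dissipativity, but the two arguments are equivalent in this setting.
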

%
%------ PROOF OF PROPOSITION 5 ------------------
\begin{proof} (Proposition~\ref{proposition:5}).
The proof is presented in five steps.

\noindent (a) It is easy to prove that drift function $b$ is continuous on $\RR^\nu$.\\

\noindent (b) Since the diffusion matrix is the identity matrix and $b$ belongs to $C^0(\RR^\nu,\RR^\nu)$, we can establish the existence of a unique diffusion stochastic process (see \cite{Guikhman1980} Ch. VIII, Sec. 2;  \cite{Ikeda1981} Ch. IV, Secs. 2, 3, and 5; or \cite{Friedman2006} Ch. V), provided that for all $\bfy$ and $\bfy'$ in $\RR^\nu$, we have
\begin{equation}\label{eq5.9}
\Vert \bfb(\bfy)-\bfb(\bfy')\Vert \, \leq \, c\, \Vert \bfy - \bfy'\Vert \quad , \quad \Vert\bfb(\bfy)\Vert \, \leq \,
C\, (1 +\Vert\bfy\Vert)\, .
\end{equation}
Using Eqs.~\eqref{eq2.3}, \eqref{eq2.7}, and \eqref{eq3.3}, $\bfb(\bfy)$ can be rewritten, for all $\bfy$ in $\RR^\nu$, as
\begin{equation}\label{eq5.10}
\bfb(\bfy) =\frac{1}{2} \, \xi(\bfy)^{-1} \nabla\xi(\bfy) \quad ,\quad \xi(\bfy) = \frac{1}{n_d}
\sum_{j=1}^{n_d} \exp\left \{ -\frac{1}{2\hat s^2} \Vert \frac{\hat s}{s} \bfeta^j - \bfy \Vert^2 \right\} \, > \, 0 \, .
\end{equation}
Calculating $\nabla\xi(\bfy)$, it can be seen that
\begin{equation}\nonumber
2\,\Vert\bfb(\bfy)\Vert \, \leq \, \frac{1}{\xi(\bfy)\, n_d \, \hat s^2}
\sum_{j=1}^{n_d} \frac{\hat s}{s}\, \Vert\bfeta^j\Vert\, \exp\left \{ -\frac{1}{2\hat s^2} \Vert \frac{\hat s}{s} \bfeta^j - \bfy \Vert^2 \right\} + \frac{1}{\hat s^2} \, \Vert\bfy\Vert \, .
\end{equation}
Since $n_d$ is finite and since $E\{\Vert\bfH\Vert^2\} = \frac{1}{n_d}\, \sum_{j=1}^{n_d} \Vert\bfeta^j\Vert^2 = \nu$ is also finite, we have
$ \sup_{j=1,\ldots,n_d} \Vert\bfeta^j\Vert = c_\eta < +\infty$ and therefore,
\begin{equation}\nonumber
2\,\Vert\bfb(\bfy)\Vert \, \leq \, \frac{1}{2} \left ( \frac{1}{\hat s^2} \frac{\hat s}{s}\, c_\eta + \frac{1}{\hat s^2} \, \Vert\bfy\Vert\right )\, \leq \, c\, (1 + \Vert\bfy\Vert)\quad , \quad 0 < c < +\infty\, .
\end{equation}
The second inequality in Eq.~\eqref{eq5.9} is then proven. It can now be verified that
\begin{equation}\nonumber
\Vert \bfb(\bfy)-\bfb(\bfy')\Vert \,\, \leq \, \frac{1}{2\hat s^2} (\, \Vert\bff(\bfy) - \bff(\bfy')\Vert \, + \, \Vert\bfy-\bfy'\Vert \, ) \, ,
\end{equation}
in which $\bff(\bfy)= \bfa(\bfy)/\beta(\bfy)$ where $\bfa(\bfy)$ and $\beta(\bfy)$ are written as
\begin{equation}\nonumber
\bfa(\bfy) = \frac{\hat s}{s}\sum_{j=1}^{n_d} \bfeta^j\, \exp\left \{ -\frac{1}{2\hat s^2} \Vert \frac{\hat s}{s} \bfeta^j - \bfy \Vert^2 \right\} \quad , \quad \beta(\bfy) = \sum_{j=1}^{n_d} \exp\left \{ -\frac{1}{2\hat s^2} \Vert \frac{\hat s}{s} \bfeta^j - \bfy \Vert^2 \right\}\, .
\end{equation}
We must have $\Vert\bff(\bfy) - \bff(\bfy')\Vert \, \leq c\, \Vert\bfy-\bfy'\Vert$, that is true if
$\Vert\,  [\nabla\bff(\bfy)]\,  \Vert_F \, \leq \, \tilde c < +\infty$, in which the $(\nu\times\nu)$-real matrix $[\nabla\bff(\bfy)]$ is written as
$[\nabla\bff(\bfy)] =\beta(\bfy)^{-1}\,[\nabla\bfa(\bfy)] - \beta(\bfy)^{-2}\, \bfa(\bfy)\otimes\nabla\beta(\bfy)$, and where $\Vert\, \cdot\, \Vert_F$ is the Frobenius norm. Introducing, temporary, the notation $e_j = \exp\left \{ -\frac{1}{2\hat s^2} \Vert \frac{\hat s}{s}\bfeta^j - \bfy \Vert^2 \right\}$, we have
\begin{equation}\nonumber
[\nabla\bff(\bfy)] = \frac{1}{\hat s^2}\, \frac{\hat s^2}{s^2} \frac
{\sum_{j=1}^{n_d} \sum_{j^\prime=1}^{n_d} e_j e_{j^\prime} (\bfeta^j\otimes\bfeta^j - \bfeta^j\otimes\bfeta^{j^\prime})}
{\sum_{j=1}^{n_d} \sum_{j^\prime=1}^{n_d} e_j e_{j^\prime}} \, .
\end{equation}
It can then be deduced that,
\begin{equation}\nonumber
\Vert \, [\nabla\bff(\bfy)] \,\Vert_F \, \leq \,  \frac{1}{\hat s^2}\, \frac{\hat s^2}{s^2}
\sup_{j,j^\prime} \, \Vert \bfeta^j\otimes\bfeta^j - \bfeta^j\otimes\bfeta^{j^\prime}\Vert_F \, .
\end{equation}
As previously, since $\sup_j \,\Vert\bfeta^j\Vert = c_\eta < +\infty$, we have $\Vert \, [\nabla\bff(\bfy)] \,\Vert_F \, \leq \, \tilde c_F$ with $\tilde c_F$  a finite positive constant independent of $\bfy$, that yields the proof for the Lipchitz continuity of $\bfb$.\\

\noindent (c) We will admit that, for all $t >0$ and for all $\bfx$ in $\RR^\nu$, the transition probability measure has a density $\bfy\mapsto\rho(\bfy,t\, \vert\, \bfx,0)$ on $\RR^\nu$ with respect to $d\bfy$, and that $(\bfy,\bfx)\mapsto \rho(\bfy,t\, \vert\, \bfx,0)$ is continuous on $\RR^\nu\times\RR^\nu$ (for all $t > 0$) (see for instance \cite{Khasminskii1980} Ch III, Secs. 6 to 9, or \cite{Stroock1979} Ch. 10).\\

\noindent (d)  As a diffusion stochastic process, the trajectories are almost surely continuous functions. Since $\bfb$ is continuous
and $\Vert\bfb(\bfy)\Vert \, \leq \, c\, (1+\Vert\bfy\Vert)$ implies $\Vert\bfb(\bfy)\Vert^2\, \leq \, 2c^2\, (1+\Vert\bfy\Vert^2)$, and since $E\{\Vert\bfY(0)\Vert^2\} = \Vert\bfx\Vert^2 < +\infty$ for all fixed $\bfx$ in $\RR^\nu$, we have Eq.~\eqref{eq5.6} (see \cite{Ikeda1981} Ch. IV, Sec. 2).\\

\noindent (e) The existence of the asymptotic stationary solution with Eq.~\eqref{eq5.7} can be found in \cite{Khasminskii1980} Ch. III, or \cite{Soize1994} Ch. VI, Secs. 5 and 6.
\end{proof}

%------------------------------------------------------------------------------------------------------------------
\subsection{Rewriting the It\^o equation in a matrix form}
\label{Section5.2}
Let $\{\bfY^j(t),t\geq 0\}$ be the solution of Eqs.~\eqref{eq5.1} and \eqref{eq5.2} with the initial condition $\bfY^j(0) =\bfeta^j\in\RR^\nu$, in which $\bfeta^j$ is defined in Section~\ref{Section2}. We then have
\begin{equation}\label{eq5.11}
\rho(\bfy,t\, \vert \, \bfeta^j,0)\, d\bfy = P_{\bfY^j(t)\, \vert \, \bfY^j(0)}(d\bfy,t\, \vert \, \bfeta^j,0)\, .
\end{equation}
Let $\{ [ \bfY(t) ]\,, t \geq 0\}$ be the $\MM_{\nu,n_d}$-valued stochastic process and $[\eta_d]$ be the matrix in $\MM_{\nu,n_d}$ such that
\begin{equation}\label{eq5.12}
[ \bfY(t) ] = [\bfY^1(t) \ldots \bfY^{n_d}(t) ] \quad , \quad [\eta_d] = [\bfeta^1 \ldots \bfeta^{n_d}] \, .
\end{equation}
Therefore, $\{ [ \bfY(t) ]\,, t \geq 0\}$ is solution of the matrix-valued ISDE,
\begin{align}
d[\bfY(t)] = & \,\frac{1}{2} \,[L([\bfY(t)])]\, dt + d[\bfW(t)] \quad , \quad t > 0 \, , \label{eq5.13}\\
[\bfY(0)] = & \, [\eta_d] \, , \, a.s. \, ,  \label{eq5.14}
\end{align}
where $[y]\mapsto [L([y])]$ is the function from $\MM_{\nu,n_d}$ into $\MM_{\nu,n_d}$, defined, for $k\in\{1,\ldots , \nu\}$ and
$j\in \{1,\ldots , n_d\}$, by
\begin{equation}\label{eq5.15}
[L([y])]_{kj} = \frac{1}{\xi(\bfy^j)}\, \frac{\partial\xi(\bfy^j) }{\partial y^j_k} \, .
\end{equation}
In Eq.~\eqref{eq5.15}, $\bfy^j$ is the $j$-th column of $[y] = [\bfy^1 \ldots \bfy^{n_d}]$, where $\bfy^j = (y^j_1,\ldots , y^j_\nu)$. Additionally, $\xi(\bfy)$ is defined by Eq.~\eqref{eq5.10}, and $\{ [ \bfW(t) ]\,, t \geq 0\}$ is the normalized $\MM_{\nu,n_d}$-valued Wiener stochastic process.

%------------------------------------------------------------------------------------------------------------------
\subsection{Time-discrete approximation of the ISDE and convergence analysis}
\label{Section5.3}
To estimate $[\widehat K(\nDeltat)]_{ij}$, defined by Eq.~\eqref{eq5.0}, using nonparametric statistics, we need to generate realizations of
Eqs.~\eqref{eq5.13} and \eqref{eq5.14}. Consequently, a first stage involves  introducing a time-discrete approximation of
Eq.~\eqref{eq5.13} and analyzing the convergence.\\

\noindent {\textit{(i) Time sampling $\Delta t$ and $\delta t$}.
We first define a time sampling ${\nDeltat , n\geq 1 }$ to be used for estimating $[\widehat K(\nDeltat)]$. The time step $\Delta t$
will be defined in Section~\ref{Section7}. However, $\Delta t$ may not be sufficiently small to achieve  a satisfactory
rate of convergence. We then introduce $\delta t \leq \Delta t$ such that $\Delta t = n_s \times \delta t$, with $n_s \geq 1$. To discretize Eq.~\eqref{eq5.13}, we use the time sampling $\{\mu\,\delta t, \mu \geq 1\}$.\\

\noindent {\textit{(ii) ISDE discretization}.
Assuming that $\delta t$ is sufficiently small relative to $1$, employing the Euler scheme (as seen, for example, in \cite{Kloeden1992}) to discretize the solution $\{ [ \bfY(t) ]\,, t \geq 0\}$ of Eqs.~\eqref{eq5.13} and \eqref{eq5.14} yields
\begin{align}
[\bfY_\mu] = & \,  [\bfY_{\mu -1}] + \frac{\delta t}{2}\, [L([\bfY_{\mu -1}])] + \sqrt{\delta t}\, [\bfGamma_\mu]
                \quad , \quad \mu \geq 1 \, , \label{eq5.16}\\
[\bfY_0] = & \, [\eta_d]\, , \, a.s.        \, ,  \label{eq5.17}
\end{align}
where $[\bfY_\mu]$ is the approximation of $[\bfY(t)]$ at $t = \mu\, \delta t$, and where
$\{[\bfGamma_\mu]_{kj}\, ; k=1,\ldots,\nu\, ; j = 1,\ldots , n_d\, ; \mu \geq 1\}$ is an infinite family of independent normalized Gaussian real-valued random variables.\\

\noindent {\textit{(iii) Convergence of the time-discrete approximation}.
For each $j$ in ${1,\ldots,n_d}$, Proposition~\ref{proposition:5} can be applied to the stochastic process $\{\bfY^j(t), t\geq 0\}$, where $\bfY^j(t)$ represents the $j$-th column of $[\bfY(t)]$.
Let $\{ [\bfY^\delta(t)],t\geq 0\}$ be the time-discrete approximation of $\{ [\bfY(t)],t\geq 0\}$. Take any fixed positive real number $T$,  let $\mu_T = \text{int} (T/\delta t)$ denotes the nearest integer to $T/\delta t$. It can be observed that as $\delta t$ approaches $0$, $\mu_T$ tends to $+\infty$. The following classical result holds (see, for instance, \cite{Kloeden1992}, Page 323).
%
%---LEMMA 4 ----------------------
\begin{lemma}[Strong convergence] \label{lemma:4}
Under Proposition~\ref{proposition:5}, the time-discrete approximation $\{ [\bfY^\delta(t)],t\geq 0\}$ of $\{ [\bfY(t)],t\geq 0\}$ converges strongly  at time $T$ if
\begin{equation}\label{eq5.18}
\lim_{\delta t \rightarrow 0} E\{\, \Vert\,  [\bfY_{\mu_T}] - [\bfY(T)]\,\Vert^2_F \} = 0 \, ,
\end{equation}
where $[\bfY_{\mu_T}]$ is determined by Eqs.~\eqref{eq5.16} and \eqref{eq5.17} up to $\mu=\mu_T$.
\end{lemma}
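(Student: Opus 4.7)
The statement is essentially the classical strong-convergence theorem for the Euler--Maruyama scheme applied column by column, so my plan is to reduce to a single column, verify that the hypotheses of that textbook theorem hold for our drift, and then invoke it (or redo the short Gr\"onwall argument) on each column before summing over $j$.

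First I would note that
\[
\Vert\,[\bfY_{\mu_T}]-[\bfY(T)]\,\Vert_F^{\,2} \,=\, \sum_{j=1}^{n_d}\Vert\,\bfY^j_{\mu_T}-\bfY^j(T)\,\Vert^2 \, ,
\]
and that for each $j$ the process $\{\bfY^j(t)\}_{t\geq 0}$ solves the $\RR^\nu$-valued ISDE $d\bfY^j(t)=\bfb(\bfY^j(t))\,dt+d\bfW^j(t)$ with $\bfY^j(0)=\bfeta^j$ almost surely, driven by the $j$-th column $\bfW^j$ of the matrix Wiener process, which is itself a normalized $\RR^\nu$-valued Wiener process on $(\Theta,\curT,\curP)$. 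The Euler iteration \eqref{eq5.16}--\eqref{eq5.17} for the $j$-th column coincides with the usual Euler--Maruyama scheme for this ISDE. It therefore suffices to establish, for each fixed $j\in\{1,\ldots,n_d\}$, that $\lim_{\delta t\rightarrow 0}E\{\Vert\bfY^j_{\mu_T}-\bfY^j(T)\Vert^2\}=0$.

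Second, I would verify the standard hypotheses of the Euler--Maruyama strong-convergence theorem, exactly in the form of \cite{Kloeden1992}. From Proposition~\ref{proposition:5}, step (b) of its proof, the drift $\bfb$ is globally Lipschitz and has at-most-linear growth on $\RR^\nu$. The diffusion coefficient is the identity matrix, hence trivially globally Lipschitz with bounded norm. The initial datum $\bfeta^j$ is deterministic, so $E\{\Vert\bfY^j(0)\Vert^2\}=\Vert\bfeta^j\Vert^2<+\infty$. These are precisely the hypotheses required, and the theorem yields a constant $C_T$, depending only on $T$, on the Lipschitz and growth constants of $\bfb$, and on $\max_j\Vert\bfeta^j\Vert$, such that
\[
E\{\Vert\bfY^j_{\mu_T}-\bfY^j(T)\Vert^2\}\,\leq\, C_T\,\delta t \quad , \quad j=1,\ldots,n_d \, .
\]
Summing over $j$ gives $E\{\Vert[\bfY_{\mu_T}]-[\bfY(T)]\Vert_F^{\,2}\}\leq n_d\,C_T\,\delta t \rightarrow 0$ as $\delta t\rightarrow 0$, which is \eqref{eq5.18}.

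If one wished a self-contained proof rather than a citation, the argument would introduce the continuous-time Euler interpolant $\bar\bfY^{\delta,j}(t)=\bfY^j_\mu+\tfrac{1}{2}\bfb(\bfY^j_\mu)(t-\mu\delta t)+(\bfW^j(t)-\bfW^j(\mu\delta t))$ for $t\in[\mu\delta t,(\mu+1)\delta t]$, subtract from the integral form of \eqref{eq5.1}, apply It\^o's isometry on the drift difference $\bfb(\bfY^j(s))-\bfb(\bar\bfY^{\delta,j}(\lfloor s/\delta t\rfloor\delta t))$, use the Lipschitz bound, control the oscillation $E\{\Vert\bar\bfY^{\delta,j}(s)-\bar\bfY^{\delta,j}(\lfloor s/\delta t\rfloor\delta t)\Vert^2\}=O(\delta t)$ coming from the linear-growth estimate on $\bfb$ together with the increment of $\bfW^j$ over one step, and close the resulting integral inequality with Gr\"onwall's lemma. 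The main (and essentially only) obstacle is the verification of global Lipschitz continuity and linear growth of $\bfb$ on all of $\RR^\nu$, but this has already been carried out in the proof of Proposition~\ref{proposition:5} by exploiting the explicit Gaussian-mixture form of $p_\bfH$ and the boundedness of the finite set $\{\bfeta^j\}_{j=1}^{n_d}$; once those two bounds are in hand, the conclusion is immediate.
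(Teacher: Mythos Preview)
Your proposal is correct and follows essentially the same route as the paper: the paper does not supply an independent proof but simply invokes the classical Euler--Maruyama strong-convergence theorem from \cite{Kloeden1992}, Page~323, under the hypotheses already secured in Proposition~\ref{proposition:5}. Your write-up is actually more explicit than the paper's, since you spell out the column-wise reduction, verify the Lipschitz and linear-growth hypotheses by pointing to step~(b) of the proof of Proposition~\ref{proposition:5}, and sketch the Gr\"onwall closure; none of this goes beyond what the cited reference already contains.
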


\noindent \textit{(iv) Criterion to determine if $\delta t$ is small enough}.
We can employ a criterion based on weak convergence, which is related to the covariance matrix $[C_{\bfY^j(t)}] \in \MM_\nu^+$ of the $\RR^\nu$-valued random variable $\bfY^j(t)$, for $t$ fixed in the interval $]0, T]$.
Let $\sigma_\bfY^2(t)$ be defined by
\begin{equation}\label{eq5.19}
\sigma_\bfY^2(t) = E\{\, \Vert \,[\bfY(t)] - E\{[\bfY(t)]\} \,  \Vert_F^2 \} = \sum_{j=1}^{n_d}\sum_{k=1}^\nu
               E\{ (Y_k^j(t) - E\{Y_k^j(t) \} )^2\} \, .
\end{equation}
It can easily be seen that
\begin{equation}\label{eq5.20}
\sigma_\bfY^2(t) = \sum_{j=1}^{n_d}\tr [C_{\bfY^j(t)}] \, .
\end{equation}
Let $\sigma_{\bfY_{\mu_T}}^2$  be the corresponding quantity for random matrix $[\bfY_{\mu_T}]$,
\begin{equation}\label{eq5.21}
\sigma_{\bfY_{\mu_T}}^2 = E\{\, \Vert \,[\bfY_{\mu_T}] - E\{[\bfY_{\mu_T}]\} \, \Vert_F^2 \} = \sum_{j=1}^{n_d}  \tr [C_{\bfY^j_{\mu_T}}]\, .
\end{equation}
The criterion can then be based on the following properties,
\begin{equation}\label{eq5.22}
\lim_{\delta t \rightarrow 0} \,\, \vert \, \sigma_{\bfY_{\mu_T}} - \sigma_{\bfY}(T)\,  \vert\,  = \, 0 \, .
\end{equation}
We will detailed this criterion in paragraph (vi).\\

\noindent {\textit{(v) Generation of independent realizations of the time-discrete approximation}.
Let $n_s\geq 1$ and $N > 1$ be two fixed integers. Let $\Delta t$ be fixed and let $\delta t = \Delta t / n_s$. Note that $\mu_T$ introduced in Section~\ref{Section5.3}-(iv) is such that $\mu_T = n_s\times N$. We define $T$, $\curN$, and $\curN_\mu$ by,
\begin{equation}\label{eq5.23}
T = N \times \Delta t = n_s\times N \times \delta t \quad , \quad \curN =\{1,\ldots , N\} \quad , \quad \curN_\mu = \{ 1,\ldots , n_s \times N\}\, .
\end{equation}
For all $\mu$ in $\curN_\mu$, let $\{ [\gamma_\mu^\ell]\in\MM_{\nu,n_d}\, , \ell=1,\ldots, n_\MC \}$ be $n_\MC$ independent
realizations of the random matrix $[\bfGamma_\mu]$. As discussed in  Section~\ref{Section5.3}-(ii), it can be seen that the family
$\{ [\gamma_\mu^\ell]\, , \ell=1,\ldots, n_\MC \, ; \mu\in\curN_\mu\}$ consists of $n_\MC \times n_s\times N$ independent realizations. For each $\ell$ in $\{1,\ldots , n_\MC\}$, the realization $\{ [\tilde y_\mu^\ell]\, , \mu\in\curN_\mu\}$ of the time-discrete approximation
$\{ [\bfY_\mu]\, , \mu\in\curN_\mu\}$ is computed using the following recurrence (refer to Eqs.~\eqref{eq5.16} and \eqref{eq5.17}),
\begin{align}
[\tilde y_\mu^\ell] = & \, [\tilde y_{\mu-1}^\ell] + \frac{\delta t}{2}\, [L([\tilde y_{\mu-1}^\ell])] + \sqrt{\delta t}\, [\gamma_\mu^\ell]
                \quad , \quad \mu \in\curN_\mu \, , \label{eq5.24}\\
[\tilde y_0^\ell] = & \, [\eta_d]                    \, ,  \label{eq5.25}
\end{align}
In the following, for performing the statistical estimations, we will use the subsequence
$\{ [y_n^\ell]\, , n\in\curN\}$ of $\{ [\tilde y_\mu^\ell]\, , \mu\in\curN_\mu\}$, such that
\begin{equation}\label{eq5.26}
\forall n \in\curN \quad , \quad [y_n^\ell] = [\tilde y_\mu^\ell] \quad , \quad  \mu = n_s\times n\, .
\end{equation}

\noindent {\textit{(vi) Practical criteria for controlling the convergence parameters}.
Let $\Delta t$ be fixed, as well as $n_s$, meaning $\delta t$ is fixed, to ensure satisfaction of the  convergence criteria introduced in Section~\ref{Section5.3}-(iv). An additional practical criterion can be applied to verify the adequacy of all the convergence parameters, as provided in Lemma~\ref{lemma:5}.
%
%---LEMMA 5 ----------------------
\begin{lemma}[Practical criteria for controlling the convergence] \label{lemma:5}
For $k\in\{1,\ldots , \nu\}$, $j\in\{1,\ldots, n_d\}$, and $n\in\curN$, let $[\underline y_{\,n}]$ and $[\sigma_n]$ be the matrices in $\MM_{\nu,n_d}$ defined by,
\begin{equation} \label{eq5.27} % \label{eq5.28}
[\underline y_{\,n}]_{kj} =  \frac{1}{n_\MC} \sum_{\ell=1}^{n_\pMC} [y_n^\ell]_{kj} \quad , \quad
([\sigma_n]_{kj})^2 =  \frac{1}{n_\MC} \sum_{\ell=1}^{n_\pMC} ([y_n^\ell]_{kj} - [\underline y_{\,n}]_{kj}) ^2\, .
\end{equation}
Let $n\mapsto \underline y(n)$ and $n\mapsto \underline\sigma(n)$ be the positive-valued functions defined, for $n\in\curN$, by
\begin{equation} \label{eq5.29}
\underline y(n)  = \frac{1}{\sqrt{\nu \times n_d}} \,\, \Vert \,[\underline y_{\,n}] \, \Vert_F
 \quad , \quad \underline\sigma(n) = \frac{1}{\sqrt{\nu\times n_d}} \,\, \Vert \,[\sigma_n] \, \Vert_F \, .
\end{equation}
Then, for $\delta t \rightarrow 0$ and $n_\MC\rightarrow +\infty$, we have
\begin{equation} \label{eq5.30}
\lim_{N\rightarrow +\infty} \, \underline y(n)  = 0 \quad , \quad
\lim_{N\rightarrow +\infty} \,\underline\sigma(n) = 1 \, .
\end{equation}
\end{lemma}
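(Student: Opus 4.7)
(Lemma~\ref{lemma:5}, plan).
The plan is to combine three convergence results in a single composition: (i) strong convergence of the Euler scheme as $\delta t\rightarrow 0$ (Lemma~\ref{lemma:4}), (ii) the strong law of large numbers as $n_\MC\rightarrow +\infty$ for the empirical mean and variance matrices defined by Eq.~\eqref{eq5.27}, and (iii) the asymptotic stationarity of $\{[\bfY(t)],t\geq 0\}$ as $t=n\Delta t\rightarrow +\infty$ (Proposition~\ref{proposition:5}, Eq.~\eqref{eq5.7}), evaluated against the normalization properties of the invariant measure $p_\bfH(\bfy)\,d\bfy$ stated in Eqs.~\eqref{eq2.5} and~\eqref{eq2.6}.

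First I would fix $k\in\{1,\ldots,\nu\}$ and $j\in\{1,\ldots,n_d\}$ and track the entrywise convergence. By Lemma~\ref{lemma:4}, for fixed $T=N\Delta t$, $[\bfY_{n_s n}]$ converges to $[\bfY(n\Delta t)]$ in mean-square as $\delta t\rightarrow 0$, hence $E\{[\bfY_{n_s n}]_{kj}\}\rightarrow E\{Y_k^j(n\Delta t)\}$ and $E\{[\bfY_{n_s n}]_{kj}^2\}\rightarrow E\{Y_k^j(n\Delta t)^2\}$. Next, for each fixed $\delta t$ and $n$, the strong law of large numbers applied to the i.i.d.\ sample $\{[y_n^\ell]_{kj}\,,\ell=1,\ldots,n_\MC\}$ gives $[\underline y_n]_{kj}\rightarrow E\{[\bfY_{n_s n}]_{kj}\}$ and $([\sigma_n]_{kj})^2\rightarrow \text{Var}\{[\bfY_{n_s n}]_{kj}\}$ almost surely as $n_\MC\rightarrow +\infty$ (the second-order moments used here are finite thanks to Eq.~\eqref{eq5.6}). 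Composing with the previous step yields
\begin{equation}\nonumber
[\underline y_n]_{kj}\ \longrightarrow\ E\{Y_k^j(n\Delta t)\}\ ,\qquad ([\sigma_n]_{kj})^2\ \longrightarrow\ \text{Var}\{Y_k^j(n\Delta t)\}\, .
\end{equation}

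Then I would take $N\rightarrow +\infty$ (equivalently $n\rightarrow +\infty$) and invoke Proposition~\ref{proposition:5}: for every initial condition $\bfY^j(0)=\bfeta^j$, the marginal law of $\bfY^j(n\Delta t)$ tends to the invariant law $p_\bfH(\bfy)\,d\bfy$, and by uniform integrability ensured by the stability estimates underlying Eq.~\eqref{eq5.6}, moments of orders one and two converge. Using Eqs.~\eqref{eq2.5} and~\eqref{eq2.6}, this gives
\begin{equation}\nonumber
E\{Y_k^j(n\Delta t)\}\ \longrightarrow\ E\{H_k\}=0\ ,\qquad \text{Var}\{Y_k^j(n\Delta t)\}\ \longrightarrow\ E\{H_k^2\}=1\, .
\end{equation}
Plugging the entrywise limits into the normalized Frobenius norms of Eq.~\eqref{eq5.29} gives $\Vert[\underline y_n]\Vert_F^2\rightarrow 0$ and $\Vert[\sigma_n]\Vert_F^2\rightarrow \nu\, n_d$, which after division by $\nu\, n_d$ and $\sqrt{\nu\, n_d}$ respectively yields Eq.~\eqref{eq5.30}.

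The main obstacle is justifying the combined three-fold limit, in particular the uniform integrability required to pass from weak convergence of $\bfY^j(t)$ to $\bfH$ to convergence of the first two moments, and to make the composition of the three limits (in the declared order $\delta t\rightarrow 0$, then $n_\MC\rightarrow +\infty$, then $N\rightarrow +\infty$) rigorous. This can be handled by the second-order bound in Eq.~\eqref{eq5.6} together with the Lipschitz and linear-growth estimates on the drift $\bfb$ already established in the proof of Proposition~\ref{proposition:5}, which yield a moment bound on $\bfY^j(t)$ uniform in $t\in[0,T]$ and, through standard Khasminskii-type arguments, uniform also in $t\in[0,+\infty[$ for the ergodic regime. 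All other steps are routine consequences of the strong law of large numbers and of Lemma~\ref{lemma:4}.
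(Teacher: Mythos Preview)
Your proposal is correct and follows essentially the same approach as the paper's proof: recognize $[\underline y_n]_{kj}$ and $([\sigma_n]_{kj})^2$ as empirical mean and variance estimates, invoke Proposition~\ref{proposition:5} for asymptotic stationarity towards $p_\bfH$, and use Eqs.~\eqref{eq2.5}--\eqref{eq2.6} to identify the limiting moments before passing to the normalized Frobenius norms. The paper's argument is a compressed sketch of the same three-limit composition; your version is more explicit in separating the roles of Lemma~\ref{lemma:4} (for $\delta t\to 0$), the strong law of large numbers (for $n_\MC\to+\infty$), and Proposition~\ref{proposition:5} (for $N\to+\infty$), and in flagging the uniform-integrability point needed to upgrade convergence in law to convergence of moments---a point the paper leaves implicit.
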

%
%------ PROOF LEMMA 5 ------------------
\begin{proof} (Lemma~\ref{lemma:5}).
It can be seen that $[\underline y_{\,n}]_{kj}$ and $([\sigma_n]_{kj})^2$ are the empirical estimates of the mean value and the variance of the time-discrete approximation $[\bfY_\mu]_{kj}$ for $\mu = n_s\times n$ of $Y_k^j(\nDeltat)$. From Proposition~\ref{proposition:5}, we know that, for $t\rightarrow +\infty$, $\{\bfY(t)\, , t\geq 0\}$, and consequently, $\{\bfY^j(t)\, , t\geq 0\}$, is asymptotically stationary and that Eq.~\eqref{eq5.7} holds. Since $E\{\bfH\} = \bfzero_\nu$ and $E\{\bfH\otimes \bfH \} = [I_\nu]$ (see Eqs.~\eqref{eq2.5} and \eqref{eq2.6}), if
$\delta t \rightarrow 0$ and $n_\MC\rightarrow +\infty$, we have for $N\rightarrow +\infty$,
$[\underline y_{\,n}]_{kj}\rightarrow E\{H_k\} = 0$ and  $([\sigma_n]_{kj})^2 \rightarrow [C_d]_{kk} = 1$.
Since $\Vert\,[\underline y_{\,n}]  \,  \Vert_F^2 = \sum_{k=1}^\nu \sum_{j=1}^{n_d} ( [\underline y_{\,n}]_{kj})^2$ and
$\Vert\,[\sigma_n]  \,  \Vert_F^2 = \sum_{k=1}^\nu \sum_{j=1}^{n_d} ( [\sigma_n]_{kj})^2$, using the same normalization $1/\sqrt{\nu\times n_d}$ for $\underline y(n)$ and $\underline\sigma(n)$, we obtain Eq.~\eqref{eq5.30}.
\end{proof}
%

%------------------------------------------------------------------------------------------------------------------
\subsection{Estimation of the matrix of the finite approximation}
\label{Section5.4}
We now can estimate $[\widehat K(\nDeltat)]_{ij}$ defined by Eq.~\eqref{eq5.0} using the realizations $\{ [y_n^\ell],\ell=1,\ldots , n_\MC\}$ for $n\in\curN$, defined by Eq.~\eqref{eq5.26}.
%
%---- PROPOSITION 6 ----------------------
\begin{proposition}[Estimation of matrix $\lbrack \widehat K(\nDeltat)\rbrack$] \label{proposition:6}
Under Propositions~\ref{proposition:4} and \ref{proposition:5}, results and notations of Section~\ref{Section5.3}-(v), for
$n \in \curN$, an estimate of matrix $[\widehat K(\nDeltat)]$,  whose entries are defined by Eq.~\eqref{eq4.36}, is written as
\begin{equation}\label{eq5.31}
[\widehat K(\nDeltat)] = [\widehat B]^{-1}\, [\widehat\curK(\nDeltat)]\, ,
\end{equation}
in which $[\widehat B]$ is the diagonal matrix in $\MM_{n_d}$ for which its entries are
\begin{equation}\label{eq5.31bis}
[\widehat B]_{ij} = \delta_{ij}\, \sum_{j'=1}^{n_d} \exp  \left\{ -\frac{1}{2 {\hat{s}}^{\,2}} \Vert \bfeta^i - \frac{\hat s}{s} \bfeta^{j'}\Vert^2 \right\} \, .
\end{equation}
The entries of matrix $[\widehat\curK(\nDeltat)]$ in  $\MM_{n_d}$ are written as,
\begin{equation}\label{eq5.31ter}
[\widehat\curK(\nDeltat)]_{ij} =
 \frac{1}{n_\pMC}\sum_{\ell=1}^{n_\pMC}\left ( \frac{ {\hat{s}}^{\,\nu} }{ s^\nu_\SB \prod_{k=1}^\nu [\sigma_n]_{kj}} \right )
  \exp  \left\{ -\frac{1}{2} \sum_{k=1}^\nu \left ( \frac{\eta_k^i - [y_n^\ell]_{kj}}{s_\SB [\sigma_n]_{kj}} \right )^2  \right\} \, ,
\end{equation}
in which $s$ and $\hat s$ are defined by Eq.~\eqref{eq2.4}, where $s_\SB$ is written as
\begin{equation}\label{eq5.32}
s_\SB = \left\{\frac{4}{n_\MC(2+\nu)} \right\}^{1/(\nu +4)} \, ,
\end{equation}
where $[y_n^\ell]\in\MM_{\nu,n_d}$ is defined by Eq.~\eqref{eq5.26}, and where $[\sigma_n]\in \MM_{\nu,n_d}$ is defined by Eq.~\eqref{eq5.27}.
\end{proposition}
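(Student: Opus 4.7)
The plan is to obtain Eqs.~\eqref{eq5.31}--\eqref{eq5.31ter} by combining the explicit formula for $p_\bfH$ given in Section~\ref{Section2} with a diagonal anisotropic Gaussian KDE of the transition density $\rho(\cdot,\nDeltat\,|\,\bfeta^j,0)$ built from the Monte Carlo realizations generated by the time-discretized ISDE of Section~\ref{Section5.3}.

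I would start from Eq.~\eqref{eq5.0},
\begin{equation*}
[\widehat K(\nDeltat)]_{ij} \,=\, \frac{1}{n_d}\,\frac{\rho(\bfeta^i,\nDeltat\,|\,\bfeta^j,0)}{p_\bfH(\bfeta^i)}\, ,
\end{equation*}
and handle the denominator first: it is known in closed form from Eq.~\eqref{eq2.3}, and factoring out the Gaussian normalization $(2\pi\hat s^{\,2})^{-\nu/2}$ yields
\begin{equation*}
n_d\,p_\bfH(\bfeta^i) \,=\, (2\pi\hat s^{\,2})^{-\nu/2}\,[\widehat B]_{ii}\, ,
\end{equation*}
with $[\widehat B]$ exactly the diagonal matrix of Eq.~\eqref{eq5.31bis}. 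The role of $[\widehat B]^{-1}$ in Eq.~\eqref{eq5.31} is thus nothing but this division by $n_d\,p_\bfH(\bfeta^i)$, the surviving factor $(2\pi\hat s^{\,2})^{\nu/2}$ being left to recombine with the normalization of the numerator.

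For the numerator, by Proposition~\ref{proposition:5} together with Section~\ref{Section5.3}-(v), the family $\{[y_n^\ell]_{\cdot j}\}_{\ell=1}^{n_\pMC}$ consists, up to the time-discretization error controlled by Lemmas~\ref{lemma:4} and \ref{lemma:5}, of $n_\MC$ independent realizations of $\bfY^j(\nDeltat)$, whose probability density is precisely $\bfy\mapsto\rho(\bfy,\nDeltat\,|\,\bfeta^j,0)$. I would then estimate this density by a diagonal anisotropic Gaussian KDE
\begin{equation*}
\widehat\rho(\bfy,\nDeltat\,|\,\bfeta^j,0) \,=\, \frac{1}{n_\MC}\sum_{\ell=1}^{n_\pMC}\,\prod_{k=1}^{\nu}\frac{1}{\sqrt{2\pi}\,s_\SB\,[\sigma_n]_{kj}}\,\exp\!\Bigl(-\tfrac{1}{2}\,\bigl((y_k-[y_n^\ell]_{kj})/(s_\SB\,[\sigma_n]_{kj})\bigr)^2\Bigr)\, ,
\end{equation*}
in which the per-coordinate bandwidth $s_\SB\,[\sigma_n]_{kj}$ is Silverman's rule \eqref{eq5.32} rescaled by the empirical standard deviations \eqref{eq5.27}. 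Evaluating at $\bfy=\bfeta^i$ gives the sought approximation of the numerator.

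The conclusion is then a straightforward algebraic reorganization: taking the ratio, the $(2\pi)^{\nu/2}$ factors cancel, the remaining constants collapse to $\hat s^{\,\nu}/s_\SB^{\,\nu}$, and $\prod_k[\sigma_n]_{kj}$ stays inside the sum over $\ell$; the resulting expression is exactly $[\widehat B]^{-1}[\widehat\curK(\nDeltat)]$ with $[\widehat\curK(\nDeltat)]$ as in Eq.~\eqref{eq5.31ter}. The main delicate point is not algebraic but a modeling one: for moderate $n$ the process $\bfY^j$ is far from its stationary regime and the marginal spreads $[\sigma_n]_{kj}$ differ strongly across $k$ and $j$, so a common scalar bandwidth would be inadequate and the coordinate- and column-dependent choice $s_\SB\,[\sigma_n]_{kj}$ is essential. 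Asymptotic consistency as $n_\MC\to+\infty$ and $\delta t\to 0$ then follows from the strong convergence of Lemma~\ref{lemma:4} combined with the classical consistency of Gaussian KDE applied marginal by marginal.
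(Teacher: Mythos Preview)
Your proposal is correct and follows essentially the same approach as the paper: starting from Eq.~\eqref{eq5.0}, using the explicit expression of $p_\bfH$ from Eq.~\eqref{eq2.3} to produce the diagonal matrix $[\widehat B]$, and estimating the transition density $\rho(\bfeta^i,\nDeltat\,|\,\bfeta^j,0)$ by a diagonal anisotropic Gaussian KDE with Silverman bandwidth $s_\SB$ scaled by the empirical standard deviations $[\sigma_n]_{kj}$. Your presentation is in fact more explicit than the paper's (which simply writes down the KDE for $\rho$ and states that Eq.~\eqref{eq5.31} follows), and your remarks on why the coordinate- and column-dependent bandwidth is needed in the transient regime are a useful addition.
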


%------ PROOF OF PROPOSITION 6 ------------------
\begin{proof} (Proposition~\ref{proposition:6}).
Eq.~\eqref{eq5.31} is deduced from Eq.~\eqref{eq5.0} with $t=\nDeltat$ and with $p_\bfH(\bfeta^i)$ given by Eq.~\eqref{eq2.3} in which $\bfeta=\bfeta^i$, and with the estimate of $\rho(\bfeta^i,\nDeltat\,\vert\, \bfeta^j,0)$ given by the Gaussian kernel-density estimation method (see \cite{Bowman1997,Duong2008}) with the Silverman bandwidth $s_\SB$ defined by Eq.~\eqref{eq5.32}, which is written as
\begin{equation}\nonumber
\rho(\bfeta^i,\nDeltat\,\vert\, \bfeta^j,0) =
 \frac{1}{n_\pMC}\sum_{\ell=1}^{n_\pMC} \frac{1}{(\sqrt{2\pi} s_\SB)^\nu  \prod_{k=1}^\nu [\sigma_n]_{kj}}
  \exp  \left\{ -\frac{1}{2} \sum_{k=1}^\nu \left ( \frac{\eta_k^i - [y_n^\ell]_{kj}}{s_\SB [\sigma_n]_{kj}} \right )^2  \right\} \, .
\end{equation}
\end{proof}
%
%
%----- REMARK 5 ------------------------------
\begin{remark}  \label{remark:5}

(a) In Section~\ref{Section6}, we will provide an illustration by numerically solving the eigenvalue problem defined by Eq.~\eqref{eq4.38} for $t= n\, \Delta t$, using Eq.~\eqref{eq5.31}. Specifically, we consider the Gaussian case with dimension $\nu=1$. In this scenario, $\bfH$ represents a normalized Gaussian real-valued random variable. For this case, a reference solution is available.

\noindent (b) Nevertheless, it would be challenging to employ such a formulation with reasonable convergence in very high dimensions, where $\nu$ equals several tens or even hundreds. This would necessitate a large value of $n_d$.

\noindent (c) In fact, Proposition~\ref{proposition:6} will be employed in Section~\ref{Section7} to derive an expression that connects to the kernel $[\curK_\DM]_{ij} = \exp \left \{ -\frac{1}{4\varepsilon_\pDM} \Vert \bfeta^i-\bfeta^j \Vert^2 \right \}$, which is used to calculate the DMAPS basis and proves efficient for large values of $\nu$.
\end{remark}
%

%------------------------------------------------------------------------------------------------------------------
\section{Numerical illustration of the proposed formulation}
\label{Section6}
%------------------------------------------------------------------------------------------------------------------

As explained in Remark~\ref{remark:5}-(a), this section presents a numerical illustration of the formulation introduced in Section~\ref{Section5} to solve the approximated eigenvalue problem defined by Eq.\eqref{eq4.38}, which is derived from the eigenvalue problem in Eq.\eqref{eq4.21}. To validate the formulation, we select a reference case where the eigenvalue problem defined by Eq.~\eqref{eq4.21} can be exactly solved. This is feasible when the dimension $\nu$ of $\bfH$ is $1$ and $\bfH$ is a normalized Gaussian real-valued random variable.

%------------------------------------------------------------------------------------------------------------------
\subsection{Reference case definition and explicit solution}
\label{Section6.1}
The quantities related to the reference case will be indexed by letter $r$. The probability density function of $H_r$ on $\RR$
is $p_{H_r}(y) = (2\pi)^{-1/2} \exp(-y^2/2)$. The potential function $\Phi_r$ (see Eq.~\eqref{eq2.7}) is $\Phi_r(y) = y^2/2$ and the drift $b_r$ defined by Eq.~\eqref{eq3.3} is $b_r(y)  = -y/2$. The ISDE defined by Eqs.~\eqref{eq5.1} and \eqref{eq5.2} are rewritten as
\begin{align}
 dY_r(t)  & =  -\frac{1}{2} Y_r(t)\, dt + dW_r(t) \quad , \quad t > 0\, , \label{eq6.1} \\
  Y_r(0)  & =  x \in \RR \, .  \label{eq6.2}
\end{align}
Consequently, $\{Y_r(t) , t\geq 0\}$ is a second-order Gaussian stochastic process, which is explicitely defined by
\begin{equation}\label{eq6.3}
Y_r(t) = x\, e^{-t/2} + \int_0^{\,t} e^{-(t-\tau)/2} \, dW_r(\tau) \, .
\end{equation}
A simple calculation shows that the mean value $m_r(t) = E\{Y_r(t)\}$ and the standard deviation
$\sigma_r(t) = (E\{(Y_r(t) -m_r(t))^2\})^{1/2}$ of the random variable $Y_r(t)$ for fixed $t > 0$, are written as
\begin{equation}\label{eq6.4}
m_r(t) = x\, e^{-t/2} \quad , \quad \sigma_r(t) = \sqrt{1-e^{-t}} \, .
\end{equation}
For all $t > 0$, as  $\{ Y_r(t)\, \vert \, Y_r(0) = x\}$ is a Gaussian random variable, the transition probability density function is
\begin{equation}\label{eq6.5}
\rho_r(y,t \,\vert \, x,0) = \frac{1}{\sqrt{2\pi}\, \sigma_r(t)} \exp\left\{ -\frac{1}{2\sigma_r(t)^2}(y-m_r(t))^2\right\} \, .
\end{equation}
Note that Eqs.~\eqref{eq6.4} and \eqref{eq6.5} show that, we effectively have
$\lim_{t\rightarrow 0_+} \rho_r(y,t \,\vert \, x,0)\, dy = \delta_0(y-x)$ (see Eq.~\eqref{eq5.5}) and
$\lim_{t\rightarrow +\infty} \rho_r(y,t \,\vert \, x,0) = p_{H_r}(y)$ (see Eq.~\eqref{eq5.7}).
It can be deduced that the kernel $k_{r,t}(y,x)$ on $\RR\times \RR$, defined by Eq.~\eqref{eq4.1} is written, for $t > 0$, as
\begin{equation}\label{eq6.6}
k_{r,t}(y,x) = \frac{1}{\sqrt{1 -e^{-t}}} \exp\left\{- \frac{1}{2} \frac{e^{-t}}{(1-e^{-t})} (y^2 + x^2 -2e^{t/2} y\, x)^2\right\} \, .
\end{equation}
Let $\{h_\alpha(y)\, , \alpha \in \NN \}$ be the Hermite polynomials and $\{h_\alpha(y)/\sqrt{\alpha!}\, , \alpha \in \NN \}$ the Hilbert basis in $L^2(\RR;p_{H_r})$,
\begin{equation}\nonumber
\int_\RR \frac{h_\alpha(y)}{\sqrt{\alpha!}}\, \frac{h_\beta(y)}{\sqrt{\beta!}} \, p_{H_r}(y) \, dy  = \delta_{\alpha\beta} \quad , \quad h_0(y)=1 \quad , \quad  h_1(y) = y \quad , \quad  h_{\alpha+1}(y) = y\, h_\alpha(y) - \frac{d  }{dy} h_\alpha(y)\, .
\end{equation}
Let $\{\hh_\alpha(y)\, , \alpha \in \NN \}$ be the polynomials defined by
\begin{equation}\nonumber
h_\alpha(y) = \frac{1}{\sqrt{2^\alpha}} \, \hh_\alpha(\frac{y}{\sqrt{2}}) \quad , \quad
\hh_\alpha(y) = \sqrt{2^\alpha}\, h_\alpha(\sqrt{2}\,y)  \, .
\end{equation}
We have the formula \cite{Hansen1975},
\begin{equation}\nonumber
\frac{1}{\sqrt{1 - 4a^2}} \exp\left\{ \frac{4 a}{1-4 a^2} (\hat x \, \hat y - a\hat x^2 - a \hat y^2)\right\} =
\sum_{\alpha\in\NN} \frac{a^\alpha}{\alpha !}\, \hh_\alpha(\hat x) \, \hh_\alpha(\hat y)\, .
\end{equation}
Taking $x= \sqrt{2}\,\hat x$, $y= \sqrt{2}\,\hat y$, and $a =\frac{1}{2}e^{-t/2} < 1/2$ for $t > 0$, Eq.~\eqref{eq6.6} can be rewritten as
\begin{equation}\label{eq6.7}
k_{r,t}(y,x) = \sum_{\alpha\in\NN} e^{-\lambda_{r,\alpha} t} \psi_{r,\alpha}(y)\, \psi_{r,\alpha}(x) \, ,
\end{equation}
\begin{equation}\label{eq6.8}
\lambda_{r,\alpha} = \frac{\alpha}{2}  \quad , \quad \psi_{r,\alpha}(y)= \frac{1}{\sqrt{2^\alpha \,\alpha !}} \, \hh_\alpha(\frac{y}{\sqrt{2}})
= \frac{1}{\sqrt{\alpha !}}  \, h_\alpha(y)\, ,
\end{equation}
and thus, for $\alpha$ and $\beta$ in $\NN$,
\begin{equation}\label{eq6.9}
\int_\RR  \psi_{r,\alpha}(y)\, \psi_{r,\,\beta}(y)\, p_{H_r}(y)\, dy = \delta_{\alpha,\,\beta} \, .
\end{equation}
Comparing Eqs.~\eqref{eq6.7} and \eqref{eq6.9}  with Eqs.~\eqref{eq4.14} and \eqref{eq4.10} shows that, for this reference case, the eigenvalues of the Fokker-Planck operator $L_\FKP$ are
\begin{equation}\label{eq6.10}
\lambda_{r,\alpha} = \frac{\alpha}{2} \quad  , \quad \alpha\in \NN\, .
\end{equation}
%
%
%------------------------------------------------------------------------------------------------------------------
\subsection{Estimating the eigenvalues with the proposed numerical formulation}
\label{Section6.2}

\noindent (i) For the convergence analysis, we consider $10$ values of $n_d$ constituting the set $\curN_d = \{100, 300, 400, 800, 1000, 1200,$  $ 1500, 1800, 2000, 2200\}$. For each $n_d\in\curN_d$, the matrix $[\eta_d] =[\eta^1 \ldots \eta^{n_d}]\in \MM_{1,n_d}$ is generated with an adapted generator (instruction randn$(1,n_d)$ for Matlab Gaussian generator).

\noindent (ii) For each value of $n_d$ in $\curN_d$, the generation of $n_\MC = n_d$ independent realizations $\{ [\tilde y_\mu^\ell] , \mu\in\curN_\mu\}$ with $\curN_\mu = \{1,\ldots , n_s\times N\}$ is performed using Eqs.~\eqref{eq5.24} and \eqref{eq5.25} with $n_s=1$, $\delta t = \Delta t = 0.061796$, $N=150$. For $n_d=1200$ and $n_\MC = 1200$, Figs.~\ref{fig:figure1a} and \ref{fig:figure1b} display the graphs of functions $n\mapsto \underline{y}(n)$ and $n\mapsto \underline{\sigma}(n)$ computed  using Eq.~\eqref{eq5.29} with Eq.~\eqref{eq5.27}. It can be seen that the criterion defined by Eq.~\eqref{eq5.30} is satisfied.
\begin{figure}[!h]
    \centering
    \begin{subfigure}[b]{0.25\textwidth}
    \centering
        \includegraphics[width=\textwidth]{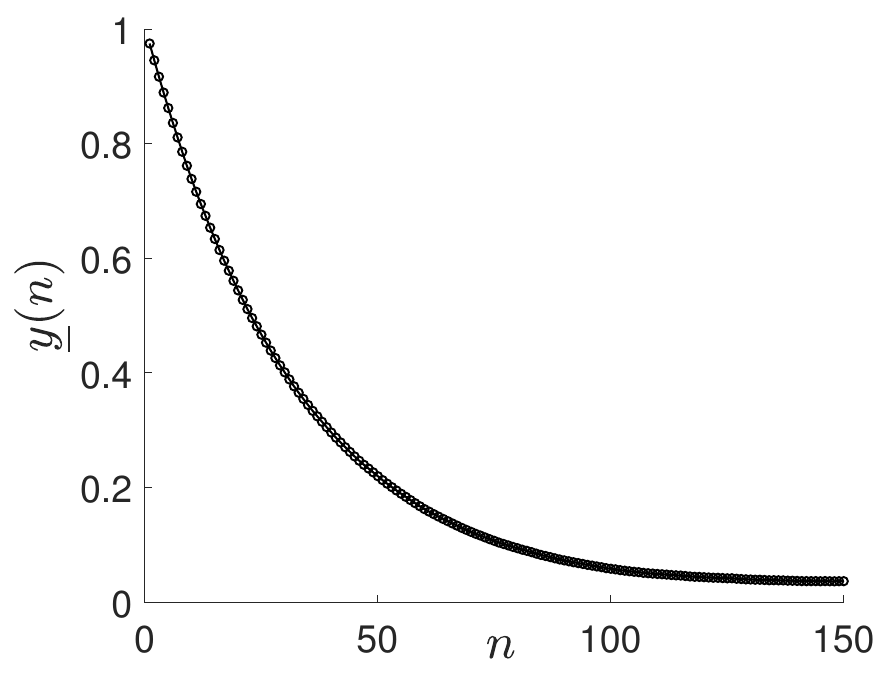}
        \caption{Graph of function $n\mapsto \underline{y}(n)$.}
        \label{fig:figure1a}
    \end{subfigure}
    \hfil
    \begin{subfigure}[b]{0.25\textwidth}
        \centering
        \includegraphics[width=\textwidth]{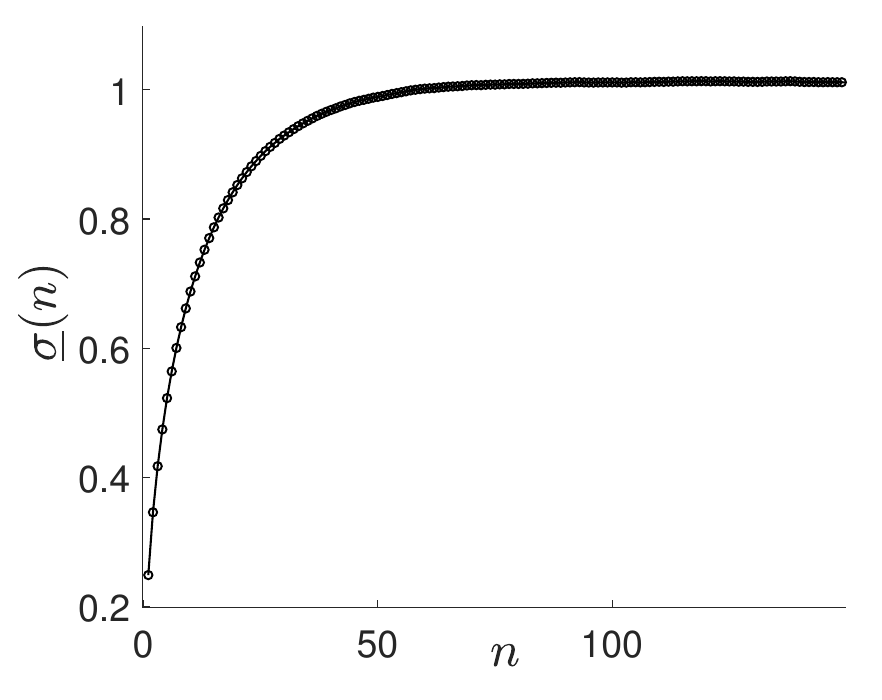}
        \caption{Graph of function $n\mapsto \underline{\sigma}(n)$.}
        \label{fig:figure1b}
    \end{subfigure}
    \caption{Criteria defined in Lemma~\ref{lemma:5} for controlling the convergence of the Gaussian-case reference with $\nu=1$, $n_s=1$, $n_d=1200$, and $n_{\rm{MC}} = 1200$.}
    \label{fig:figure1}
\end{figure}

\noindent (iii) For every $n_d\in\curN_d$, matrix $[\widehat K(2\, \Delta t)] \in\MM_{n_d}^{+0}$ is calculated using Eq.~\eqref{eq5.31} with
$\hat s = 0.2486$, $\hat s / s = 0.9690$, and $s_\SB =0.2565$. The first $6$ largest eigenvalues, $\hat b_\alpha(2\, \Delta t)$ of
$[\widehat K(2\, \Delta t)]$ and  $\lambdahat_\alpha(2\, \Delta t)$, simply denoted by $\lambdahat_\alpha$, are obtained from
Eq.~\eqref{eq4.41}.
Fig.~\ref{fig:figure2b} shows the graph of function $n_d\mapsto \errp_{\lambda}(n_d)$ that quantifies the relative error between $\alpha\mapsto\lambda_{r,\alpha}$ and $\alpha\mapsto \lambdahat_\alpha$, written as
$\errp_{\lambda}(n_d) = \sum_{\alpha=0}^5(\lambda_{r,\alpha}- \lambdahat_\alpha)^2 / \sum_{\alpha=0}^5 \lambda_{r,\alpha}^2$.
It can be seen that the error decreases as $n_d$ increases. For $n_d = n_\MC = 1200$, Fig.~\ref{fig:figure2a} compares the reference
eigenvalues $\lambda_{r,\alpha} =\alpha/2$ (see Eq.~\eqref{eq6.10}) with the computed eigenvalues $\lambdahat_\alpha$  for $\alpha=0,1, \ldots,5$. The comparison is good enough.
\begin{figure}[!h]
    \centering
    \begin{subfigure}[b]{0.25\textwidth}
    \centering
        \includegraphics[width=\textwidth]{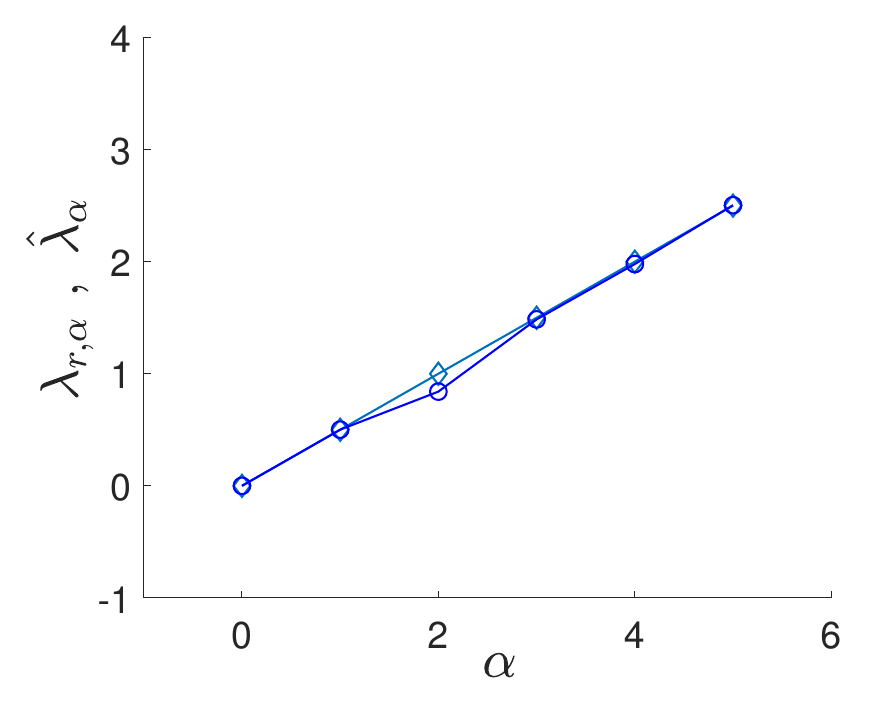}
        \caption{graphs $\alpha\mapsto\lambda_{r,\alpha}$ (diamond) and $\alpha\mapsto\lambdahat_\alpha$ (circle).}
        \label{fig:figure2a}
    \end{subfigure}
    \hfil
    \begin{subfigure}[b]{0.25\textwidth}
        \centering
        \includegraphics[width=\textwidth]{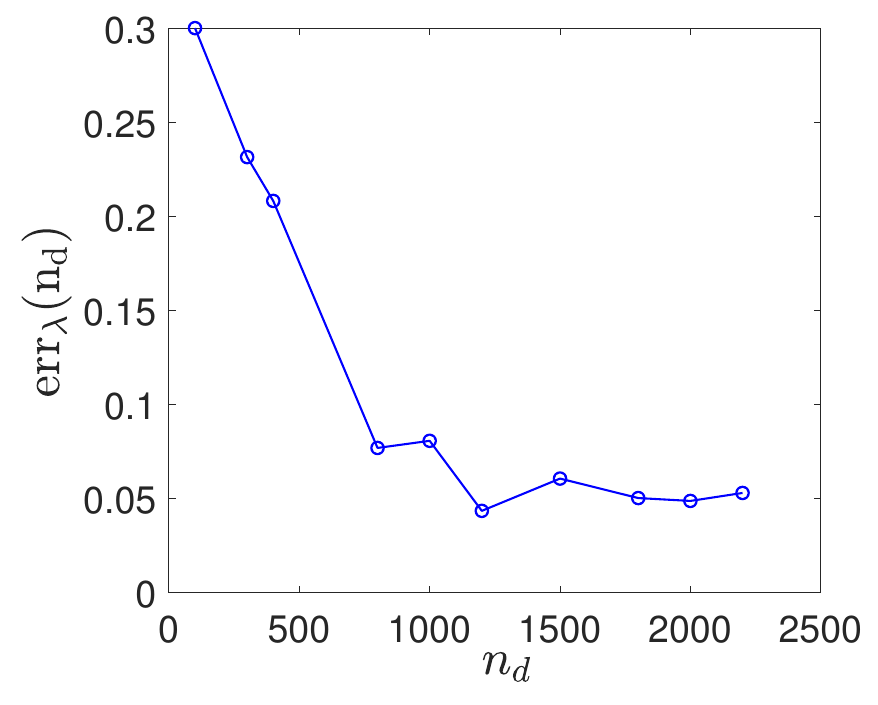}
        \caption{Graph $n_d\mapsto \errp_{\lambda}(n_d)$.}
        \label{fig:figure2b}
    \end{subfigure}
    \caption{(a) Comparison of the reference eigenvalues $\lambda_{r,\alpha}$ (diamond) with the computed eigenvalues $\lambdahat_\alpha$ (circle)  for $\alpha=0,1, \ldots,5$. (b) Graph of $n_d\mapsto \errp_{\lambda}(n_d)$ quantifying the relative error between $\alpha\mapsto\lambda_{r,\alpha}$ and $\alpha\mapsto\lambdahat_\alpha$.}
    \label{fig:figure2}
\end{figure}

%
%------------------------------------------------------------------------------------------------------------------
\section{Vector basis for PLoM derived from the transient anisotropic kernel, connected to the DMAPS basis}
\label{Section7}
%------------------------------------------------------------------------------------------------------------------
%
We revisit the objective presented in Section~\ref{Section1}.
The diffusion-maps (DMAPS) basis, used by PLoM \cite{Soize2016,Soize2020c,Soize2022a}, is associated with the isotropic kernel detailed in \cite{Coifman2005,Coifman2006}. This section introduces the construction of a transient vector basis, based on the transient anisotropic
kernel described in Section~\ref{Section5} (see Proposition~\ref{proposition:6}). This approach incorporates the requirement that as $\Delta t \rightarrow 0$, the transient kernel at the first time $\Delta t$ coincides with the DMAPS isotropic kernel. Consequently, the two vector bases will be linked asymptotically as $t\rightarrow 0$. To develop such a "connected" transient anisotropic kernel to the DMAPS isotropic kernel, we undertake a reparameterization of Eq.~\eqref{eq5.31}, which defines the matrix $[\widehat K(\nDeltat)]\in\MM_{n_d}^{+0}$ for $n\in\curN$. This reformulation enables the construction of the vector basis for high dimensions (large values of $\nu$) and relatively small $n_d$, as implemented in PLoM, designed for probabilistic learning with small training datasets.
%
%-----------------------------------------------------------------------------------------------------------------
\subsection{Transient anisotropic kernel connected to the DMAPS isotropic kernel}
\label{Section7.1}
The following definition provides the construction of a transient anisotropic kernel that is linked to the DMAPS isotropic kernel. This construction draws inspiration from the expression of $[\widehat K(\Delta t)]$ defined by Eq.~\eqref{eq5.31}.

%
%---DEFINITION 5 ----------------
\begin{definition}[Transient anisotropic kernel connected to DMAPS] \label{definition:5}
For each $n$ in $\curN$, for $\Delta t$ defined in Section~\ref{Section5.3}-(i), for $n_\MC$ and $[y_n^\ell]$ defined in
Section~\ref{Section5.3}-(v), for $[\sigma_n]$ defined by Eq.~\eqref{eq5.27}, and for  a given real $\varepsilon_\DM > 0$, we define the
matrix $\tilde K(\nDeltat)]\in\MM_{n_d}^{+0}$, such that
\begin{equation} \label{eq7.1}
[\tilde K(\nDeltat)] = [B]^{-1} [\curK(\nDeltat)]\, ,
\end{equation}
such that, for all $i$ and $j$ in $\{1,\ldots,n_d\}$, the matrix $[\curK(\nDeltat)]\in\MM_{n_d}$ has entries,
\begin{equation}\label{eq7.2}
[\curK(\nDeltat)]_{ij} =
 \frac{1}{n_\MC}\sum_{\ell=1}^{n_\pMC}\left ( \Pi_{k=1}^\nu \{[\sigma_n]_{kj}/\Delta t\} \right )^{-1}
        \exp  \left\{ -\frac{1}{4\varepsilon_\DM} \sum_{k=1}^\nu \left ( \frac{\eta_k^i - [y_n^\ell]_{kj}}{[\sigma_n]_{kj}/\sqrt{\Delta t}} \right )^2  \right\}
\, ,
\end{equation}
and $[B]\in\MM_{n_d}$ is a diagonal matrix whose entries are,
\begin{equation}\label{eq7.3}
[B]_{ij} = \delta_{ij} \sum_{j'=1}^{n_d} \exp \left\{ -\frac{1}{4\varepsilon_\DM} \Vert\bfeta^i -\bfeta^{j'}\Vert^2 \right\} \, .
\end{equation}
\end{definition}
%
%
%----- REMARK 6 ------------------------------
\begin{remark} [About the choice of the smoothing parameter $\varepsilon_\DM$] \label{remark:6}
An optimal value, $\varepsilon_\optp$, for the smoothing parameter $\varepsilon_\DM$, is proposed for the PLoM algorithm in \cite{Soize2022a} and is detailed in \ref{SectionA.2}-(i). This optimal value enables the analysis of high-dimensional problems (large value of $\nu$).
It should be noted that Proposition~\ref{proposition:7} will explain the definition of $[\tilde K(\nDeltat)]$ as defined by Eqs.~\eqref{eq7.1} to \eqref{eq7.3}, showing the connection with DMAPS.
\end{remark}

%
%---- PROPOSITION 7 ----------------------
\begin{proposition}[Limit of $\lbrack\tilde K(\Delta t)\rbrack$ for $\Delta t \rightarrow 0$] \label{proposition:7}
We use the notation introduced in Definition~\ref{definition:5}. Let $[\curK_\DM]\in\MM_{n_d}^{+0}$ be the matrix of the DMAPS isotropic kernel, defined for all  $i, j \in \{1,\ldots , n_d\}$, by
\begin{equation} \label{eq7.4}
[\curK_\DM]_{ij} = \exp\left \{-\frac{1}{4\,\varepsilon_\DM} \Vert\bfeta^i-\bfeta^j\Vert^2\right\}\, .
\end{equation}
Let $[K_\DM]\in\MM_{n_d}$ be the matrix defined by
\begin{equation} \label{eq7.5}
[K_\DM] = [B]^{-1} [\curK_\DM]\, ,
\end{equation}
where $[B]$ is the diagonal matrix defined by Eq.~\eqref{eq7.3}. Let $\Delta t$ be defined by
\begin{equation} \label{eq7.6}
\Delta t = \hat s^2 / \kappa \quad , \quad \kappa \geq 1 \, ,
\end{equation}
where $\hat s$ is defined by Eq.~\eqref{eq2.4}. Then, as $\kappa \rightarrow +\infty$, and consequently $\Delta t \rightarrow 0$, we have
\begin{equation} \label{eq7.7}
[\tilde K(\Delta t)] \rightarrow [K_\DM] \, ,
\end{equation}
which means that, for $n=1$, the matrix $[\tilde K(n\Delta t)]$ of the transient anisotropic kernel, defined by Eq.~\eqref{eq7.1}, converges to the matrix $[K_\DM]$ of the DMAPS isotropic kernel as $\Delta t \rightarrow 0$.
\end{proposition}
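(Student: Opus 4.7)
The plan is to reduce the claim to a componentwise limit on $[\curK(\Delta t)]$ and then to analyze the first step of the Euler scheme. Since $[B]$ is independent of $\Delta t$ and since $[\tilde K(\Delta t)] = [B]^{-1}[\curK(\Delta t)]$ while $[K_\DM]=[B]^{-1}[\curK_\DM]$, it is enough to show, for every $i,j\in\{1,\ldots,n_d\}$, the entrywise convergence $[\curK(\Delta t)]_{ij}\to [\curK_\DM]_{ij}$ as $\kappa\to+\infty$. The strategy is to specialize the Euler recursion \eqref{eq5.24}--\eqref{eq5.25} to $n=1$, read off the $\Delta t$-scaling of $[y_1^\ell]_{kj}$ and $[\sigma_1]_{kj}$, and pass to the limit inside the finite sum over $\ell=1,\ldots,n_\MC$ appearing in \eqref{eq7.2}.

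First I would take $n=1$ and, for simplicity, $n_s=1$, so that $\delta t=\Delta t=\hat s^{\,2}/\kappa$. One Euler step yields the explicit expression
\[
[y_1^\ell]_{kj} \,=\, \eta_k^j + \tfrac{\Delta t}{2}\,[L([\eta_d])]_{kj} + \sqrt{\Delta t}\,[\gamma_1^\ell]_{kj} \, .
\]
Substituting this into \eqref{eq5.27} gives $[y_1^\ell]_{kj}-[\underline y_{\,1}]_{kj}=\sqrt{\Delta t}\,\bigl([\gamma_1^\ell]_{kj}-\bar\gamma_{kj}\bigr)$, with $\bar\gamma_{kj}=n_\MC^{-1}\sum_\ell [\gamma_1^\ell]_{kj}$, and hence the key identity $[\sigma_1]_{kj}^{\,2}=\Delta t\,\hat V_{kj}^{\,2}$, where
\[
\hat V_{kj}^{\,2} \,:=\, \frac{1}{n_\MC}\sum_{\ell=1}^{n_\MC}\bigl([\gamma_1^\ell]_{kj}-\bar\gamma_{kj}\bigr)^2
\]
is the empirical variance of $n_\MC$ standard normal samples, which tends to $1$ as $n_\MC\to+\infty$ by the strong law of large numbers. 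This factorization is the heart of the proof: after rescaling by $\sqrt{\Delta t}$, the anisotropic bandwidth $[\sigma_1]_{kj}/\sqrt{\Delta t}=\hat V_{kj}$ is $\Delta t$-independent and close to $1$.

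Next I would substitute these expressions into \eqref{eq7.2}. For each fixed $\ell$ the rescaled residual satisfies
\[
\frac{\eta_k^i-[y_1^\ell]_{kj}}{[\sigma_1]_{kj}/\sqrt{\Delta t}}
\,=\, \frac{(\eta_k^i-\eta_k^j) - \tfrac{\Delta t}{2}[L([\eta_d])]_{kj} - \sqrt{\Delta t}\,[\gamma_1^\ell]_{kj}}{\hat V_{kj}}
\,\xrightarrow[\kappa\to+\infty]{}\, \frac{\eta_k^i-\eta_k^j}{\hat V_{kj}} \, ,
\]
and, once the $\sqrt{\Delta t}$-cancellation between $[\sigma_1]_{kj}$ and the $\Delta t$-normalization is accounted for, the matching prefactor in \eqref{eq7.2} tends to $\prod_k \hat V_{kj}^{-1}$. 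Consequently each $\ell$-summand in \eqref{eq7.2} converges to $\prod_k\hat V_{kj}^{-1}\exp\bigl(-\tfrac{1}{4\varepsilon_\DM}\sum_k (\eta_k^i-\eta_k^j)^2/\hat V_{kj}^{\,2}\bigr)$, which collapses, in the regime where $\hat V_{kj}$ is close to $1$, to $\exp\bigl(-\|\bfeta^i-\bfeta^j\|^2/(4\varepsilon_\DM)\bigr)=[\curK_\DM]_{ij}$. A bounded-convergence argument on the finite sum over $\ell=1,\ldots,n_\MC$ then gives $[\curK(\Delta t)]_{ij}\to[\curK_\DM]_{ij}$, and left-multiplication by $[B]^{-1}$ closes the proof.

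The main obstacle is the careful bookkeeping of the $\Delta t$-scalings introduced by the Wiener increments: the factor $\sqrt{\Delta t}$ appears simultaneously in the residuals $\eta_k^i-[y_1^\ell]_{kj}$ and in the empirical bandwidths $[\sigma_1]_{kj}$, and the whole normalization in \eqref{eq7.2} is tailored precisely so that these factors cancel and a finite, nontrivial DMAPS limit survives. A secondary, more probabilistic point is that replacing $\hat V_{kj}$ by $1$ strictly requires $n_\MC$ to be large; the stated limit must therefore be read either for $n_\MC$ sufficiently large, or as an almost-sure statement in a joint limit in which $\kappa\to+\infty$ and $n_\MC\to+\infty$.
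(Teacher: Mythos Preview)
Your proof is correct and follows essentially the same approach as the paper's: both argue that for $n=1$ the first Euler step gives $[y_1^\ell]_{kj}\to\eta_k^j$ and $[\sigma_1]_{kj}/\sqrt{\Delta t}\to 1$, then substitute into \eqref{eq7.2} to recover $[\curK_\DM]$ and left-multiply by $[B]^{-1}$. Your version is more explicit than the paper's terse proof and, in particular, correctly flags that the paper's statement ``$[\sigma_n]_{kj}\to\sqrt{\Delta t}$'' implicitly requires $n_\MC$ large enough for the empirical variance $\hat V_{kj}$ of the Gaussian increments to be close to~$1$.
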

%

%------ PROOF OF PROPOSITION 7 ------------------
\begin{proof} (Proposition~\ref{proposition:7}).
For $n=1$, for $\kappa\rightarrow +\infty$, that is to say, for $\Delta t \rightarrow 0$, Eqs.~\eqref{eq5.1} and \eqref{eq5.2} show that, for all $\ell\in\{1,\ldots,n_\MC\}$, $k\in\{1,\ldots,\nu\}$, and $j\in\{1,\ldots , n_d\}$, we have $[y_1^\ell]_{kj} \rightarrow \eta_k^j$ and $[\sigma_n]_{kj} \rightarrow \sqrt{\Delta t}$, and consequently, $[\curK(\Delta t)] \rightarrow [\curK_\DM]$. From Eqs.~\eqref{eq7.1} and \eqref{eq7.5}, it can be deduced Eq.~\eqref{eq7.7}.
\end{proof}
%
%----------------------------------------------------------------------------------------------------------------------
\subsection{Construction of the reduced-order transient basis and its counterpart for the DMAPS basis}
\label{Section7.2}
In this section, we: (i) review the construction of the reduced-order diffusion-maps (DMAPS) basis (RODB) represented by a matrix
$[g_\DM]\in \MM_{n_d,m_\optpp}$ with $m_\optp < n_d$, and (ii) construct, for each $n\in\curN$, the reduced-order transient basis (ROTB$(\nDeltat)$) represented by a matrix $[g(n\Delta t)]\in \MM_{n_d,m_\optpp}$.\\

%--------------------------------------------------------------------------------------
\noindent \textit{(i) Reminder of the construction of the RODB}.
This construction, due to \cite{Coifman2005}, is the one used by the PLoM algorithm \cite{Soize2016,Soize2020c}.
The matrix $[K_\DM]$, defined by Eq.~\eqref{eq7.5}, has positive entries and represents the transition matrix of a Markov chain.
The eigenvalues $b_{\DM,1},\ldots,b_{\DM,n_d}$ and the associated eigenvectors $\bfg_\DM^1,\ldots,\bfg_\DM^{n_d}$ are such that
\begin{equation} \label{eq7.8}
[K_\DM]\, \bfg_\DM^{\,\beta} = b_{\DM,\,\beta}\, \bfg_\DM^{\,\beta} \quad , \quad
1=b_{\DM,1} > b_{\DM,2} \geq \ldots \geq b_{\DM,n_d}\, .
\end{equation}
Using Eq.~\eqref{eq7.5}, this eigenvalue problem  is rewritten as the symmetric eigenvalue problem,
\begin{equation} \label{eq7.9}
\PP^S_\DM \, \bfphi^{\,\beta} = b_{\DM,\,\beta}\, \bfphi^{\,\beta} \quad , \quad
\langle \bfphi^{\,\beta},\bfphi^{\,\beta'}\rangle =\delta_{\beta\beta'} \quad , \quad
\bfg_\DM^{\,\beta} = [B]^{-1/2}\bfphi^{\,\beta} \, ,
\end{equation}
in which $\PP^S_\DM = [B]^{-1/2}[\curK_\DM]\,[B]^{-1/2}$ is a symmetric matrix.
The diffusion-maps basis $\{\bfg_\DM^1,\ldots, \bfg_\DM^{n_d}\}$ forms a vector basis of $\RR^{n_d}$.
As explained in \cite{Soize2016,Soize2020c}, PLoM uses the RODB of order $m$, which  is defined by
$\{\bfg_\DM^1 \ldots  \bfg_\DM^{\,m}\}$.
This basis depends on  $\varepsilon_\DM$ and $m$.
The optimal value $m_\optp$ of $m$ is defined (see \cite{Soize2022a}) by
\begin{equation} \label{eq7.10}
m_\optp = \nu +1 \, .
\end{equation}
The optimal value $\varepsilon_\optp$ of $\varepsilon_\DM$ is estimated to obtain
\begin{equation} \label{eq7.11}
1 = b_{\DM,1} > b_{\DM,2} \simeq \ldots \simeq b_{\DM,m_\optpp}
 \gg b_{\DM,m_\optpp + 1}\geq \ldots \geq b_{\DM,n_d} > 0\, ,
\end{equation}
in which the jump amplitude
\begin{equation} \label{eq7.12}
J_\DM = b_{\DM,m_\optpp + 1}  /b_{\DM,m_\optpp} \, ,
\end{equation}
must be equal to $J_\DM=0.1$ (following \cite{Soize2020c}), but which can also be chosen in the interval $[0.1\, , 0.5]$ when $\nu$ is large.
Therefore, the RODB is defined for $m = m_\optp$ and is represented by the matrix,
\begin{equation} \label{eq7.13}
[g_\DM] = [\bfg_\DM^1 \ldots  \bfg_\DM^{m_\optpp}] \in \MM_{n_d,m_\optpp} \, .
\end{equation}
%

%--------------------------------------------------------------------------------------
\noindent \textit{(ii) Construction of the ROTB$(\nDeltat)$}.
For each $n$ in $\curN$, the ROTB$(\nDeltat)$ represented by $[g(n\Delta t)]$ is constructed as the eigenvectors of matrix
$[\tilde K(n\Delta t)]$ defined by Eq.~\eqref{eq7.1}. Taking into account Eq.~\eqref{eq7.7}, we want that for $n=1$, $[g(\Delta t)]\in
\MM_{n_d,m_\optpp}$ goes to $[g_\DM]\in\MM_{n_d,m_\optpp}$ as $\Delta t \rightarrow 0$. For $n > 1$, matrix $[\tilde K(n\Delta t)]$ is \textit{a priori} symmetric, but matrix $[\curK(n\Delta t)]$ is not symmetric. For applying a similar approach to the one defined by Eq.~\eqref{eq7.9}, we then symmetrize the matrix $[\PP(\nDeltat)] = [B]^{-1/2} [\curK(n\Delta t)] \, [B]^{-1/2}$, introducing
$[\PP^S(\nDeltat)] = ([\PP(\nDeltat)] + [\PP(\nDeltat)]^T)/2$.
The transient basis $\{\bfg^1(\nDeltat), \ldots , \bfg^{n_d}(\nDeltat)\}$ associated with the eigenvalues
$\tilde b_1(\nDeltat) \geq \tilde b_2(\nDeltat) \geq \ldots \geq \tilde b_{n_d}(\nDeltat)$
are then computed by solving the symmetric eigenvalue problem
\begin{equation} \label{eq7.14}
[\PP^S(\nDeltat)]\, \bfvarphi^{\,\beta}(\nDeltat) =  \tilde b_\beta(\nDeltat) \, \bfvarphi^{\,\beta}(\nDeltat) \quad , \quad
\langle \bfvarphi^{\,\beta}(\nDeltat) \, , \bfvarphi^{\,\beta'}(\nDeltat)\rangle = \delta_{\beta\beta'} \, .
\end{equation}
\begin{equation} \label{eq7.15}
\bfg^{\,\beta}(\nDeltat) = [B]^{-1/2} \bfvarphi^{\,\beta}(\nDeltat) \, .
\end{equation}
The ROTB$(\nDeltat)$ is then defined by the matrix

\begin{equation} \label{eq7.16}
[g(\nDeltat)] = [\bfg^1(\nDeltat) \ldots \bfg^{m_\optpp}(\nDeltat)]\in\MM_{n_d,m_\optpp}\, ,
\end{equation}
in which $m_\optp$ is defined as explained in Section~\ref{Section7.2}-(i).

%---------------------------------------------------------------------------------------------------------------------
\subsection{Criteria for comparing the reduced-order transient basis with the reduced-order DMAPS basis}
\label{Section7.3}
The PLoM algorithm is based on the use of the RODB (see \ref{SectionA}). With such an RODB based on the isotropic kernel, PLoM has proven generally efficient, even in extremely difficult cases, as demonstrated in numerous publications since 2016. This efficacy will also be evident through the applications presented in Section~\ref{Section8}. However, for cases involving very heterogeneous data in the training dataset, the learned statistical dependence between the components of the random vector $\bfH$ (through the learned probability measure for $\bfH$) can \textit{a priori} be improved using ROTB$(\nDeltat)$ for given $n$. To assess a possible improvement with this reduced-order transient basis compared to the reduced-order DMAPS basis, quantitative criteria are necessary. In this section, we introduce criteria for comparing the two vector bases, and then, in Section~\ref{Section7.4}, we will present a methodology for identifying the instance, $\nDeltat$, that maximizes the selection criteria of the ROTB$(\nDeltat)$.

(i) The first criterion will be the angle between the two vector subspaces generated by the two vector bases, ROTB$(\nDeltat)$ and RODB. If this angle is close to zero then the two bases coincide. This must be the case when we choose  the instant $n=1$ for the ROTB$(\nDeltat)$  (See Proposition~\ref{proposition:7}).

(ii) The second criterion is the concentration of the learned probability measure in relation to the concentration of the probability measure of the training dataset. It has been proven that the PLoM algorithm, which uses the RODB, was designed to preserve the concentration of the learned probability measure from a small training dataset. For this, in \cite{Soize2020c,Soize2022a}, we introduced the indicator $d^2$, linked to the mean-square convergence, which we will recall and use. We also introduce a second concentration criterion, KL, based on the Kullback-Leibler divergence \cite{Bhattacharyya1943,Kullback1951} between the learned probability measure and the probability measure of the training dataset.

(iii) The third criterion is mutual information \cite{Kolmogorov1956,Cover2006}, which is defined as the relative entropy introduced by Kullback and Leibler \cite{Kullback1951}. This will be used to quantify the level of statistical dependencies among the components of the centered random vector $\bfH$, whose covariance matrix is the identity matrix. Such mutual information will be estimated for the learned probability measure generated by PLoM, comparing the RODB and the ROTB$(\nDeltat)$ as a function of $n$.

(iv) The last criterion is the Entropy from Information Theory \cite{Kapur1992,Cover2006,Gray2011}, introduced by Shannon \cite{Shannon1948}. The estimation of this entropy is a function of the number of realizations  used for estimating the probability measure. This property will be used in Section~\ref{Section7.4} to normalize the estimation of the mutual information.

\subsubsection{Angle between the subspaces spanned by ROTB$(\nDeltat)$ and RODB}
\label{Section7.3.1}
Let $n$ be fixed in $\curN$. It is assumed that $\text{rank}([g_\DM]) = \text{rank}([g(\nDeltat)])= m_\optp$ in which the two matrices are defined by Eqs.~\eqref{eq7.13} and \eqref{eq7.16}. In addition, it is assumed that the null space of the matrix $[g(\nDeltat)]^T\, [g_\DM]$ is $\{0\}$.
Let $V(\nDeltat) = \text{span}\{\bfg^1(\nDeltat), \ldots , \bfg^{m_\optpp}(\nDeltat)\}$ be the $m_\optp$-dimension subspace of $\RR^{n_d}$ (spanned by the ROTB$(\nDeltat)$).
Let $V_\DM = \text{span}\{\bfg^1_\DM, \ldots , \bfg_\DM^{m_\optpp}\}$ be the $m_\optp$-dimension subspace of $\RR^{n_d}$ (spanned by the RODB).
For $\beta\in\{1,\ldots m_\optp\}$, let $\hat g_\DM^{\,\beta} = g_\DM^{\,\beta} / \Vert g_\DM^{\,\beta}\Vert$ be the normalized vector $g_\DM^{\,\beta}$ and let $\hat g^{\,\beta}(\nDeltat) = g^{\,\beta}(\nDeltat) / \Vert g^{\,\beta}(\nDeltat)\Vert$ be the normalized vector $g^{\,\beta}(\nDeltat)$.
Let $[\hat g_\DM] = [\hat g^1_\DM\ldots \hat g_\DM^{m_\optpp} ]$ and
$[\hat g(\nDeltat)] = [\hat g^1(\nDeltat)\ldots \hat g(\nDeltat)^{m_\optpp} ]$ be matrices in $\MM_{n_d, m_\optpp}$.
The angle $\gamma (\nDeltat)$ between the subspaces $V(\nDeltat)$ and $V_\DM$ is defined by
\begin{equation}\label{eq7.17}
 \gamma (\nDeltat) = \arccos ( \sigma_{\rm{min}}([\hat g(\nDeltat)]^T\, [\hat g_\DM] ) ) \, ,
\end{equation}
in which $\sigma_{\rm{min}}$ denotes the smallest singular value. If the angle is close to $0$, the two subspaces are nearly linearly dependent.
\subsubsection{Indicators related to the learned probability measure}
\label{Section7.3.2}

In this section, we detail the indicators used for comparison: concentration of the learned probability measure, mutual information, and entropy. These will be employed to compare the probability measures associated with the training dataset and the learned dataset generated by the PLoM algorithm, as summarized in \ref{SectionA}. Specifically, we use the reduced-order DMAPS basis (RODB) and, alternatively, the reduced-order transient basis (ROTB$(\nDeltat)$). To facilitate the comparisons presented in the numerical illustrations, we introduce the necessary notations for clarifying the diverse quantities and their numerical calculations.

- \textit{Training dataset}. In Section~\ref{Section2}, the independent realizations of the $\RR^\nu$-valued random variable $\bfH$ are $\bfeta^1,\ldots ,\bfeta^{n_d}$, and the matrix $[\eta_d] = [\bfeta^1\ldots \bfeta^{n_d}]\in \MM_{\nu,n_d}$ represents one realization of the random matrix $[\bfH]$, used by PLoM, and defined in \ref{SectionA.1}. The probability density function of $\bfH$ is $p_\bfH$ defined by Eq.~\eqref{eq2.3}.

- \textit{Learned dataset}. In \ref{SectionA}, the learned realizations generated by PLoM are those of the $\MM_{\nu,n_d}$-valued random variable $[\bfH_\ar]$ and are written as ${[\eta_\ar^\ell], \ell=1,\ldots, n_\MCH}$ (see \ref{SectionA.3}). The corresponding realizations ${\bfeta_\ar^{\ell'}, \ell=1,\ldots, n_\ar}$, with $n_\ar = n_d \times n_\MCH$, of the $\RR^\nu$-valued random variable $\bfH_\ar$ are obtained by reshaping. The learned probability density function of $\bfH_\ar$ is denoted by $p_{\bfH_\arp}$. When PLoM is used with RODB, $[\bfH_\ar]$, $[\eta_\ar^\ell]$, $\bfeta_\ar^{\ell'}$, and $p_{\bfH_\arp}$ will be rewritten as $[\bfH_\DB]$, $[\eta_\DB^\ell]$, $\bfeta_\DB^{\ell'}$, and $p_\DB$, respectively. When PLoM is used with ROTB$(\nDeltat)$, $[\bfH_\ar]$, $[\eta_\ar^\ell]$, $\bfeta_\ar^{\ell'}$, and $p_{\bfH_\arp}$ will be rewritten as $[\bfH_\TB(\nDeltat)]$, $[\eta_\TB^\ell(\nDeltat)]$, $\bfeta_\TB^{\ell'}(\nDeltat)$, and
$p_\TB( . \, ; \nDeltat)$, respectively.\\

%------------------------------------------------------------------------------------------------------
\noindent\textit{(i) Concentration of the learned probability measure for RODB and ROTB$(\nDeltat)$}.
Based on the mean-square norm, for PLoM with RODB and ROTB$(\nDeltat)$, the learned probability measure concentration is written (see Eq.~\eqref{eqA6}) as
\begin{equation} \label{eq7.18}
d_\DB^{\, 2}(m_\optp) = E\{\, \Vert \, [\bfH_\DB] - [\eta_d]\,  \Vert^2\, \}\,  /\, \Vert\,  [\eta_d]\, \Vert^2\, ,
\end{equation}
\begin{equation} \label{eq7.19}
d_\TB^{\, 2}(m_\optp;\nDeltat) = E\{\, \Vert \, [\bfH_\TB(\nDeltat)] - [\eta_d]\, \Vert^2\, \} \, / \, \Vert \, [\eta_d]\, \Vert^2\, .
\end{equation}
Using the realizations, these quantities are estimated by
\begin{equation} \label{eq7.20}
\hat d_\DB^{\, 2}(m_\optp) = (1/n_\MCH)\sum_{\ell=1}^{n_\ppMCH}\{\, \Vert\,  [\bfeta_\DB^\ell]  - [\eta_d]\, \Vert^2\, \}\,  /\, \Vert\,  [\eta_d]\, \Vert^2 \, .
\end{equation}
\begin{equation} \label{eq7.21}
\hat d_\DB^{\, 2}(m_\optp; \nDeltat) = (1/n_\MCH)\sum_{\ell=1}^{n_\ppMCH}\{\, \Vert\,  [\bfeta_\TB^\ell(\nDeltat)]  - [\eta_d]\, \Vert^2\, \} \, /\, \Vert \, [\eta_d]\, \Vert^2 \, .
\end{equation}
The concentration of the learned probability measure can also be estimated using the Kullback-Leiber divergence and its estimation from a set of realizations as presented in \ref{SectionB.1}. For the PLoM formulated with RODB and ROTB$(\nDeltat)$, we then have,

\begin{equation} \label{eq7.22}
D(p_\DB \, \Vert \, p_\bfH )  = \int_{\RR^\nu} p_\DB(\bfeta)\,\log \left ( \frac{p_\DB(\bfeta)}{p_\bfH(\bfeta)} \right) \, d\bfeta \, ,
\end{equation}
\begin{equation} \label{eq7.23}
D(p_\TB( .\, ;\nDeltat)\, \Vert \, p_\bfH )  = \int_{\RR^\nu} p_\TB(\bfeta \, ; \nDeltat)\,\log \left ( \frac{p_\TB(\bfeta ; \nDeltat)}{p_\bfH(\bfeta)} \right) \, d\bfeta \, .
\end{equation}
%

%-------------------------------------------------------------------------------------------------------------------------------------
\noindent \textit{(ii) Mutual-information-based components statistical dependencies of the learned probability measure for  RODB and ROTB$(\nDeltat)$}.
The mutual information and its estimation from a set of realizations are presented in \ref{SectionB.2}. For the training dataset and for PLoM formulated with RODB and ROTB$(\nDeltat)$, we have , for $\RR^\nu$-valued random variables $\bfH$, $\bfH_\DB$, and $\bfH_\TB(\nDeltat)$,
\begin{equation} \label{eq7.24}
I(\bfH) = D(p_\bfH \, \Vert \, \otimes_{k=1}^\nu p_{H_k}) =
              \int_{\RR^\nu} p_\bfH(\bfeta)\,\log \left ( \frac{p_\bfH(\bfeta)}{\Pi_{k=1}^\nu p_{H_k}(\eta_k)} \right) \, d\bfeta  \, ,
\end{equation}
\begin{equation} \label{eq7.25}
I(\bfH_\DB) = D(p_\DB \, \Vert \, \otimes_{k=1}^\nu p_{\DB,k}) =
             \int_{\RR^\nu} p_\DB(\bfeta)\,\log \left ( \frac{p_\DB(\bfeta)}{\Pi_{k=1}^\nu p_{\DB,k}(\eta_k)} \right) \, d\bfeta  \, ,
\end{equation}
\begin{equation} \label{eq7.26}
I(\bfH_\TB;\nDeltat) = D(p_\TB(\cdot \, ; \nDeltat) \, \Vert \, \otimes_{k=1}^\nu p_{\TB,k}(\cdot \,  ; \nDeltat)) =
   \int_{\RR^\nu} p_\TB(\bfeta\, ;\nDeltat)\,\log \left ( \frac{p_\TB(\bfeta\, ;\nDeltat)}{\Pi_{k=1}^\nu p_{\TB,k}(\eta_k\, ;\nDeltat)} \right) \, d\bfeta  \, ,
\end{equation}
in which $p_{\DB,k}$ is the pdf of component $H_{\DB,k}$ of $\bfH_\DB$ and where $p_{\TB,k}(\cdot \,  ; \nDeltat)$ is the pdf of component
$H_{\TB,k}(\nDeltat)$ of $\bfH_{\TB}(\nDeltat)$.\\

%-----------------------------------------------------------------------------------------------------------------------------
\noindent\textit{(iii) Entropy for RODB and ROTB$(\nDeltat)$}.
The entropy and its estimation from a set of realizations are presented in \ref{SectionB.3}. For the training dataset and for PLoM formulated with RODB and ROTB$(\nDeltat)$, we have for $\RR^\nu$-valued random variables $\bfH$, $\bfH_\DB$, and $\bfH_\TB(\nDeltat)$,
\begin{equation}\label{eq7.27}
S_\bfH = -\int_{\RR^\nu} p_\bfH(\bfeta)\,\log p_\bfH(\bfeta) \, d\bfeta  \quad , \quad
S_\DB = -\int_{\RR^\nu}  p_\DB(\bfeta)\,\log p_\DB(\bfeta) \, d\bfeta     \, ,
\end{equation}
\begin{equation}\label{eq7.28}
S_\TB(\nDeltat) = -\int_{\RR^\nu}  p_\TB(\bfeta\, ; \nDeltat)\,\log p_\TB(\bfeta\, ; \nDeltat) \, d\bfeta \, .
\end{equation}
%
%---------------------------------------------------------------------------------------------------------------------------
\subsection{Identification methodology for the instant maximizing the selection criterion of the ROTB$(\nDeltat)$}
\label{Section7.4}
In this section, all quantities denoted with a hat (such as $\hat I(\bfH)$) represent the estimated values of the corresponding quantities without a hat (such as $I(\bfH)$), using the realizations as explained in \ref{SectionB}.
Let $n_d$ be the number of points in the training dataset and $n_\ar = n_d\times n_\MCH \gg n_d$ be the number of learned realizations with PLoM (see \ref{SectionA.3}), either with the RODB or with the ROTB$(\nDeltat)$.\\

\noindent \textit{(i) Defining the subset $\curC_N$ of $\curN$ containing the admissible values of $n$}.
Let $N$ be the largest value of $n$ for which the ROTB$(\nDeltat)$, represented by matrix $[g(\nDeltat)]$, is computed. This value $N$ being fixed, the set $\curN = \{1,\ldots, N\}$ defined by Eq.~\eqref{eq5.23} is fixed.
The subset $\curC_N$ of the admissible values of $n$ is then defined by
\begin{equation}\label{eq7.29}
\curC_N = \{ n\in \curN \, , \, \frac{1}{\nu} \, \hat d_\TB^{\,2}(m_\optp;\nDeltat) \leq \tau_c \ll 1\} \, ,
\end{equation}
where $\hat d_\TB^{\, 2}(m_\optp;\nDeltat)$ is defined by Eq.~\eqref{eq7.21} and where $0 < \tau_c \ll 1$ is fixed sufficiently small with respect to $1$ in order to preserve the concentration of the learned probability measure (see Section~\ref{Section7.3}-(i) and \ref{SectionA.4}). It should be noted that the maximum value of $d_\DB^{\, 2}(n_d)$ is $2$ (corresponding to a measure concentration completely lost, which can be obtained  with classical MCMC algorithms). The PLoM algorithm allows for preserving the concentration of the learned probability measure \cite{Soize2020c} yielding values of the order $0.01$ to $0.1$ through all the performed applications. For defining the concentration criterion in Eq.~\eqref{eq7.29}, we have normalized with respect to dimension $\nu$.\\

\noindent \textit{(ii) Characterizing a better ROTB$(\nDeltat)$  compared to RODB}.
From Proposition~\ref{proposition:7}, in particular from Eq.~\eqref{eq7.7}, it can be deduced that, for $n=1$, we have $\hat I(\bfH_\TB\, ; 1 \times \Delta t) \simeq \hat I(\bfH_\DB)$ for $\Delta t$ sufficiently small (which is the considered case).
Let us assume that $\hat I(\bfH) < \hat I(\bfH_\DM)$. For $n$ fixed in $\curC_N$, we will say that the ROTB$(\nDeltat)$, represented by matrix $[g(\nDeltat)]$, is better than the RODB, represented by matrix $[g_\DM]$, if
\begin{equation}\label{eq7.30}
\hat I(\bfH) \leq \hat I(\bfH_\TB\, ; \nDeltat) <  \hat I(\bfH_\DM) \, .
\end{equation}

\noindent \textit{(iii) Determining the optimal value $n_\optp$ of $n$ in $\curC_N$}.
The underlying idea is to select a reduced-order transient basis that gives a learned probability measure whose mutual information is as close as possible to the mutual information of the probability measure $p_\bfH(\bfeta), d\bfeta$.
Based on paragraph (ii) above, the optimal transient basis $[g(n_\optp)]$ among the set of possible
transient bases $\{ [g(\nDeltat)] , n\in\curN \}$ is obtained for $n=n_\optp$ such that
\begin{equation}\label{eq7.31}
n_\optp = \arg \min_{n\,\in\curC_N}  \hat I(\bfH_\TB\, ; \nDeltat) \, .
\end{equation}

\noindent \textit{(iv) Defining the normalized estimate of the mutual information}.
As explained in \ref{SectionB.3}, if $n_{\rm{samp}}$ is the number of realizations (either $n_d$ or $n_\ar$), the entropy estimation asymptotically decreases as $-log(n_{\rm{samp}})$ when $n_{\rm{samp}}$ increases.
For comparing the estimate of the mutual information for $\bfH$, which uses $n_d$ realizations, with the one of $\bfH_\DB$ or $\bfH_\TB(\nDeltat)$, which uses $n_\ar \gg n_d$ learned realizations, we must normalize the estimated mutual information with respect to the number of realizations. Taking into account the relationship between the mutual information and the entropy estimates (see \ref{SectionB.3}), we chose to normalize the estimated mutual information by dividing by the function $\chi + \log(n_{\rm{samp}})$ (with $n_{\rm{samp}}=n_d$ or $n_{\rm{samp}}=n_\ar$), where $\chi$ is a real number that must be identified and which must be such that $\chi + log(n_d) > 0$. Since $n_\ar > n_d$ this condition implies that  $\chi + log(n_\ar) > 0$ is automatically verified.
For the $\RR^\nu$-valued random variables $\bfH$, $\bfH_\DB$, and $\bfH_\TB(\nDeltat)$, the normalized estimated mutual information are then defined by
\begin{equation} \label{eq7.32}
\hat I_\normp(\bfH) = \frac{\hat I(\bfH)}{\chi + log(n_d)} \quad , \quad
\hat I_\normp(\bfH_\DB) = \frac{\hat I(\bfH_\DB)}{\chi + log(n_\ar)} \quad , \quad
\hat I_\normp(\bfH_\TB\, ; \nDeltat) = \frac{\hat I(\bfH_\TB\, ; \nDeltat)}{\chi + log(n_\ar)} \, .
\end{equation}
in which  $\hat I(\bfH)$, $\hat I(\bfH_\DB)$, and $\hat I(\bfH_\TB\, ; \nDeltat)$ are the estimates (computed with Eq.~\eqref{eqB6}) of the mutual information $I(\bfH)$, $I(\bfH_\DB)$, and $I(\bfH_\TB\, ; \nDeltat)$ defined by Eqs.~\eqref{eq7.24} to \eqref{eq7.26}.
Based on paragraph (iii) above, constant $\chi$ is defined as follows. Let $\chi_\optp$ be the solution  in $\chi$ of the equation,
\begin{equation} \label{eq7.33}
\frac{\hat I(\bfH)}{\chi + log(n_d)} = \frac{\hat I(\bfH_\TB\, ; n_\optp \,\Delta t)}{\chi + log(n_\ar)}  \, .
\end{equation}
If $\chi_\optp + \log(n_d) > 0$, then $\chi_\optp$ is the desired value of $\chi$.
It should be noted that the optimal value $n_\optp$ is calculated with the estimation of the non-normalized mutual information (see Eq.~\eqref{eq7.31}. Normalization is only introduced for the purpose of comparing the value of this criterion for $\bfH$ and $\bfH_\TB$.\\

\noindent \textit{(v) Identification of the instant maximizing the selection criterion of the ROTB$(\nDeltat)$}.
Let $n_\optp$ be defined by Eq.~\eqref{eq7.31}, and let $\chi_\optp$ be identified by solving Eq.~\eqref{eq7.33}. Then, we have
\begin{equation} \label{eq7.34}
\hat I_\normp(\bfH) = \hat I_\normp(\bfH_\TB\, ; n_\optp \Delta t) \leq \hat I_\normp(\bfH_\TB\, ; \nDeltat)
\quad , \quad \forall n\in\curC_N\, .
\end{equation}
We can then conclude that the ROTB$(n_\optp \Delta t)$, represented by matrix $[g(n_\optp \Delta t)]$, is better than the RODB, represented by matrix $[g_\DM]$. The proof is straightforward.

%-----------------------------------------------------------------------------------------------------------------
\section{Numerical applications}
\label{Section8}
%-----------------------------------------------------------------------------------------------------------------
%
\subsection{Preamble}
\label{Section8.1}

We will present three applications, each with its own specificities. However, as the generation of the training dataset related to the random vector $\bfX$ is relatively complex to describe, we will start with the training dataset related to the normalized random vector $\bfH$ with values in $\RR^\nu$. The number of realizations in the training dataset is $n_d$, and the training dataset is represented by the matrix $[\eta_d] \in \MM_{\nu,n_d}$. This means that the PCA step of PLoM, which transforms $\bfX$ into $\bfH$ (see \ref{SectionA.1}), is not detailed here. Readers interested in the training dataset for $\bfH$ can request, from the "corresponding author" of the article, the transfer of the matrix $[\eta_d]$ for each application presented.
We will still briefly give the main specificities of these 3 applications.

\textit{Application 1}. This application (\textit{Appli 1}) was created so that the probability measure of $\bfH$ in $\RR^9$ ($\nu=9$), which is defined by the points of the training dataset, is concentrated in a multiconnected domain of $\RR^9$, with the constituent connected parts being manifolds of dimensions much lower than 9, each having different dimensions. These parts may or may not be connected to each other.

\textit{Application 2}. For the second application (\textit{Appli 2}), the $n_d$ realizations of the random vector $\bfH$ with values in $\RR^8$ are generated using a polynomial chaos expansion of degree $6$ of a real random variable, whose random germ is of dimension $2$, with each of the two random germs being a uniform random variable of different support. There are therefore $28$ terms in this expansion, and the $8$ components of $\bfH$ are defined as the random terms of rank $2$, $3$, $6$, $8$, $12$, $13$, $17$, and $19$. We thus define a relatively complex random manifold in $\RR^8$.

\textit{Application 3}. The third application (\textit{Appli 3}) results from a statistical treatment of an experimental database containing photon measurements in the ATLAS detector at CERN. This dataset was obtained by loading the file 'pid22_E262144_eta_20_25_voxalisation.csv' from the free access CERN Open Data Portal. The PCA step of PLoM has been performed, and an extraction of $45$ components has been done to obtain the training dataset for the $\RR^{45}$-valued random variable $\bfH$. This application is in higher dimension than the first two, but the statistical complexity is less.

\subsection{Additional convergence analysis conducted for the three applications}
\label{Section8.2}
For each of the three applications, we will provide detailed analyses of the calculation of the optimal value of the instant $n_\optp \Delta t$, which allows for the selection of the best reduced-order transient basis (ROTB). We will also present the convergence results of the PLoM algorithms under the normalization constraints (see \ref{SectionA.6}). However, we cannot show all the convergence results for the other parameters. Below are the different analyses that were carried out with respect to the parameters that control the construction of the reduced-order transient basis:\\

\noindent (i) Convergence with respect to the value of $\kappa$. This point is important because $\Delta t$ must be sufficiently small to apply Proposition~\ref{proposition:7}. For the three applications, we found that $\kappa = 30$ is an appropriate value. \\

\noindent (ii) Convergence with respect to the size of $\delta t$, that is to say, the value of $n_s$. For the three applications, we found that once $\Delta t$ is fixed by the value of $\kappa$, the criterion associated with Eq.~\eqref{eq5.30} was satisfied for $n_s=1$, as well as the evolution of the angle $\gamma(n_\opt \Delta t)$ at the optimal instant was very insensitive to the values of $n_s$ greater than
$1$. This is due to the fact that $\Delta t$ is already small enough to achieve good accuracy with the Euler scheme used to integrate the ISDE. We found that $n_s=1$, thus $\delta t = \Delta t$ is a good value.\\

\noindent (iii) Convergence with respect to $n_\MC$. This analysis was performed by examining the convergence of the angle $\gamma(\nDeltat)$ for $n \in \curN$. We observed that a large value of $n_\MC$ was necessary to achieve convergence. This point is particularly important and must be carefully checked for the applications.

\subsection{Parameters defining the training dataset and controlling  the construction of the RODB}
\label{Section8.3}
For each application, Table~\ref{table:table1} provides the values of the parameters that define the training dataset and the probability measure $P_\bfH(d\bfeta) = p_\bfH(\bfeta)\, d\bfeta$ on $\RR^\nu$, as well as the parameters defined in Section~\ref{Section7.2}-(i), which control the construction of the reduced-order DMAPS basis (RODB).

%
%%%%%%%%%%%%%%%%%%%%%%%%%%%%%%%%%%%%%%%%%%%%%%%%%%%%%%%%%%%%%%%%%%%%%%%%%%%%%%%%%%%%%%%%%%%%%%%%%%%%%%%%%%%%%%%%%%%%%%%%%%%%%%%%%%%%%%%%
\begin{table}[h]
  \caption{For each application, values of the parameters defining the training dataset and parameters controlling  the construction of the RODB}\label{table:table1}
\begin{center}
 \begin{tabular}{|c||c|c|c|c|c||c|c|c|} \hline
                         & \multicolumn{5}{c||}{Training}   & \multicolumn{3}{c|}{RODB}  \\
    \hline
                         &  $\nu$   &  $n_d$   &  $s$    &   $\hat s$   &  $\hat s / s$  & $J_\DM$   &  $m_\optp$   &  $\varepsilon_\DM$ \\
    \hline
    Appli $1$            &    9      &   400   & 0.5835  &   0.5044     &   0.8645       &  0.2      &   10         &    56               \\
    Appli $2$            &    8      &   400   & 0.5623  &   0.4946     &   0.8725       &  0.1      &   10         &    53               \\
    Appli $3$            &   46      &   560   & 0.8357  &   0.6416     &   0.7677       &  0.5      &   46         &    74               \\
    \hline
  \end{tabular}
\end{center}
\end{table}
\subsection{Parameters controlling the construction of the ROTB$(\nDeltat)$}
\label{Section8.4}
For each application, Table~\ref{table:table2} provides the values of the parameters that control the construction of the reduced-order transient basis ROTB$(\nDeltat)$. In particular, the optimal time $n_\optp \Delta t$ is estimated using the indicators defined in Section~\ref{Section7.3.2}. The evolution of these indicators as a function of $n$ will be presented in detail in Sections~\ref{Section8.6} to \ref{Section8.8} for the three applications.
\begin{table}[h]
  \caption{For each application, values of the parameters controlling the construction of the ROTB$(\nDeltat)$}\label{table:table2}
\begin{center}
 \begin{tabular}{|c||c|c|c|c|c|c|c|c|} \hline
                         & \multicolumn{8}{c|}{ROTB$(\nDeltat)$}     \\
    \hline
                         &  $N$      &  $\tau_c$ & $\kappa$ & $\Delta t$ & $n_s$ &  $\delta t$  & $n_\MC$   &  $n_\optp$  \\
    \hline
    Appli $1$            &    9      &   0.002   & 30       & 0.00848    &   1   &   0.00848    & 400000    &   9         \\
    Appli $2$            &   10      &   0.002   & 30       & 0.00802    &   1   &   0.00802    & 400000    &   5         \\
    Appli $3$            &    9      &   0.002   & 30       & 0.01370    &   1   &   0.01370    & 448000    &   9         \\
    \hline
  \end{tabular}
\end{center}
\end{table}
\subsection{Parameters of PLoM with RODB and ROTB}
\label{Section8.5}
For each of the three applications, Table~\ref{table:table3} provides the values of the parameters defined in \ref{SectionA.5} and \ref{SectionA.6} used by the PLoM algorithm with the reduced-order DMAPS basis (RODM) or the reduced-order transient basis ROTB$(\nDeltat)$. Taking into account the convergence analysis carried out on $n_\MC$ and the expertise on the statistical convergence of the quantities considered, we choose $n_\MCH = n_\MC$, a sufficiently large value (see \ref{SectionA.5}).
For the PLoM algorithm, the constraints related to normalization, defined by Eq.~\eqref{eqA12}, are always applied in the computation (see \ref{SectionA.6})
\begin{table}[h]
  \caption{Values of the parameters of PLoM with RODB and ROTB$(\nDeltat)$ for each application}\label{table:table3}
\begin{center}
 \begin{tabular}{|c||c|c|c|c|c||c|c|c|c|c|} \hline
                         & \multicolumn{10}{c|}{PLoM with RODB and with ROTB$(n_\optp \Delta t)$}     \\
    \hline
              & $f_0$ & $\Delta t_\SV$ & $M_0$ & $n_\MCH$ & $n_\ar$ & $\beta_1$ & $\beta_2$ & $i_2$ & $i_{\rm{last}}$ & $\err(i_{\rm{last}})$\\
    \hline
    \hline
    Appli $1$ with RODB & 4 & 0.1585 & 30 & 1000  & 400000 & 0.001 & 0.05 & 20 & 2563  & 0.000998 \\
    Appli $1$ with ROTB & 4 & 0.1585 & 30 & 1000  & 400000 & 0.001 & 0.05 & 20 & 3438  & 0.000994 \\
    \hline
    \hline
    Appli $2$ with RODB & 4 & 0.1541 & 30 & 1000  & 400000 & 0.001 & 0.05 & 20 & 1882  & 0.000999 \\
    Appli $2$ with ROTB & 4 & 0.1541 & 30 & 1000  & 400000 & 0.001 & 0.05 & 20 & 2589  & 0.000998 \\
    \hline
    \hline
    Appli $3$ with RODB & 4 & 0.2016 & 30 &  800  & 448000 & 0.001 & 0.05 & 20 & 6000  & 0.00261 \\
    Appli $3$ with ROTB & 4 & 0.2016 & 30 &  800  & 448000 & 0.001 & 0.05 & 20 & 6000  & 0.00383 \\
    \hline
  \end{tabular}
\end{center}
\end{table}
%

%=================================================================================================================
% Application 1 (file appli 2)
%=================================================================================================================
%
%
\subsection{Results for Application 1}
\label{Section8.6}

\noindent (i) Figure~\ref{fig:figure3} displays the graphs of the probability density function (pdf)  of components $1$, $2$, $4$, and $5$ for $\bfH$ estimated with the $n_d$ realizations of the training dataset, and the pdf estimated with $n_\ar$ learned realizations, for $\bfH_\ar$ using MCMC without PLoM (a,d,g,j), for $\bfH_\DB$ using  PLoM with RODB (b, e, h, k), and for $\bfH_\TB$ using PLoM with ROTB$(n_\optp \Delta t)$ (c, f, i, l).}

\noindent (ii)  Figure~\ref{fig:figure4} shows the joint probability density function of components $4$ and $5$ of $\bfH$ estimated with the $n_d$ realizations of the training dataset (a) and estimated with $n_\ar$ learned realizations for $\bfH_\ar$ using MCMC without PLoM (b), for $\bfH_\DB$ using PLoM with RODB (c), and for $\bfH_\TB$ using PLoM with ROTB$(n_\optp \Delta t)$ (d).

\noindent (iii)  In Fig.~\ref{fig:figure5}, the clouds of $n_\ar$ points corresponding to $n_\ar$ learned realizations can be seen for components $1$, $2$, $3$ (a,b,c) and components $3$, $4$, $5$ (d,e,f). These are shown for $\bfH_\ar$ using MCMC without PLoM (a,d), for $\bfH_\DB$ using PLoM with RODB (b,e), and for $\bfH_\TB$ using PLoM with ROTB$(n_\optp \Delta t)$ (c,f).

\noindent (iv) Figure~\ref{fig:figure6} plots the functions that characterize the reduced-order transient basis ROTB$(\nDeltat)$ as a function of time $\nDeltat$:
\begin{itemize}
\item The eigenvalues of matrix $[K_\DM]$ and those of the of symmetrized matrix $[\tilde K(\nDeltat)]$ are shown in
Fig.~\ref{fig:figure6a}.
\item The probability-measure concentration using the $d^{\,2}/\nu$-criterion is shown in Fig.~\ref{fig:figure6b}.
For the learning without PLoM, the $d^{\,2}$-concentration is $0.6465$, which shows that the concentration is lost, and for the PLoM with the RODM, the concentration is $0.0116$, which shows that the concentration is preserved.
\item The other criterion of the probability-measure concentration is given by Kullback measure, shown in Fig.~\ref{fig:figure6c}.
For the learning without PLoM, Kullbach is $0.3486$, and for the PLoM with the RODM, Kullback is $3.9435$. Comparing Figs.~\ref{fig:figure6b} and \ref{fig:figure6c} shows that the two criteria are consistent and give the same analysis of the concentration.
\item The angle between the subspaces spanned by RODB and ROTB$(\nDeltat)$ is displayed in Fig.~\ref{fig:figure6d}. It can be seen that, for the optimal time $9\, \Delta t$, the angle is $53.03^\circ$, which is a significant angle showing that the two bases are different while the $d^{\,2}$-concentration remains small at $0.0174$.
\item The entropy estimation of pdf $p_\TB(\cdot\, ; \nDeltat)$ is given in Fig.~\ref{fig:figure6e}.
\item The normalized mutual information (MI) of the pdfs $p_\bfH$ and $p_\TB(\cdot\, ; \nDeltat)$ is shown in Fig.~\ref{fig:figure6f}.
This figure shows that the optimal value of $n$ is $n_\optp = 9$.
For the non-normalized estimation of the mutual information, we have $\hat I(\bfH) = 4.4668$, $\hat I(\bfH_\TB\, ; n_\optp \Delta t) = 6.9996$, and $\hat I(\bfH_\DM) = 7.0945$. For the normalized one, we have
$\hat I_\normp(\bfH) = \hat I_\normp(\bfH_\TB\, ; n_\optp \Delta t) = 0.3666$ and $\hat I_\normp(\bfH_\DM) = 0.3716$.
\end{itemize}

\noindent (v) Finally, examination of these figures shows that traditional learning without PLoM gives poor results compared to PLoM, which allows the concentration to be preserved and properly learns the geometry of the probability measure support. We also see that PLoM with the optimal ROTB provides an improvement in learning compared to PLoM with RODM and, therefore, should improve the estimates of conditional statistics thanks to better learning of the joint probability measure.

%-------------------------------------- pdf -------------------------------------------
\begin{figure}[!t]
    %--- pdf component 1
    \centering
    \begin{subfigure}[b]{0.25\textwidth}
    \centering
        \includegraphics[width=\textwidth]{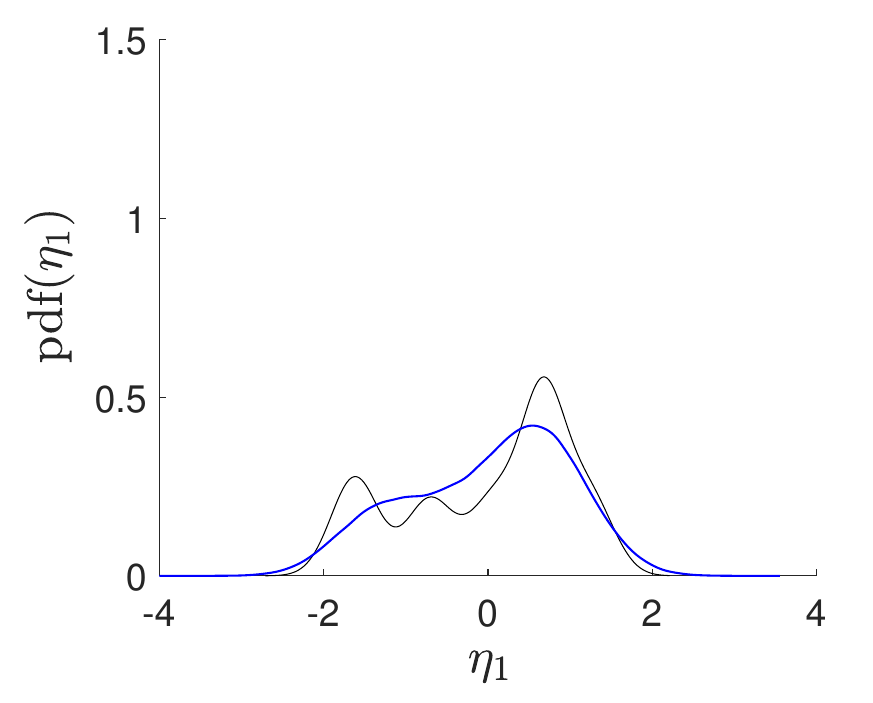}
        \caption{pdf of $H_1$ and $H_{\ar,1}$.}
        \label{fig:figure3a}
    \end{subfigure}
    \hfil
    \begin{subfigure}[b]{0.25\textwidth}
        \centering
        \includegraphics[width=\textwidth]{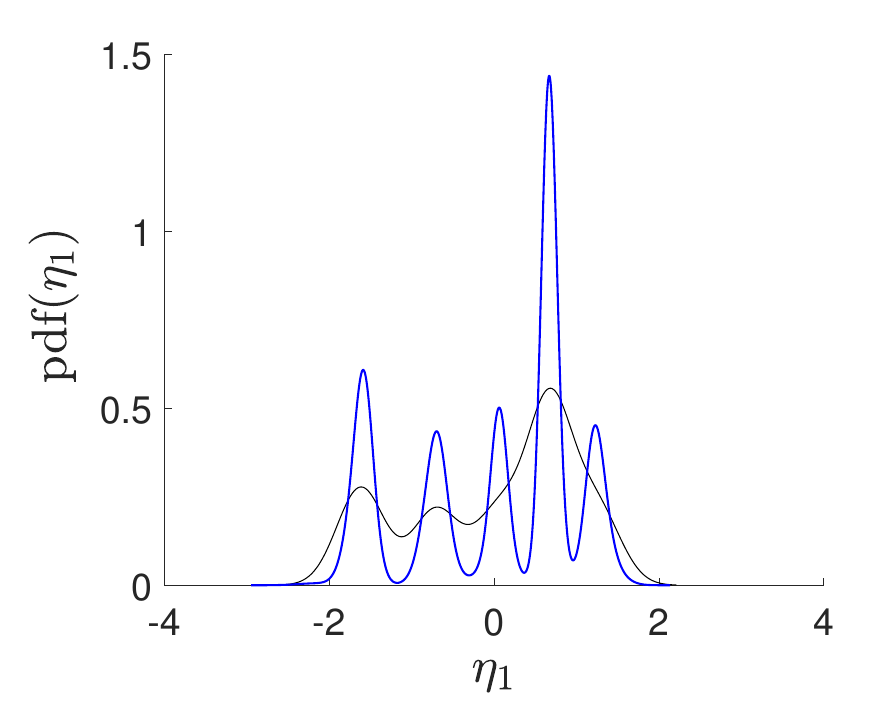}
        \caption{pdf of $H_1$ and $H_{\DB,1}$.}
        \label{fig:figure3b}
    \end{subfigure}
    \hfil
    \begin{subfigure}[b]{0.25\textwidth}
        \centering
        \includegraphics[width=\textwidth]{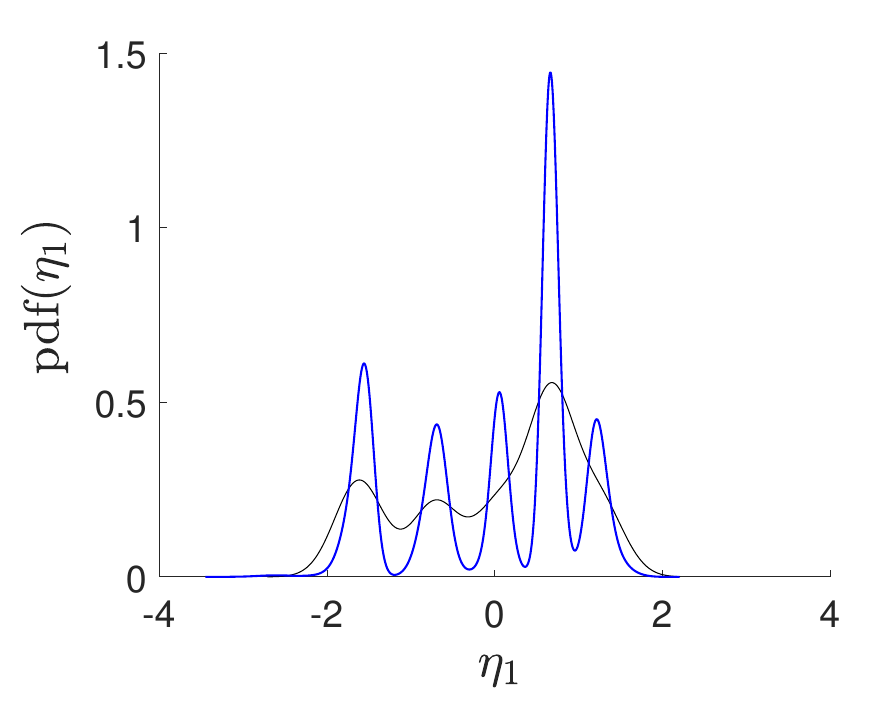}
        \caption{pdf of $H_1$ and $H_{\TB,1}$ at time $n_\optp\,\Delta t$.}
        \label{fig:figure3c}
    \end{subfigure}
    %
    %--- pdf component 2
    \centering
    \begin{subfigure}[b]{0.25\textwidth}
    \centering
        \includegraphics[width=\textwidth]{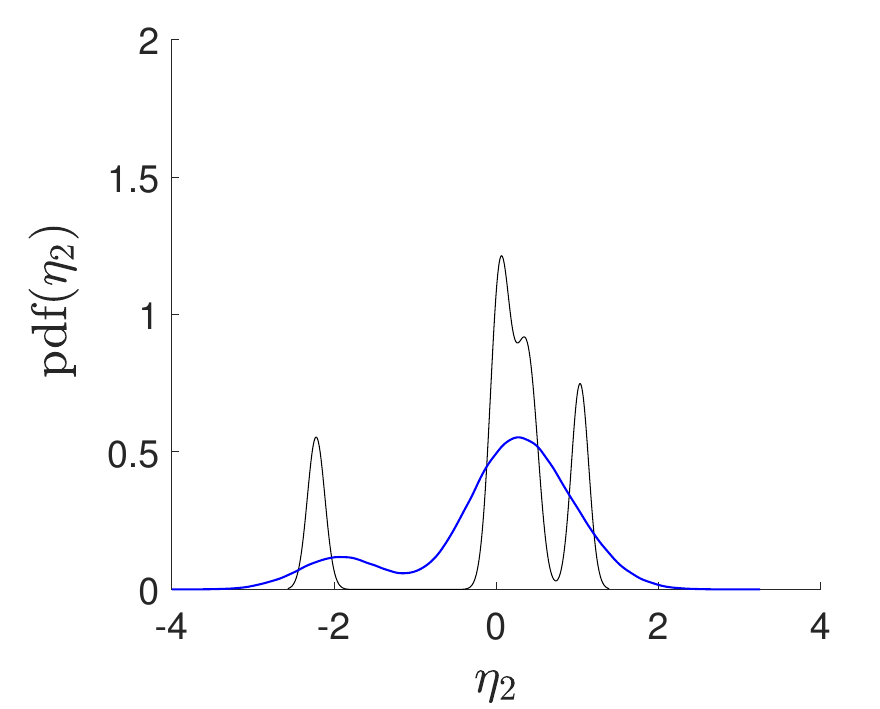}
         \caption{pdf of $H_2$ and $H_{\ar,2}$.}
        \label{fig:figure3d}
    \end{subfigure}
    \hfil
    \begin{subfigure}[b]{0.25\textwidth}
        \centering
        \includegraphics[width=\textwidth]{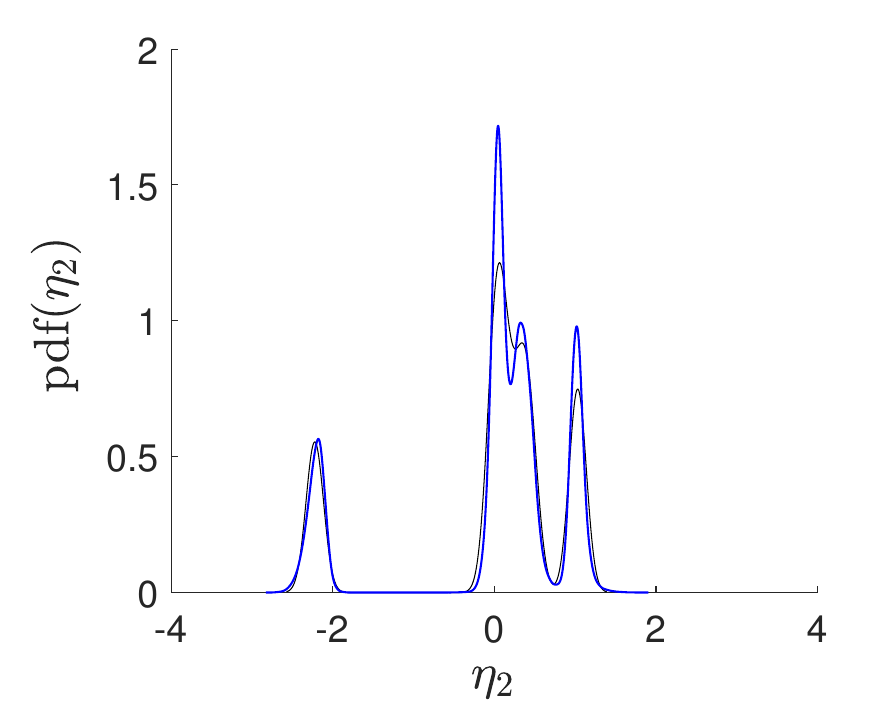}
        \caption{pdf of $H_2$ and $H_{\DB,2}$.}
        \label{fig:figure3e}
    \end{subfigure}
    \hfil
    \begin{subfigure}[b]{0.25\textwidth}
        \centering
        \includegraphics[width=\textwidth]{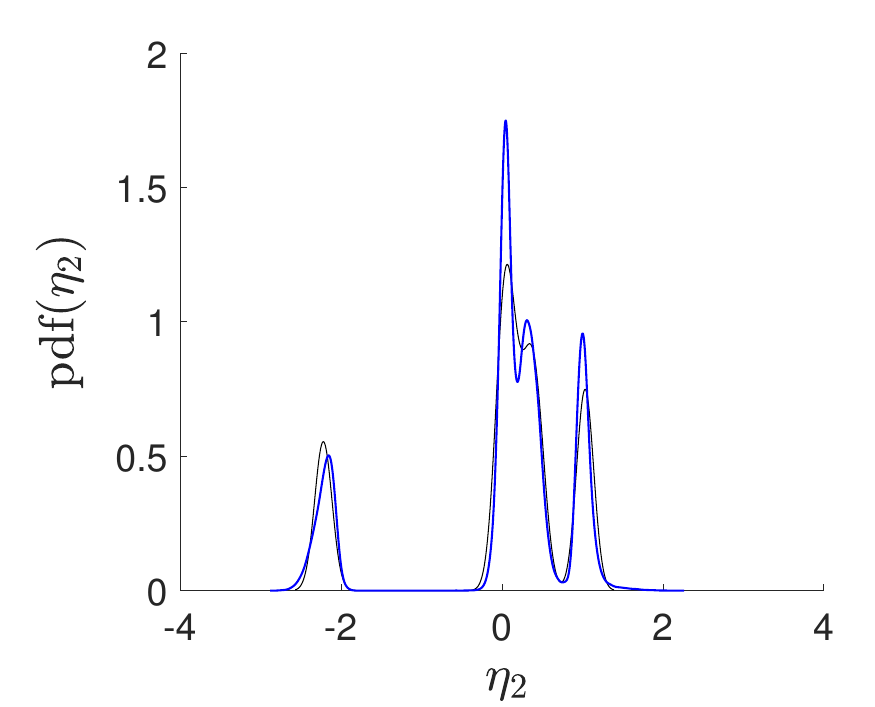}
        \caption{pdf of $H_2$ and $H_{\TB,2}$ at time $n_\optp\,\Delta t$.}
        \label{fig:figure3f}
    \end{subfigure}
    %
    %--- pdf component 4
    \centering
    \begin{subfigure}[b]{0.25\textwidth}
    \centering
        \includegraphics[width=\textwidth]{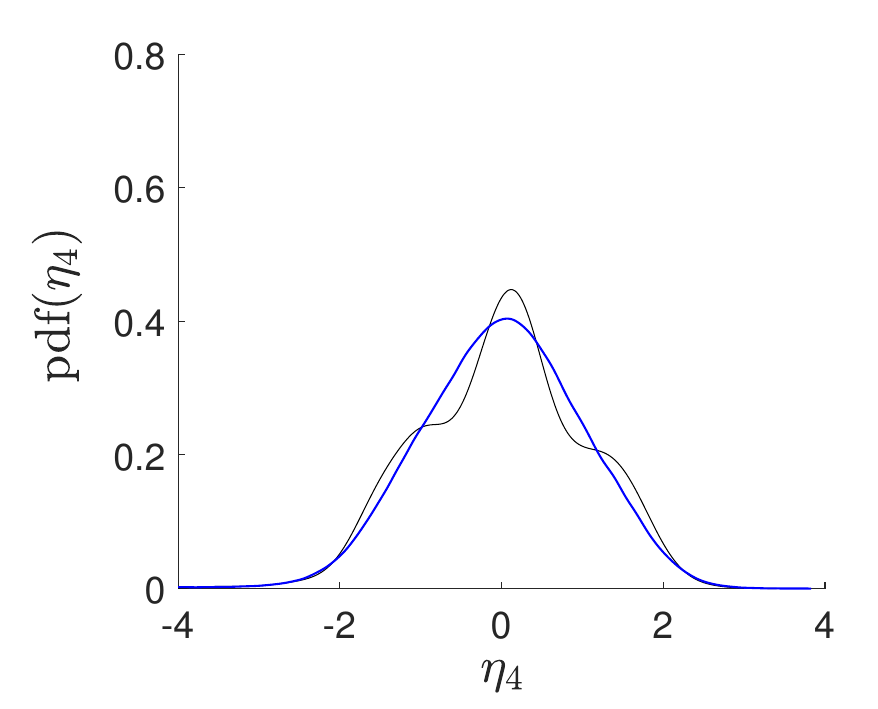}
         \caption{pdf of $H_4$ and $H_{\ar,4}$.}
        \label{fig:figure3g}
    \end{subfigure}
    \hfil
    \begin{subfigure}[b]{0.25\textwidth}
        \centering
        \includegraphics[width=\textwidth]{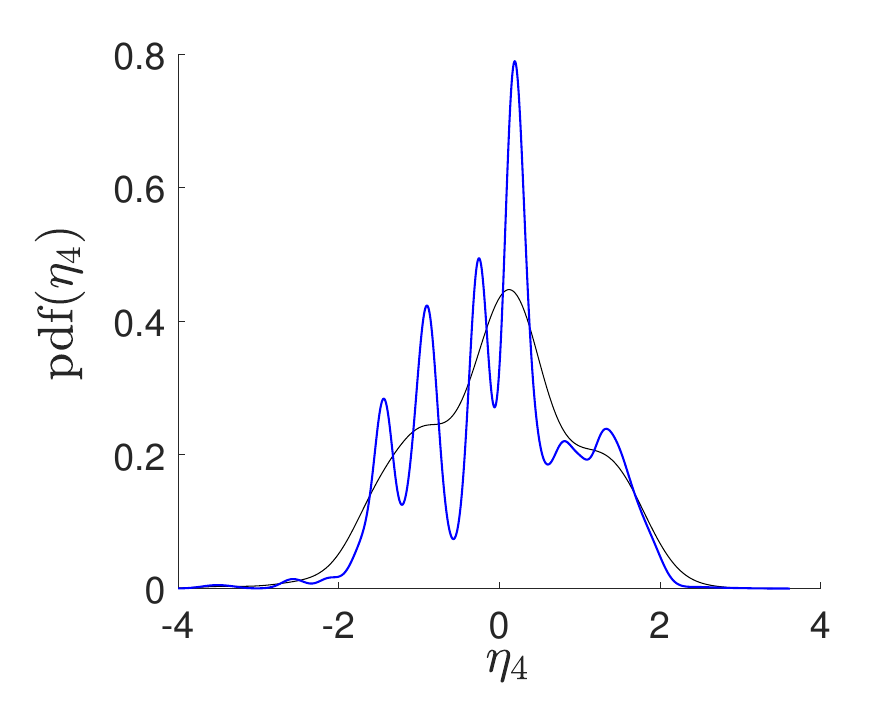}
        \caption{pdf of $H_4$ and $H_{\DB,4}$.}
        \label{fig:figure3h}
    \end{subfigure}
    \hfil
    \begin{subfigure}[b]{0.25\textwidth}
        \centering
        \includegraphics[width=\textwidth]{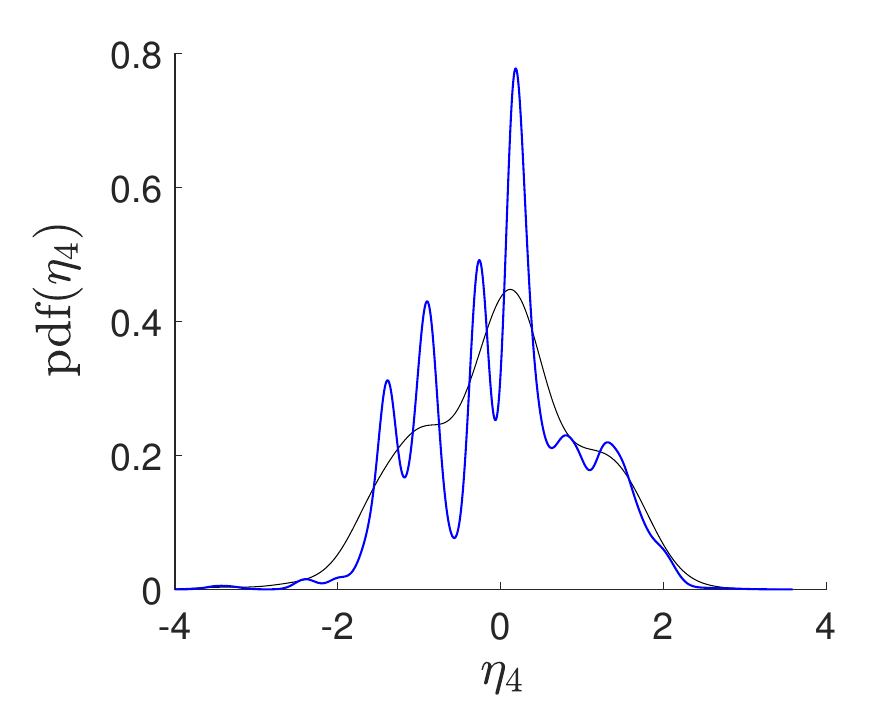}
        \caption{pdf of $H_4$ and $H_{\TB,4}$ at time $n_\optp\,\Delta t$.}
        \label{fig:figure3i}
    \end{subfigure}
    %
    %--- pdf component 5
    \centering
    \begin{subfigure}[b]{0.25\textwidth}
    \centering
        \includegraphics[width=\textwidth]{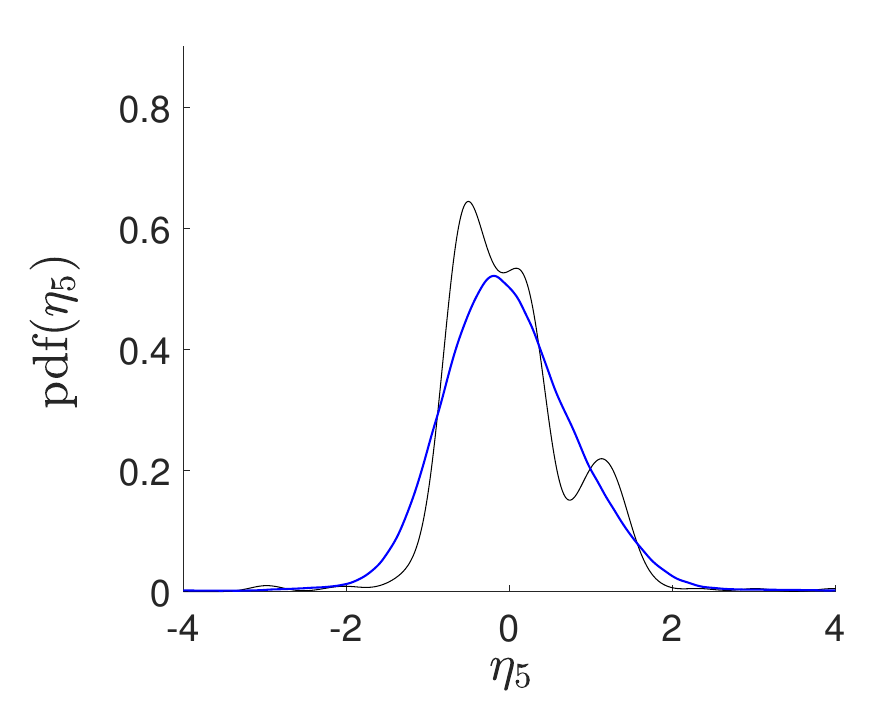}
        \caption{pdf of $H_5$ and $H_{\ar,5}$.}
        \label{fig:figure3j}
    \end{subfigure}
    \hfil
    \begin{subfigure}[b]{0.25\textwidth}
        \centering
        \includegraphics[width=\textwidth]{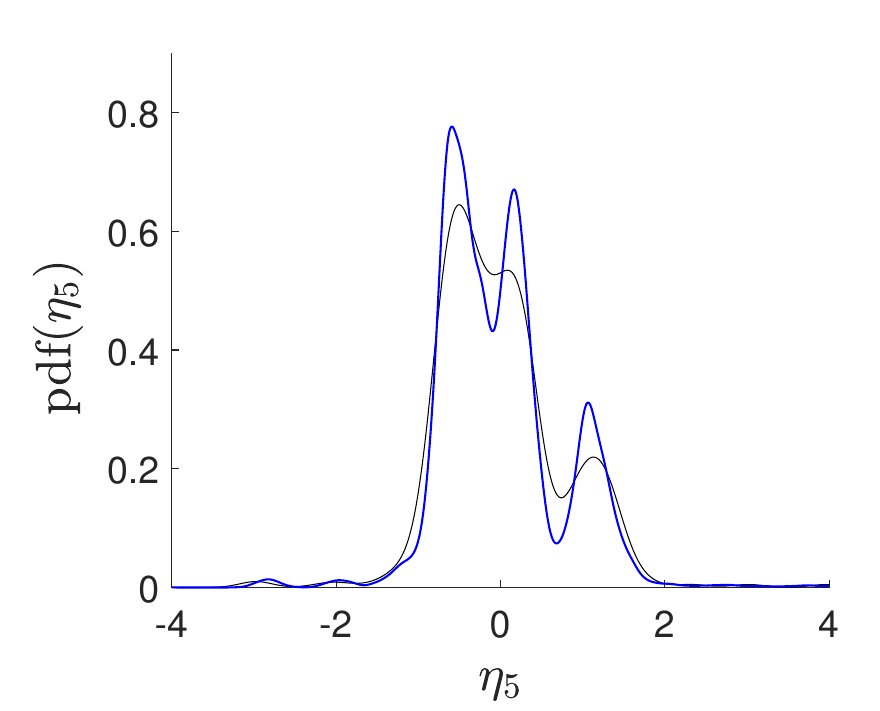}
        \caption{pdf of $H_5$ and $H_{\DB,5}$.}
        \label{fig:figure3k}
    \end{subfigure}
    \hfil
    \begin{subfigure}[b]{0.25\textwidth}
        \centering
        \includegraphics[width=\textwidth]{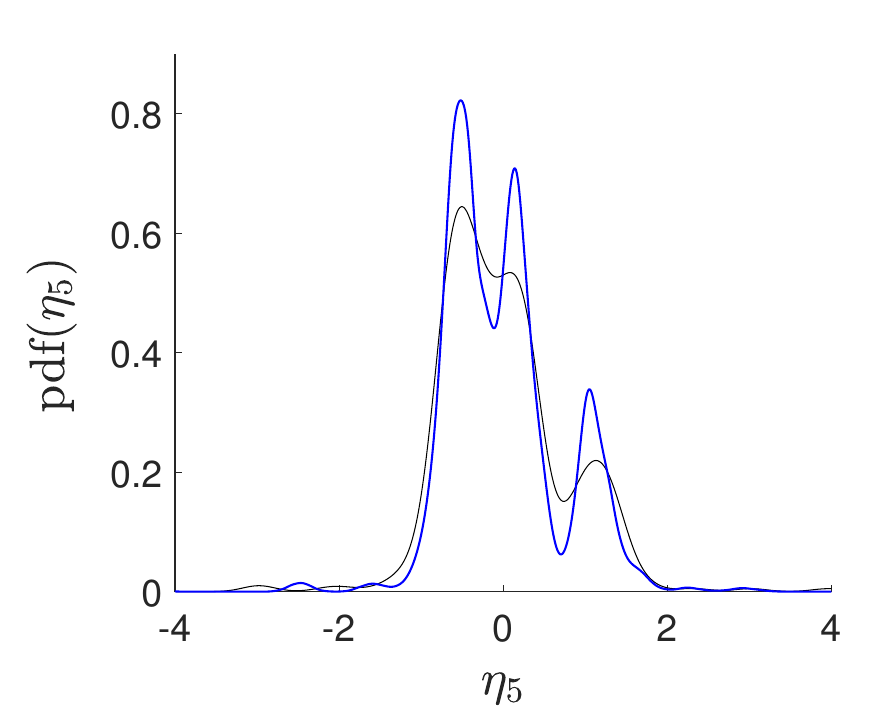}
        \caption{pdf of $H_5$ and $H_{\TB,5}$ at time $n_\optp\,\Delta t$.}
        \label{fig:figure3l}
    \end{subfigure}
    \caption{Application 1. Probability density function (pdf)  of components $1$, $2$, $4$, and $5$ for $\bfH$ estimated with the $n_d$ realizations of the training dataset (thin black line) and pdf estimated with $n_\ar$ learned realizations (thick blue line), for $\bfH_\ar$ using MCMC without PLoM (a,d,g,j), for $\bfH_\DB$ using  PLoM with RODB (b,e,h,k), and for $\bfH_\TB$ using PLoM with ROTB$(n_\optp \Delta t)$ (c, f, i, l).}
    \label{fig:figure3}
\end{figure}
%
%-------------------------------------- joint pdf -------------------------------------------
\begin{figure}[h]
    %--- joint pdf components 4 - 5
    \centering
    \begin{subfigure}[b]{0.24\textwidth}
    \centering
        \includegraphics[width=\textwidth]{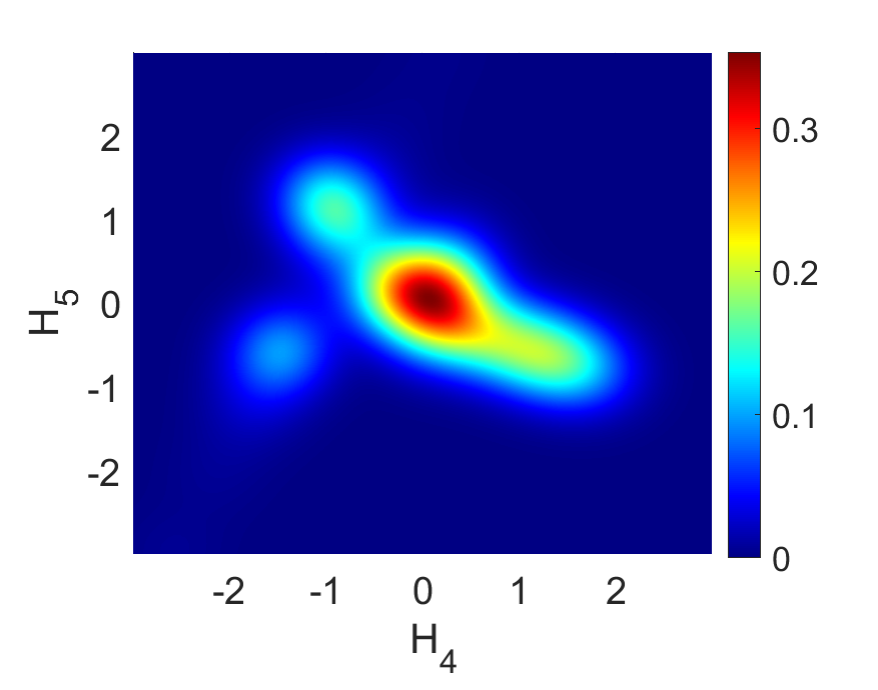}
        \caption{joint pdf of $H_{4}$ with $H_{5}$.}
         \vspace{0.3truecm}
        \label{fig:figure4a}
    \end{subfigure}
    \hfil
    \begin{subfigure}[b]{0.24\textwidth}
        \centering
        \includegraphics[width=\textwidth]{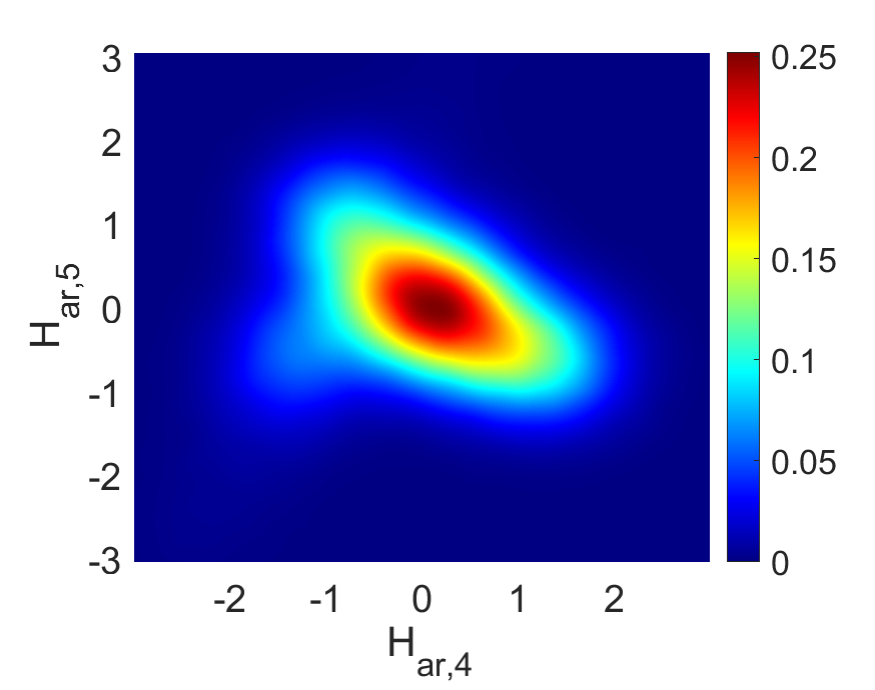}
        \caption{joint pdf of $H_{\ar,4}$ with $H_{\ar,5}$.}
         \vspace{0.3truecm}
        \label{fig:figure4b}
    \end{subfigure}
    \hfil
    \begin{subfigure}[b]{0.24\textwidth}
        \centering
        \includegraphics[width=\textwidth]{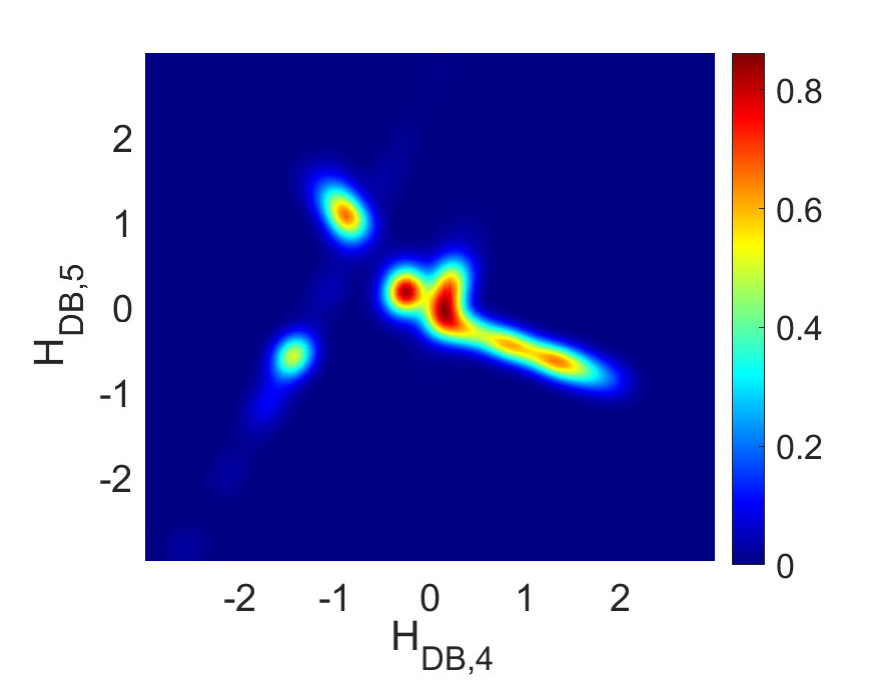}
        \caption{joint pdf of $H_{\DB,4}$ with $H_{\DB,5}$.}
         \vspace{0.3truecm}
        \label{fig:figure4c}
    \end{subfigure}
    \hfil
    \begin{subfigure}[b]{0.24\textwidth}
    \centering
        \includegraphics[width=\textwidth]{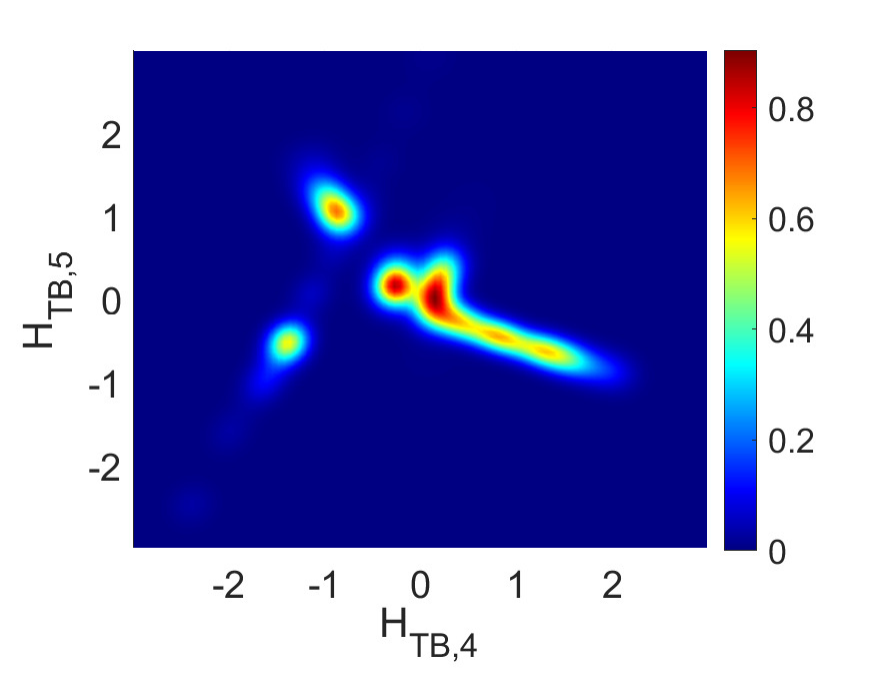}
        \caption{joint pdf of $H_{\TB,4}$ with $H_{\TB,5}$ at time $n_\optp\, \Delta t$.}
        \label{fig:figure4d}
    \end{subfigure}
    \caption{Application 1. Joint probability density function of components $4$ with $5$ of $\bfH$ estimated with the $n_d$ realizations of the training dataset (a) and estimated with $n_\ar$ learned realizations, for $\bfH_\ar$ using MCMC without PLoM (b), for $\bfH_\DB$ using PLoM with RODB (c), and for $\bfH_\TB$ using PLoM with ROTB$(n_\optp \Delta t)$ (d).}
    \label{fig:figure4}
\end{figure}
%
%
%-------------------------------------- clouds of points -------------------------------------------
%
\begin{figure}[h]
    %--- clouds of  points for components 1, 2, and 3
    \centering
    \begin{subfigure}[b]{0.25\textwidth}
    \centering
        \includegraphics[width=\textwidth]{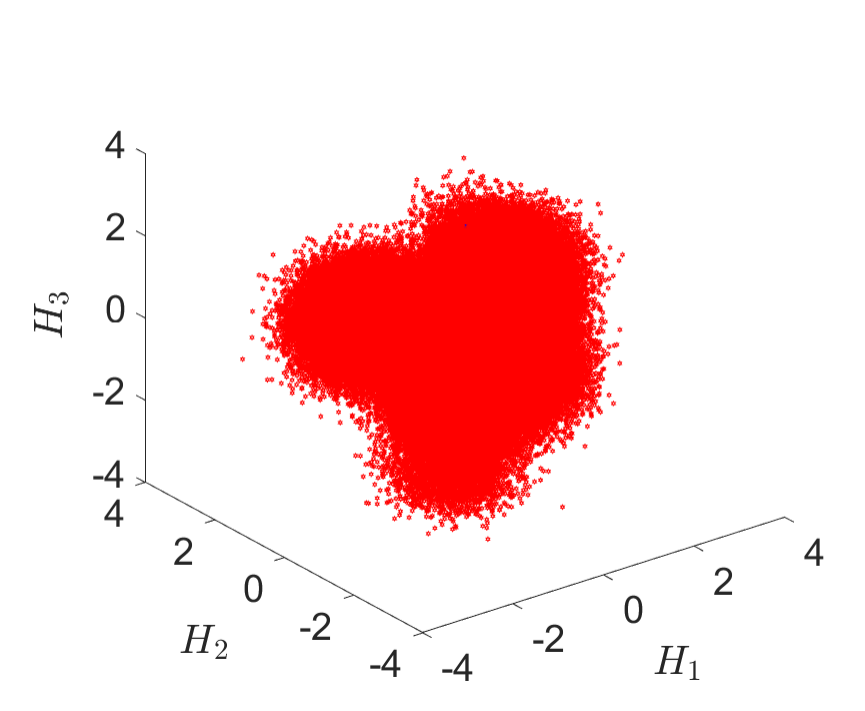}
        \caption{clouds for $(H_{\ar,1},H_{\ar,2},H_{\ar,3})$.}
         \vspace{0.3truecm}
        \label{fig:figure5a}
    \end{subfigure}
    \hfil
    \begin{subfigure}[b]{0.25\textwidth}
        \centering
        \includegraphics[width=\textwidth]{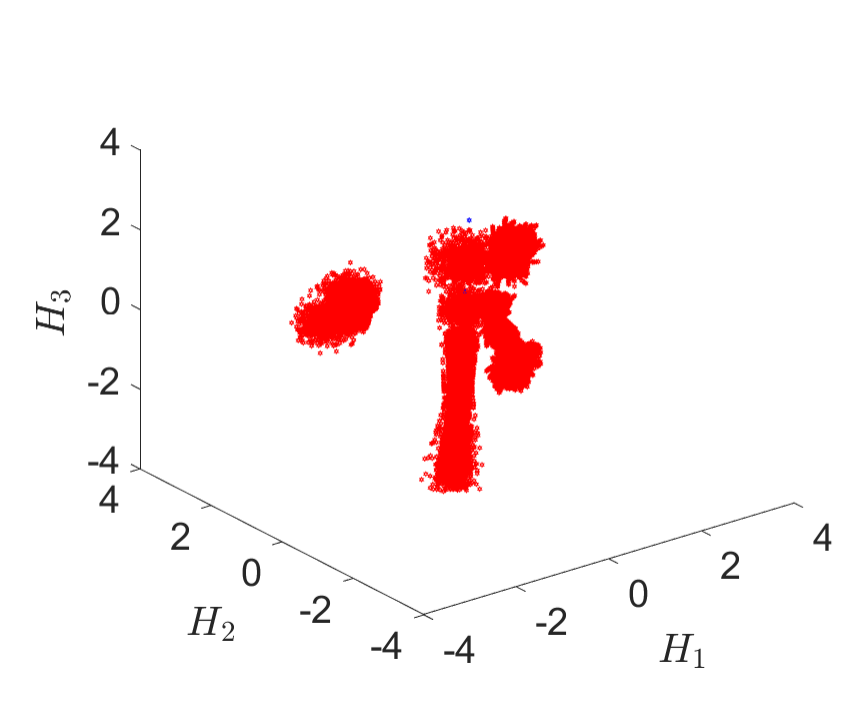}
         \caption{clouds for $(H_{\DB,1},H_{\DB,2},H_{\DB,3})$.}
          \vspace{0.3truecm}
        \label{fig:figure5b}
    \end{subfigure}
    \hfil
    \begin{subfigure}[b]{0.25\textwidth}
        \centering
        \includegraphics[width=\textwidth]{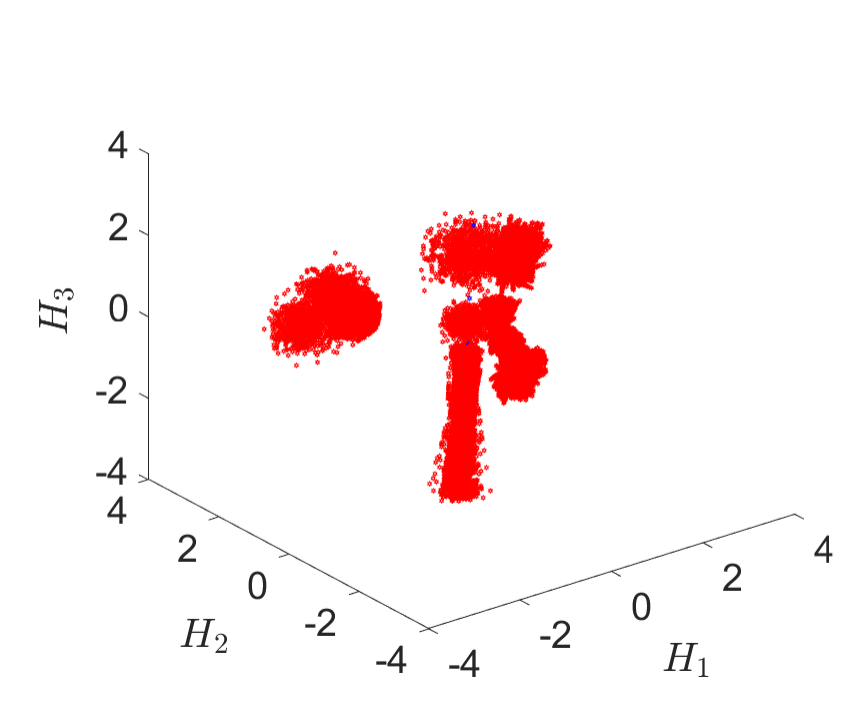}
         \caption{clouds for $(H_{\TB,1},H_{\TB,2},H_{\TB,3})$ at time $n_\optp\,\Delta t$.}
        \label{fig:figure5c}
    \end{subfigure}
    %
    %--- clouds of  points for components 3, 4, and 5
    \centering
    \begin{subfigure}[b]{0.25\textwidth}
    \centering
        \includegraphics[width=\textwidth]{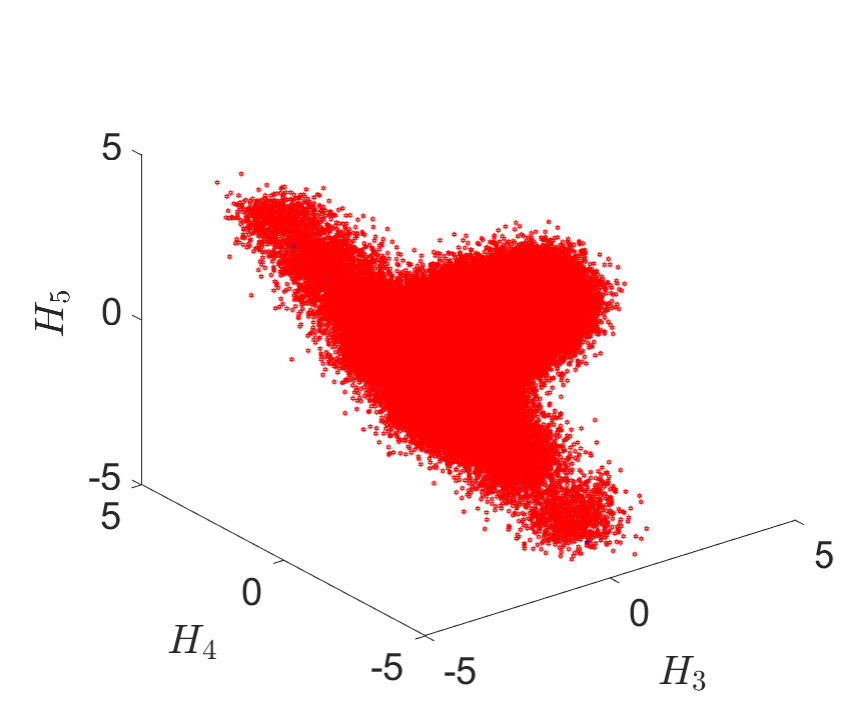}
        \caption{clouds for $(H_{\ar,3},H_{\ar,4},H_{\ar,5})$.}
        \vspace{0.3truecm}
        \label{fig:figure5d}
    \end{subfigure}
    \hfil
    \begin{subfigure}[b]{0.25\textwidth}
        \centering
        \includegraphics[width=\textwidth]{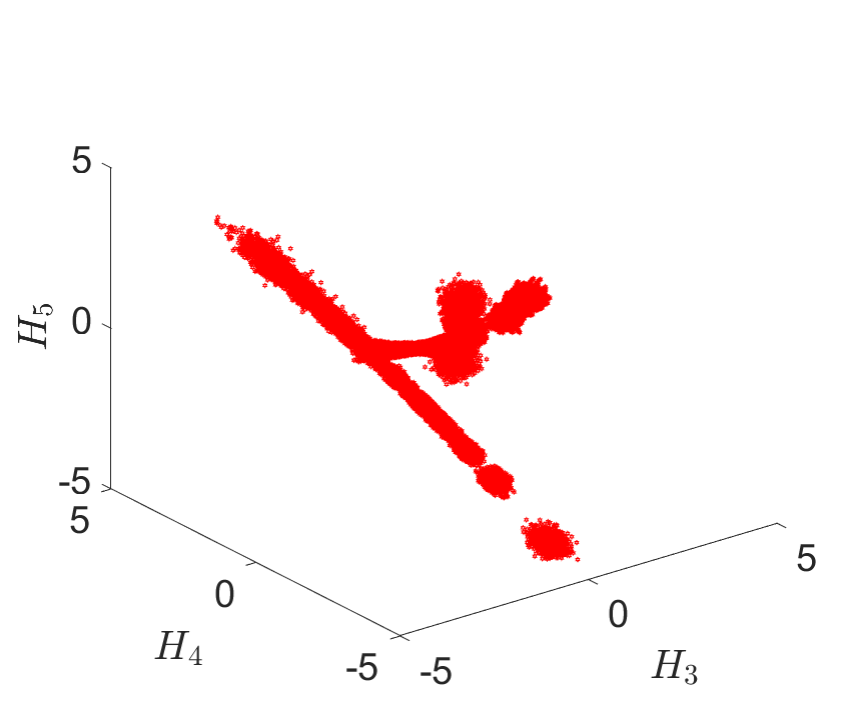}
        \caption{clouds for $(H_{\DB,3},H_{\DB,4},H_{\DB,5})$.}
         \vspace{0.3truecm}
        \label{fig:figure5e}
    \end{subfigure}
    \hfil
    \begin{subfigure}[b]{0.25\textwidth}
        \centering
        \includegraphics[width=\textwidth]{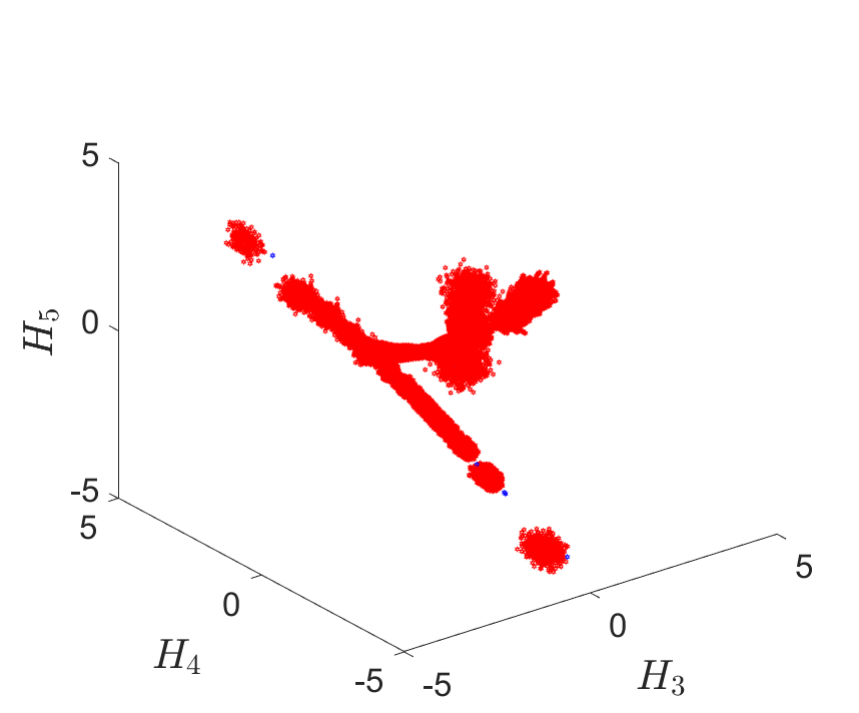}
        \caption{clouds for $(H_{\TB,3},H_{\TB,4},H_{\TB,5})$ at time $n_\optp\,\Delta t$.}
        \label{fig:figure5f}
    \end{subfigure}
    \caption{Application 1. Clouds of $n_\ar$ points corresponding to $n_\ar$ learned realizations, for components $1$, $2$, $3$ (a,b,c) and components $3$, $4$, $5$ (d,e,f), for $\bfH_\ar$ using MCMC without PLoM (a,d), for $\bfH_\DB$ using PLoM with RODB (b,e), and for $\bfH_\TB$ using PLoM with ROTB$(n_\optp \Delta t)$ (c,f).}
    \label{fig:figure5}
\end{figure}
%
%
%-------------------------------------- PLoM with Transient basis -------------------------------------------
%
\begin{figure}[h]
    \centering
    \begin{subfigure}[b]{0.25\textwidth}
    \centering
        \includegraphics[width=\textwidth]{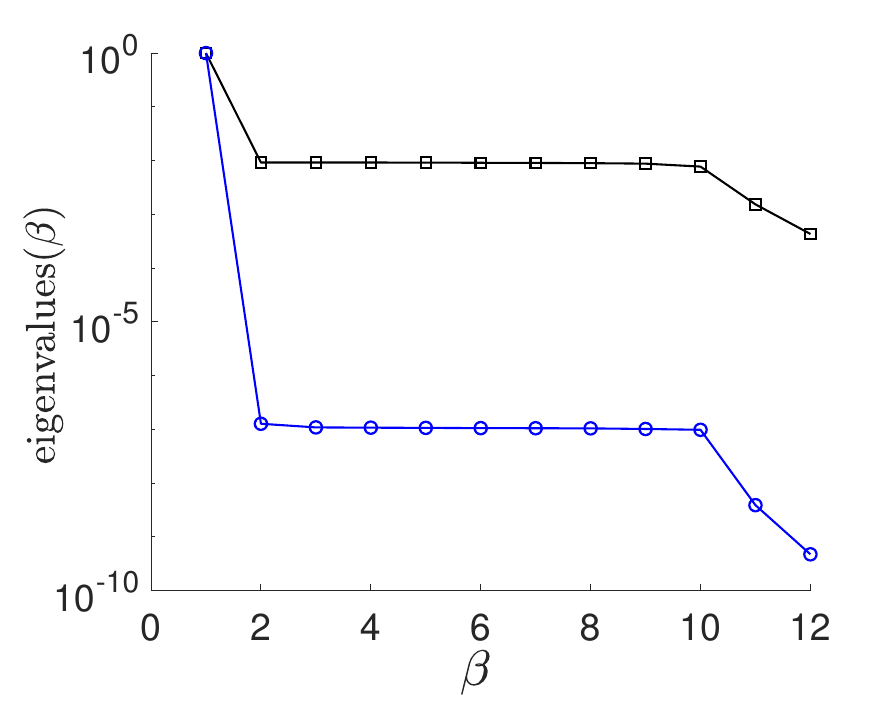}
        \caption{eigenvalues $\beta\mapsto \hat b_{\DM,\,\beta}$ (square) and $\beta\mapsto \tilde b_{\beta}(n_\optp\, \delta t)$ (circle).}
        \label{fig:figure6a}
    \end{subfigure}
    \hfil
    \begin{subfigure}[b]{0.25\textwidth}
        \centering
        \includegraphics[width=\textwidth]{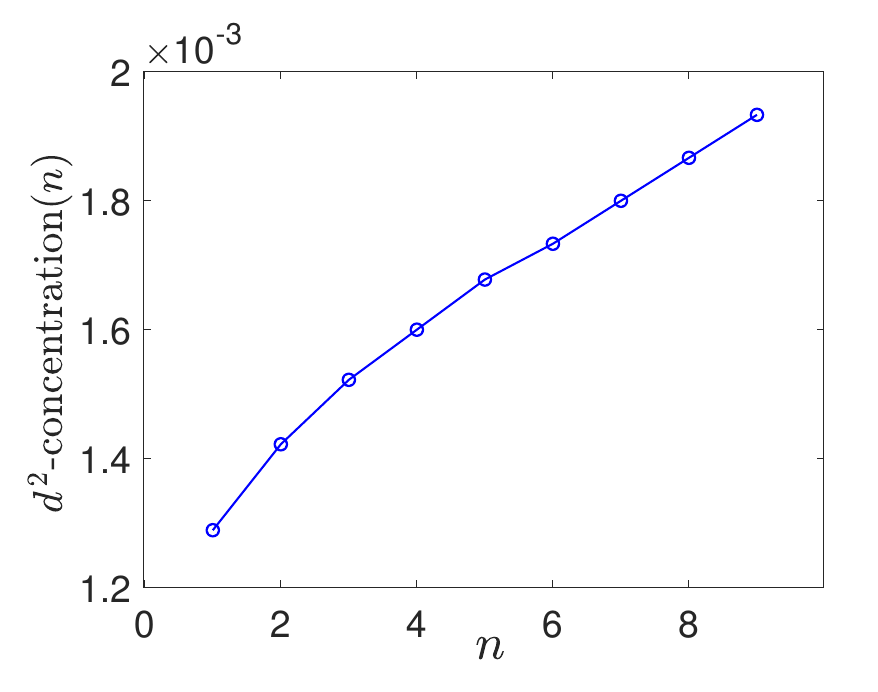}
         \caption{$n\mapsto \hat d^{\,2}(m_\optp;\nDeltat)/\nu$.}
         \vspace{0.3truecm}
        \label{fig:figure6b}
    \end{subfigure}
    \hfil
    \begin{subfigure}[b]{0.25\textwidth}
        \centering
        \includegraphics[width=\textwidth]{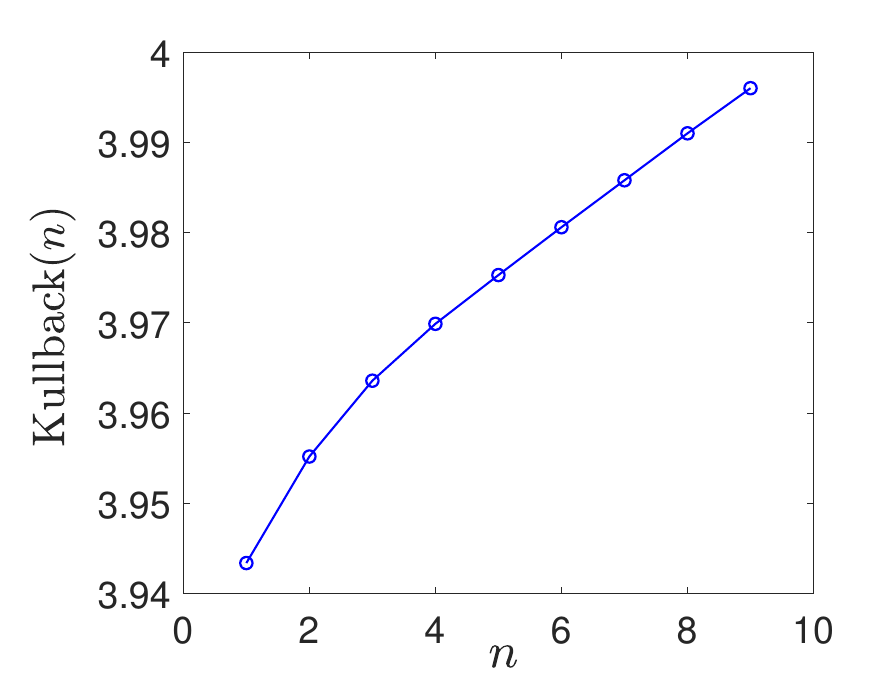}
        \caption{$n\mapsto \hat D(p_\TB(\cdot\, ,\nDeltat)\, \Vert \, p_\bfH)$ (Kullback).}
         \vspace{0.3truecm}
        \label{fig:figure6c}
    \end{subfigure}
    \centering
    \begin{subfigure}[b]{0.25\textwidth}
    \centering
        \includegraphics[width=\textwidth]{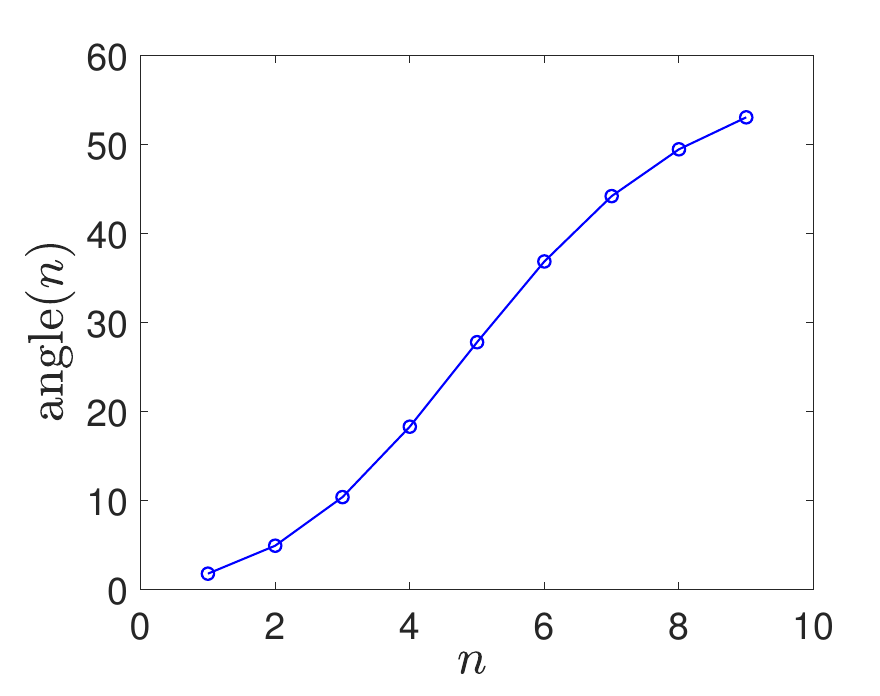}
          \caption{$n\mapsto \gamma(\nDeltat)$ (angle in degree).}
           \vspace{0.3truecm}
        \label{fig:figure6d}
    \end{subfigure}
    \hfil
    \begin{subfigure}[b]{0.25\textwidth}
        \centering
        \includegraphics[width=\textwidth]{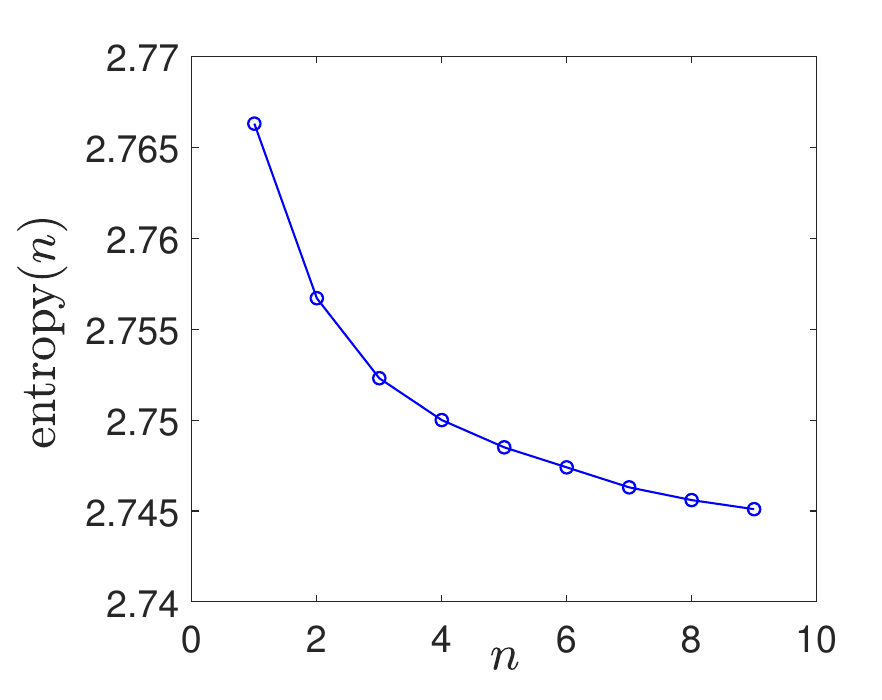}
        \caption{$n\mapsto \hat S_\TB(\nDeltat)$ (entropy).}
         \vspace{0.3truecm}
        \label{fig:figure6e}
    \end{subfigure}
    \hfil
    \begin{subfigure}[b]{0.25\textwidth}
        \centering
        \includegraphics[width=\textwidth]{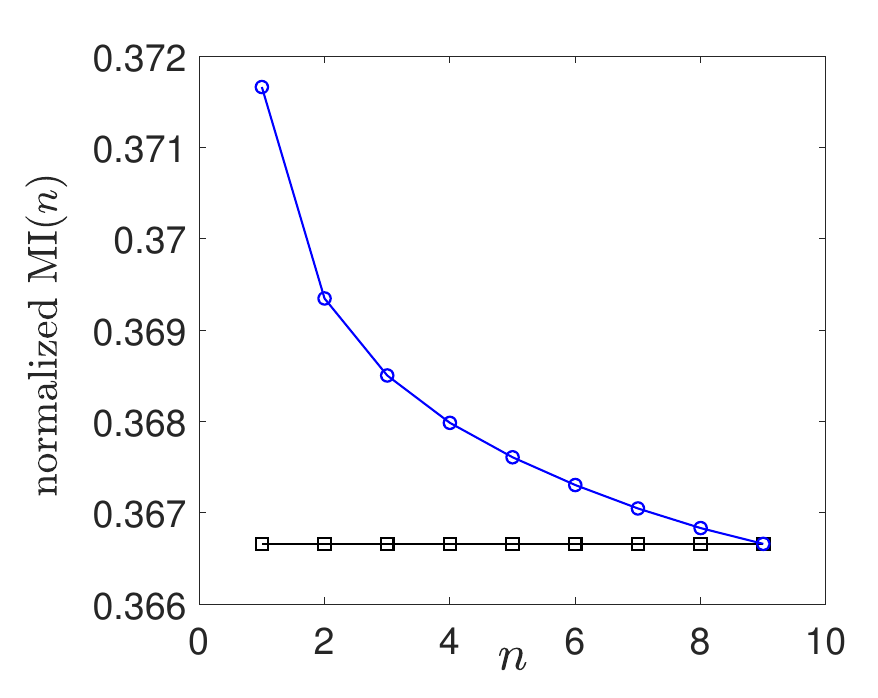}
        \caption{$n\mapsto \hat I_\normp(\bfH)$ (square) and $n\mapsto \hat I_\normp(\bfH_\TB;\nDeltat)$ (circle)(normalized MI).}
        \label{fig:figure6f}
    \end{subfigure}
    \caption{Application 1. Functions characterizing the reduced-order transient basis ROTB$(\nDeltat)$ as a function of time
    $\nDeltat$: eigenvalues of $[K_\DM]$ and of symmetrized $[\tilde K(\nDeltat)]$ (a); probability-measure concentration with
    $d^{\,2}/\nu$-criterion (b) and with Kullback criterion (c); angle between the subspaces spanned by RODB and ROTB$(\nDeltat)$ (d); entropy of pdf $p_\TB(\cdot\, ; \nDeltat)$ (e); normalized mutual information (MI) of pdf $p_\bfH$ and $p_\TB(\cdot\, ; \nDeltat)$ (f).}
    \label{fig:figure6}
\end{figure}
%
%
%=================================================================================================================
% Application 2 (file appli 8)
%=================================================================================================================
%
%
\subsection{Results for Application 2}
\label{Section8.7}

\noindent (i) Figure~\ref{fig:figure7} displays the graphs of the probability density function (pdf)  of components $3$, $5$, $6$, and $8$ for $\bfH$ estimated with the $n_d$ realizations of the training dataset, and the pdf estimated with $n_\ar$ learned realizations, for $\bfH_\ar$ using MCMC without PLoM (a,d,g,j), for $\bfH_\DB$ using  PLoM with RODB (b, e, h, k), and for $\bfH_\TB$ using PLoM with ROTB$(n_\optp \Delta t)$ (c, f, i, l).}

\noindent (ii)  Figure~\ref{fig:figure8} shows the joint probability density function of components $6$ and $8$ of $\bfH$ estimated with the $n_d$ realizations of the training dataset (a) and estimated with $n_\ar$ learned realizations for $\bfH_\ar$ using MCMC without PLoM (b), for $\bfH_\DB$ using PLoM with RODB (c), and for $\bfH_\TB$ using PLoM with ROTB$(n_\optp \Delta t)$ (d).

\noindent (iii)  In Fig.~\ref{fig:figure9}, the clouds of $n_\ar$ points corresponding to $n_\ar$ learned realizations can be seen for components $2$, $5$, $6$ (a,b,c) and components $3$, $6$, $8$ (d,e,f). These are shown for $\bfH_\ar$ using MCMC without PLoM (a,d), for $\bfH_\DB$ using PLoM with RODB (b,e), and for $\bfH_\TB$ using PLoM with ROTB$(n_\optp \Delta t)$ (c,f).

\noindent (iv) Figure~\ref{fig:figure10} plots the functions that characterize the reduced-order transient basis ROTB$(\nDeltat)$ as a function of time $\nDeltat$:
\begin{itemize}
\item The eigenvalues of matrix $[K_\DM]$ and those of the of symmetrized matrix $[\tilde K(\nDeltat)]$ are shown in
Fig.~\ref{fig:figure10a}.
\item The probability-measure concentration using the $d^{\,2}/\nu$-criterion is shown in Fig.~\ref{fig:figure10b}.
For the learning without PLoM, the $d^{\,2}$-concentration is $0.951$, which shows that the concentration is lost, and for the PLoM with the RODM, the concentration is $0.0091$, which shows that the concentration is preserved.
\item The other criterion of the probability-measure concentration is given by Kullback measure, shown in Fig.~\ref{fig:figure10c}.
For the learning without PLoM, Kullbach is $0.4594$, and for the PLoM with the RODM, Kullback is $3.4473$. Comparing Figs.~\ref{fig:figure10b} and \ref{fig:figure10c} shows, similarly to Application 1 that the two criteria are consistent and give the same analysis of the concentration.
\item The angle between the subspaces spanned by RODB and ROTB$(\nDeltat)$ is displayed in Fig.~\ref{fig:figure10d}. It can be seen that, for the optimal time $5\, \Delta t$, the angle is $17.1^\circ$, which is significant, although less than the optimal angle of Application~1. This shows that the two bases are different while the $d^{\,2}$-concentration remains small at $0.0153$.
\item The entropy of pdf $p_\TB(\cdot\, ; \nDeltat)$ is given in Fig.~\ref{fig:figure6e}.
\item The normalized mutual information (MI) of the pdfs $p_\bfH$ and $p_\TB(\cdot\, ; \nDeltat)$ is shown in Fig.~\ref{fig:figure10f}. This figure shows that the optimal value of $n$ is $n_\optp = 5$. Unlike Application~1, the normalized mutual information presents a local minimum, which is also a global minimum over the admissible set $\curC_N$.
    For the non-normalized estimation of the mutual information, we have $\hat I(\bfH) = 3.4994$, $\hat I(\bfH_\TB\, ; n_\optp \Delta t) = 5.8763$, and $\hat I(\bfH_\DM) = 5.9382$. For the normalized one, we have
    $\hat I_\normp(\bfH) = \hat I_\normp(\bfH_\TB\, ; n_\optp \Delta t) = 0.3441$, and $\hat I_\normp(\bfH_\DM) = 0.3477$.
\end{itemize}

\noindent (v) As for Application~1, examination of these figures shows that traditional learning without PLoM gives poor results compared to PLoM, which allows the concentration to be preserved and properly learns the geometry of the probability measure support. We also see that PLoM with the optimal ROTB provides an improvement in learning compared to PLoM with RODM. However, this improvement is less than in the case of Application~1 for which the data are much more heterogeneous (in correlation with the geometric complexity of the probability-measure support). Nevertheless, PLoM with the optimal ROTB is an improvement over PLoM with RODB and consequently, should improve the estimates of conditional statistics thanks to better learning of the joint probability measure.

%-------------------------------------- pdf -------------------------------------------
\begin{figure}[!t]
    %--- pdf component 3
    \centering
    \begin{subfigure}[b]{0.25\textwidth}
    \centering
        \includegraphics[width=\textwidth]{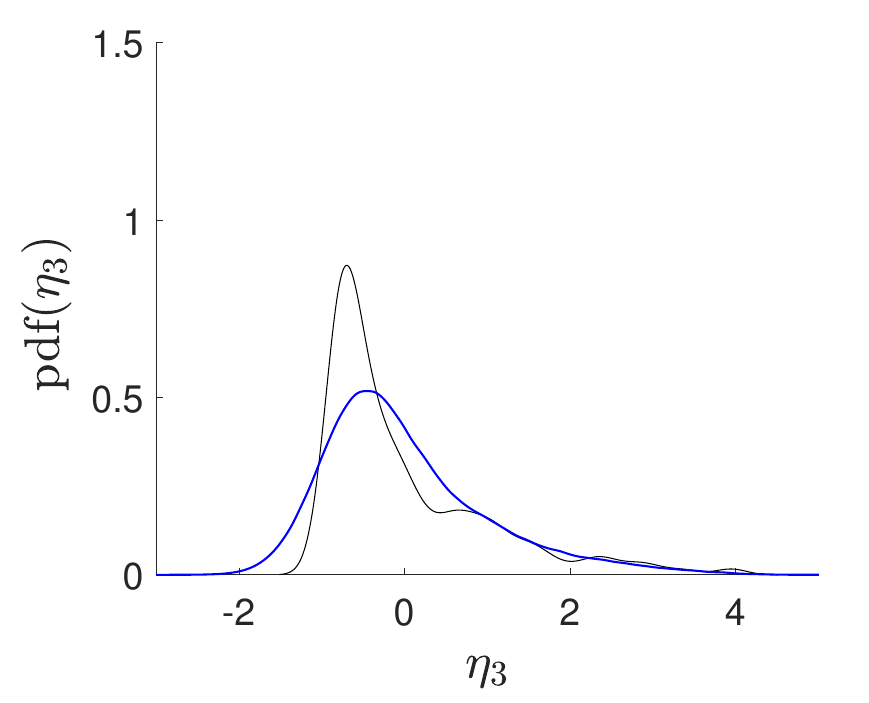}
        \caption{pdf of $H_3$ and $H_{\ar,3}$.}
        \label{fig:figure7a}
    \end{subfigure}
    \hfil
    \begin{subfigure}[b]{0.25\textwidth}
        \centering
        \includegraphics[width=\textwidth]{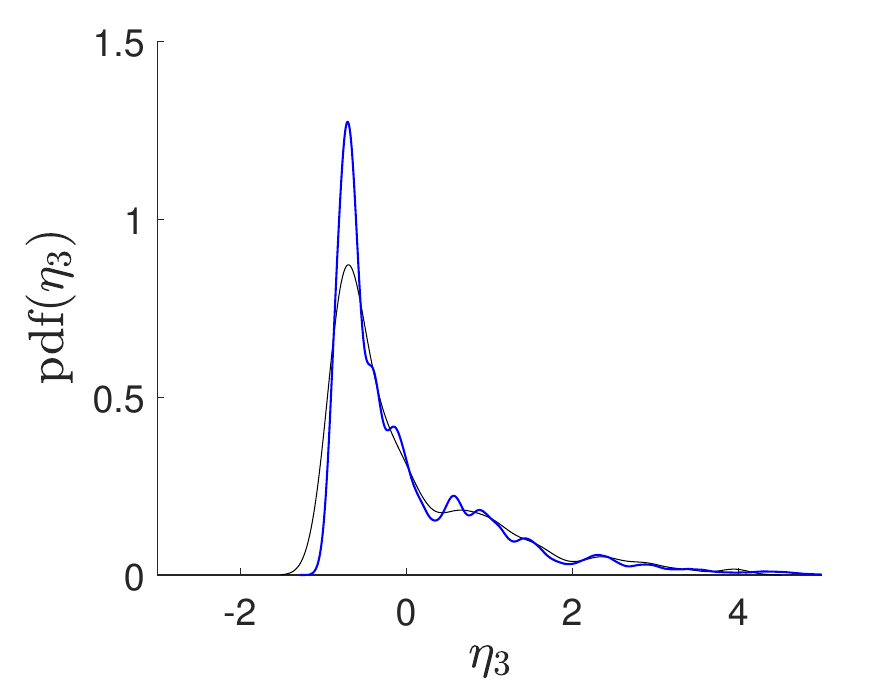}
        \caption{pdf of $H_3$ and $H_{\DB,3}$.}
        \label{fig:figure7b}
    \end{subfigure}
    \hfil
    \begin{subfigure}[b]{0.25\textwidth}
        \centering
        \includegraphics[width=\textwidth]{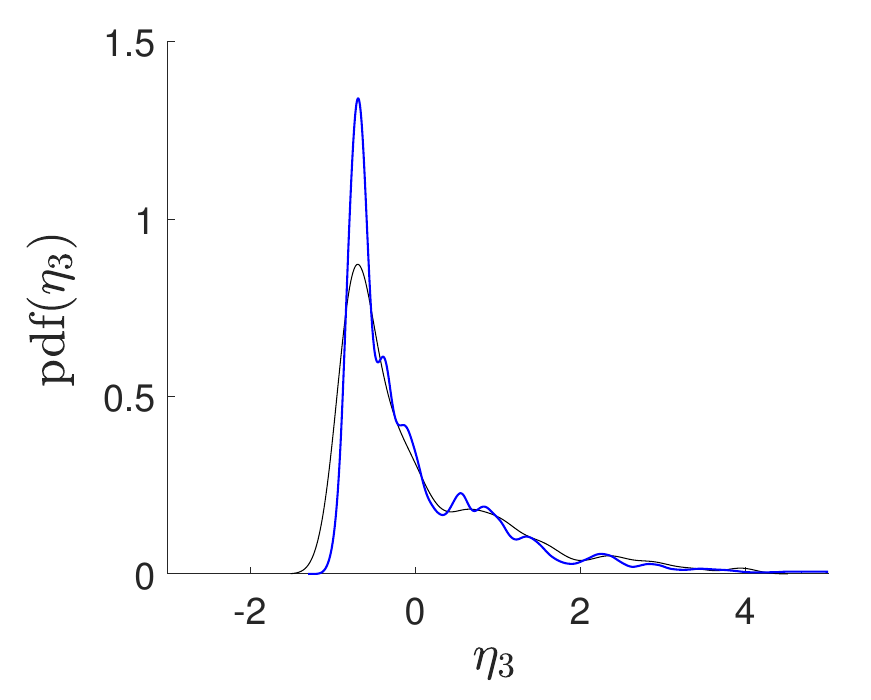}
        \caption{pdf of $H_3$ and $H_{\TB,3}$ at time $n_\optp\,\Delta t$.}
        \label{fig:figure7c}
    \end{subfigure}
    %
    %--- pdf component 5
    \centering
    \begin{subfigure}[b]{0.25\textwidth}
    \centering
        \includegraphics[width=\textwidth]{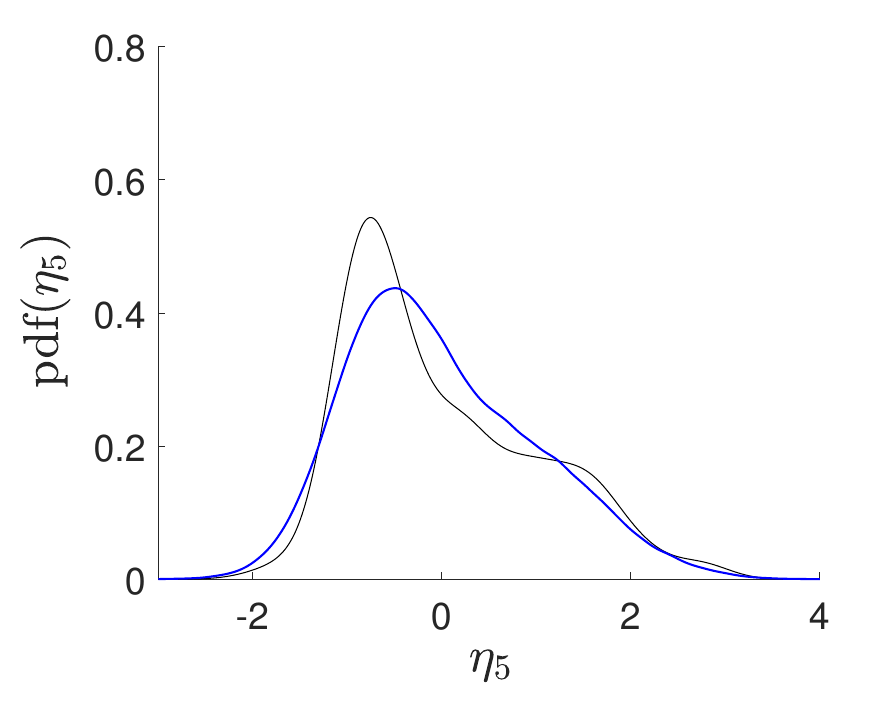}
         \caption{pdf of $H_5$ and $H_{\ar,5}$.}
        \label{fig:figure7d}
    \end{subfigure}
    \hfil
    \begin{subfigure}[b]{0.25\textwidth}
        \centering
        \includegraphics[width=\textwidth]{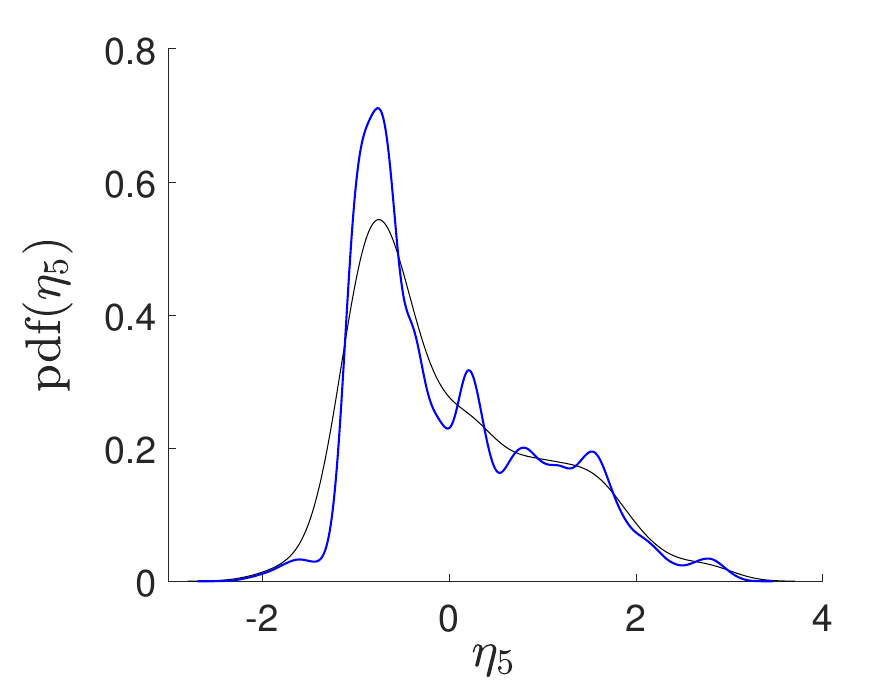}
        \caption{pdf of $H_5$ and $H_{\DB,5}$.}
        \label{fig:figure7e}
    \end{subfigure}
    \hfil
    \begin{subfigure}[b]{0.25\textwidth}
        \centering
        \includegraphics[width=\textwidth]{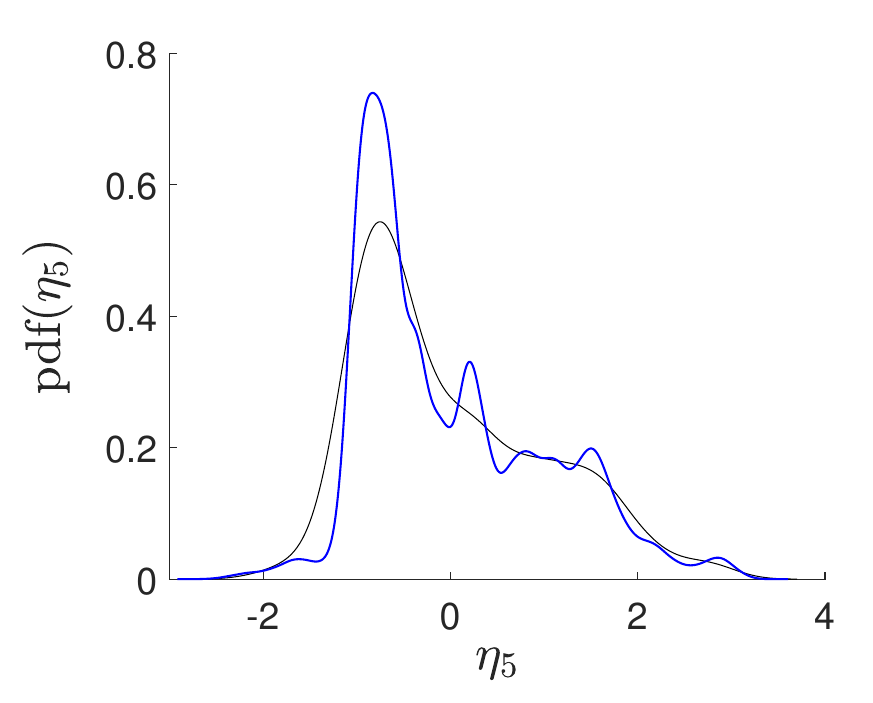}
        \caption{pdf of $H_5$ and $H_{\TB,5}$ at time $n_\optp\,\Delta t$.}
        \label{fig:figure7f}
    \end{subfigure}
    %
    %--- pdf component 6
    \centering
    \begin{subfigure}[b]{0.25\textwidth}
    \centering
        \includegraphics[width=\textwidth]{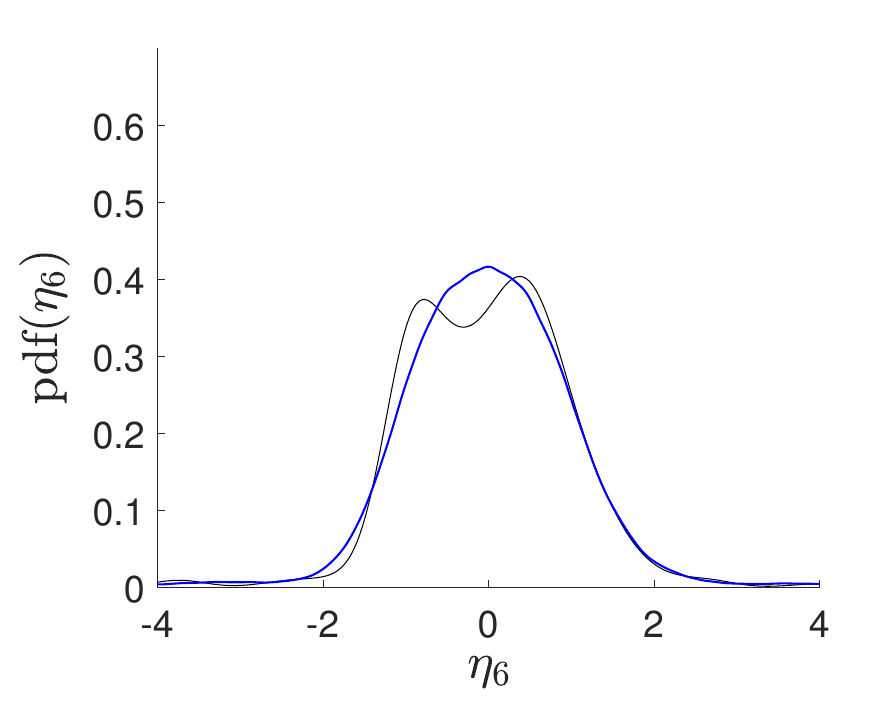}
         \caption{pdf of $H_6$ and $H_{\ar,6}$.}
        \label{fig:figure7g}
    \end{subfigure}
    \hfil
    \begin{subfigure}[b]{0.25\textwidth}
        \centering
        \includegraphics[width=\textwidth]{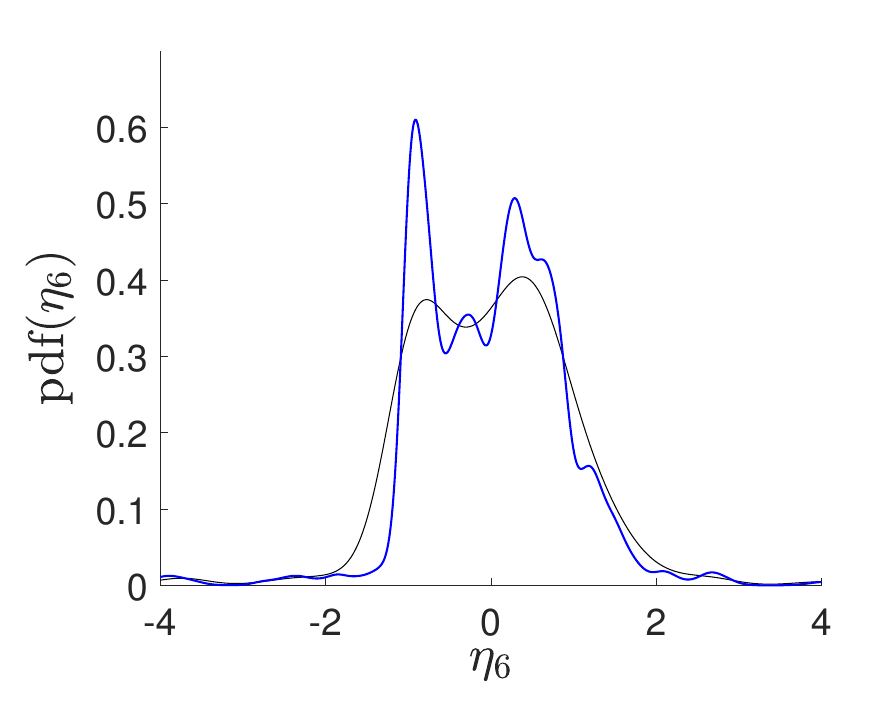}
        \caption{pdf of $H_6$ and $H_{\DB,6}$.}
        \label{fig:figure7h}
    \end{subfigure}
    \hfil
    \begin{subfigure}[b]{0.25\textwidth}
        \centering
        \includegraphics[width=\textwidth]{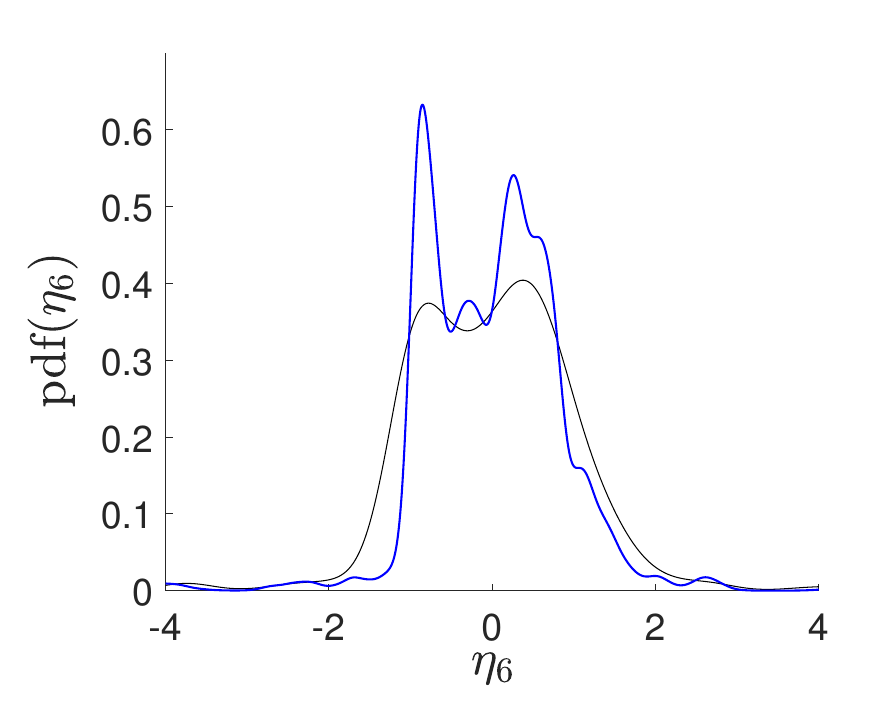}
        \caption{pdf of $H_6$ and $H_{\TB,6}$ at time $n_\optp\,\Delta t$.}
        \label{fig:figure7i}
    \end{subfigure}
    %
    %--- pdf component 8
    \centering
    \begin{subfigure}[b]{0.25\textwidth}
    \centering
        \includegraphics[width=\textwidth]{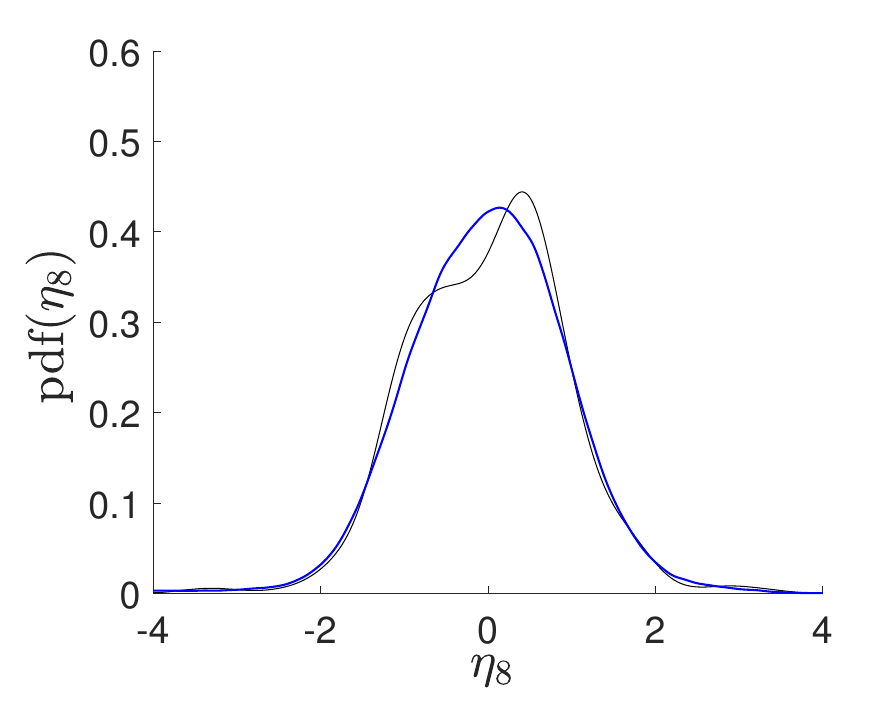}
        \caption{pdf of $H_8$ and $H_{\ar,8}$.}
        \label{fig:figure7j}
    \end{subfigure}
    \hfil
    \begin{subfigure}[b]{0.25\textwidth}
        \centering
        \includegraphics[width=\textwidth]{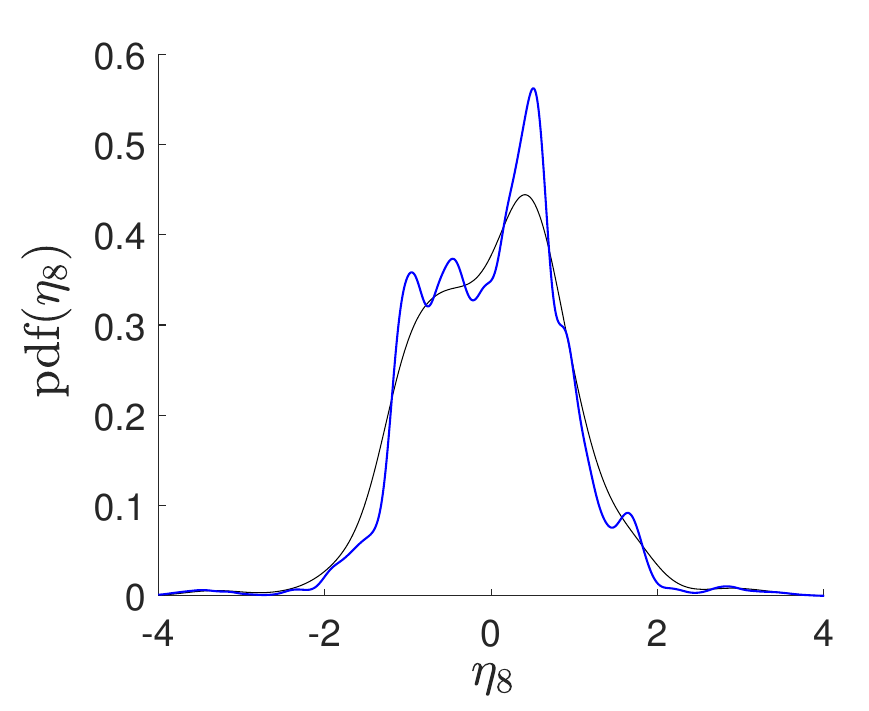}
        \caption{pdf of $H_8$ and $H_{\DB,8}$.}
        \label{fig:figure7k}
    \end{subfigure}
    \hfil
    \begin{subfigure}[b]{0.25\textwidth}
        \centering
        \includegraphics[width=\textwidth]{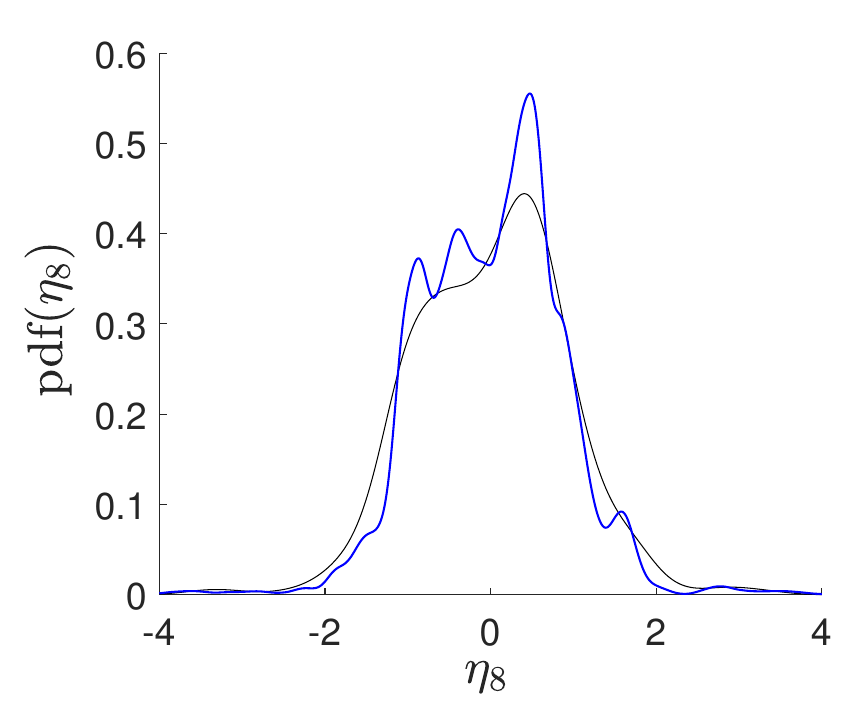}
        \caption{pdf of $H_8$ and $H_{\TB,8}$ at time $n_\optp\,\Delta t$.}
        \label{fig:figure7l}
    \end{subfigure}
    \caption{Application 2. Probability density function (pdf)  of components $3$, $5$, $6$, and $8$ for $\bfH$ estimated with the $n_d$ realizations of the training dataset (thin black line) and pdf estimated with $n_\ar$ learned realizations (thick blue line), for $\bfH_\ar$ using MCMC without PLoM (a,d,g,j), for $\bfH_\DB$ using  PLoM with RODB (b,e,h,k), and for $\bfH_\TB$ using PLoM with ROTB$(n_\optp \Delta t)$ (c, f, i, l).}
    \label{fig:figure7}
\end{figure}
%
%-------------------------------------- joint pdf -------------------------------------------
\begin{figure}[h]
    %--- joint pdf components 6 - 8
    \centering
    \begin{subfigure}[b]{0.24\textwidth}
    \centering
        \includegraphics[width=\textwidth]{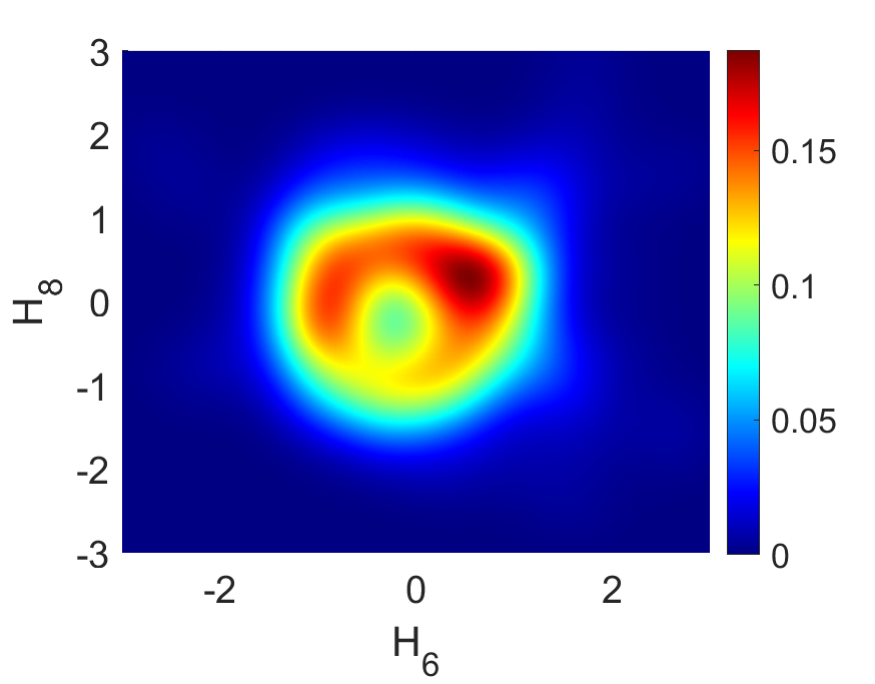}
        \caption{joint pdf of $H_{6}$ with $H_{8}$.}
         \vspace{0.3truecm}
        \label{fig:figure8a}
    \end{subfigure}
    \hfil
    \begin{subfigure}[b]{0.24\textwidth}
        \centering
        \includegraphics[width=\textwidth]{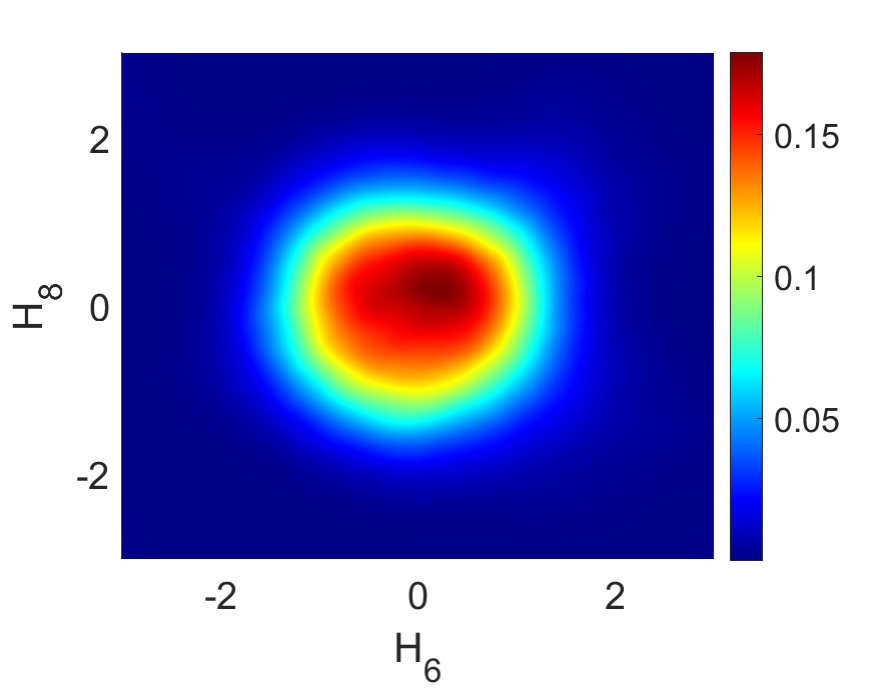}
        \caption{joint pdf of $H_{\ar,6}$ with $H_{\ar,8}$.}
         \vspace{0.3truecm}
        \label{fig:figure8b}
    \end{subfigure}
    \hfil
    \begin{subfigure}[b]{0.24\textwidth}
        \centering
        \includegraphics[width=\textwidth]{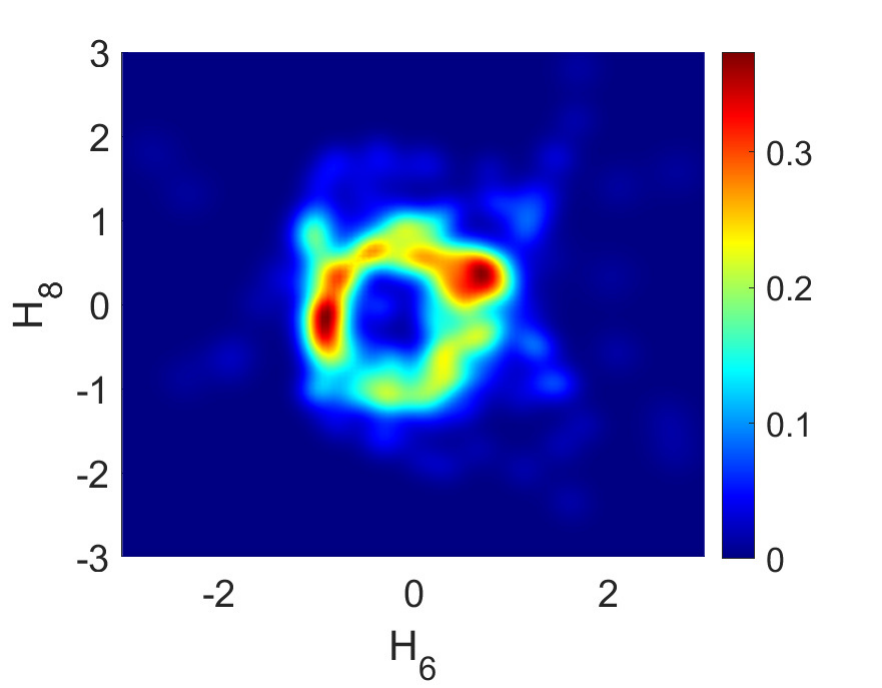}
        \caption{joint pdf of $H_{\DB,6}$ with $H_{\DB,8}$.}
         \vspace{0.3truecm}
        \label{fig:figure8c}
    \end{subfigure}
    \hfil
    \begin{subfigure}[b]{0.24\textwidth}
    \centering
        \includegraphics[width=\textwidth]{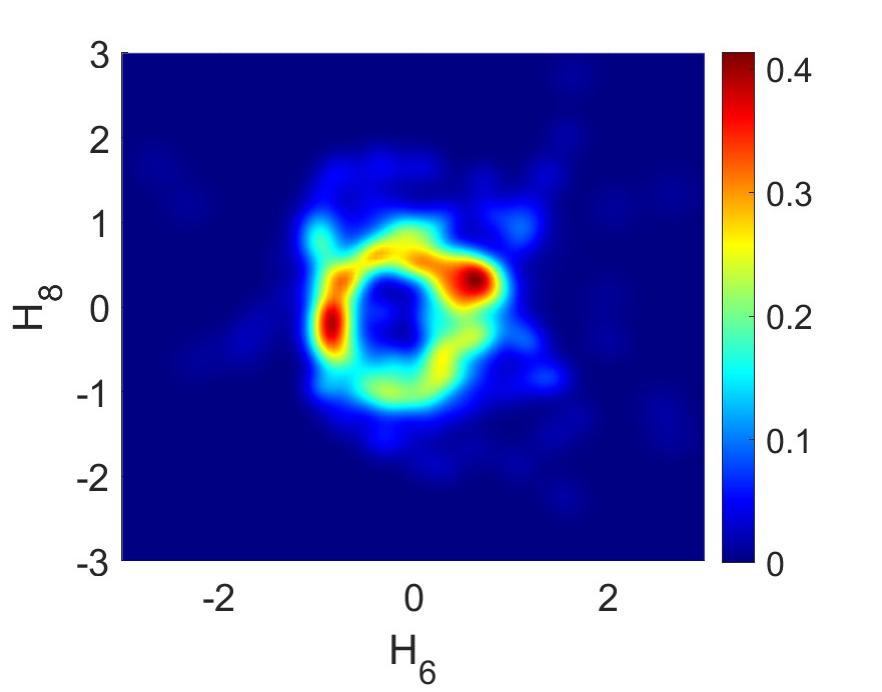}
        \caption{joint pdf of $H_{\TB,6}$ with $H_{\TB,8}$ at time $n_\optp\, \Delta t$.}
        \label{fig:figure8d}
    \end{subfigure}
    \caption{Application 2. Joint probability density function of components $6$ with $8$ of $\bfH$ estimated with the $n_d$ realizations of the training dataset (a) and estimated with $n_\ar$ learned realizations, for $\bfH_\ar$ using MCMC without PLoM (b), for $\bfH_\DB$ using PLoM with RODB (c), and for $\bfH_\TB$ using PLoM with ROTB$(n_\optp \Delta t)$ (d).}
    \label{fig:figure8}
\end{figure}
%
%
%-------------------------------------- clouds of points -------------------------------------------
%
\begin{figure}[h]
    %--- clouds of  points for components 2, 5, and 6
    \centering
    \begin{subfigure}[b]{0.25\textwidth}
    \centering
        \includegraphics[width=\textwidth]{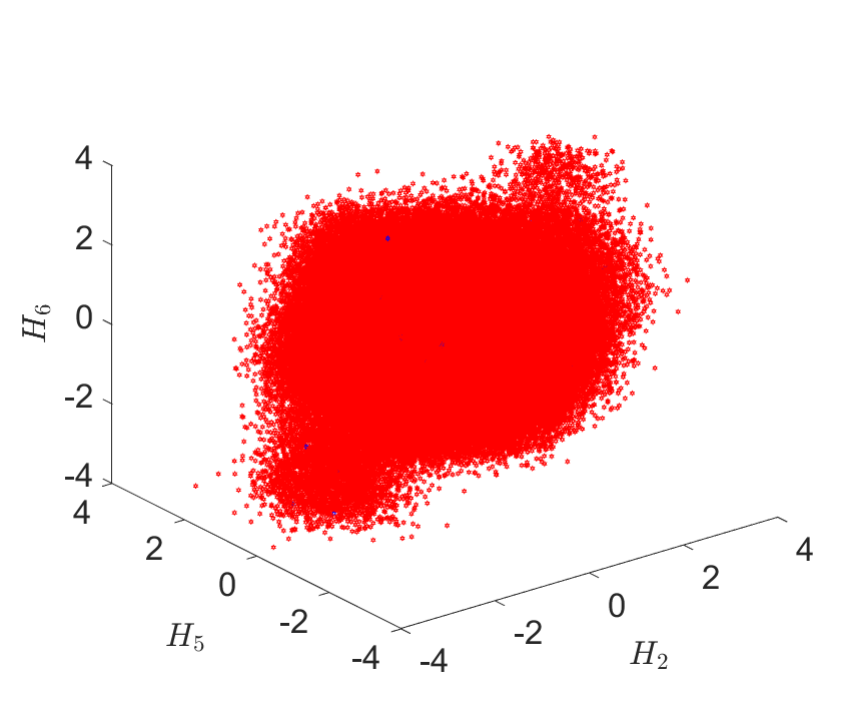}
        \caption{clouds for $(H_{\ar,2},H_{\ar,5},H_{\ar,6})$.}
         \vspace{0.3truecm}
        \label{fig:figure9a}
    \end{subfigure}
    \hfil
    \begin{subfigure}[b]{0.25\textwidth}
        \centering
        \includegraphics[width=\textwidth]{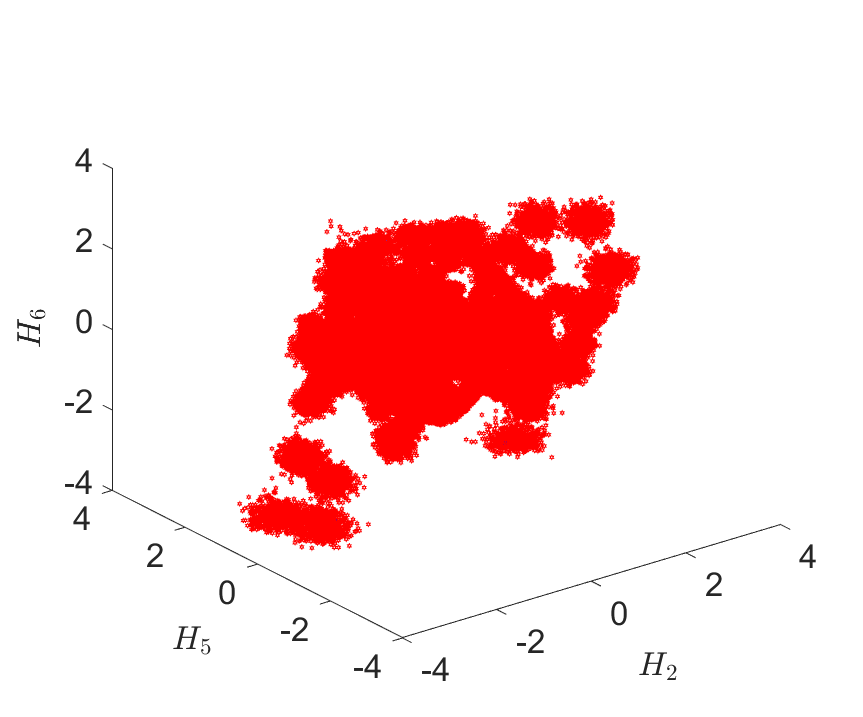}
         \caption{clouds for $(H_{\DB,2},H_{\DB,5},H_{\DB,6})$.}
          \vspace{0.3truecm}
        \label{fig:figure9b}
    \end{subfigure}
    \hfil
    \begin{subfigure}[b]{0.25\textwidth}
        \centering
        \includegraphics[width=\textwidth]{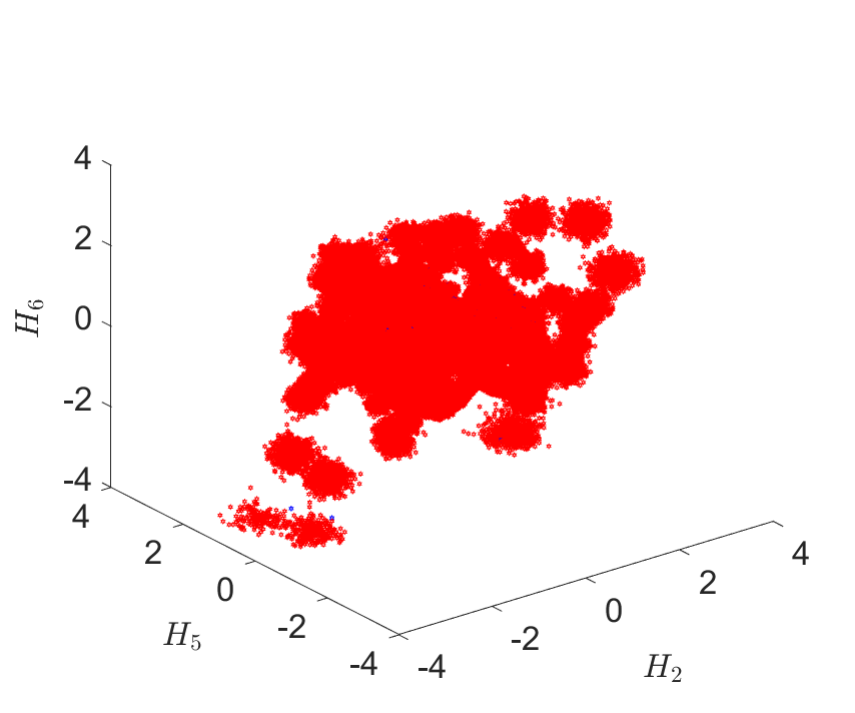}
         \caption{clouds for $(H_{\TB,2},H_{\TB,5},H_{\TB,6})$ at time $n_\optp\,\Delta t$.}
        \label{fig:figure9c}
    \end{subfigure}
    %
    %--- clouds of  points for components 3, 6, and 8
    \centering
    \begin{subfigure}[b]{0.25\textwidth}
    \centering
        \includegraphics[width=\textwidth]{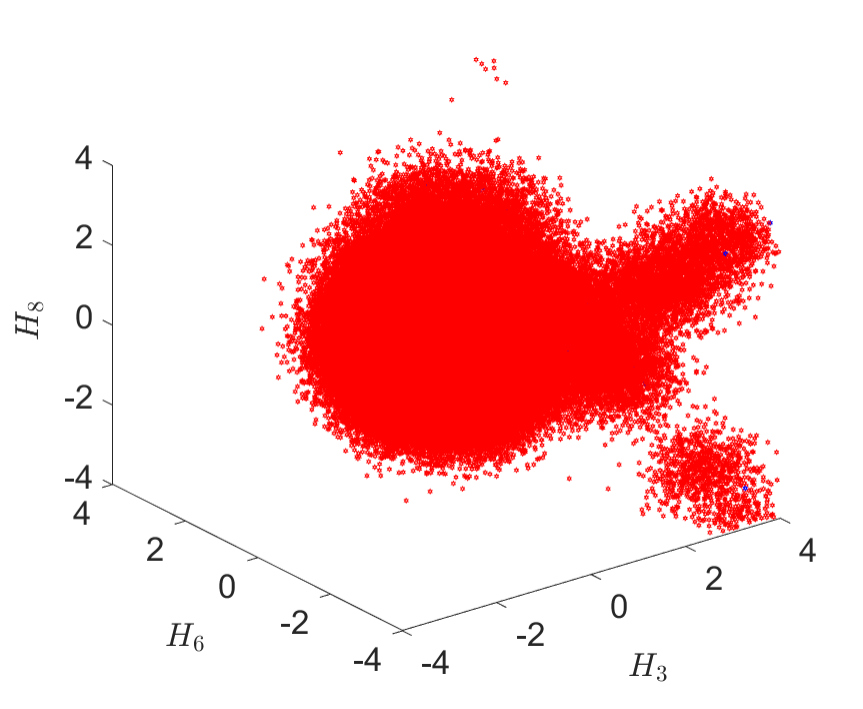}
        \caption{clouds for $(H_{\ar,3},H_{\ar,6},H_{\ar,8})$.}
        \vspace{0.3truecm}
        \label{fig:figure9d}
    \end{subfigure}
    \hfil
    \begin{subfigure}[b]{0.25\textwidth}
        \centering
        \includegraphics[width=\textwidth]{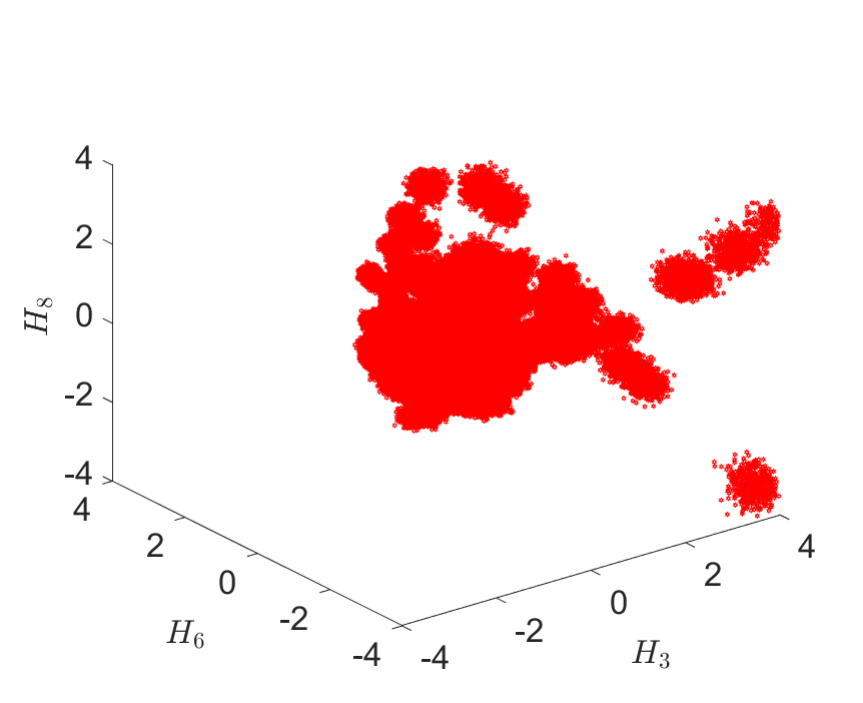}
        \caption{clouds for $(H_{\DB,3},H_{\DB,6},H_{\DB,8})$.}
         \vspace{0.3truecm}
        \label{fig:figure9e}
    \end{subfigure}
    \hfil
    \begin{subfigure}[b]{0.25\textwidth}
        \centering
        \includegraphics[width=\textwidth]{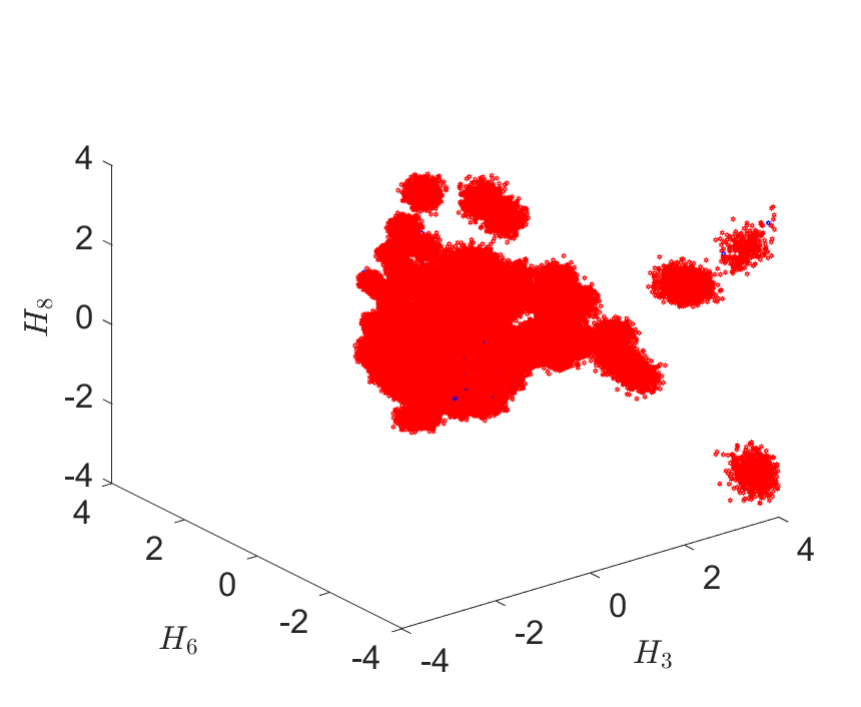}
        \caption{clouds for $(H_{\TB,3},H_{\TB,6},H_{\TB,8})$ at time $n_\optp\,\Delta t$.}
        \label{fig:figure9f}
    \end{subfigure}
    \caption{Application 2. Clouds of $n_\ar$ points corresponding to $n_\ar$ learned realizations, for components $2$, $5$, $6$ (a,b,c) and components $3$, $6$, $8$ (d,e,f), for $\bfH_\ar$ using MCMC without PLoM (a,d), for $\bfH_\DB$ using PLoM with RODB (b,e), and for $\bfH_\TB$ using PLoM with ROTB$(n_\optp \Delta t)$ (c,f).}
    \label{fig:figure9}
\end{figure}
%
%
%-------------------------------------- PLoM with Transient basis -------------------------------------------
%
\begin{figure}[h]
    \centering
    \begin{subfigure}[b]{0.25\textwidth}
    \centering
        \includegraphics[width=\textwidth]{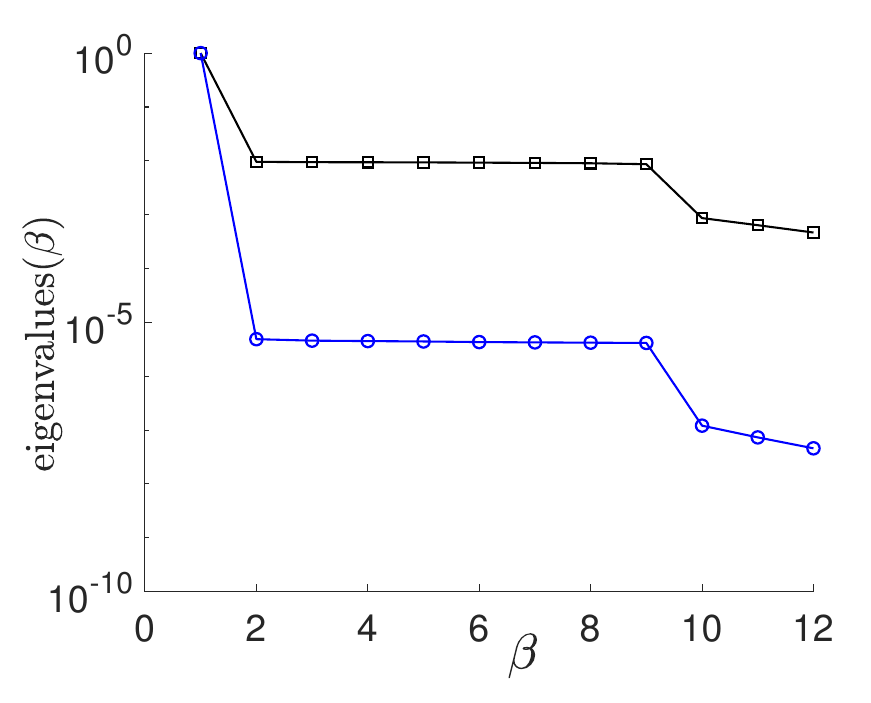}
        \caption{eigenvalues $\beta\mapsto \hat b_{\DM,\,\beta}$ (square) and $\beta\mapsto \tilde b_{\beta}(n_\optp\, \delta t)$ (circle).}
        \label{fig:figure10a}
    \end{subfigure}
    \hfil
    \begin{subfigure}[b]{0.25\textwidth}
        \centering
        \includegraphics[width=\textwidth]{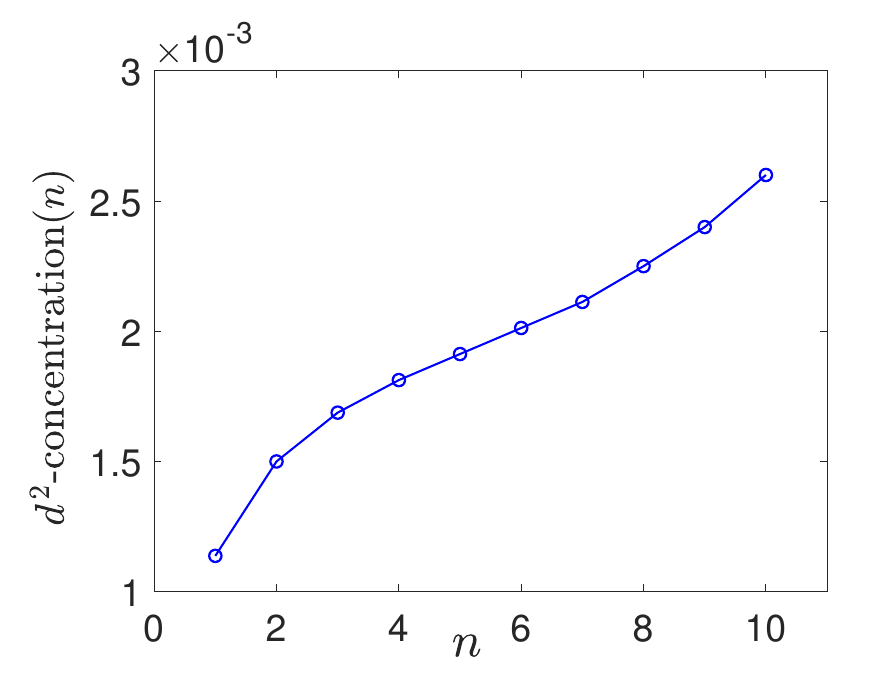}
         \caption{$n\mapsto \hat d^{\,2}(m_\optp;\nDeltat)/\nu$.}
         \vspace{0.3truecm}
        \label{fig:figure10b}
    \end{subfigure}
    \hfil
    \begin{subfigure}[b]{0.25\textwidth}
        \centering
        \includegraphics[width=\textwidth]{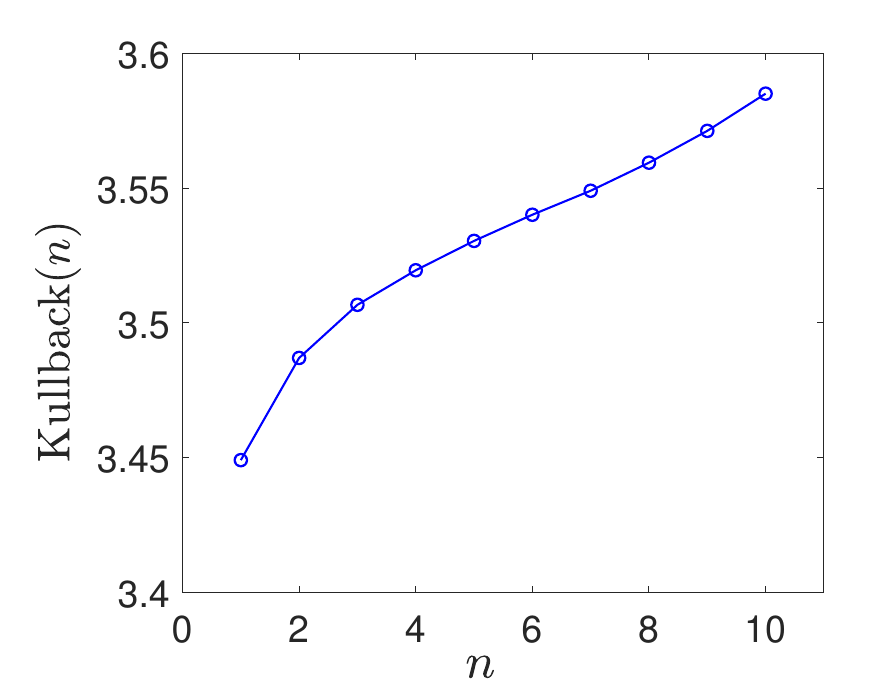}
        \caption{$n\mapsto \hat D(p_\TB(\cdot\, ,\nDeltat)\, \Vert \, p_\bfH)$ (Kullback).}
         \vspace{0.3truecm}
        \label{fig:figure10c}
    \end{subfigure}
    \centering
    \begin{subfigure}[b]{0.25\textwidth}
    \centering
        \includegraphics[width=\textwidth]{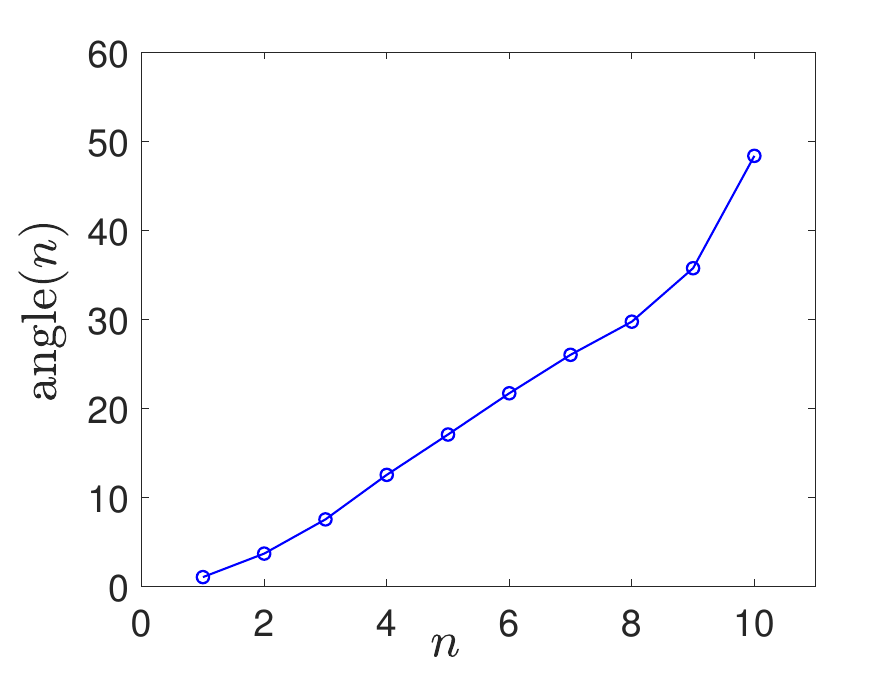}
          \caption{$n\mapsto \gamma(\nDeltat)$ (angle in degree).}
           \vspace{0.3truecm}
        \label{fig:figure10d}
    \end{subfigure}
    \hfil
    \begin{subfigure}[b]{0.25\textwidth}
        \centering
        \includegraphics[width=\textwidth]{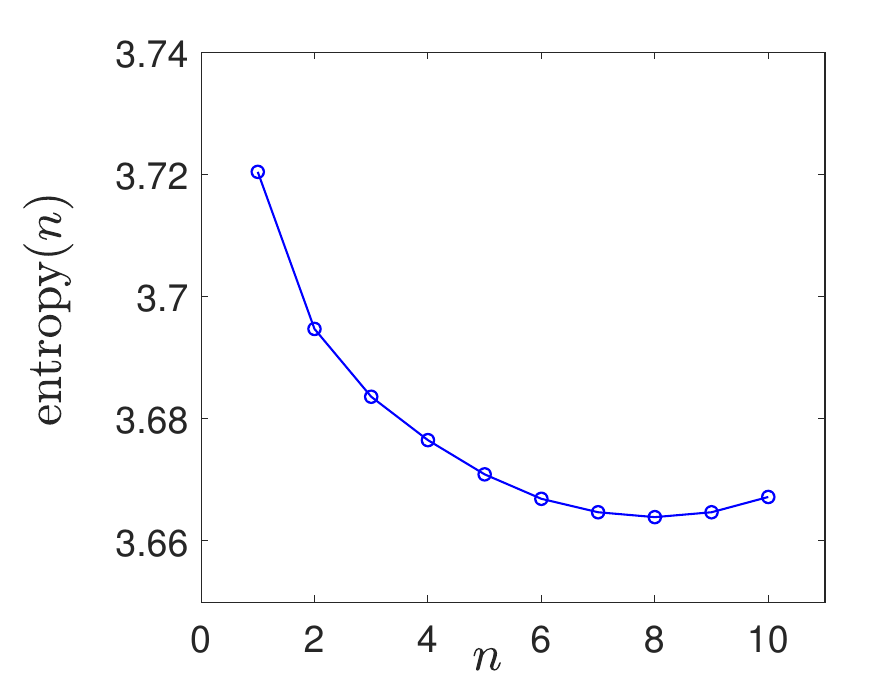}
        \caption{$n\mapsto \hat S_\TB(\nDeltat)$ (entropy).}
         \vspace{0.3truecm}
        \label{fig:figure10e}
    \end{subfigure}
    \hfil
    \begin{subfigure}[b]{0.25\textwidth}
        \centering
        \includegraphics[width=\textwidth]{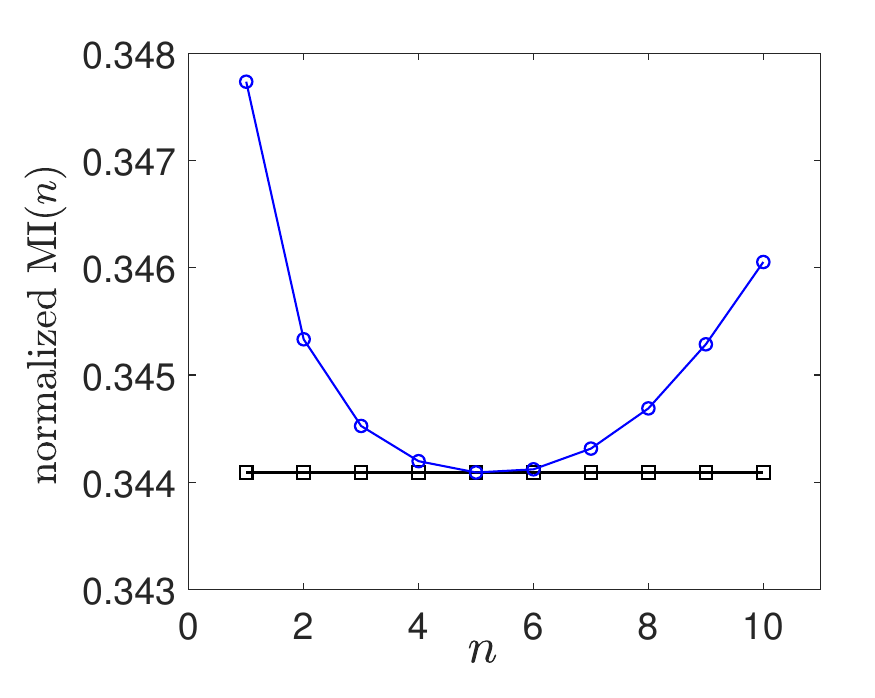}
        \caption{$n\mapsto \hat I_\normp(\bfH)$ (square) and $n\mapsto \hat I_\normp(\bfH_\TB;\nDeltat)$ (circle)(normalized MI).}
        \label{fig:figure10f}
    \end{subfigure}
    \caption{Application 2. Functions characterizing the reduced-order transient basis ROTB$(\nDeltat)$ as a function of time
    $\nDeltat$: eigenvalues of $[K_\DM]$ and of symmetrized $[\tilde K(\nDeltat)]$ (a); measure concentration with $d^{\,2}/\nu$-criterion (b) and with Kullback criteria (c); angle between the subspaces spanned by RODB and ROTB$(\nDeltat)$ (d); entropy of pdf $p_\TB(\cdot\, ; \nDeltat)$ (e); normalized mutual information (MI) of pdf $p_\bfH$ and$p_\TB(\cdot\, ; \nDeltat)$ (f).}
    \label{fig:figure10}
\end{figure}
%
%
%
%=================================================================================================================
% Application 3 (file appli 8)
%=================================================================================================================
%
\subsection{Results for Application 3}
\label{Section8.8}

\noindent (i) Figure~\ref{fig:figure11} displays the graphs of the probability density function (pdf)  of components $1$, $6$, $25$, and $40$ for $\bfH$ estimated with the $n_d$ realizations of the training dataset, and the pdf estimated with $n_\ar$ learned realizations, for $\bfH_\ar$ using MCMC without PLoM (a,d,g,j), for $\bfH_\DB$ using  PLoM with RODB (b, e, h, k), and for $\bfH_\TB$ using PLoM with ROTB$(n_\optp \Delta t)$ (c, f, i, l).}

\noindent (ii)  Figure~\ref{fig:figure12} shows the joint probability density function of components $25$ and $40$ of $\bfH$ estimated with the $n_d$ realizations of the training dataset (a) and estimated with $n_\ar$ learned realizations for $\bfH_\ar$ using MCMC without PLoM (b), for $\bfH_\DB$ using PLoM with RODB (c), and for $\bfH_\TB$ using PLoM with ROTB$(n_\optp \Delta t)$ (d).

\noindent (iii)  In Fig.~\ref{fig:figure13}, the clouds of $n_\ar$ points corresponding to $n_\ar$ learned realizations can be seen for components $1$, $6$, $12$ (a,b,c) and components $12$, $25$, $40$ (d,e,f). These are shown for $\bfH_\ar$ using MCMC without PLoM (a,d), for $\bfH_\DB$ using PLoM with RODB (b,e), and for $\bfH_\TB$ using PLoM with ROTB$(n_\optp \Delta t)$ (c,f).

\noindent (iv) Figure~\ref{fig:figure14} plots the functions that characterize the reduced-order transient basis ROTB$(\nDeltat)$ as a function of time $\nDeltat$:
\begin{itemize}
\item The eigenvalues of matrix $[K_\DM]$ and those of the of symmetrized matrix $[\tilde K(\nDeltat)]$ are shown in
Fig.~\ref{fig:figure14a}.
\item The probability-measure concentration using the $d^{\,2}/\nu$-criterion is shown in Fig.~\ref{fig:figure14b}.
For the learning without PLoM, the $d^{\,2}$-concentration is $0.574$, which shows that the concentration is lost, and for the PLoM with the RODM, the concentration is $0.067$, which shows that the concentration is preserved.
\item The other criterion of the probability-measure concentration is given by Kullback measure, shown in Fig.~\ref{fig:figure14c}.
For the learning without PLoM, Kullbach is $14.35$, and for the PLoM with the RODM, Kullback is $5.72$. Comparing Figs.~\ref{fig:figure14b} and \ref{fig:figure14c} shows, similarly to Applications~1 and ~2 that the two criteria are consistent and give the same analysis of the concentration.
\item The angle between the subspaces spanned by RODB and ROTB$(\nDeltat)$ is displayed in Fig.~\ref{fig:figure14d}. It can be seen that, for the optimal time $9\, \Delta t$, the angle is $9.7^\circ$, which is significant, although less than the optimal angle of Applications~1 and ~2. This shows that the two bases are different while the $d^~{\,2}$-concentration remains small at $0.078$.
\item The entropy of pdf $p_\TB(\cdot\, ; \nDeltat)$ is given in Fig.~\ref{fig:figure14e}.
\item The normalized mutual information (MI) of the pdfs $p_\bfH$ and $p_\TB(\cdot\, ; \nDeltat)$ is shown in Fig.~\ref{fig:figure14f}. This figure shows that the optimal value of $n$ is $n_\optp = 9$. The behavior of the normalized mutual information is similar to that of Application~1 and does not present a local minimum as in Application~2.
    For the non-normalized estimation of the mutual information, we have $\hat I(\bfH) = 24.167$, $\hat I(\bfH_\TB\, ; n_\optp \Delta t) = 25.115$, and $\hat I(\bfH_\DM) = 25.673$. For the normalized one, we have
    $\hat I_\normp(\bfH) = \hat I_\normp(\bfH_\TB\, ; n_\optp \Delta t) = 0.1418$ and $\hat I_\normp(\bfH_\DM) = 0.1450$.
\end{itemize}

\noindent (v) As for Applications~1 and~2, examination of these figures shows that traditional learning without PLoM gives poor results compared to PLoM, which allows the concentration to be preserved and properly learns the geometry of the probability measure support. We also see that PLoM with the optimal ROTB provides an improvement in learning compared to PLoM with RODM. However, this improvement is less than in the case of Applications~1 and~2. For this application, relative  to a relatively high dimension of  $\nu = 45$, the data are more homogeneous than for the other applications (in correlation with the geometric complexity of the probability-measure support). Nevertheless, PLoM with the optimal
ROTB is an improvement over PLoM with RODB and, consequently, should improve the estimates of conditional statistics thanks to better learning of the joint probability measure.

%-------------------------------------- pdf -------------------------------------------
\begin{figure}[!t]
    %--- pdf component 1
    \centering
    \begin{subfigure}[b]{0.25\textwidth}
    \centering
        \includegraphics[width=\textwidth]{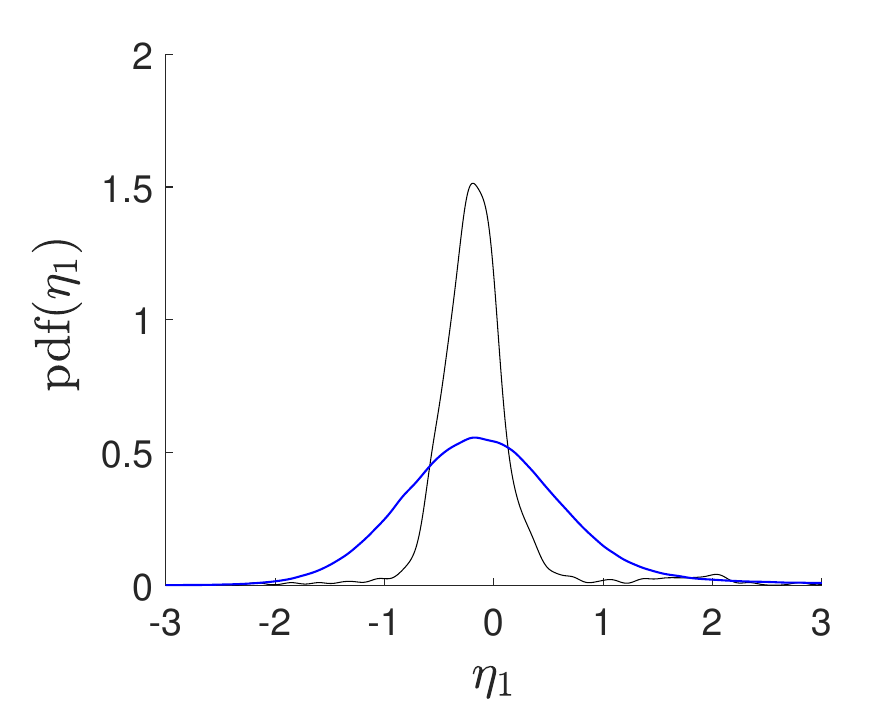}
        \caption{pdf of $H_1$ and $H_{\ar,1}$.}
        \label{fig:figure11a}
    \end{subfigure}
    \hfil
    \begin{subfigure}[b]{0.25\textwidth}
        \centering
        \includegraphics[width=\textwidth]{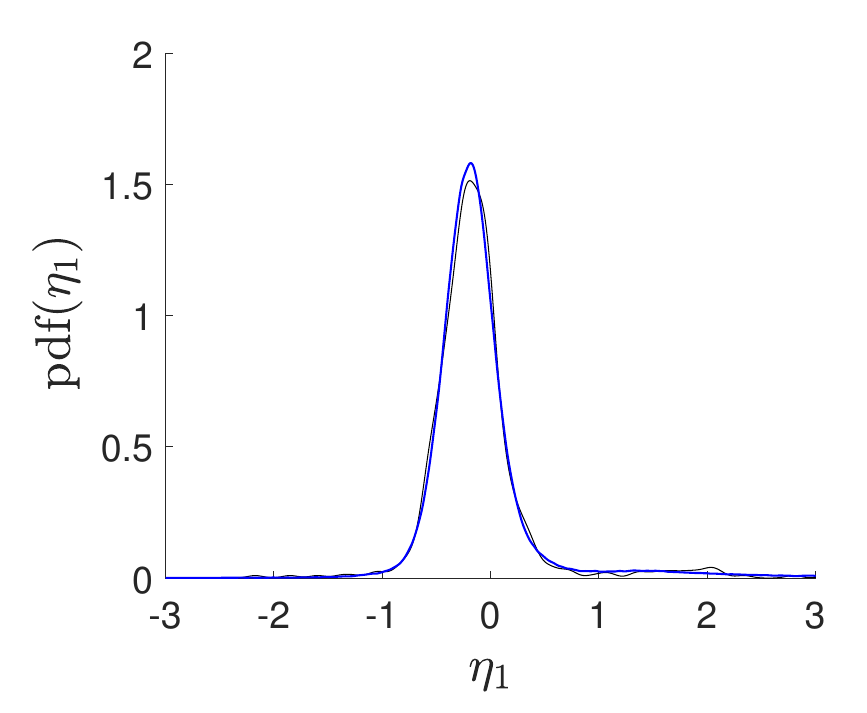}
        \caption{pdf of $H_1$ and $H_{\DB,1}$.}
        \label{fig:figure11b}
    \end{subfigure}
    \hfil
    \begin{subfigure}[b]{0.25\textwidth}
        \centering
        \includegraphics[width=\textwidth]{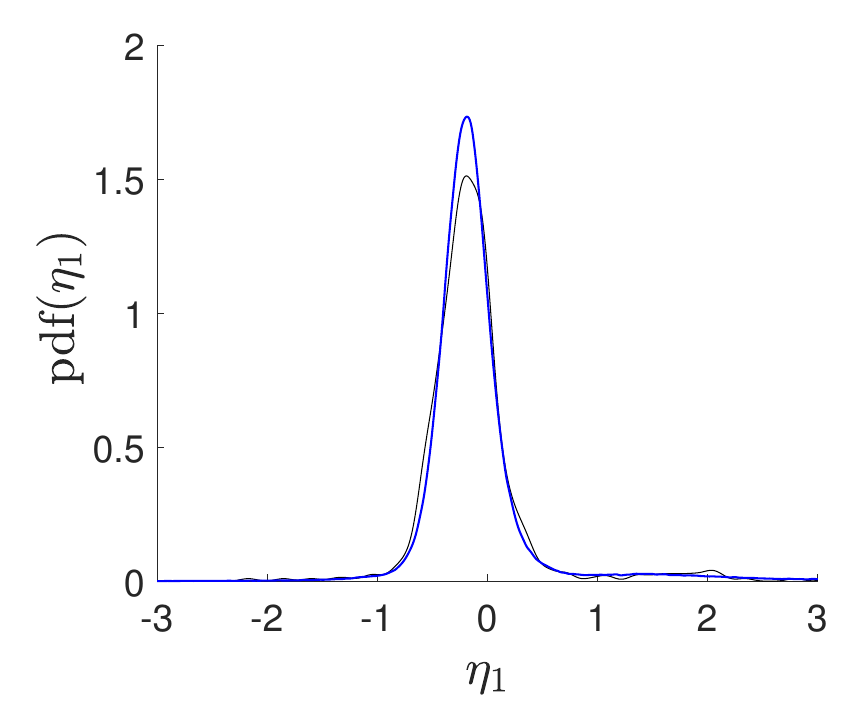}
        \caption{pdf of $H_1$ and $H_{\TB,1}$ at time $n_\optp\,\Delta t$.}
        \label{fig:figure11c}
    \end{subfigure}
    %
    %--- pdf component 6
    \centering
    \begin{subfigure}[b]{0.25\textwidth}
    \centering
        \includegraphics[width=\textwidth]{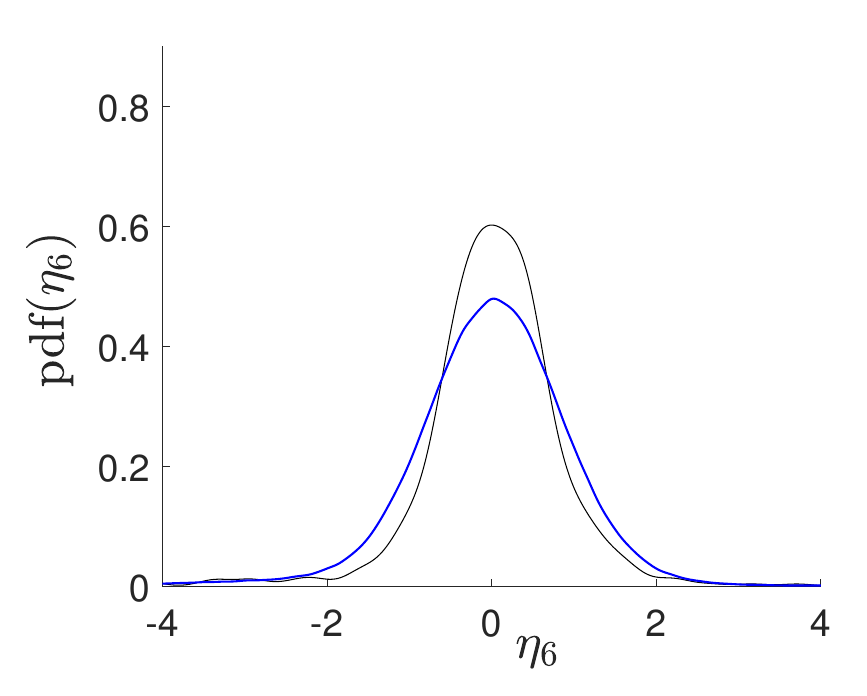}
         \caption{pdf of $H_6$ and $H_{\ar,6}$.}
        \label{fig:figure11d}
    \end{subfigure}
    \hfil
    \begin{subfigure}[b]{0.25\textwidth}
        \centering
        \includegraphics[width=\textwidth]{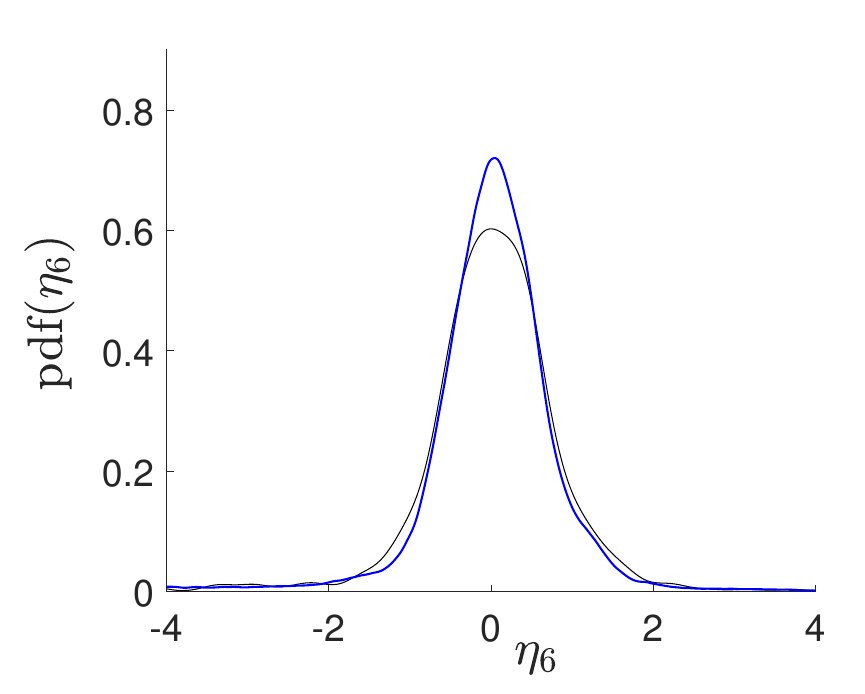}
        \caption{pdf of $H_6$ and $H_{\DB,6}$.}
        \label{fig:figure11e}
    \end{subfigure}
    \hfil
    \begin{subfigure}[b]{0.25\textwidth}
        \centering
        \includegraphics[width=\textwidth]{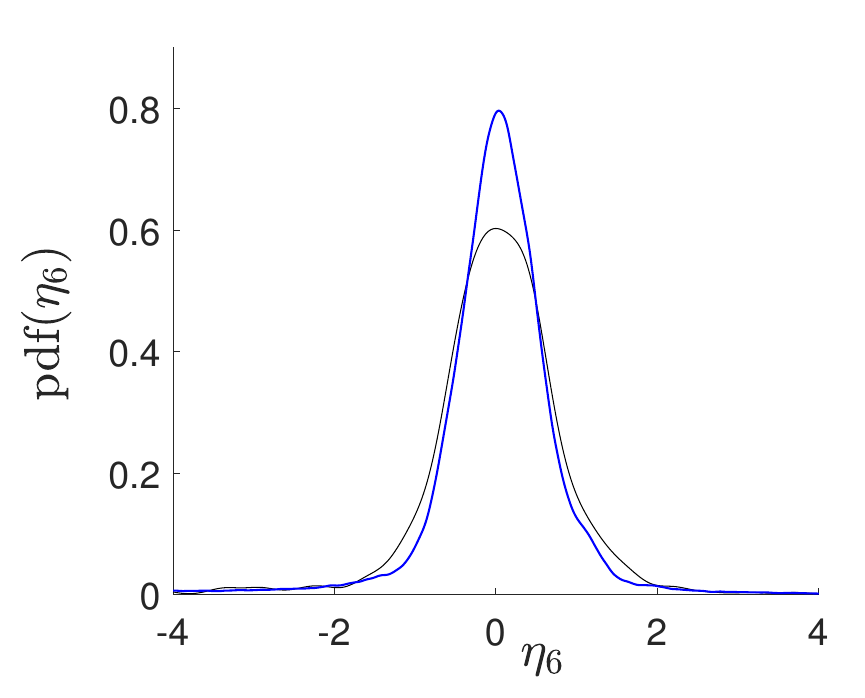}
        \caption{pdf of $H_6$ and $H_{\TB,6}$ at time $n_\optp\,\Delta t$.}
        \label{fig:figure11f}
    \end{subfigure}
    %
    %--- pdf component 25
    \centering
    \begin{subfigure}[b]{0.25\textwidth}
    \centering
        \includegraphics[width=\textwidth]{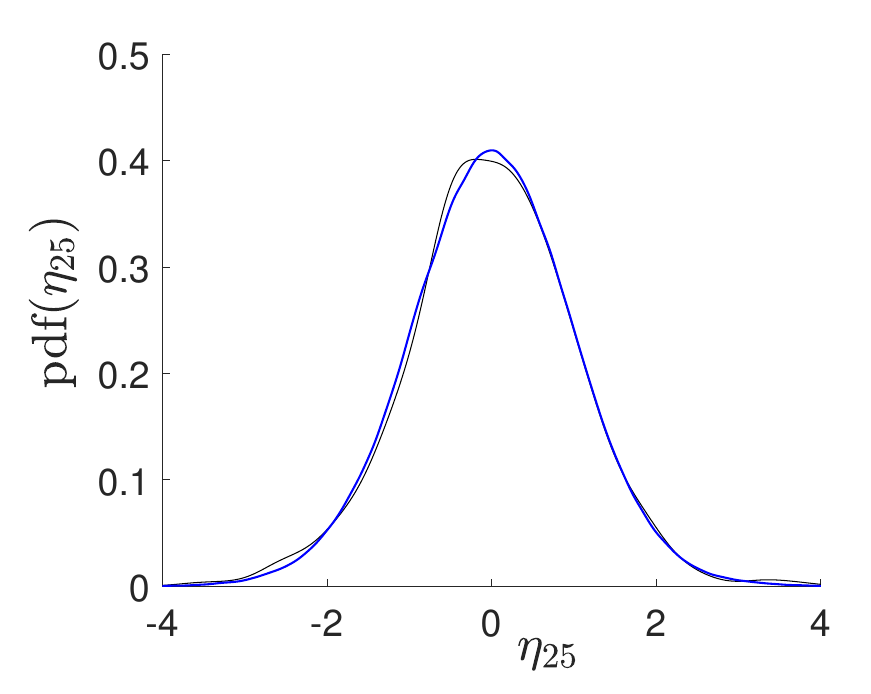}
         \caption{pdf of $H_{25}$ and $H_{\ar,25}$.}
        \label{fig:figure11g}
    \end{subfigure}
    \hfil
    \begin{subfigure}[b]{0.25\textwidth}
        \centering
        \includegraphics[width=\textwidth]{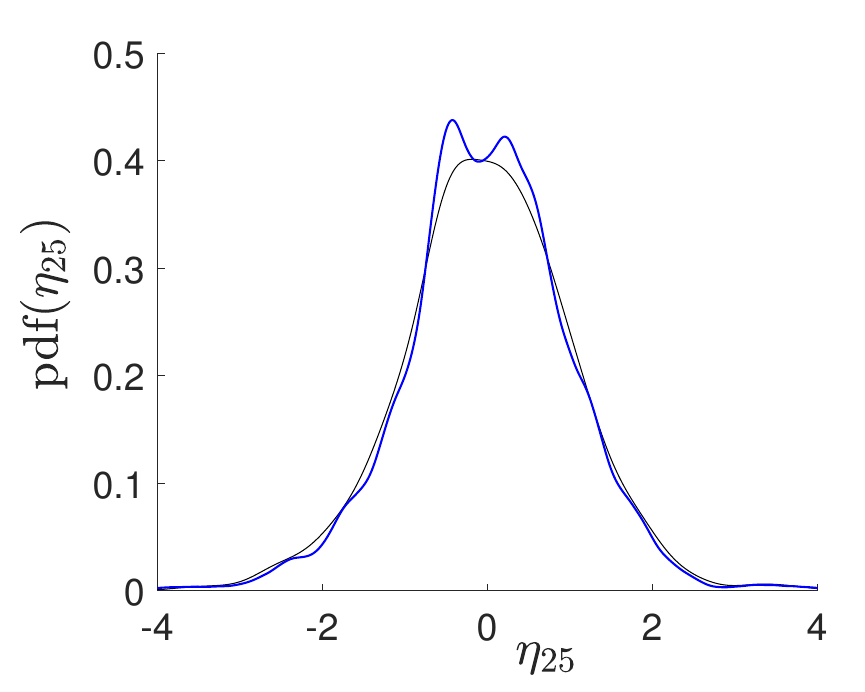}
        \caption{pdf of $H_{25}$ and $H_{\DB,25}$.}
        \label{fig:figure11h}
    \end{subfigure}
    \hfil
    \begin{subfigure}[b]{0.25\textwidth}
        \centering
        \includegraphics[width=\textwidth]{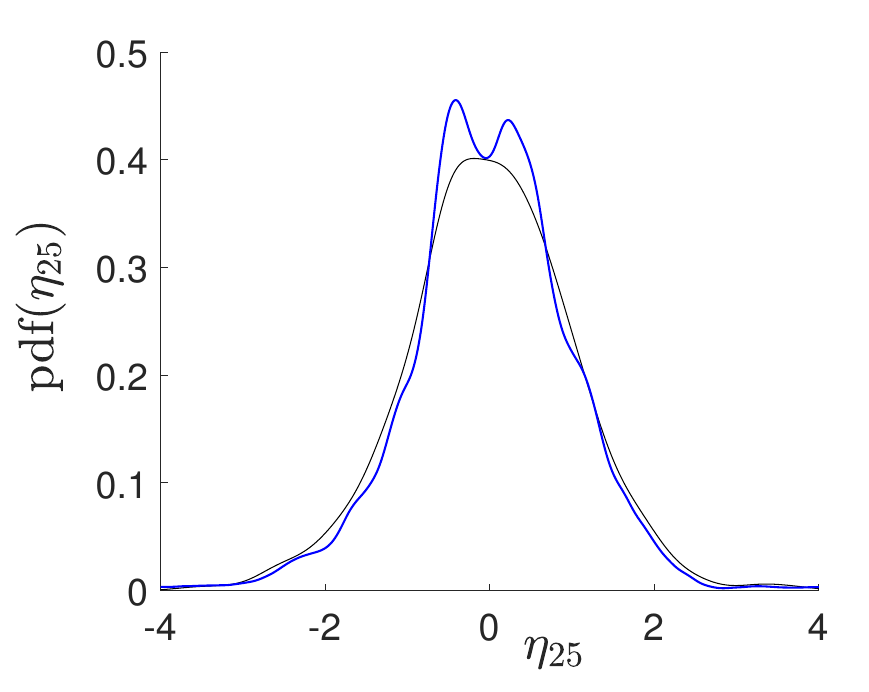}
        \caption{pdf of $H_{25}$ and $H_{\TB,25}$ at time $n_\optp\,\Delta t$.}
        \label{fig:figure11i}
    \end{subfigure}
    %
    %--- pdf component 40
    \centering
    \begin{subfigure}[b]{0.25\textwidth}
    \centering
        \includegraphics[width=\textwidth]{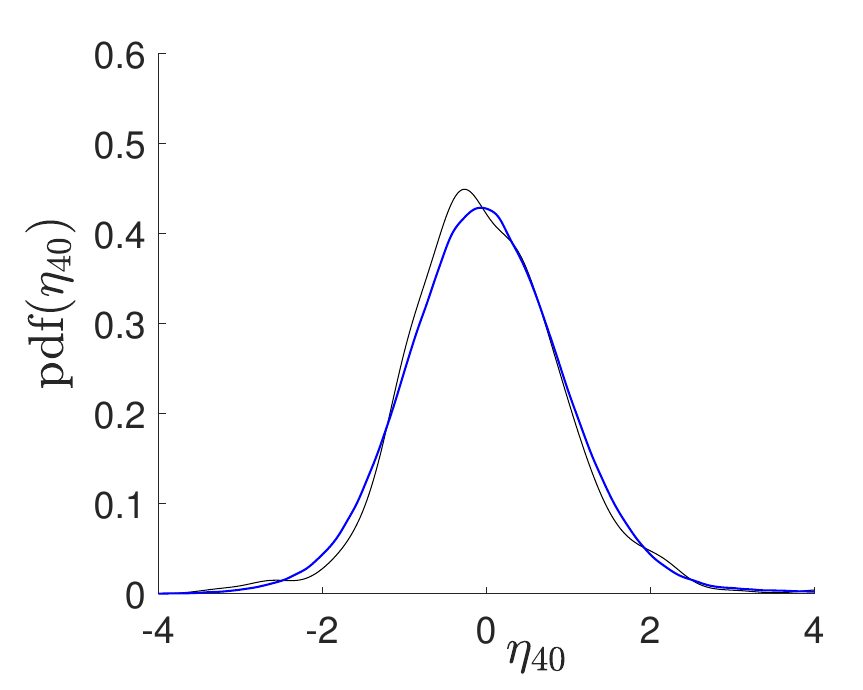}
        \caption{pdf of $H_{40}$ and $H_{\ar,40}$.}
        \label{fig:figure11j}
    \end{subfigure}
    \hfil
    \begin{subfigure}[b]{0.25\textwidth}
        \centering
        \includegraphics[width=\textwidth]{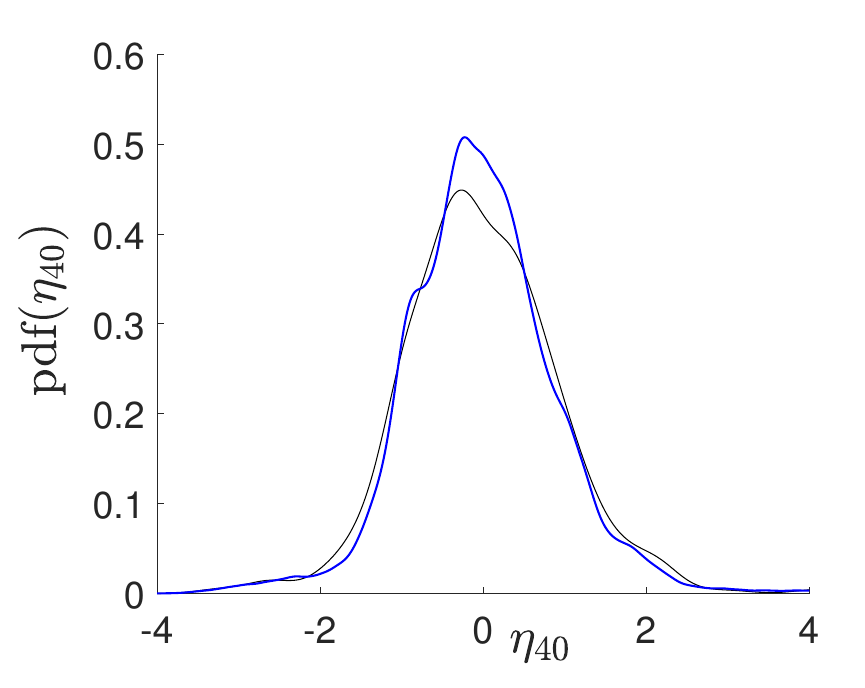}
        \caption{pdf of $H_{40}$ and $H_{\DB,40}$.}
        \label{fig:figure11k}
    \end{subfigure}
    \hfil
    \begin{subfigure}[b]{0.25\textwidth}
        \centering
        \includegraphics[width=\textwidth]{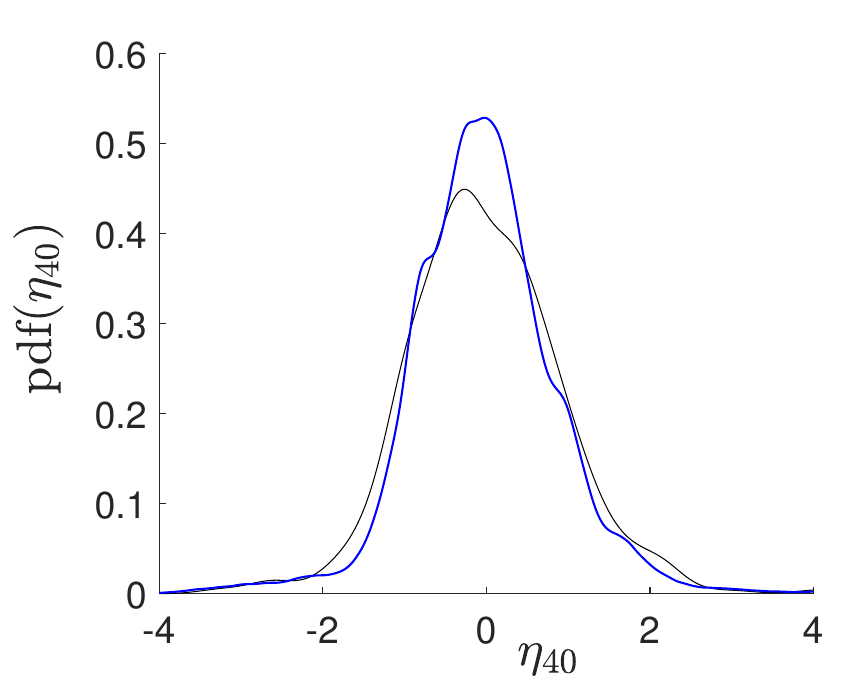}
        \caption{pdf of $H_{40}$ and $H_{\TB,40}$ at time $n_\optp\,\Delta t$.}
        \label{fig:figure11l}
    \end{subfigure}
    \caption{Application 3. Probability density function (pdf)  of components $1$, $6$, $25$, and $40$ for $\bfH$ estimated with the $n_d$ realizations of the training dataset (thin black line) and pdf estimated with $n_\ar$ learned realizations (thick blue line), for $\bfH_\ar$ using MCMC without PLoM (a,d,g,j), for $\bfH_\DB$ using  PLoM with RODB (b,e,h,k), and for $\bfH_\TB$ using PLoM with ROTB$(n_\optp \Delta t)$ (c, f, i, l).}
    \label{fig:figure11}
\end{figure}
%
%-------------------------------------- joint pdf -------------------------------------------
\begin{figure}[h]
    %--- joint pdf components 25 - 40
    \centering
    \begin{subfigure}[b]{0.24\textwidth}
    \centering
        \includegraphics[width=\textwidth]{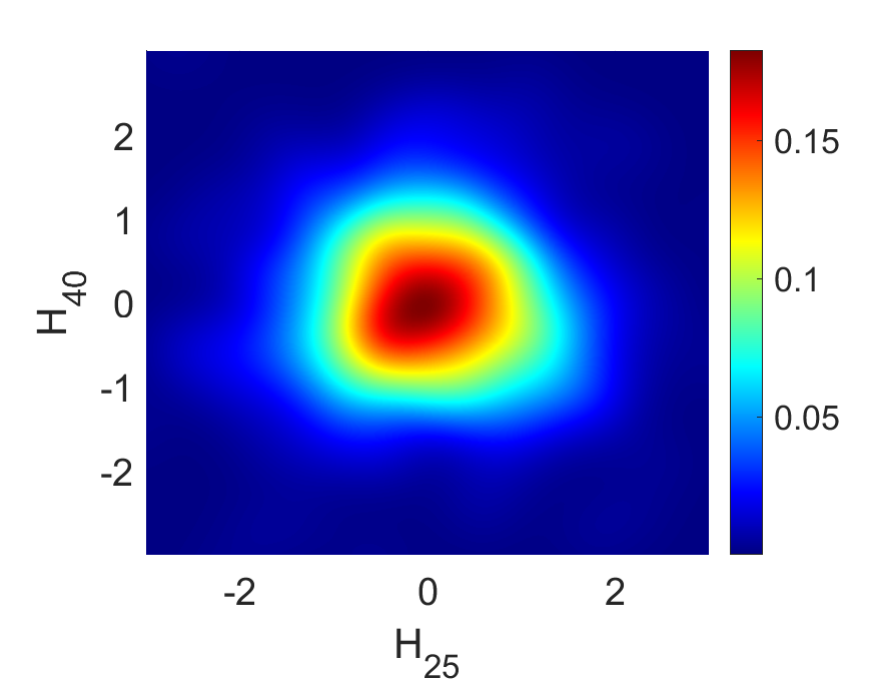}
        \caption{joint pdf of $H_{25}$ with $H_{40}$.}
         \vspace{0.3truecm}
        \label{fig:figure12a}
    \end{subfigure}
    \hfil
    \begin{subfigure}[b]{0.24\textwidth}
        \centering
        \includegraphics[width=\textwidth]{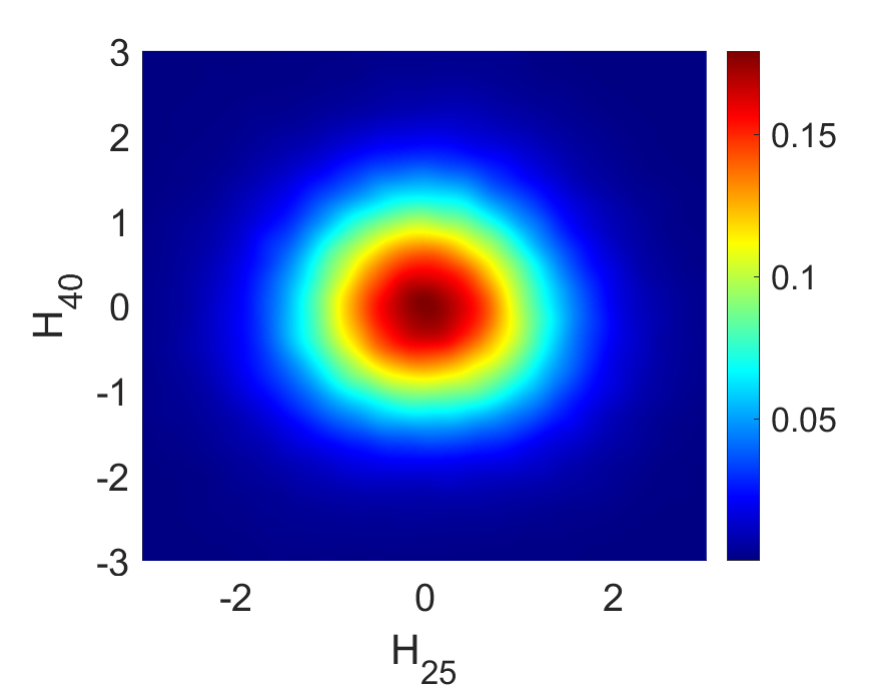}
        \caption{joint pdf of $H_{\ar,25}$ with $H_{\ar,40}$.}
         \vspace{0.3truecm}
        \label{fig:figure12b}
    \end{subfigure}
    \hfil
    \begin{subfigure}[b]{0.24\textwidth}
        \centering
        \includegraphics[width=\textwidth]{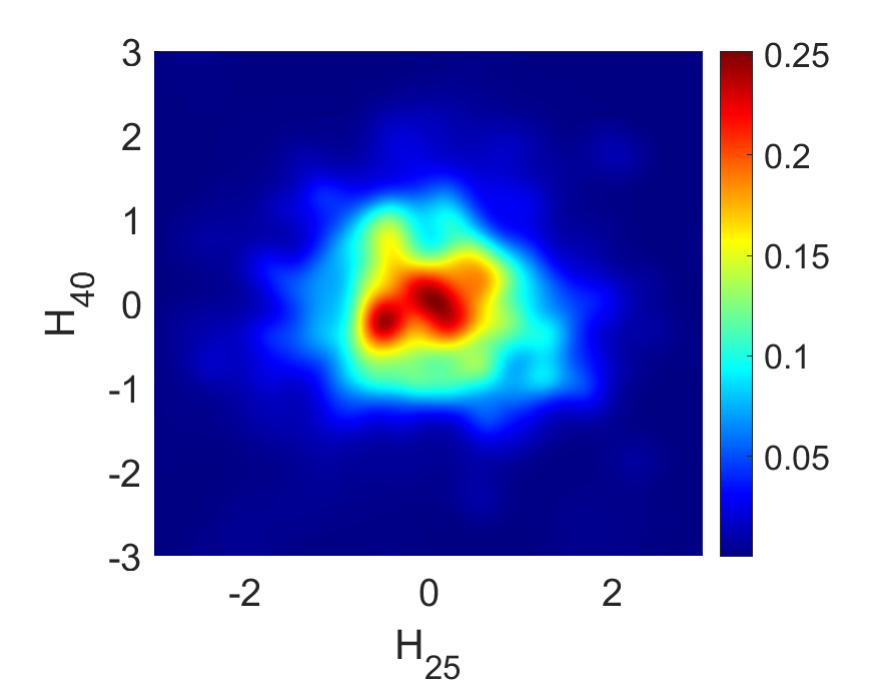}
        \caption{joint pdf of $H_{\DB,25}$ with $H_{\DB,40}$.}
         \vspace{0.3truecm}
        \label{fig:figure12c}
    \end{subfigure}
    \hfil
    \begin{subfigure}[b]{0.24\textwidth}
    \centering
        \includegraphics[width=\textwidth]{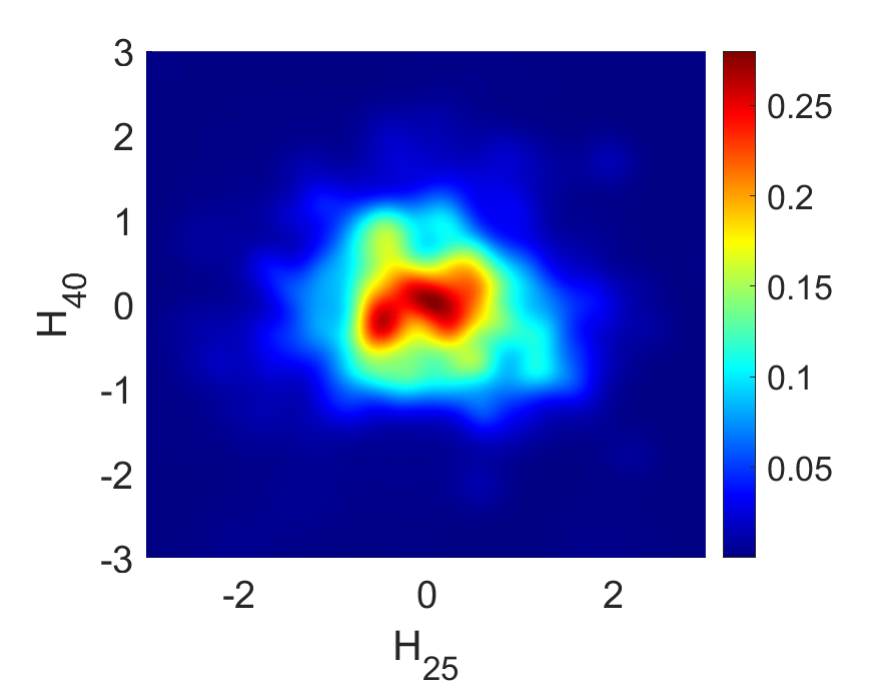}
        \caption{joint pdf of $H_{\TB,25}$ with $H_{\TB,40}$ at time $n_\optp\, \Delta t$.}
        \label{fig:figure12d}
    \end{subfigure}
    \caption{Application 3. Joint probability density function of components $24$ with $40$ of $\bfH$ estimated with the $n_d$ realizations of the training dataset (a) and estimated with $n_\ar$ learned realizations, for $\bfH_\ar$ using MCMC without PLoM (b), for $\bfH_\DB$ using PLoM with RODB (c), and for $\bfH_\TB$ using PLoM with ROTB$(n_\optp \Delta t)$ (d).}
    \label{fig:figure12}
\end{figure}
%
%
%-------------------------------------- clouds of points -------------------------------------------
%
\begin{figure}[h]
    %--- clouds of  points for components 1, 6, and 12
    \centering
    \begin{subfigure}[b]{0.25\textwidth}
    \centering
        \includegraphics[width=\textwidth]{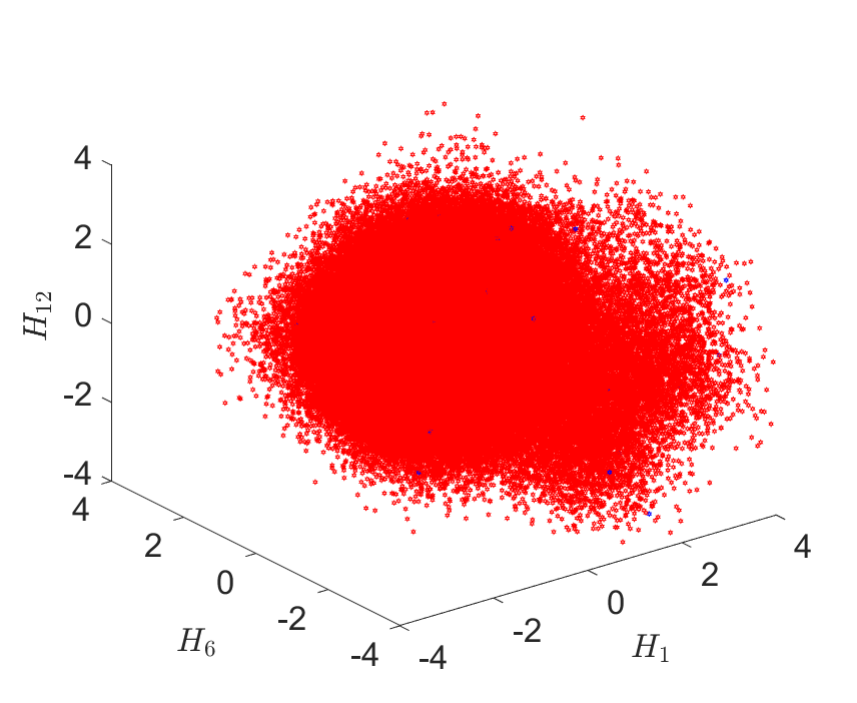}
        \caption{clouds for $(H_{\ar,1},H_{\ar,6},H_{\ar,12})$.}
         \vspace{0.3truecm}
        \label{fig:figure13a}
    \end{subfigure}
    \hfil
    \begin{subfigure}[b]{0.25\textwidth}
        \centering
        \includegraphics[width=\textwidth]{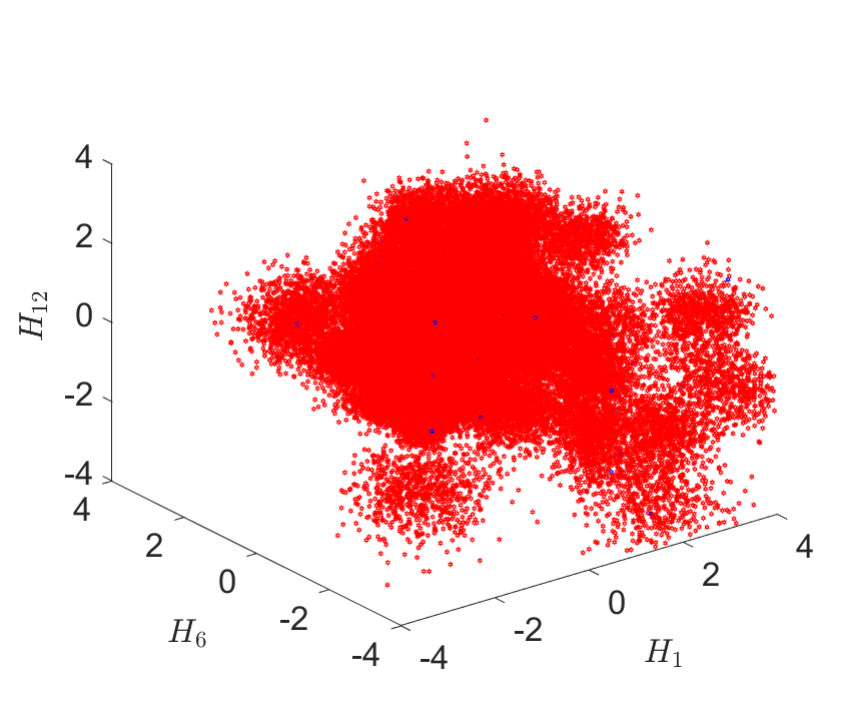}
         \caption{clouds for $(H_{\DB,1},H_{\DB,6},H_{\DB,12})$.}
          \vspace{0.3truecm}
        \label{fig:figure13b}
    \end{subfigure}
    \hfil
    \begin{subfigure}[b]{0.25\textwidth}
        \centering
        \includegraphics[width=\textwidth]{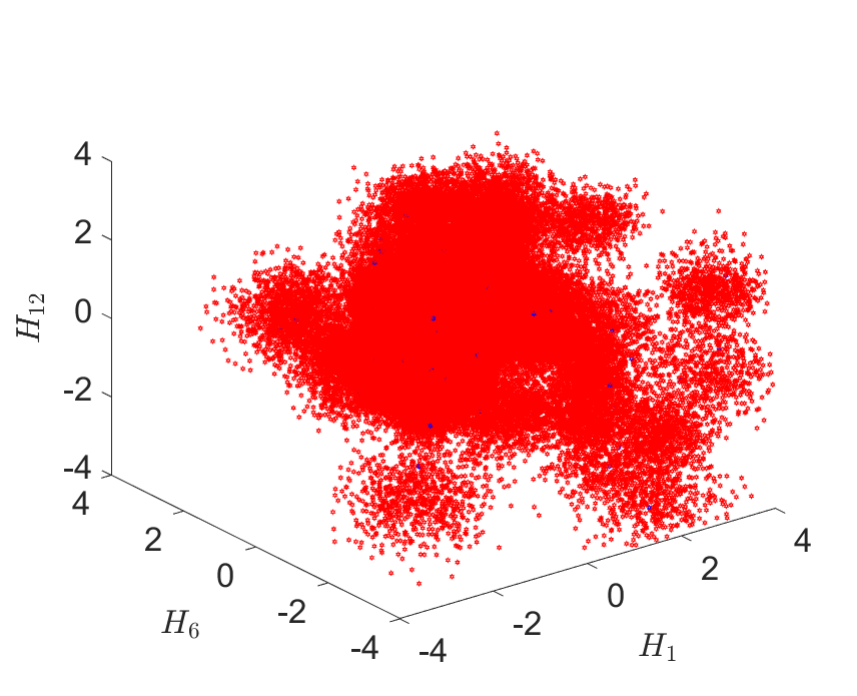}
         \caption{clouds for $(H_{\TB,1},H_{\TB,6},H_{\TB,12})$ at time $n_\optp\,\Delta t$.}
        \label{fig:figure13c}
    \end{subfigure}
    %
    %--- clouds of  points for components 12, 25, and 40
    \centering
    \begin{subfigure}[b]{0.25\textwidth}
    \centering
        \includegraphics[width=\textwidth]{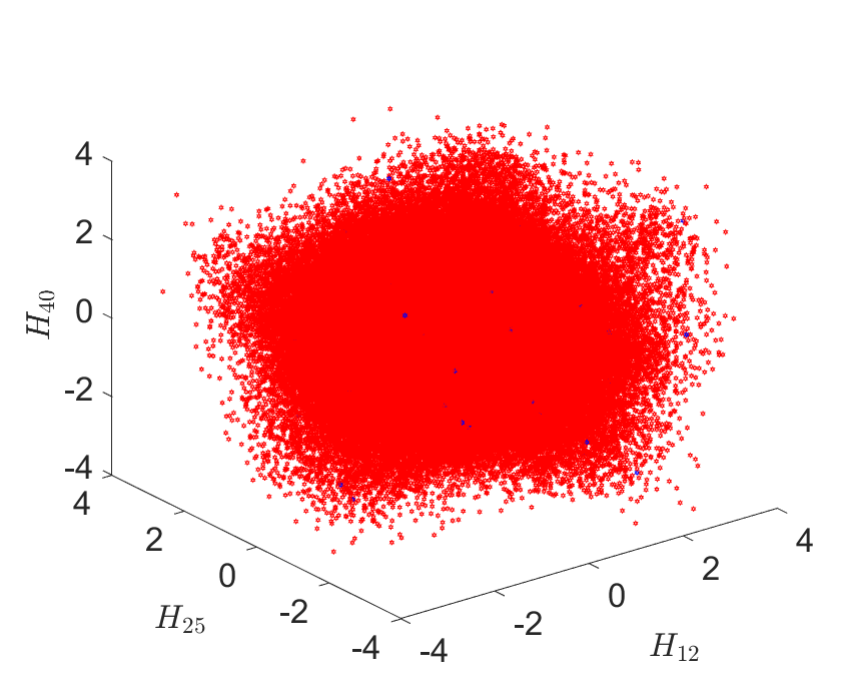}
        \caption{clouds for $(H_{\ar,12},H_{\ar,25},H_{\ar,40})$.}
        \vspace{0.3truecm}
        \label{fig:figure13d}
    \end{subfigure}
    \hfil
    \begin{subfigure}[b]{0.25\textwidth}
        \centering
        \includegraphics[width=\textwidth]{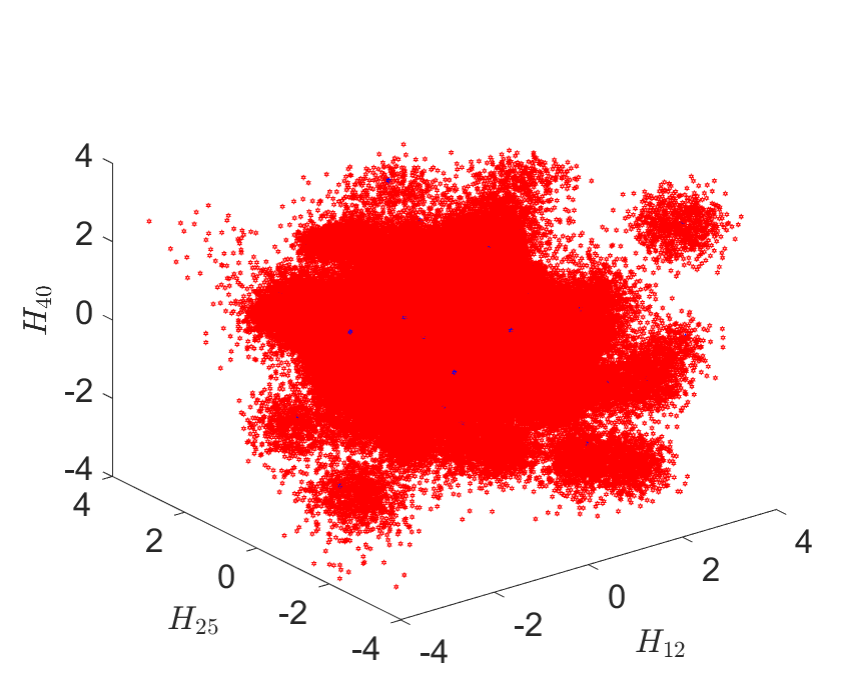}
        \caption{clouds for $(H_{\DB,12},H_{\DB,25},H_{\DB,40})$.}
         \vspace{0.3truecm}
        \label{fig:figure13e}
    \end{subfigure}
    \hfil
    \begin{subfigure}[b]{0.25\textwidth}
        \centering
        \includegraphics[width=\textwidth]{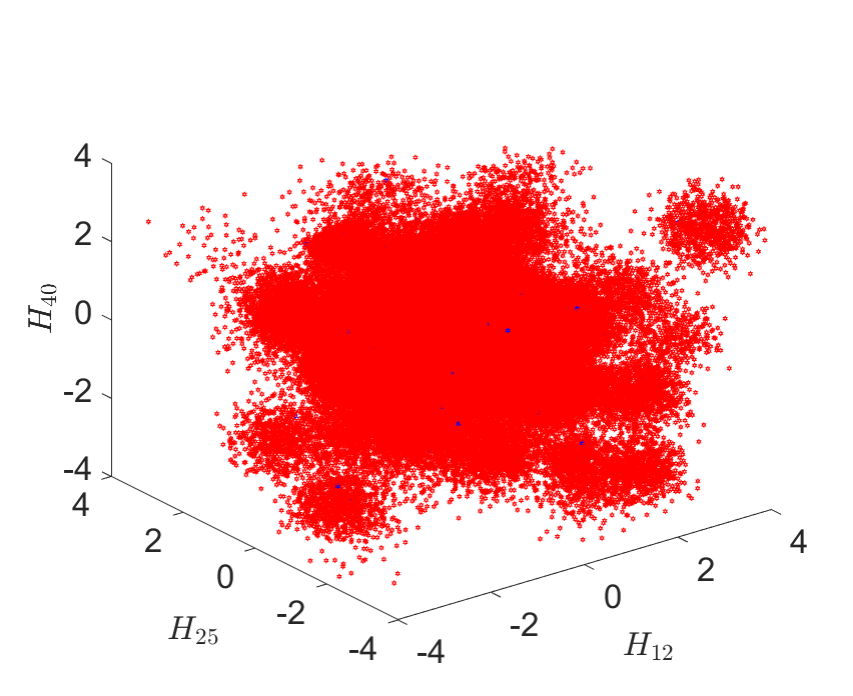}
        \caption{clouds for $(H_{\TB,12},H_{\TB,25},H_{\TB,40})$ at time $n_\optp\,\Delta t$.}
        \label{fig:figure13f}
    \end{subfigure}
    \caption{Application 3. Clouds of $n_\ar$ points corresponding to $n_\ar$ learned realizations, for components $1$, $6$, $12$ (a,b,c) and components $12$, $25$, $40$ (d,e,f), for $\bfH_\ar$ using MCMC without PLoM (a,d), for $\bfH_\DB$ using PLoM with RODB (b,e), and for $\bfH_\TB$ using PLoM with ROTB$(n_\optp \Delta t)$ (c,f).}
    \label{fig:figure13}
\end{figure}
%
%
%-------------------------------------- PLoM with Transient basis -------------------------------------------
%
\begin{figure}[h]
    \centering
    \begin{subfigure}[b]{0.25\textwidth}
    \centering
        \includegraphics[width=\textwidth]{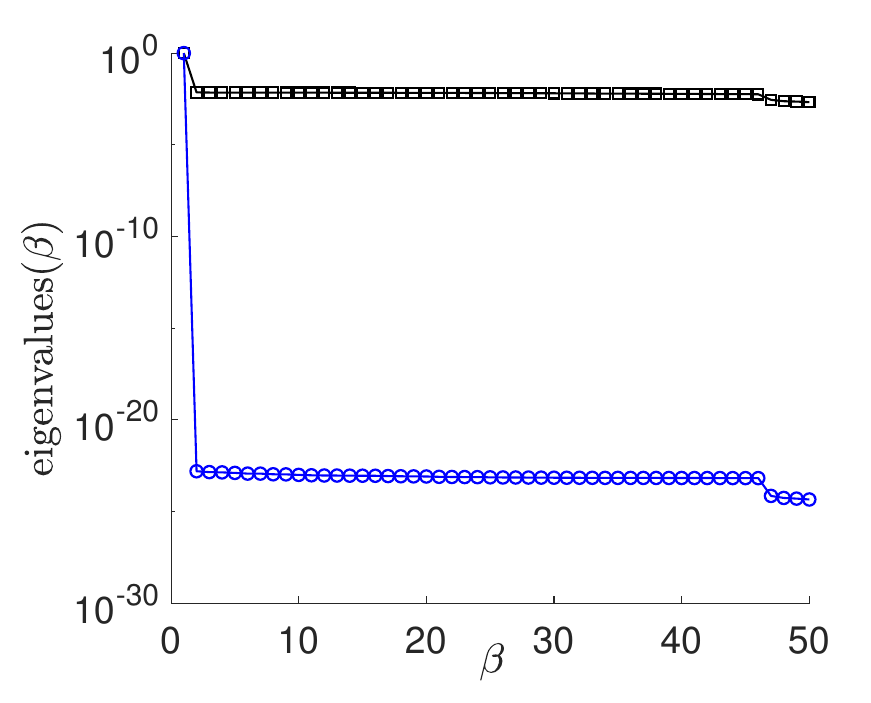}
        \caption{eigenvalues $\beta\mapsto \hat b_{\DM,\,\beta}$ (square) and $\beta\mapsto \tilde b_{\beta}(n_\optp\, \delta t)$ (circle).}
        \label{fig:figure14a}
    \end{subfigure}
    \hfil
    \begin{subfigure}[b]{0.25\textwidth}
        \centering
        \includegraphics[width=\textwidth]{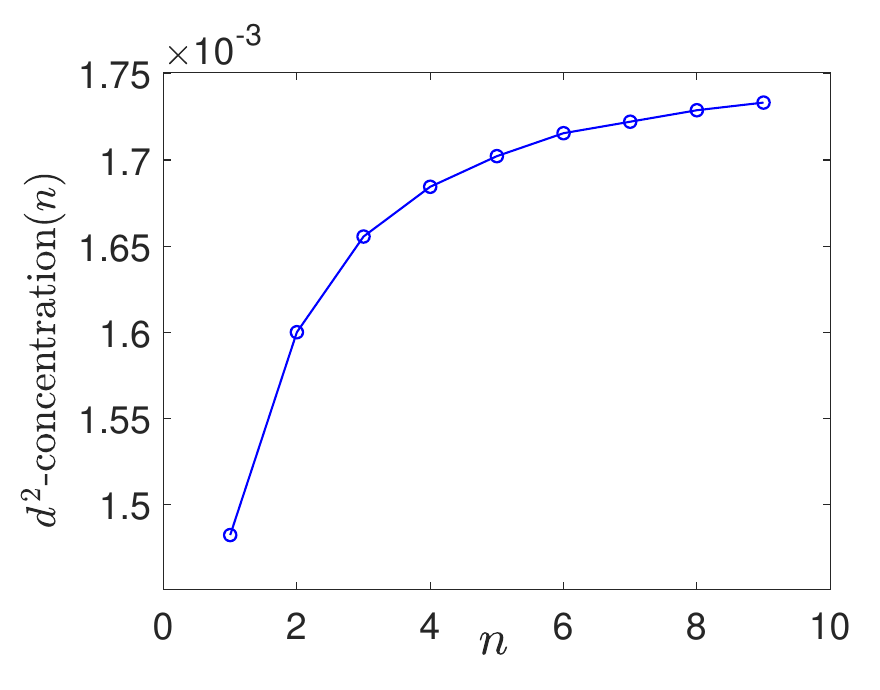}
         \caption{$n\mapsto \hat d^{\,2}(m_\optp;\nDeltat)/\nu$.}
         \vspace{0.3truecm}
        \label{fig:figure14b}
    \end{subfigure}
    \hfil
    \begin{subfigure}[b]{0.25\textwidth}
        \centering
        \includegraphics[width=\textwidth]{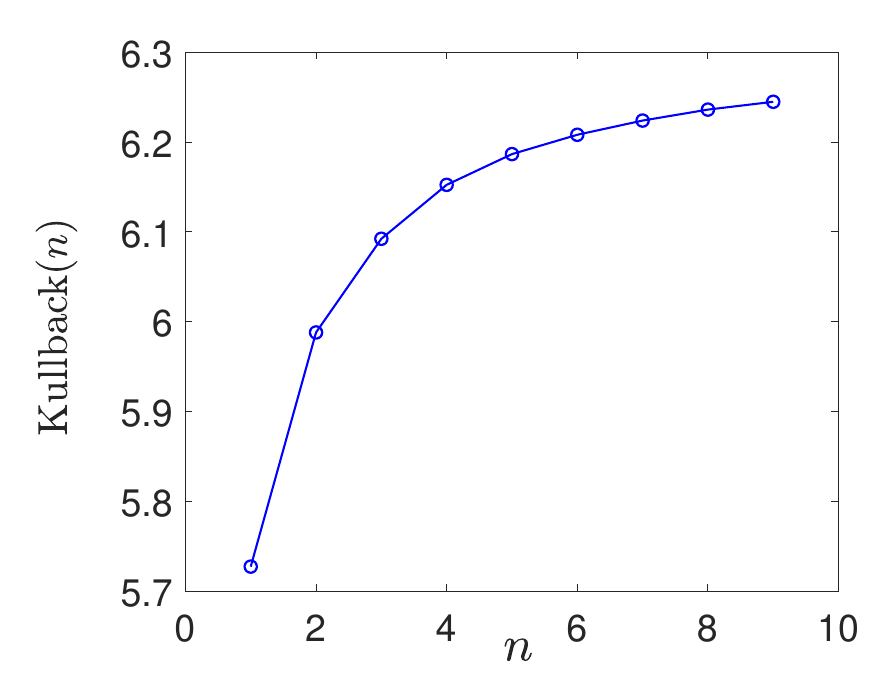}
        \caption{$n\mapsto \hat D(p_\TB(\cdot\, ,\nDeltat)\, \Vert \, p_\bfH)$ (Kullback).}
         \vspace{0.3truecm}
        \label{fig:figure14c}
    \end{subfigure}
    \centering
    \begin{subfigure}[b]{0.25\textwidth}
    \centering
        \includegraphics[width=\textwidth]{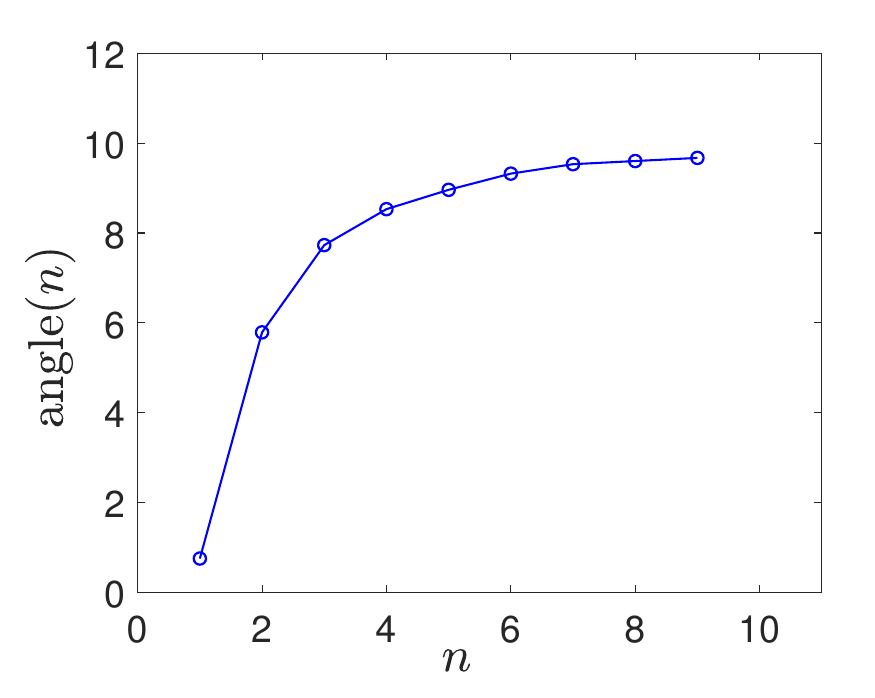}
          \caption{$n\mapsto \gamma(\nDeltat)$ (angle in degree).}
           \vspace{0.3truecm}
        \label{fig:figure14d}
    \end{subfigure}
    \hfil
    \begin{subfigure}[b]{0.25\textwidth}
        \centering
        \includegraphics[width=\textwidth]{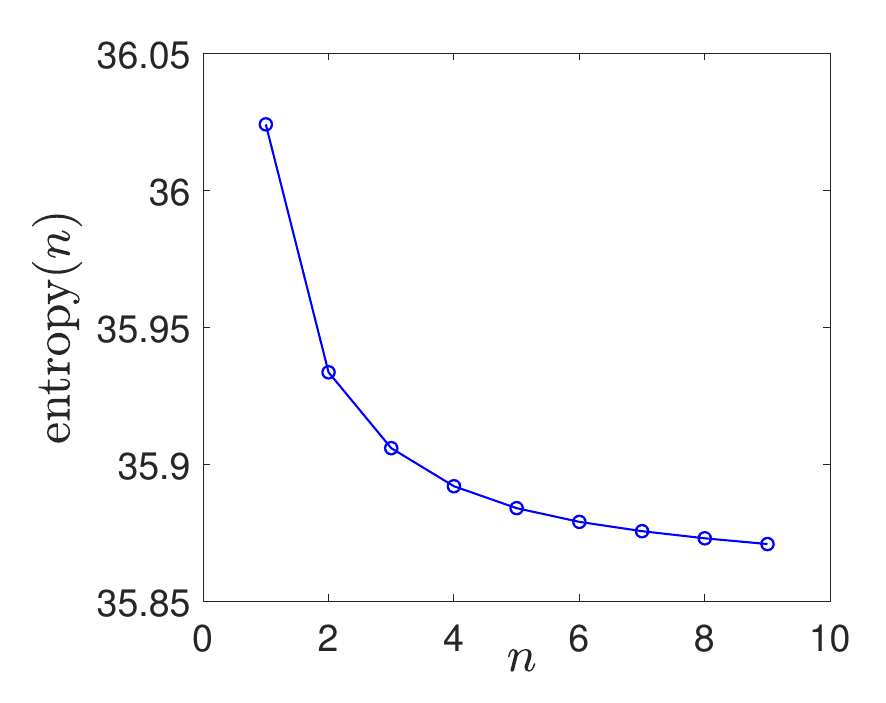}
        \caption{$n\mapsto \hat S_\TB(\nDeltat)$ (entropy).}
         \vspace{0.3truecm}
        \label{fig:figure14e}
    \end{subfigure}
    \hfil
    \begin{subfigure}[b]{0.25\textwidth}
        \centering
        \includegraphics[width=\textwidth]{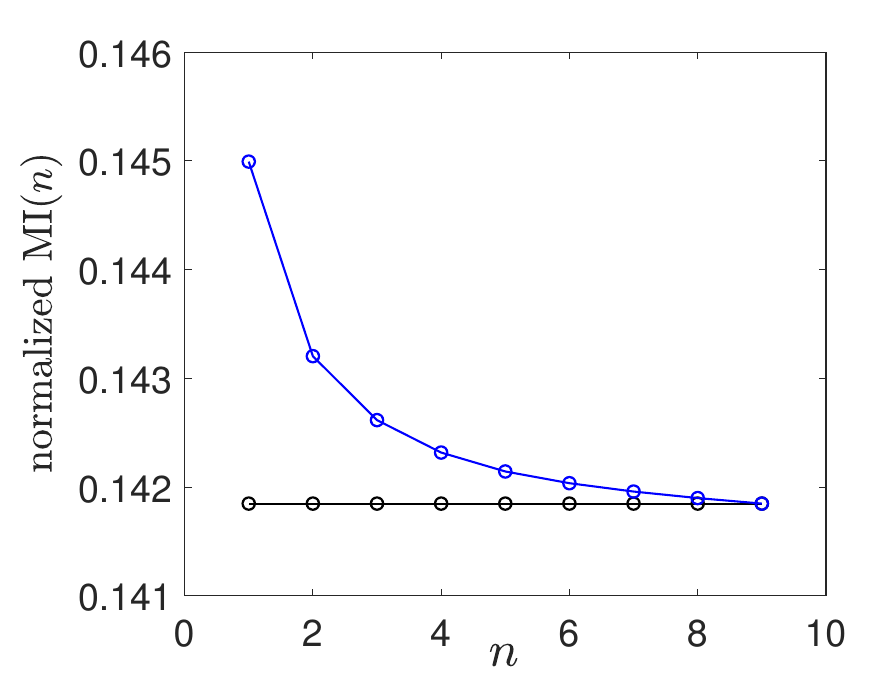}
        \caption{$n\mapsto \hat I_\normp(\bfH)$ (square) and $n\mapsto \hat I_\normp(\bfH_\TB;\nDeltat)$ (circle)(normalized MI).}
        \label{fig:figure14f}
    \end{subfigure}
    \caption{Application 3. Functions characterizing the reduced-order transient basis ROTB$(\nDeltat)$ as a function of time
    $\nDeltat$: eigenvalues of $[K_\DM]$ and of symmetrized $[\tilde K(\nDeltat)]$ (a); measure concentration with $d^{\,2}/\nu$-criterion (b) and with Kullback criteria (c); angle between the subspaces spanned by RODB and ROTB$(\nDeltat)$ (d); entropy of pdf $p_\TB(\cdot\, ; \nDeltat)$ (e); normalized mutual information (MI) of pdf $p_\bfH$ and $p_\TB(\cdot\, ; \nDeltat)$ (f).}
    \label{fig:figure14}
\end{figure}
%

%-----------------------------------------------------------------------------------------------------------------
\section{Conclusion}
%\label{Section10}
%-----------------------------------------------------------------------------------------------------------------
%
In this paper, we have presented the theoretical elements of constructing a time-dependent anisotropic kernel, which allows us to create a data projection basis for PLoM. This basis serves as an alternative to the DMAPS basis built using a time-independent isotropic kernel used by PLoM. We have demonstrated that an optimal time can be determined to obtain an optimal transient basis, best respecting the statistical dependence between the components for the learned joint probability measure.

The proposed theory has been  developed to improve PLoM in cases of highly heterogeneous data. The improvement of the learned joint probability measure is quantified by estimating an objective criterion from information theory, namely the mutual information, which we have normalized relative to the number of realizations using entropy.

This theory is consistent in the sense that, for a time close to the initial time, the DMAPS basis constructed with the time-independent isotropic kernel coincides with the transient basis constructed with the time-dependent anisotropic kernel. Thus, we can characterize the difference between the two bases by the angle of the vector subspaces they generate.

The theory is illustrated through three applications with decreasing levels of data heterogeneity. The three applications confirm that PLoM with the DMAPS basis (time-independent isotropic kernel) always results in learning that preserves the concentration of the measure, unlike the classic MCMC approach. The applications show that it is possible to improve the learned joint probability measure with the transient anisotropic kernel, which a priori allows for better estimates of conditional statistics.

\section*{Acknowledgments}
The authors acknowledge partial funding from DOE SciDAC FASTMath  Institute, and an ONR MURI on Modeling Turbulence and Chemistry in High Speed Reactive Flows.

\appendix
%
%------------------------------------------------------------------------------------------------------------------------------------------
%                                                       APPENDIX  A
%------------------------------------------------------------------------------------------------------------------------------------------
%
\section{Overview of the probabilistic learning on manifolds (PLoM) algorithm and its parameterization}
\label{SectionA}

The PLoM approach~\cite{Soize2016,Soize2020c,Soize2022a}, which has specifically been developed for small
data (as opposed to big data) starts from a training dataset $\curD_d$ made up of a relatively small number
$n_d$ of points. It is assumed that $\curD_d$ is generated with an underlying stochastic manifold related
to a $\RR^{n_x}$-valued random variable $\bfX = (\bfQ,\bfW)$, defined on a probability space $(\Theta,\curT,\curP)$,
in which $\bfQ$ is the quantity of interest that is a $\RR^{n_q}$-random variable, where $\bfW$ is the
control parameter that is a $\RR^{n_w}$-random variable, and where $n_x = n_q + n_w$. Another
$\RR^{n_u}$- valued random variable $\bfU$ defined on $(\Theta,\curT,\curP)$ is also be considered,
which is an uncontrolled parameter and/or a noise.  Random variable $\bfQ$ is assumed to be
written as $\bfQ = \bff(\bfU,\bfW)$ in which the measurable mapping $\bff$ is not explicitly known.
The joint probability distribution $P_{\bfW,\bfU}(d\bfw,d\bfu)$ of  $\bfW$ and $\bfU$ is assumed to be given.
The non-Gaussian probability measure $P_\bfX(\bfx) = P_{\bfQ,\bfW}(d\bfq,d\bfw)$ of $\bfX =(\bfQ,\bfW)$
is concentrated in a region of $\RR^{n_x}$ for which the only available information is the cloud of
the points of training dataset $\curD_d$. The PLoM method makes it possible to generate the learned dataset
$\curD_\ar$ for $\bfX$ whose $n_\ar \gg n_d$ points (learned realizations) are generated by the
non-Gaussian probability measure that is estimated using the training dataset. The concentration of
the probability measure is preserved thanks to the use of a diffusion-maps basis that allows to
enrich the available information from the training dataset.
The training dataset $\curD_d$ is made up of the $n_d$ independent realizations
$\bfx_d^j= (\bfq_d^j,\bfw_d^j)$ in $\RR^{n_x} = \RR^{n_q}\times \RR^{n_w}$ for $j\in\{1,\ldots , n_d\}$
of random variable $\bfX=(\bfQ,\bfW)$. The PLoM method allows for generating the learned dataset
$\curD_\ar$ made up of $n_\ar\gg n_d$  learned realizations $\{\bfx_\ar^{\ell},\ell = 1, \ldots ,n_\ar\}$
of random vector $\bfX$. As soon as the learned dataset has been constructed, the learned realizations
for $\bfQ$ and $\bfW$ can be extracted as $(\bfq_\ar^\ell,\bfw_\ar^\ell) = \bfx_\ar^\ell$ for $\ell = 1, \ldots ,n_\ar$.
Using the learned dataset $\curD_\ar$, PLoM allows for carrying out any conditional statistics such as $\bfw\mapsto E\{\bfxi(\bfQ)\vert\bfW=\bfw\}$ from $\curC_w$ in $\RR^{n_\xi}$, in which $\bfxi$ is a given measurable mapping from $\RR^{n_q}$ into $\RR^{n_\xi}$, that is to say to construct statistical surrogate models (metamodels) in a probabilistic framework.
\subsection{Reduced representation}
\label{SectionA.1}
The $n_d$ independent realizations $\{\bfx_d^j , j=1,\ldots , n_d\}$ are represented by the matrix $[x_d]= [\bfx_d^1 \ldots \bfx_d^{n_d}]$ in $\MM_{n_x,n_d}$. Let $[\bfX]= [\bfX^1,\ldots ,\bfX^{n_d}]$ be the random
matrix with values in $\MM_{n_x,n_d}$, whose columns are $n_d$ independent copies of random vector $\bfX$. Using the PCA of $\bfX$, random matrix $[\bfX]$ is written as,
\begin{equation}\label{eqA1}
[\bfX] = [\underline x] + [\varphi]\, [\zeta]^{1/2}\, [\bfH]\, ,
\end{equation}
in which $[\bfH] = [\bfH^1,\ldots ,$ $\bfH^{n_d}]$ is a $\MM_{\nu,n_d}$-valued random matrix, where $\nu\leq n_x$, and where $[\zeta]$ is the $(\nu\times\nu)$ diagonal matrix of the $\nu$ positive eigenvalues of the empirical estimate of the covariance matrix of $\bfX$. The $(n_x\times\nu)$ matrix $[\varphi]$ is made up of the associated eigenvectors such $[\varphi]^T\,[\varphi]= [I_{\nu}]$. The matrix $[\underline x]$ in $\MM_{n_x,n_d}$ has identical columns, each one being equal to the empirical estimate $\underline\bfx\in\RR^{n_x}$ of the mean value of random vector $\bfX$.
The columns of $[\bfH]$ are $n_d$ independent copies of a random vector $\bfH$ with values in $\RR^{\nu}$.
The realization $[\eta_d] = [\bfeta^{1} \ldots \bfeta^{n_d}] \in \MM_{\nu,n_d}$ of $[\bfH]$  is computed by $[\eta_d] =  [\zeta]^{-1/2} [\varphi]^T\, ([x_d] - [\underline x])$.
The value $\nu$ is classically calculated in order that the $L^2$- error function $\nu\mapsto \err_\bfX(\nu)$ defined by
\begin{equation}\label{eqA2}
\err_\bfX(\nu) = 1 - \frac{\sum_{\alpha=1}^\nu \zeta_\alpha}{E\{\Vert\bfX\Vert^2\}} \, ,
\end{equation}
be smaller than $\varepsilon_\PCA$. If $\nu < n_x$, then there is a statistical reduction.

\subsection{Construction of a reduced-order diffusion-maps basis (RODB) and reduced-order transient basis (ROTB$(\nDeltat)$)}
\label{SectionA.2}
 In this section, we begin with the construction of the RODB that is the basis initially used in the PLoM algorithm (see \cite{Soize2016}).
Concerning the construction of the ROTB$(\nDeltat)$, we refer the read to \ref{Section7.2}-(ii).\\

%-------------------------------------------------------------------
\noindent \textit{(i) Construction of RODB}.
This construction corresponds to the one initailly proposed in the PLoM algorithm.
For preserving the concentration of the learned realizations in the region in which the points of the training dataset are concentrated, the PLoM relies on the diffusion-maps method \cite{Coifman2006,Lafon2006}. This is an algebraic basis of vector space $\RR^{n_d}$, which  is constructed using the diffusion maps.
Let $[\curK_\DM]$  and  $[B]$ be the matrices such that, for all $i$ and $j$ in $\{1,\ldots , n_d\}$,
$[\curK_\DM]_{ij} = \exp\{-(4\,\varepsilon_\DM)^{-1} \Vert\bfeta^i-\bfeta^j\Vert^2\}$ and
$[B]_{ij} = \delta_{ij} \, \exp\{-(4\,\varepsilon_\DM)^{-1} \Vert\bfeta^i-\bfeta^j\Vert^2\}$, in which $\varepsilon_\DM >0$ is a smoothing parameter. The eigenvalues $b_{\DM,1},\ldots,b_{\DM,n_d}$ and the associated eigenvectors $\bfg_\DM^1,\ldots,\bfg_\DM^{n_d}$ of the right-eigenvalue problem $[B]^{-1}[\curK_\DM]\, \bfg_\DM^\beta = b_{\DM,\beta}\, \bfg_\DM^\beta$ are such that $ 1=b_{\DM,1} > b_{\DM,2} \geq \ldots \geq b_{\DM,n_d}$ and are computed by solving the eigenvalue problem $[B]^{-1/2}[\curK_\DM]\,[B]^{-1/2} \, \bfphi^\beta = b_{\DM,\beta}\, \bfphi^\beta$ with the normalization $\langle \bfphi^\beta,\bfphi^{\beta'}\rangle =\delta_{\beta\beta'}$, and $\bfg_\DM^\beta = [B]^{-1/2}\bfphi^\beta $. The eigenvector $\bfg_\DM^1$ associated with $b_{\DM,1}=1$ is a constant vector.
The diffusion-maps basis $\{\bfg_\DM^1,\ldots,\bfg_\DM^\alpha,\ldots, \bfg_\DM^{n_d}\}$ is a vector basis of $\RR^{n_d}$.
For a given integer $m < n_d$, the reduced-order diffusion-maps basis of order $m$ is defined as the family
$\{\bfg_\DM^1,\ldots,\bfg_\DM^{m}\}$.
This basis depends on two parameters, $\varepsilon_\DM$ and $m$, which have to be identified.
As explained in \cite{Soize2022a}, the optimal value $m_\optp$ of $m_\DM$ is chosen as $m_\optp = \nu +1$, and the optimal value $\varepsilon_\optp$ of $\varepsilon_\DM$ is such that
\begin{equation} \label{eqA3}
1 = b_{\DM,1} > b_{\DM,2} \simeq \ldots \simeq b_{\DM,m_\optpp}
 \gg b_{\DM,m_\optpp + 1}\geq \ldots \geq b_{\DM,n_d} > 0\, ,
\end{equation}
with the jump amplitude $J_\DM = b_{\DM,m_\optpp + 1}  /b_{\DM,m_\optpp}$, which is $J_\DM=0.1$ (following \cite{Soize2020c}), but which can also be chosen in the interval $[0.1\, , 0.5]$.
Consequently, the RODB is defined for $m=m_\optp$ and is represented by the matrix
\begin{equation} \label{eqA4}
[g_\DM] = [\bfg_\DM^1 \ldots \bfg_\DM^{m_\optpp}]\in\MM_{n_d,m_\optpp}\, .
\end{equation}
%

%-------------------------------------------------------------------
\noindent \textit{(ii) Construction of ROTB$(\nDeltat)$}.
Because PLoM will also use the reduced-order transient basis to quantify its efficiency relative to the reduced-order DMAPS basis, we introduce this basis  in this Appendix.
For $n$ fixed in $\{1,\ldots , N\}$, the reduced-order transient basis, ROTB$(\nDeltat)$, is represented by the matrix
\begin{equation} \label{eqA5}
[g(\nDeltat)] = [\bfg^1(\nDeltat) \ldots \bfg^{m_\optpp}(\nDeltat)]\in\MM_{n_d,m_\optpp}\, ,
\end{equation}
in which $m_\optp$ is the optimal value identified in \ref{SectionA.2}-(i), and where $[g(\nDeltat)]$ is constructed in Section~\ref{Section7.2}-(ii) (see Eq.~\eqref{eq7.16}.\\

%-------------------------------------------------------------------
\noindent \textit{(iii) Reduced-order basis for PLoM}.
In this Appendix, the PLoM reduced-order basis will be represented by the matrix $[g_\optp]\in\MM_{n_d,m_\optpp}$. Depending on the context of its use, this matrix will either be $[g_\DM]$, representing the reduced-order DMAPS basis (RODB) as used in the initial construction of the PLoM, or $[g(\nDeltat)]$ for a fixed $n$, representing the reduced-order transient basis (ROTB$(\nDeltat)$) as proposed in this paper. The latter is introduced with the goal of comparing the efficiency of the two reduced-order vector bases.
\subsection{Reduced-order representation of the random matrices}
\label{SectionA.3}
The reduced-order basis represented by matrix $[g_\optp]\in \MM_{n_d,m_\optpp}$  spans a subspace of $\RR^{n_d}$ that characterizes, for the optimal values $m_\optp$ and $\varepsilon_\optp$, the local geometry structure of  dataset $\{\bfeta^j, j=1,\ldots , n_d\}$. So the PLoM method introduces the $\MM_{\nu,n_d}$-valued random matrix $[\bfH_\ar] = [\bfZ]\, [g_\optp]^T$ with $m_\optp < n_d$, corresponding to a data-reduction representation of random matrix $[\bfH]$, in which  $[\bfZ]$ is a $\MM_{\nu,m_\optpp}$-valued random matrix. The MCMC generator of random matrix $[\bfZ]$ belongs to the class of Hamiltonian Monte Carlo methods, is explicitly described in \cite{Soize2016}, and is mathematically detailed in Theorem~6.3 of \cite{Soize2020c}. This  generator allows for computing $n_\MCH$ realizations $\{[\bfz_\ar^{\ell}],\ell=1,\ldots , n_\MCH\}$ of  $[\bfZ]$ and therefore, for deducing the $n_\MCH$ realizations $\{[\bfeta_\ar^\ell],\ell=1,\ldots ,n_\MCH\}$ of $[\bfH_\ar]$. The reshaping of matrix $[\bfeta_\ar^\ell] \in \MM_{\nu,n_d}$ allows for obtaining $n_\ar = n_\MCH \times n_d$ learned realizations
$\{\bfeta_\ar^{\ell'},\ell' =1,\ldots ,n_\ar\}$ of $\bfH_\ar$. These learned realizations allow for estimating converged statistics on $\bfH_\ar$ and then on $\bfX_\ar = \underline x + [\varphi]\, [\zeta]^{1/2}\, \bfH_\ar$, such as pdf, moments, or conditional expectation of the type $E\{\bfxi(\bfQ) \,\vert \,\bfW = \bfw\}$ for $\bfw$ given in $\RR^{n_w}$ and for any given vector-valued function $\bfxi$ defined on $\RR^{n_q}$.

\subsection{Criterion for quantifying the concentration of the probability measure of random matrix $[\bfH_\ar]$}
\label{SectionA.4}
The concentration of the probability measure of random matrix $[\bfH_\ar]$ is defined (see \cite{Soize2020c}) by
\begin{equation} \label{eqA6}
d^{\,2}(m_\optp) = E\{\Vert [\bfH_\ar] - [\eta_d]\Vert^2\} /\Vert [\eta_d]\Vert^2\, .
\end{equation}
Let $\curM =\{m_\optp,m_\optp+1,\ldots , n_d\}$ in which $m_\optp$ is the optimal value of $m$.
Theorem~7.8 of \cite{Soize2020c} shows that
$\min_{m\in\curM} d^{\,2}(m)  \leq 1 + m_\optp/(n_d-1) < d^2(n_d)$, which means that the PLoM method, for $m=m_\optp$ and $[g_\optp]$ is a better method than the usual one corresponding to $d^2(n_d)= 1+n_d/(n_d-1)\simeq 2$.
Using the $n_\MCH$ realizations $\{[\bfeta_\ar^\ell],\ell=1,\ldots ,n_\MCH\}$ of $[\bfH_\ar]$, we have the estimate,
\begin{equation} \label{eqA7}
\hat d^{\,2}(m_\optp) = (1/n_\MCH)\sum_{\ell=1}^{n_\ppMCH}\{\Vert [\bfeta_\ar^\ell]  - [\eta_d]\Vert^2\} /\Vert [\eta_d]\Vert^2 \, .
\end{equation}
\subsection{Generation of learned realizations $\{\bfeta^{\ell'}_\ar, \ell'=1,\ldots,$ $n_\ar\}$ of random vector $\bfH_\ar$}
\label{SectionA.5}
%The MCMC generator is detailed in \cite{Soize2016}.
Let  $\{ ([\bfcurZ (t)],$ $[\bfcurY(t)]),$ $t\in \RR^+ \}$ be  the unique asymptotic (for $t\rightarrow +\infty$) stationary diffusion stochastic process with values in $\MM_{\nu,m_\optpp}\times\MM_{\nu,m_\optpp}$, of the following reduced-order ISDE (stochastic nonlinear second-order dissipative Hamiltonian dynamic system), for $t >0$,
\begin{align}
 d[{\bfcurZ}(t)]  & =  [{\bfcurY}(t)] \, dt \, , \nonumber \\
  d[{\bfcurY}(t)] & =  [\curL([{\bfcurZ(t)}])]\, dt -\frac{1}{2} f_0\,[{\bfcurY}(t)]\, dt + \sqrt{f_0}\, [d{\bfcurW^\wien}(t)] \, ,  \nonumber
\end{align}
with $[\bfcurZ(0)] = [\eta_d]\, [a]$ and $[\bfcurY(0)] = [\bfcurN\,] \, [a]$, in which
\begin{equation}
[a]  = [g_\optp]\, ([g_\optp]^T\, [g_\optp])^{-1} \in \MM_{n_d,m_\optpp} \, . \nonumber
\end{equation}

\noindent (1) $[\curL([\bfcurZ(t)])]= [L ( [\bfcurZ(t)] \, [g_\optp]^T ) ] \, [a]$ is a random matrix  with values in $\MM_{\nu,m_\optpp}$.
For all $[u] = [\bfu^1 \ldots \bfu^{n_d}]$ in $\MM_{\nu,n_d}$ with $\bfu^j=(u^j_1,\ldots ,u^j_{\nu})$ in $\RR^{\nu}$, the
matrix $[L([u])]$ in $\MM_{\nu,n_d}$ is defined, for all $k = 1,\ldots ,\nu$ and for all $j=1,\ldots , n_d$, by
\begin{align}
[L([u])]_{kj} & = \frac{1}{p(\bfu^j)} \, \{{\boldsymbol{\nabla}}_{\!\!\bfu^j}\, p(\bfu^j) \}_k \, , \label{eqA8} \\                                            p(\bfu^j)     & = \frac{1}{n_d} \sum_{j'=1}^{n_d}
        \exp\{ -\frac{1}{2 {\hat s}^{\, 2}}\Vert \frac{\hat s}{s}\bfeta^{j'}-\bfu^j\Vert^2 \}  \, , \nonumber \\
{\boldsymbol{\nabla}}_{\!\!\bfu^j}\, p(\bfu^j) \!& = \! \frac{1}{\hat s^{\,2} \, n_d} \sum_{j'=1}^{n_d} (\frac{\hat s}{s}\bfeta^{j'} \!\! -\bfu^j)
              \,\exp\{ -\frac{1}{2 \hat s^{\,2}}\Vert
             \frac{\hat s}{s}\bfeta^{j'}\!\! -\bfu^j\Vert^2 \} \, ,  \nonumber
\end{align}
in which $\hat s$  is the modified Silverman bandwidth  $s$, which has been introduced in \cite{Soize2015},
\begin{equation}
\hat s =   \frac{s}{\sqrt{s^2 +\frac{n_d -1}{n_d}}} \quad , \quad
s = \left\{\frac{4}{n_d(2+\nu)} \right\}^{1/(\nu +4)}   \, .    \nonumber
\end{equation}

\noindent (2) $[\bfcurW^\wien(t)] = [\WW^\wien(t)] \, [a]$ where $\{[\WW^\wien(t)],$ $t\in \RR^+\}$ is the $\MM_{\nu,n_d}$-valued normalized Wiener process.

\noindent (3)  $[\bfcurN\,]$ is the $\MM_{\nu,n_d}$-valued normalized Gaussian random matrix that is independent of process $[\WW^\wien]$.

\noindent (4)  The free parameter $f_0$, such that $0 < f_0 < 4/\hat s$, allows the dissipation term of the nonlinear second-order dynamic system (dissipative Hamiltonian system)  to be controlled in order to kill the transient part induced by the initial conditions. A common value is $f_0=4$ (note that $\hat s< 1$).

\noindent (5)  We then have ${[\bfZ]} = \lim_{t\rightarrow +\infty} {[\bfcurZ(t)]}$ in probability distribution. The St\"{o}rmer-Verlet scheme is used \cite{Soize2016} for solving the reduced-order ISDE, which allows for generating the learned realizations, $[z_\ar^1], \ldots ,$ $[z_\ar^{n_\pMCH}]$, and then generating the learned realizations $[\eta_\ar^1],\ldots ,$ $[\eta_\ar^{n_\pMCH}]$ such that $[\eta_\ar^\ell] = [z_\ar^\ell]\, [g_\optp]^T$. The implementation of the St\"{o}rmer-Verlet scheme is detailed, for instance, in the Appendix of \cite{Soize2021a} for parallel computation, introducing the following parameters: the integration time step $\Delta t_\SV$, the initial time $t_i =0$, and the final integration time $t_f = M_0\times \Delta t_\SV$, at which the stationary solution is reached.

\noindent (6) The learned realizations $\{\bfx_\ar^{\ell'},\ell' =1,\ldots,n_\ar \}$ of random vector $\bfX$ are then calculated (see Eq.~\eqref{eqA1}) by $\bfx_\ar^{\ell'} = \underline \bfx + [\varphi]\, [\mu]^{1/2}$ $\bfeta_\ar^{\ell'}$.

\subsection{Constraints on the second-order moments of the components of $\bfH_\ar$}
\label{SectionA.6}
In general, the mean value of $\bfH_\ar$  estimated using the $n_\ar$ learned realizations
$\{\bfeta_\ar^{\ell'},\ell'=1,\ldots,n_\ar\}$, is sufficiently close to zero.
Likewise, the estimate of the covariance matrix of $\bfH_\ar$, which must be the identity matrix, is
sufficiently close to a diagonal matrix. However, sometimes  the
diagonal entries of the estimated covariance matrix can be lower than $1$.
Normalization can be recovered by imposing constraints
\begin{equation} \label{eqA9}
 E\{( H_{\ar,k} )^2 \} = 1 \,\, , \,\, k=1,\ldots, \nu \, ,
\end{equation}
in the algorithm presented in \ref{SectionA.5}.
For that, we use the method and the iterative algorithm presented in \cite{Soize2022a} (that is based on Sections~5.5 and 5.6 of \cite{Soize2020a}). The constraints are imposed by using the Kullback-Leibler minimum cross-entropy principle. The resulting optimization problem is formulated using a Lagrange multiplier $\bflambda = (\lambda_1,\ldots,\lambda_\nu)$ associated with the constraints. The optimal solution of the Lagrange multiplier is computed using an efficient iterative algorithm. At each iteration, the MCMC generator detailed in \ref{SectionA.5} is used.
The  constraints are rewritten as
\begin{equation} \label{eqA10}
E\{\bfh(\bfH_\ar)\} = \bfb \, ,
\end{equation}
in which the function $\bfh = (h_1,\ldots ,h_{\nu})$ and the vector $\bfb = (b_1,\ldots , b_{\nu})$ are such that
$h_k(\bfH_\ar) = (H_{\ar,k})^2$ and $b_k =1$ for $k$ in $\{1,\ldots , \nu\}$.
To take into account the constraints in the algorithm presented in \ref{SectionA.5}, Eq.~\eqref{eqA8} is replaced by the following one,
\begin{equation} \label{eqA11}
[L_\bflambda([u])]_{kj}  = \frac{1}{p(\bfu^j)} \, \{{\boldsymbol{\nabla}}_{\!\!\bfu^j}\, p(\bfu^j) \}_k  -2\, \lambda_k u_k^j\, .
\end{equation}
It should be noted that Eqs.~\eqref{eqA9} to \eqref{eqA11} can be straightforwardly extended to the case in which the constraint defined by
Eq.~\eqref{eqA9} is replaced by the full second-order moment constraints  $E\{\bfH_\ar\} = \bfzero_\nu$ and $E\{\bfH_\ar\otimes\bfH_\ar\} = [I_\nu]$, that is to say,
\begin{equation} \label{eqA12}
E\{ H_{\ar,k} \}  = 0 \, , \, E\{( H_{\ar,k} )^2\}  = 1 \,\, ,\,\, k=1,\ldots, \nu \quad , \quad
 E\{ H_{\ar,k} H_{\ar,k'}\}  = 0  \,\, , \,\, 1 \leq k < k' \leq \nu \, .
\end{equation}
The iteration algorithm for computing $\bflambda^{i+1}$ as a function of $\bflambda^i$ is the following,
\begin{align}\label{eqA13}
\bflambda^{i+1} & = \bflambda^i -\alpha_i [\Gamma''(\bflambda^i)]^{-1}\, \bfGamma'(\bflambda^i) \quad , \quad i \geq 0 \, ,\nonumber \\
\bflambda^0         & = \bfzero_\nu \, , \nonumber
\end{align}
in which $\bfGamma'(\bflambda^i) = \bfb - E\{\bfh(\bfH_{\bflambda^i})\}$ and
$[\Gamma''(\bflambda^i)] = [\cov\{\bfh(\bfH_{\bflambda^i})\}]$ (the covariance matrix), and where $\alpha_i$ is a relaxation function (less than $1$) that is introduced for controlling the convergence as a function of iteration number $i$.
For given $i_2 \geq 2$, for given $\beta_1$ and $\beta_2$ such that $0 < \beta_1 < \beta_2 \leq 1$, $\alpha_i$ can be defined by:

\noindent - for  $i \leq i_2$,  $\alpha_i =  \beta_1 +(\beta_2-\beta_1)(i-1)/(i_2-1)$;

\noindent - for $i > i_2$, $\alpha_i = \beta_2$.

\noindent The convergence of the iteration algorithm is controlled by the error function $i\mapsto \err(i)$  defined by
\begin{equation} \label{eqA13}
\err(i) =  \Vert \bfb - E\{\bfh(\bfH_{\bflambda^i})\} \Vert / \Vert \bfb\Vert \, .
\end{equation}
At each iteration $i$, $E\{\bfh(\bfH_{\bflambda^i})\}$ and $[\cov\{\bfh(\bfH_{\bflambda^i})\}]$ are estimated by using the $n_\ar$ learned realizations of $\bfH_{\bflambda^i}$ obtained by reshaping the learned realizations. If $i_{\rm{last}}$ is the last iteration corresponding to convergence, we have $\bfH_\ar =  \bfH_{\bflambda^i}$ with $i = i_{\rm{last}}$.
%
%
%------------------------------------------------------------------------------------------------------------------------------------------
%                                                       APPENDIX  B
%------------------------------------------------------------------------------------------------------------------------------------------
%
\section{Estimation of the Kullback-Leibler divergence, the mutual information, and the entropy from a set of realizations}
\label{SectionB}
The definition Kullback-Leibler divergence, the mutual information, and the entropy can be found in
\cite{Cover2006,Kapur1992,Shannon1948}. The estimation of these quantities from a set of independent realizations is carried out using the Gaussian kernel density estimation (GKDE) method \cite{Bowman1997,Gentle2009,Givens2013}.

For $\nu > 1$, let $\bfX=(X_1,\ldots ,X_\nu)$ and $\bfY=(Y_1,\ldots ,Y_\nu)$ be $\RR^\nu$-valued random variables defined on the probability space $(\Theta,\curT,\curP)$, whose probability measures are $P_\bfX(d\bfx) = p_\bfX(\bfx)\, d\bfx$ and $P_\bfY(d\bfy) = p_\bfY(\bfy)\, d\bfy$,
in which  the probability density functions $p_\bfX$ and $p_\bfY$ are assumed to be strictly positive. Let $\{\bfx^\ell,\ell=1,\ldots , N_x\}$ be $N_x$ independent realizations of $\bfX$ and let $\{\bfy^j,j=1,\ldots , N_y\}$ be $N_y$ independent realizations of $\bfY$.
For $k=1,\ldots , \nu$, let $\sigma_{X_k}$ and $\sigma_{Y_k}$ be the standard deviation of $X_k$ and $Y_k$ that are estimated (empirical estimator) with the independent realizations.
Finally, we introduce the Silverman bandwidth for the Gaussian KDE estimation of $p_\bfX$ and $p_\bfY$,
\begin{equation}\label{eqB1}
s_x = \left\{\frac{4}{N_x(2+\nu)} \right\}^{1/(\nu +4)}\quad , \quad s_y = \left\{\frac{4}{N_y(2+\nu)} \right\}^{1/(\nu +4)}\, .
\end{equation}

\subsection{Estimation of the Kullback-Leibler divergence from a set of realizations}
\label{SectionB.1}
The Kullback-Leibler divergence (or the relative entropy) between $p_\bfX$ and $p_\bfY$ is defined by
\begin{equation}\label{eqB2}
D(p_\bfX\Vert p_\bfY) = \int_{\RR^\nu} p_\bfX(\bfx)\,\log \left ( \frac{p_\bfX(\bfx)}{p_\bfY(\bfx)} \right) \, d\bfx =
E\left\{ \log \left( \frac{p_\bfX(\bfX)}{p_\bfY(\bfX)} \right ) \right\} \, .
\end{equation}
The GKDE, $\hat D(p_\bfX\Vert p_\bfY)$, of $D(p_\bfX\Vert p_\bfY)$ yields the formula,
\begin{align}\label{eqB3}
\hat D(p_\bfX\, \Vert\,  p_\bfY) = & \nu\log\left (\frac{s_y}{s_x}\right ) + \log\left (\frac{N_y}{N_x}\right )
            + \log\left (\frac{\sigma_{Y_1}\times\ldots \times\sigma_{Y_\nu}}{\sigma_{X_1}\times\ldots \times\sigma_{X_\nu}}\right ) \nonumber\\
            & + \frac{1}{N_x}
             \sum_{\ell'=1}^{N_x} \log \left \{
             \frac
             {\sum_{\ell=1}^{N_x}\exp\left ( -\frac{1}{2s_x^2} \sum_{k=1}^\nu ( \frac{x_k^{\ell'} - x_k^\ell}{\sigma_{X_k}} )^2\right )}
             {   \sum_{j=1}^{N_y}\exp\left ( -\frac{1}{2s_y^2} \sum_{k=1}^\nu ( \frac{x_k^{\ell'} - y_k^j}   {\sigma_{Y_k}} )^2\right )}
                                      \right \} \, .
\end{align}

\subsection{Estimation of the mutual information from a set of realizations}
\label{SectionB.2}
The mutual information $I(\bfX)$ of $\bfX$ allows to quantify the level of statistical dependencies of the components $X_1,\ldots, X_\nu$ of $\bfX=(X_1,\ldots ,X_\nu)$. Let $p_{X_k}$ be the pdf of real-valued random variable $X_k$,
\begin{equation}\label{eqB4}
p_{X_k}(x_k) = \int_{\RR^{\nu-1}} p_\bfX ( x_1,\ldots ,x_{k-1},x_k,x_{k+1},\ldots,x_\nu ) \,
dx_1 \ldots dx_{k-1} \, dx_{k+1}\ldots\,dx_\nu \, .
\end{equation}
The mutual information $I(\bfX)$ is defined by
\begin{equation}\label{eqB5}
I(\bfX) = D(p_\bfX\, \Vert \,\otimes_{k=1}^\nu p_{X_k}) = E\left\{ \log \left( \frac{p_\bfX(\bfX)}{p_{X_1}(X_1)\times \ldots \times p_{X_\nu}(X_\nu)} \right ) \right\} \, .
\end{equation}
Eq.~\eqref{eqB5} shows that, if the components $X_1,\ldots ,X_\nu$ are statistically independent, then $I(\bfX)=0$.
The GKDE, $\hat I(\bfX)$ of $I(\bfX)$ yields the formula,
\begin{equation}\label{eqB6}
\hat I(\bfX)  = \frac{1}{N_x}
             \sum_{\ell'=1}^{N_x} \log \left \{
             \frac
             {\frac{1}{N_x}\sum_{\ell=1}^{N_x}\exp \left ( -\frac{1}{2s_x^2} \sum_{k=1}^\nu ( \frac{x_k^{\ell'} - x_k^\ell}{\sigma_{X_k}} )^2\right )}
             { \prod_{k=1}^\nu \left ( \frac{1}{N_x} \sum_{\ell''=1}^{N_x}\exp \left ( -\frac{1}{2s_x^2} ( \frac{x_k^{\ell'} - x_k^{\ell''}}{\sigma_{X_k}})^2 \right ) \right )}
                                      \right \} \, .
\end{equation}

\subsection{Estimation of the entropy from a set of realizations}
\label{SectionB.3}
The entropy related to $p_\bfX$ is defined by
\begin{equation}\label{eqB7}
S_\bfX = -\int_{\RR^\nu} p_\bfX(\bfx)\,\log p_\bfX(\bfx) \, d\bfx =
- E\left\{ \log p_\bfX(\bfX)  \right\} \, .
\end{equation}
The GKDE, $\hat S_\bfX$, of $S_\bfX$ yields the formula,
\begin{align}\label{eqB8}
\hat S_\bfX = & \nu \log (s_x\!\!\sqrt{2\pi}\, ) +
             \log (\sigma_{X_1}\times\ldots \times\sigma_{X_\nu} ) \nonumber\\
            & - \frac{1}{N_x}
             \sum_{\ell'=1}^{N_x} \log \left \{  \frac{1}{N_x}
             \sum_{\ell=1}^{N_x}\exp\left ( -\frac{1}{2s_x^2} \sum_{k=1}^\nu ( \frac{x_k^{\ell'} - x_k^\ell}{\sigma_{X_k}} )^2\right )
                                      \right \} \, .
\end{align}
Since $J_S(N_x) = \nu \log (s_x\!\!\sqrt{2\pi}\, )$ is asymptotically for $N_x\rightarrow +\infty$ in $-\log(N_x)$, the entropy decreases when $N_x$ increases. \\

\section*{Conflict of interest}

The author declares that he has no conflict of interest.

%
%-------- REFERENCES-----------------------------------------------------------------

\end{document}